%% file: main_arxiv.tex
\documentclass[11pt]{article}
\pdfoutput=1

\usepackage[margin=1in]{geometry}
\usepackage[T1]{fontenc}
\usepackage[utf8]{inputenc}
\usepackage{lmodern}
\usepackage{amsmath,amssymb,amsfonts,amsthm,amscd,mathtools,bm,mathrsfs}
\usepackage{microtype,nicefrac,pifont,setspace}
\usepackage{graphicx,float,subfigure}
\usepackage{booktabs,longtable,tabularx,array,makecell,multirow}
\usepackage{enumitem,wrapfig}
\usepackage{xcolor,url,todonotes}
\usepackage{algorithm}
\usepackage{algpseudocode}
\usepackage[toc,page]{appendix}
\usepackage{authblk}
\usepackage[authoryear,round]{natbib}
\usepackage[linktocpage,pagebackref]{hyperref}
\usepackage{cleveref}
\allowdisplaybreaks

\input{math_commands.tex}

\graphicspath{{./}}

\newcolumntype{L}{>{\raggedright\arraybackslash}X}
\newcolumntype{C}{>{\centering\arraybackslash}X}

\hypersetup{
  colorlinks=true,
  linkcolor=blue,
  anchorcolor=blue,
  citecolor=blue,
  urlcolor=blue,
  hypertexnames=false
}
\setcitestyle{authoryear,open={(},close={)}}
\renewcommand*{\backref}[1]{\ifx#1\relax\else Page #1\fi}
\renewcommand*{\backrefalt}[4]{%
  \ifcase #1 \footnotesize{(Not cited.)}%
  \or \footnotesize{(Cited on page~#2.)}%
  \else \footnotesize{(Cited on pages~#2.)}%
  \fi
}

\newcommand{\Poly}{\operatorname{Poly}}

\newtheorem{proposition}{Proposition}
\newtheorem{theorem}{Theorem}
\newtheorem{lemma}{Lemma}
\newtheorem{remark}{Remark}
\newtheorem{assumption}{Assumption}

\newtheorem{corollary}{Corollary}
\newtheorem{example}{Example}

\newcommand{\epspsiemp}{\varepsilon_{\mathrm{pred, train}}}
\newcommand{\epspsigen}{\varepsilon_{\mathrm{pred, gen}}}
\newcommand{\predorac}{\widehat{\mathbb P}_{Y\mid C=c,A=a}}
\newcommand{\eotsol}{\widehat \Gamma_{\mathrm{EOT}}}
\newcommand{\rev}[1]{#1}
\newenvironment{revision}{\begingroup}{\endgroup}

\makeatletter
\renewcommand{\paragraph}{%
  \@startsection{paragraph}{4}{\z@}{1.25ex \@plus 1ex \@minus .2ex}{-1em}%
  {\normalfont\normalsize\bfseries}%
}
\makeatother

\title{\bf Counterfactual Generation via Flow Matching:\\ Coupling-Sensitive End-to-End Rates}
\author[1]{Yunrui Guan\thanks{This work was done during an internship at Amazon. Email: \texttt{yguan34@jh.edu}}}
\author[2,3]{Krishnakumar Balasubramanian}
\author[3]{Shiva Prasad Kasiviswanathan}

\affil[1]{Department of Applied Mathematics and Statistics, Johns Hopkins University.}
\affil[2]{Department of Statistics, University of California, Davis.}
\affil[3]{Amazon}

\date{}

\begin{document}
\maketitle

\begin{abstract}
Counterfactual generation seeks to sample outcomes under a hypothetical
intervention or decision using observational data collected under the factual
assignment mechanism.  We develop a flow-matching approach that combines a
sample-split, doubly robust training objective with a learned coupling between
observed source outcomes and target outcomes drawn from a fitted conditional
outcome model.  To enable finite-step generation, we leverage a
score-corrected stochastic sampler based on a Gaussian-smoothed interpolation.
Our main theoretical contribution is a coupling-sensitive KL bound for
constant-step Euler discretization: the error is controlled by moments of the
source--target displacement under the chosen coupling, rather than by global
uniform regularity of the velocity field, and has near-linear dependence on
the ambient dimension.  We also establish finite-sample non-parametric guarantees for the learned velocity and score fields when both the conditional outcome model and
the source--target coupling are estimated from data.  These bounds separate
approximation, coupling-replacement, nuisance-estimation, generalization, and
Monte Carlo errors and, combined with the sampler analysis, yield an
end-to-end guarantee for counterfactual generation.  Experiments on synthetic
and semi-synthetic image benchmarks support the coupling-dependent theory and
show that, at finite discretization budgets, the stochastic sampler can
outperform the corresponding deterministic ODE sampler.
\end{abstract}

\tableofcontents
\clearpage

\input{Intro}

\subsection*{AI Use Statement}
In this work, we used Generative AI for formulate mathematical claims (e.g., finding counter examples),
provide critical ingredients for proving mathematical claims (e.g., carry out some calculation that were checked by the authors),
assist in the writing of proofs,
design or provide feedback on research methodology or experiments (e.g., asking AI whether a specific theoretical claim can be improved toward a specific direction), implement methods (e.g., write code), literature search (e.g., ask if we missed any existing literature), language editing, proof auditing, and
\LaTeX{} assistance.
The authors independently checked the mathematical
arguments, citations, code, and experimental claims and take responsibility
for the final manuscript.

\subsection*{Acknowledgement}
We thank the authors of~\cite{dance2026debiased} for sharing their code with us before its public release.

\bibliographystyle{abbrvnat}
\bibliography{main_arxiv}

\clearpage
\noindent\rule{\textwidth}{0.11pt}
\begin{center}
\vspace{7pt}
{\Large Appendix}
\end{center}
\noindent\rule{\textwidth}{0.1pt}
\appendix

\input{Appendix_Experiments}
\input{Appendix_Additional}
\input{Appendix_Start}
\input{Proof_Stochastic_Sampler}

\input{sect_Q}
\input{Field_Error_Common}
\input{Vel_new_Q}

\input{Score_Error}

\input{Proof_End_to_End}

\end{document}

%% file: math_commands.tex
\usepackage{amsmath,amsfonts,bm}

\def\eqref#1{Equation~(\ref{#1})}

\def\1{\bm{1}}

\DeclareMathAlphabet{\mathsfit}{\encodingdefault}{\sfdefault}{m}{sl}
\SetMathAlphabet{\mathsfit}{bold}{\encodingdefault}{\sfdefault}{bx}{n}



%% file: Intro.tex
\section{Introduction}
\label{sec:introduction}
Counterfactual generation seeks to model the distribution of outcomes that
would arise if a treatment were assigned throughout a target population.
The full distribution reveals heterogeneity, variability, and tail behavior
that an average alone cannot capture. With observational data, however,
treatment assignment may depend on background covariates. Individuals who
receive treatment \(a\) can therefore differ systematically from the target
population, so a generator trained directly on their outcomes may reproduce
this selection bias.

Concretely, we observe data \((C,A,Y)\), where \(C\) denotes background
covariates, \(A\) is the treatment, and \(Y\in\mathbb R^d\) is the outcome.
Fix a treatment value \(a\). The observed and target outcome distributions
average the same conditional outcome law over different covariate
distributions:
\[
\underbrace{
    \mathbb P_{Y\mid A=a}
    =
    \int \mathbb P_{Y\mid C=c,A=a}\,
    d\mathbb P_{C\mid A=a}(c)
}_{\text{Observed outcome distribution}}
\qquad\text{and}\qquad
\underbrace{
    \int \mathbb P_{Y\mid C=c,A=a}\,
    d\mathbb P_C(c)
}_{\text{Target outcome distribution}}.
\]
The observed law weights covariates as they occur among individuals who
received treatment \(a\); the target instead retains the covariate
distribution of the full population. Under overlap, this target is identified
from the observed-data law. If consistency and conditional exchangeability
also hold, it coincides with the interventional outcome distribution under
treatment \(a\) \citep{hernan2020causal, dance2026debiased}. 

This structure suggests transporting observed outcomes to the target rather
than generating the target from an unrelated reference distribution. Given a
coupled source--target pair \((Y_0,Y_1)\), flow matching learns a
time-dependent velocity field by regressing the displacement \(Y_1-Y_0\)
along paths between the paired endpoints. The coupling determines which
source and target outcomes are connected. Structured choices, such as
optimal-transport couplings, can shorten these displacements and facilitate
flow learning
\citep{pmlr-v202-pooladian23a,tong2024improving,liu2023flow}.

Counterfactual flow matching nevertheless poses three linked problems: the
target-dependent training loss must be estimated from observational data, the
source--target coupling must itself be learned, and the learned dynamics must
be simulated accurately with finitely many steps.

Recent work primarily addresses the first problem through doubly robust
training. DoubleGen \citep{luedtke2025doublegen} combines a prediction oracle
\(\predorac\) for \(\mathbb P_{Y\mid C=c,A=a}\) with an estimate of the
inverse propensity
\(\alpha_a(c)=1/\mathbb P(A=a\mid C=c)\). Oracle samples provide conditional
training losses, while observed treatment-\(a\) outcomes provide
inverse-propensity-weighted residual corrections. This construction trains a
generator directly for the counterfactual target. In a complementary
approach, \citet{dance2026debiased} use
\(\mathbb P_{Y\mid A=a}\) as the flow source and derive a debiased objective
that accounts for its dependence on the observed-data law. They also extend
the construction to learned optimal-transport couplings.
Appendix~\ref{app:comparison_dance} gives a detailed comparison with their
method.

Our focus is the joint effect of coupling estimation, field estimation, and
finite-step simulation. Existing theory relates flow straightness to
finite-step Wasserstein error \citep{bansal2026on,liu2023flow}, but
straightness is a property of the induced flow and cannot be read directly
from the selected coupling. Other analyses rely on global Lipschitz bounds
\citep{zhou2025an,liu2026finitetime,guan2026total}, which reflect worst-case
regularity over the state space. Neither approach yields a guarantee
controlled directly by the source--target displacement under the chosen
coupling. Appendix~\ref{sect_existing_work_appendix} provides a broader
comparison with related finite-step sampling results.

Examples~\ref{example_fixed_sigma_ode_kl} and~\ref{example_no_sigma} in
Section~\ref{sec:sde_sampler} isolate the two resulting obstacles.
Even with fixed Gaussian smoothing, Example~\ref{example_fixed_sigma_ode_kl}
constructs couplings whose displacement approaches zero while the velocity
derivative diverges and the Euler ODE sampler has infinite KL error. Without
smoothing, Example~\ref{example_no_sigma} shows that arbitrarily small
displacement does not control the integrated squared derivative of the score.
Together, the examples motivate stochastic sampling, positive smoothing, and
regularity control averaged along the interpolation rather than uniformly over
the state space.

We address these obstacles with a Gaussian-smoothed interpolation and a
score-corrected SDE. With exact velocity and score fields, the SDE preserves
the interpolation marginals and reaches the smoothed target. We estimate the
propensity model, prediction oracle, and entropic optimal-transport coupling
on an auxiliary data split, then train the velocity and score fields on an
independent split using doubly robust objectives. Our analysis combines
displacement-dependent discretization control with finite-sample field bounds,
separating smoothing, coupling, estimation, and numerical errors in an
end-to-end guarantee. Our main contributions are as follows.
\begin{itemize}[noitemsep, leftmargin=0.15in]
    \item \textbf{Score-Corrected Stochastic Sampling.}
    We analyze a score-corrected SDE for a Gaussian-smoothed interpolation
    with constant smoothing. The two counterexamples delineate the respective
    roles of stochasticity and positive smoothing in obtaining
    coupling-dependent finite-step control.

    \item \textbf{Coupling-Sensitive Discretization Theory.}
    For constant-step Euler discretization, we prove a KL error bound controlled
    by
    \begin{align}\label{eq:d5def}
    D_5=\bigl(\mathbb E\|Y_1-Y_0\|^5\bigr)^{1/5},
    \end{align}
    rather than by global uniform regularity. The fifth moment is the highest displacement
moment required by our derivative bounds; its empirical analogue is a coupling diagnostic. With exact fields and the prescribed noise tuning, Euler-step complexity is near-linear in
dimension for fixed smoothing, $D_5$, and target accuracy.

    \item \textbf{Finite-Sample Learning and End-to-End Guarantees.}
    We derive finite-sample bounds for sample-split, doubly robust velocity and
    score estimators when the conditional outcome model, propensity model, and
    source--target coupling are learned from data. The bounds separate neural
    approximation, coupling replacement, nuisance estimation, generalization,
    and Monte Carlo errors. Combined with the sampling analysis, they yield an
    end-to-end \(W_1\) guarantee relative to the unsmoothed counterfactual
    target.

    \item \textbf{Empirical Validation.}
    Experiments on synthetic and semi-synthetic image benchmarks support the
    predicted dependence on coupling quality and show that, at finite
    discretization budgets, the score-corrected SDE can outperform the
    corresponding deterministic ODE sampler.
\end{itemize}

\section{Problem Setup and Method}
\label{sec:setup}

This section develops our counterfactual flow-matching procedure. We first
define the Gaussian-smoothed interpolation and its population fields, then
construct the learned source--target coupling and doubly robust field
objectives, and finally introduce the score-corrected SDE sampler.

\subsection{Gaussian-Smoothed Flow Matching}

Given a coupled source--target pair \((Y_0,Y_1)\),
let \(I_d\) denote the \(d\times d\) identity matrix, and let
\(G\sim\mathcal N(0,I_d)\) be independent of \((Y_0,Y_1)\).
For \(t\in[0,1]\),
define
\[
S_t=Y_0+t(Y_1-Y_0),
\qquad
X_t=S_t+\sigma G,
\]
where \(\sigma>0\) is constant in time. This Gaussian-smoothed
interpolation is a special case of the stochastic interpolant framework
\citep{albergo2025stochastic,liu2026finitetime}. Write \(q_t\) for the
density of \(X_t\). Its endpoint distributions are those of
\(Y_0+\sigma G\) and \(Y_1+\sigma G\); thus, the terminal population law
is a Gaussian-smoothed approximation to the target.

For \(t\sim\mathrm{Unif}[0,1]\), independently of \((Y_0,Y_1,G)\),
the population flow-matching loss and its minimizer are
\[
\mathcal L_{\mathrm{FM}}(v)
=
\mathbb E\!\left[
\left\|v(X_t,t)-(Y_1-Y_0)\right\|^2
\right],
\qquad
v_t(x):=v^\star(x,t)
=
\mathbb E[Y_1-Y_0\mid X_t=x].
\]
Gaussian smoothing gives the score as the population minimizer
of the score-matching loss:
\[
\mathcal L_{\mathrm{score}}(s)
=
\mathbb E\!\left[
\left\|s(X_t,t)+\frac{G}{\sigma}\right\|^2
\right],
\qquad
s_t(x):=s^\star(x,t)
=
\nabla_x\log q_t(x)
=
\mathbb E\!\left[-\frac{G}{\sigma}\,\middle|\,X_t=x\right].
\]

Appendix~\ref{App_score_loss} verifies the score identity. Under suitable
regularity conditions, the ODE with velocity \(v_t\), initialized from
\(q_0\), has marginal density \(q_t\). Section~\ref{sec:sde_sampler}
introduces stochastic dynamics that preserve the same marginal path.

\subsection{Sample Splitting and Learned Coupling}

For a fixed treatment value \(a\), we observe \(2n\) independent samples
from the observed-data law and split them into $\mathcal D_{\mathrm{tr}}
=
\{(C_i,A_i,Y_i)\}_{i=1}^n,$ and $\mathcal D_{\mathrm{aux}}
=
\{(C_i,A_i,Y_i)\}_{i=n+1}^{2n}.$ The auxiliary split is used to estimate the propensity
\(\widehat{\mathbb P}(A=a\mid C=c)\), prediction oracle \(\predorac\),
and coupling kernel \(\widehat Q_n(dy_0\mid y_1)\). The velocity and
score fields are then trained on \(\mathcal D_{\mathrm{tr}}\).
Conditioning on the auxiliary split fixes these estimated objects during
the training analysis.

The empirical source distribution is
\[
\widehat{\mathbb P}_{Y\mid A=a}^{\mathrm{aux}}
=
\frac{
\sum_{i=n+1}^{2n}\mathbf 1\{A_i=a\}\delta_{Y_i}
}{
\sum_{i=n+1}^{2n}\mathbf 1\{A_i=a\}
}.
\]
This distribution is defined when the auxiliary split contains at least
one treatment-\(a\) observation. The sampler is initialized from its
Gaussian convolution: $\widehat{\mathbb P}_{Y\mid A=a}^{\mathrm{aux}}
*
\mathcal N(0,\sigma^2I_d).$

We approximate the target distribution using \(\predorac\), implemented
as a conditional flow trained on \(\mathcal D_{\mathrm{aux}}\); see
Appendix~\ref{App_Prediction_Oracle_Accuracy}. For each auxiliary
covariate \(C_i\), we draw \(M_Q\) independent oracle samples
\(\widehat Y_{i,1}^{Q},\ldots,\widehat Y_{i,M_Q}^{Q}\), yielding
\[
\widehat{\mathbb P}_{Y_1}^{Q}
=
\frac{1}{nM_Q}
\sum_{i=n+1}^{2n}
\sum_{r=1}^{M_Q}
\delta_{\widehat Y_{i,r}^{Q}}.
\]
We construct an entropic optimal-transport (EOT) coupling
with regularization parameter \(\varepsilon_{\mathrm{OT}}>0\) between
\(\widehat{\mathbb P}_{Y\mid A=a}^{\mathrm{aux}}\) and
\(\widehat{\mathbb P}_{Y_1}^{Q}\). Disintegration with respect to the
target coordinate gives the kernel
\(\widehat Q_n(dy_0\mid y_1)\); its extension to target queries outside
the finite Monte Carlo sample is described in
Appendix~\ref{Sect_EOT}. The prediction-based target is used only to
construct this coupling kernel. The field objectives use the doubly
robust correction described next.

\subsection{Doubly Robust Field Training}

With the auxiliary objects fixed, a training loss is constructed by
pairing each target endpoint \(y_1\) with
\(y_0\sim\widehat Q_n(\cdot\mid y_1)\), and then sampling the
interpolation time and Gaussian noise.
For each training covariate, let \(M\geq1\) be the number of
independent prediction-oracle draws used to approximate the plug-in loss.
Let
\(\widehat\ell_i^{\mathrm{aux}}(v)\) denote the average flow-matching
loss using oracle-generated outcomes at \(C_i\), and let
\(\widehat\ell_i^{\mathrm{obs}}(v)\) denote the corresponding loss using
the observed outcome \(Y_i\) when \(A_i=a\). For a minibatch
\(\mathcal B\subseteq\{1,\ldots,n\}\), the loss is 
\[
\widehat L_{\mathcal B,\mathrm{DR}}(v)
=
\frac{1}{|\mathcal B|}
\sum_{i\in\mathcal B}
\left[
\widehat\ell_i^{\mathrm{aux}}(v)
+
\frac{\mathbf 1\{A_i=a\}}
{\widehat{\mathbb P}(A=a\mid C=C_i)}
\left\{
\widehat\ell_i^{\mathrm{obs}}(v)
-
\widehat\ell_i^{\mathrm{aux}}(v)
\right\}
\right].
\]
The first term is the prediction-oracle plug-in loss. Observed
treatment-\(a\) outcomes provide an inverse-propensity-weighted
correction to its residual error. We train the score using the same
construction with \(\mathcal L_{\mathrm{FM}}\) replaced by
\(\mathcal L_{\mathrm{score}}\). Algorithm~\ref{alg:minibatch_dr_flow_training}
gives the complete minibatch procedure.

\subsection{Score-Corrected SDE Sampler}
\label{sec:sde_sampler}

Let \(N\geq1\), \(h=1/N\), and recall the coupling-displacement moment
\eqref{eq:d5def}. We seek a finite-step error bound controlled by \(D_5\),
rather than by global uniform regularity, which need not follow from small
coupling displacement
\citep{zhou2025an,liu2026finitetime,guan2026total,
guan2025mirror,stephanovitch2026lipschitz}.
Examples~\ref{example_fixed_sigma_ode_kl} and
\ref{example_no_sigma} isolate the two resulting obstacles: stochasticity
addresses the deterministic Euler obstruction, whereas positive smoothing
provides the score regularity needed for displacement-dependent control.

Let \(\widehat X_N^{\mathrm{ODE}}\) denote the terminal constant-step
Euler iterate using the exact velocity \(v_t\), initialized from \(q_0\).
The following example shows that 
arbitrarily small $D_5$ guarantees neither small uniform velocity regularity nor finite KL error for deterministic Euler sampling. 

\begin{example}
\label{example_fixed_sigma_ode_kl}
Let \(p\in(0,1)\), and let \(B\sim\operatorname{Bernoulli}(p)\) and
\(U\sim\mathcal N(0,1)\) be independent. Define
\[
(Y_0,Y_1)
=
\begin{cases}
(U,U), & B=0,\\
(2U,2(1-\lambda)U), & B=1,
\end{cases}
\qquad
\lambda=\frac{4+\sigma^2}{4h}.
\]
As \(p\to0\),
\[
D_5\to0,\qquad \sup_{t\in[0,1],\,x\in\mathbb R}|\partial_xv_t(x)|\to\infty, \qquad \text{but} \qquad \int_0^1\mathbb E|\partial_xv_t(X_t)|^2\,dt\to0.
\]
Moreover, for every \(p\in(0,1)\), we have $\operatorname{KL}(
\mathcal L(X_1)\| \mathcal L(\widehat X_N^{\mathrm{ODE}})
)
=+\infty.$
\end{example}

Appendix~\ref{Appendix_Proof_Example} proves the claim. It motivates
controlling derivatives averaged along the interpolation rather than uniformly
over the state space. Following the It\^{o}-based approach of
\citet{xu2026polynomial,benton2024nearly}, we therefore use a stochastic
sampler for which this integrated regularity is sufficient.

For \(\tau>0\), the exact-field sampling process is
\[
dZ_t
=
\left(
v_t(Z_t)+\frac{\tau^2}{2}s_t(Z_t)
\right)dt
+
\tau\,dB_t,
\qquad
Z_0\sim q_0,
\]
where \(B_t\) is a standard Brownian motion. Under the conditions of
Lemma~\ref{lem:exact_sampling_conditions}, \(Z_t\) has density \(q_t\) for
every \(t\in[0,1]\). Thus the score correction preserves the interpolation's
marginal path while introducing stochasticity. Applying It\^{o}'s formula to
\(v_t\) and \(s_t\) along \(Z_t\) reduces the analysis to integrated derivative
bounds, which Gaussian smoothing controls through moments of \(Y_1-Y_0\).

Initialize the learned sampler according to $\widehat Z_0
\sim
\widehat{\mathbb P}_{Y\mid A=a}^{\mathrm{aux}}
*
\mathcal N(0,\sigma^2I_d).$ For \(t_k=kh\), the learned Euler sampler is
\[
\widehat Z_{k+1}
=
\widehat Z_k
+
h\left(
\widehat v(\widehat Z_k,t_k)
+
\frac{\tau^2}{2}\widehat s(\widehat Z_k,t_k)
\right)
+
    \tau\sqrt h\,G_k,
    \qquad
    G_k\sim\mathcal N(0,I_d),
\]
where the \(G_k\) are mutually independent and independent of the
initialization. Since \(t_N=Nh=1\), the terminal Euler output
\(\widehat Z_N\) is compared with \(Z_{t_N}=Z_1\).
Algorithm~\ref{alg:score_corrected_sde_sampler} gives the complete procedure.
Setting \(\tau=0\) recovers deterministic Euler sampling, although our KL
analysis requires \(\tau>0\). Appendix~\ref{sect_existing_work_appendix}
discusses related stochastic samplers and finite-step comparisons.

Gaussian smoothing is essential for controlling the required score
regularity through \(D_5\). The following example shows that this
control fails in general when \(\sigma=0\).

\begin{example}
\label{example_no_sigma}
Fix \(\varepsilon>0\), set \(\delta=2^{1/5}\varepsilon\), and let
\(Y_0\sim\mathcal N(0,1)\). For each integer \(m\geq0\), define
\[
K_m=\frac{(2m+1)\pi}{\delta},
\qquad
p_m(x)=\frac14+\frac18\sin(K_mx).
\]
Conditional on \(Y_0=x\), let
\[
Y_1
=
\begin{cases}
x+\delta, & \text{with probability }p_m(x),\\
x-\delta, & \text{with probability }p_m(x),\\
x, & \text{with probability }1-2p_m(x).
\end{cases}
\]
For the unsmoothed interpolation
\(Y_t=(1-t)Y_0+tY_1\), let \(q_t^{(m)}\) be its density and
\(s_t^{(m)}=\partial_x\log q_t^{(m)}\); the dependence of \(Y_t\) on
\(m\) is suppressed in the notation.. Every
coupling in this sequence satisfies
\(D_5=\varepsilon\), but
\[
\int_0^1
\mathbb E\!\left[
|\partial_xs_t^{(m)}(Y_t)|^2
\right]dt
\geq
\frac{35}{32768}K_m^4
\left(1-e^{-2K_m^2}\right)^4
\longrightarrow+\infty.
\]
Thus, even arbitrarily small \(D_5\) does not uniformly control the
integrated score regularity without smoothing.
\end{example}

\section{Coupling-Dependent Sampling and End-to-End Guarantees}

This section proves a coupling-dependent finite-step sampling bound, derives
finite-sample guarantees for the doubly robust velocity and score estimators,
and combines them into an end-to-end sample-complexity result. Unless otherwise
stated, \(\lesssim\) suppresses universal numerical constants and may also hide
logarithmic factors in the displayed parameters. A complete notation list is
provided in Appendix~\ref{sec:notation_summary}.

\subsection{Coupling-Dependent Finite-Step Sampling}

For \(t_k=kh\), where \(h=1/N\) (so \(t_N=1\)), write
\(\widehat v_{t_k}(x)=\widehat v(x,t_k)\) and
\(\widehat s_{t_k}(x)=\widehat s(x,t_k)\).  
We use the \hyperref[cond:standing_sampling]{Standard Sampling Conditions} in Appendix~\ref{Appendix_Sampling_Proof}.
For $\sigma,\tau>0$ and $D_5<\infty$, Lemma~\ref{lem:exact_sampling_conditions} provides a jointly continuous exact drift and a weak solution $Z$ with $\mathcal L(Z_t)=q_t$; throughout the sampling analysis, we use this solution.
The following  Assumption~\ref{Assumption_Integrated_Estimation_Error} defines \(\varepsilon_{\mathrm{vel}}^2\) and \(\varepsilon_{\mathrm{score}}^2\) as the grid-averaged mean-squared differences between the learned and population fields evaluated at \(Z_{t_k}\), and assumes both are finite. This is standard in related sampling analyses \citep{liu2025finitesde,liu2026finitetime,li2024unified,benton2024nearly}. Section~\ref{sec:finite_sample_field_estimation} later provides finite-sample conditions that verify this assumption.  
\begin{assumption}[Integrated Estimation Error]
\label{Assumption_Integrated_Estimation_Error}
Assume
\begin{align*}
\varepsilon_{\mathrm{vel}}^2
:=h\sum_{k=0}^{N-1}
\mathbb E\!\left[
\|v_{t_k}(Z_{t_k})-\widehat v_{t_k}(Z_{t_k})\|^2
\right]<\infty,\quad
\varepsilon_{\mathrm{score}}^2
:=h\sum_{k=0}^{N-1}
\mathbb E\!\left[
\|s_{t_k}(Z_{t_k})-\widehat s_{t_k}(Z_{t_k})\|^2
\right]<\infty.
\end{align*}
\end{assumption}

\begin{theorem}[KL Sampling Bound]
\label{Theorem_Sampling_Error}

Let \(N,d\geq1\), \(h=1/N\), \(\sigma,\tau>0\), and
\(D_5\in(0,\infty)\). Under \hyperref[cond:standing_sampling]{Standard Sampling Conditions} and
Assumption~\ref{Assumption_Integrated_Estimation_Error},
define
\begin{align*}
A_h&=\frac{4(9d+51)}{3}\frac{h^2D_5^4}{\sigma^2}+2\varepsilon_{\mathrm{vel}}^2,\qquad
B_h=\frac{124}{3}\frac{h^2(d+6)^3D_5^2}{\sigma^4}+\frac12\varepsilon_{\mathrm{score}}^2.
\end{align*}
Then
\begin{align}
\label{eq:explicit_KL_sampling_bound}
\operatorname{KL}\bigl(\mathcal L(Z_{t_N})\,\|\,\mathcal L(\widehat Z_N)\bigr)
\leq\frac{A_h}{\tau^2}+B_h\tau^2+\frac{hd(d+3)}{\sigma^4}\tau^4+\frac{68h^2(d+6)^3}{3\sigma^6}\tau^6+\frac{2h(d+1)D_5^2}{\sigma^2}.
\end{align}
The first two terms are balanced by $\tau=(A_h/B_h)^{1/4}$.
\end{theorem}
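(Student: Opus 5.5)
The plan is to compare the exact SDE $Z$ with a continuous-time interpolation of the Euler scheme, using Girsanov's theorem. Let $b_t(x)=v_t(x)+\frac{\tau^2}{2}s_t(x)$ denote the exact drift and $\widehat b_{t_k}=\widehat v_{t_k}+\frac{\tau^2}{2}\widehat s_{t_k}$ the learned drift. The interpolation $\widehat Z_t$ has drift $\widehat b_{t_k}(\widehat Z_{t_k})$ on $[t_k,t_{k+1})$ and diffusion $\tau\,dB_t$. The initializations differ, since $Z_0\sim q_0$ while $\widehat Z_0$ is drawn from the empirical-source convolution. I would treat the initial term as handled by the Standard Sampling Conditions, or absorb it via the chain rule for KL, so that $\operatorname{KL}(\mathcal L(Z_0)\|\mathcal L(\widehat Z_0))$ does not enter the displayed bound.

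By data processing and Girsanov (justified by the weak solution of Lemma~\ref{lem:exact_sampling_conditions}, or via a localization/approximation argument if Novikov fails),
\[
\operatorname{KL}\bigl(\mathcal L(Z_{t_N})\|\mathcal L(\widehat Z_N)\bigr)\le \frac{1}{2\tau^2}\sum_{k}\int_{t_k}^{t_{k+1}}\mathbb E\|b_t(Z_t)-\widehat b_{t_k}(Z_{t_k})\|^2dt .
\]
Next split $b_t(Z_t)-\widehat b_{t_k}(Z_{t_k})$ into four pieces: $[v_t(Z_t)-v_{t_k}(Z_{t_k})]$, $[v_{t_k}-\widehat v_{t_k}](Z_{t_k})$, $\frac{\tau^2}{2}[s_t(Z_t)-s_{t_k}(Z_{t_k})]$, and $\frac{\tau^2}{2}[s_{t_k}-\widehat s_{t_k}](Z_{t_k})$. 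Using $\|a+b\|^2\le 2\|a\|^2+2\|b\|^2$ on each pair, the estimation pieces give exactly $2\varepsilon_{\mathrm{vel}}^2/\tau^2$ and $\frac{\tau^2}{2}\varepsilon_{\mathrm{score}}^2$ after the $1/(2\tau^2)$ prefactor. The choice of which terms go into $A_h$, $B_h$ and the remaining terms is arranged so that these constants match.

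For the discretization pieces, apply It\^o's formula to $v_\cdot(Z_\cdot)$ and $s_\cdot(Z_\cdot)$ on $[t_k,t]$. The increment $v_t(Z_t)-v_{t_k}(Z_{t_k})$ equals
\[
\int_{t_k}^t\Bigl(\partial_rv_r+\nabla v_r\, b_r+\tfrac{\tau^2}{2}\Delta v_r\Bigr)(Z_r)\,dr+\tau\int_{t_k}^t\nabla v_r(Z_r)\,dB_r .
\]
The Fokker--Planck structure (the marginals are $q_t$) lets us rewrite $\partial_r v_r+\nabla v_r\,v_r$ as a conditional-covariance expression that is controlled by displacement moments. Then Cauchy--Schwarz and the It\^o isometry bound the second moment by $h^2\sup_r\mathbb E|\text{drift}|^2$ plus $\tau^2 h\,\mathbb E\|\nabla v_r\|_F^2$. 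Since $\mathcal L(Z_r)=q_r$, every expectation reduces to moments under the interpolation $X_r$. Here the key Gaussian-smoothing lemmas are needed:
\begin{itemize}[noitemsep]
\item $\nabla v_r$ equals $\sigma^{-1}$ times a conditional covariance of $(Y_1-Y_0,G)$, so $\mathbb E\|\nabla v_r\|_F^2\lesssim (d+1)D_5^2/\sigma^2$. Multiplied by $\tau^2h$ and divided by $\tau^2$, this yields the term $2h(d+1)D_5^2/\sigma^2$.
\item The $dt$-terms for $v$ involve second derivatives and products, bounded by $(d+O(1))D_5^4/\sigma^2$ via Tweedie-type identities and Jensen on conditional fifth moments. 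This gives the $h^2D_5^4/\sigma^2$ part of $A_h$.
\item For $s_r$ the It\^o expansion produces $\nabla s_r$, with $\mathbb E\|\nabla s_r\|_F^2\lesssim d(d+3)/\sigma^4$, and the drift term $\frac{\tau^2}{2}\Delta s_r$. Multiplied by the prefactor $\tau^4/4$ and divided by $\tau^2$, these give the $\tau^4hd(d+3)/\sigma^4$ and $\tau^6h^2(d+6)^3/\sigma^6$ terms.
\item The cross terms $\nabla s_r v_r$ and $\partial_rs_r$ involve Gaussian moments of order up to six, hence the factor $(d+6)^3$ together with $D_5^2$. These give the $h^2(d+6)^3D_5^2/\sigma^4$ part of $B_h$.
\end{itemize}

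The main obstacle is the third step: obtaining explicit bounds on $\mathbb E_{q_r}$ of the time derivatives, Hessians and third derivatives of $v_r$ and $s_r$ purely in terms of $D_5$ and Gaussian moments. These quantities are higher-order conditional cumulants of $(Y_1-Y_0,G)$ given $X_r$. I would first express each derivative as a polynomial in conditional moments, using $\nabla_x\mathbb E[f\mid X_r=x]=\sigma^{-1}\mathrm{Cov}(f,G\mid X_r=x)$. I would then apply conditional Jensen and H\"older so that the worst product, a fourth power of displacement times a Gaussian factor, is bounded by $D_5^4$ via $\mathbb E\|Y_1-Y_0\|^4\le D_5^4$. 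Finally I would compute the Gaussian moments $\mathbb E\|G\|^{2k}\le (d+2k)^k$ explicitly to recover the stated constants. The balancing remark is immediate: minimizing $A_h/\tau^2+B_h\tau^2$ over $\tau$ gives $\tau^4=A_h/B_h$.
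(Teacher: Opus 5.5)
Your skeleton is the paper's. Girsanov with localization reduces the KL to $\frac{1}{2\tau^2}\int_0^1\mathbb E\|b_t(Z_t)-\widehat b_{\eta_h(t)}(Z_{\eta_h(t)})\|^2dt$. The four-way split gives exactly $2\varepsilon_{\mathrm{vel}}^2/\tau^2$ and $\frac{\tau^2}{2}\varepsilon_{\mathrm{score}}^2$. It\^o's formula for $v_r(Z_r)$ and $s_r(Z_r)$, together with $\mathcal L(Z_r)=q_r$, then turns everything into posterior-moment bounds. However, the step you flag as the main obstacle is mis-sketched in two ways that matter for the stated constants.

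\emph{First gap: the $\tau$-weighted drift pieces.} Since $b_r=v_r+\frac{\tau^2}{2}s_r$, the $dt$-part of the velocity increment contains $\frac{\tau^2}{2}D_xv_r\,s_r$ and $\frac{\tau^2}{2}\Delta_xv_r$. These are linear in the displacement before squaring, of size $\tau^4(d+6)^2D_5^2/\sigma^4$ in $L^2(q_r)$. They are \emph{not} $O(D_5^4/\sigma^2)$ as your second bullet asserts; that fails as $D_5\to0$. In the paper they contribute $\frac{28}{3}h^2\tau^4(d+6)^2D_5^2/\sigma^4$ to $V_h$. After the factor $2/\tau^2$ and $(d+6)^2\le(d+6)^3$, this is the $\frac{56}{3}$ in $B_h$'s $\frac{124}{3}=\frac{56}{3}+\frac{68}{3}$. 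Only $\partial_rv_r$ and $D_xv_r\,v_r$ feed $A_h$, so your accounting, which credits all of $B_h$'s $D_5^2$ part to score cross terms, omits this piece. The same omission occurs in the score increment: $D_xs_r\,b_r$ contains $\frac{\tau^2}{2}D_xs_r\,s_r$, which is the $4$ in $68=64+4$ behind the $\tau^6$ coefficient.

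\emph{Second gap: where the fifth moment enters.} With $D=Y_1-Y_0$, the term $\partial_rv_r=\sigma^{-1}\operatorname{Cov}(D,D^\top G\mid X_r)$ is fine with fourth moments, because $D$ and $G$ are unconditionally independent. But $D_xv_r\,v_r=-\sigma^{-1}\operatorname{Cov}(D,G\mid X_r)\,\mathbb E[D\mid X_r]$ leads to $\mathbb E\bigl[(\mathbb E[\|D\|^2\mid X_r])^2\,\mathbb E[\|G\|^2\mid X_r]\bigr]$. Its two factors are correlated through $X_r$, so conditional Jensen plus $\mathbb E\|D\|^4\le D_5^4$ does not close it. The same product appears in the combined form $\partial_rv_r+D_xv_r\,v_r=\sigma^{-1}\operatorname{Cov}(D,(D-\mathbb E[D\mid X_r])^\top G\mid X_r)$, which comes from the posterior identities, not from Fokker--Planck. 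The paper applies H\"older with exponents $5/4$ and $5$, giving $(\mathbb E\|D\|^5)^{4/5}(\mathbb E\|G\|^{10})^{1/5}\le(d+8)D_5^4$. This is why the theorem is phrased in $D_5$, and it is the source of $9d+51=3(d+1)+6(d+8)$.

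\emph{Smaller points.} The initial laws coincide: the Population Coupling has source marginal $\widehat{\mathbb P}^{\mathrm{aux}}_{Y\mid A=a}$, so $q_0=\mathcal L(\widehat Z_0)$. The paper's Girsanov lemma assumes a common initial law; the chain rule would add $\operatorname{KL}(\mathcal L(Z_0)\|\mathcal L(\widehat Z_0))$, not absorb it.

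Your localization must also show that the energy stopping times exhaust $[0,1]$ under the Euler law. The paper gets this from continuity of $b$ and $\widehat b_{t_k}$, which makes $\int_0^1\|\Delta_t(\widehat Z)\|^2dt$ finite pathwise.

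Finally, for the score you bound $\partial_rs_r$ and $D_xs_r\,v_r$ directly. The paper instead uses the continuity equation to cancel $D_xs_r\,v_r$, leaving $\nabla(\nabla\cdot v_r)$ (hence its full-Hessian lemma) and $(D_xv_r)^\top s_r$. Your variant is viable and even avoids the Hessian. Indeed, $\partial_ts_t=\sigma^{-2}\{v_t-\operatorname{Cov}(G,G^\top D\mid X_t)\}$, and $\mathbb E\|D_xs_t\,v_t\|^2\le(\mathbb E\|D_xs_t\|_{\mathrm F}^4)^{1/2}(\mathbb E\|D\|^4)^{1/2}$. Both are $O((d+6)^2D_5^2/\sigma^4)$, which fits within the stated $(d+6)^3$ coefficient.
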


\begin{remark}[Coupling-Dependent Scaling]
In the exact-field case
\(\varepsilon_{\mathrm{vel}}=\varepsilon_{\mathrm{score}}=0\), choosing
\(\tau^2=\sigma D_5/d\) gives
\begin{align}
\label{eq_KL_D5_Order}
\operatorname{KL}\bigl(\mathcal L(Z_{t_N})\,\|\,\mathcal L(\widehat Z_N)\bigr)
\lesssim\frac{hd}{\sigma^2}D_5^2+\frac{h^2d^2}{\sigma^3}D_5^3.
\end{align}
Thus, for any \(\varepsilon_{\mathrm{target}}\in(0,1)\),
\begin{align*}
    h\lesssim \frac{1}{dD_5^2/(\varepsilon_{\mathrm{target}}^2\sigma^2)+dD_5/\sigma}
\end{align*}
gives KL error of order \(\varepsilon_{\mathrm{target}}^2\), corresponding to
\(\widetilde{\mathcal O}\big(d\big[\frac{D_5^2}{\varepsilon_{\mathrm{target}}^2\sigma^2} + \frac{D_5}{\sigma}\big] \big)\) Euler steps.
\end{remark}

Recall that Example~\ref{example_no_sigma} motivates the importance of smoothing. 
Compared with bounds for constant-step Euler discretizations of stochastic-interpolant SDE and ODE samplers in
\citep{liu2025finitesde,liu2026finitetime}, our result improves the dimension dependence from quadratic to near-linear; Appendix
\ref{sect_existing_work_appendix} gives a detailed comparison. This dimension comparison holds with smoothing, displacement moment,
and target accuracy fixed.

\subsection[Finite-Sample Field Estimation and End-to-End Guarantee]
{Finite-Sample Field Estimation and End-to-End Guarantee}
\label{sec:finite_sample_field_estimation}

We now give sufficient conditions for
Assumption~\ref{Assumption_Integrated_Estimation_Error}. Below, \(d\) is fixed, so implicit constants may depend on \(d\) in
addition to the logarithmic factors suppressed by \(\lesssim\).

\begin{assumption}[Clipping, Overlap, and Bounded Support]
\label{ass:clipping_bounded_support}
We work with outcomes clipped to \([-1,1]^d\) and assume that, for every
\(c\in\operatorname{supp}(\mathbb P_C)\),
\[
\begin{gathered}
\mathbb P_{Y\mid C=c,A=a}([-1,1]^d)=\predorac([-1,1]^d)=1,\\[-2pt]
\mathbb P(A=a\mid C=c)\geq\pi_{\min}>0,\qquad
0\leq\widehat\alpha(c)\leq2\pi_{\min}^{-1}.
\end{gathered}
\]
Here \(\widehat\alpha(c)=1/\widehat{\mathbb P}(A=a\mid C=c)\). Thus every
endpoint pair satisfies \(Y_0,Y_1\in[-1,1]^d\) almost surely, and
\(\widehat Q_n([-1,1]^d\mid y)=1\) for every \(y\in\mathbb R^d\).
\end{assumption}

\begin{assumption}[Propensity Accuracy]
\label{ass:alpha_l2_accuracy}
Let \(\alpha_a(c)=1/\mathbb P(A=a\mid C=c)\). Conditional on the
auxiliary sample used to fit \(\widehat\alpha\), assume 
\begin{align*}
    \left\|\widehat\alpha/\alpha_a-1\right\|_{L^2(\mathbb P_C)}
\leq\varepsilon_\alpha.
\end{align*}
\end{assumption}
Assumption~\ref{ass:clipping_bounded_support} bounds outcomes,
oracle-generated samples, and inverse-propensity weights while enforcing
overlap. Assumption~\ref{ass:alpha_l2_accuracy} requires relative
\(L^2(\mathbb P_C)\) accuracy of the estimated inverse propensity.

\begin{assumption}[Prediction-Oracle Accuracy]
\label{ass:prediction_oracle_accuracy}
Define
\(\varepsilon_{\mathrm{pred}}
=[(\epspsiemp)^2+\epspsigen^2]^{1/2}\), and assume
\[
\left[\mathbb E_C
W_1\!\left(\mathbb P_{Y\mid C=c,A=a},\predorac\right)^2\right]^{1/2}
\leq\varepsilon_{\mathrm{pred}}.
\]
\end{assumption}

Assumption~\ref{ass:prediction_oracle_accuracy} controls the oracle's average
conditional \(W_1\) error over population covariates. Appendix
\ref{App_Prediction_Oracle_Accuracy} derives this requirement from empirical-fit
and generalization conditions and describes conditional-flow training. Let $\mu_1
:=
\int\mathbb P_{Y\mid C=c,A=a}\,d\mathbb P_C(c).$
For any coupling kernel \(Q(dy_0\mid y_1)\), define the joint endpoint law 
\begin{align*}
    \Gamma_{\mu_1,Q}(dy_0,dy_1)
:=
Q(dy_0\mid y_1)\mu_1(dy_1).
\end{align*}
To emphasize the role of coupling explicitly, under \((Y_0,Y_1)\sim\Gamma_{\mu_1,Q}\), we let \(q_{t,Q}\) denote the density of
\(X_t=Y_0+t(Y_1-Y_0)+\sigma G\), and define
\[
v_Q^\star(x,t)
:=
\mathbb E[Y_1-Y_0\mid X_t=x],
\qquad
s_Q^\star(x,t)
:=
\nabla_x\log q_{t,Q}(x).
\]

\phantomsection
\label{def:population_comparison_coupling}
\par\noindent\textbf{Population Coupling.}
Let \(\mathcal E_a=\{\sum_{i=n+1}^{2n}\mathbf 1\{A_i=a\}\geq1\}\).
On \(\mathcal E_a\), optimally couple
\(Y_1\sim\mu_1\) and
\(\widehat Y_1\sim\widehat{\mathbb P}_{Y_1}^{Q}\) in \(W_1\), and then draw
\(Y_0\sim\widehat Q_n(\cdot\mid\widehat Y_1)\). We define
\(Q(\cdot\mid Y_1)\) as the resulting conditional law of \(Y_0\) given
\(Y_1\). Thus \(\Gamma_{\mu_1,Q}\) has target marginal \(\mu_1\) and source
marginal \(\widehat{\mathbb P}_{Y\mid A=a}^{\mathrm{aux}}\).
Lemma~\ref{lem:existence_population_valid_Q} formalizes this construction
and proves the coupling-replacement bound used below.

Let \(\mathcal V_n\) denote the clipped coordinatewise sparse-ReLU
velocity-field class with input-clipping radius \(R\) and approximation
tolerance \(\varepsilon\), as defined in Appendix~\ref{Appendix_Vel_Error},
and let \(\mathcal S_n\) denote the score-field class induced by clipped
sparse-ReLU denoisers through \(s_u(x,t)=\{u(x,t)-x\}/\sigma^2\), as defined
in Appendix~\ref{Appendix_Score_Error}. The next proposition gives explicit
finite-sample bounds for the velocity- and score-field components of
Assumption~\ref{Assumption_Integrated_Estimation_Error}. If Assumption~\ref{ass:nonempty_treatment_a_auxiliary_sample}, which ensures that the auxiliary split contains at least one treatment-\(a\) observation, is added to the hypotheses of Proposition~\ref{thm:final_velocity_L2_bound_DR}, then
  \(\mathcal E_a\) holds by assumption and the term \(e^{-n\pi_{\min}}\) can be omitted from the failure probability.

\begin{proposition}[Doubly Robust Velocity and Score Field Error Bound]
\label{thm:final_velocity_L2_bound_DR}
\label{thm:final_score_L2_bound_DR}
Let \(n\geq2\), \(M,M_Q\geq1\), and \(0<\sigma\leq1\).
Under Assumptions~\ref{ass:clipping_bounded_support},
\ref{ass:alpha_l2_accuracy},
\ref{ass:prediction_oracle_accuracy}, let \(Q\) be the
\hyperref[def:population_comparison_coupling]{Population Coupling}
defined above.
Let \(\epspsiemp \le \varepsilon_{\mathrm{pred}}\) denote the empirical generator accuracy for prediction oracle, formally defined in Assumption~\ref{ass:psi_empirical_training_W1_accuracy}.  Define $$\mathcal{A} :=\frac{\varepsilon_{\mathrm{OT}}^{-1}}{\sigma^2}
\left[n^{-1/(d+2)}+\epspsiemp\right]+\frac{1+\varepsilon_{\mathrm{OT}}^{-1}}{\sigma^2}
\varepsilon_\alpha\varepsilon_{\mathrm{pred}}.$$ Then, with probability at least
\(1-\mathcal O(n^{-1})-e^{-n\pi_{\min}}\) over the data and Monte Carlo
draws, \(\mathcal E_a\) occurs and the following bound holds:
\begin{enumerate}[leftmargin=0.1in]
    \item For velocity field, Let \(\widehat v_M\) be the exact empirical minimizer over \(\mathcal V_n\), with
\(\varepsilon=n^{-1/(d+5)}\) and
\(R=1+\sigma[2\log(32d^2/\varepsilon^2)]^{1/2}\). \begin{align}
\label{eq:final_velocity_L2_bound_DR_theorem}
\frac1N\sum_{k=0}^{N-1}
\mathbb E\!\left[
\|\widehat v_M(X_{t_k},t_k)-v_Q^\star(X_{t_k},t_k)\|^2
\right]
\lesssim{}& \mathcal{A} +
\frac{1+2\pi_{\min}^{-1}}{\sigma^{d+1}}
n^{-2/(d+5)}.
\end{align}
\item For score field, let \(\widehat s_M\) be the exact empirical minimizer over
\(\mathcal S_n\), with \(\varepsilon=n^{-1/(d+5)}\) and
\(R=1+\sqrt{2}\sigma
[\log(8d^2/(\sigma^4\varepsilon^2))]^{1/2}\). \begin{align}
\label{eq:final_score_L2_bound_DR_theorem}
\frac1N\sum_{k=0}^{N-1}
\mathbb E\!\left[
\|\widehat s_M(X_{t_k},t_k)-s_Q^\star(X_{t_k},t_k)\|^2
\right]
\lesssim{}&
\sigma^{-4} \mathcal{A}+ \frac{1+2\pi_{\min}^{-1}}{\sigma^{2d+6}}
n^{-2/(d+5)}.
\end{align}
\end{enumerate}

\end{proposition}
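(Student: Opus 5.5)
The plan is a standard oracle-inequality argument, run conditionally on the auxiliary split with $\widehat\alpha$, $\predorac$ and $\widehat Q_n$ treated as fixed. I treat the velocity field first; the score field follows the same template. Let $L_Q(v)=\frac1N\sum_k\mathbb E\|v(X_{t_k},t_k)-(Y_1-Y_0)\|^2$ be the grid-averaged population loss under the endpoint law $\Gamma_{\mu_1,Q}$. Its minimizer is $v_Q^\star$, and the Pythagorean identity
\[
L_Q(v)-L_Q(v_Q^\star)=\frac1N\sum_k\mathbb E\|v-v_Q^\star\|^2(X_{t_k},t_k)
\]
turns the target quantity into an excess risk. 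I then define the auxiliary-fixed doubly robust population risk $L_{\mathrm{DR}}(v)$. This is the expectation of the summand of $\widehat L_{\mathcal B,\mathrm{DR}}$ over a fresh training point and the Monte Carlo draws, with $M$ oracle draws and $y_0\sim\widehat Q_n(\cdot\mid y_1)$. The excess risk is then split into four pieces: (i) $L_Q$ versus the risk that uses $\widehat Q_n$ instead of $Q$ (coupling replacement); (ii) $L_{\mathrm{DR}}$ versus the ideal risk (nuisance bias); (iii) empirical versus population DR risk uniformly over $\mathcal V_n$ (generalization and Monte Carlo); and (iv) the best-in-class approximation error of $v_Q^\star$.

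For (i), I invoke Lemma~\ref{lem:existence_population_valid_Q}. The two couplings differ only through the $W_1$ coupling of $\mu_1$ with $\widehat{\mathbb P}^Q_{Y_1}$. The EOT kernel $\widehat Q_n(\cdot\mid y_1)$ is Lipschitz in $y_1$ with constant of order $\varepsilon_{\mathrm{OT}}^{-1}$, since the Sinkhorn potentials on $[-1,1]^d$ have gradients bounded by diameter over $\varepsilon_{\mathrm{OT}}$. Gaussian smoothing makes the loss difference Lipschitz in the endpoints with a $\sigma^{-2}$ factor. Together these give a bound of order $\varepsilon_{\mathrm{OT}}^{-1}\sigma^{-2}\,W_1(\mu_1,\widehat{\mathbb P}^Q_{Y_1})$. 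I bound this $W_1$ by the empirical-measure rate $n^{-1/(d+2)}$ (with high probability, via McDiarmid on bounded support) plus $\epspsiemp$. For (ii), I use the standard DR product structure. Writing $\ell_c(v)$ for the conditional loss, the bias equals
\[
\mathbb E_C\bigl[(\widehat\alpha/\alpha_a-1)\bigl(\ell_c^{\mathrm{true}}(v)-\ell_c^{\mathrm{oracle}}(v)-[\text{same at }v_Q^\star]\bigr)\bigr].
\]
Cauchy--Schwarz, Assumption~\ref{ass:alpha_l2_accuracy}, and a $W_1$-Lipschitz bound on $y_1\mapsto\ell(v;y_1)$ then give $\varepsilon_\alpha\varepsilon_{\mathrm{pred}}$. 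The Lipschitz bound again costs $(1+\varepsilon_{\mathrm{OT}}^{-1})/\sigma^2$, because $y_0$ depends on $y_1$ through $\widehat Q_n$. Together, (i) and (ii) produce $\mathcal A$.

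For (iii) and (iv), I cite the sparse-ReLU approximation results of Appendix~\ref{Appendix_Vel_Error}. With tolerance $\varepsilon=n^{-1/(d+5)}$ on the clipped cube of radius $R$, the approximation error is $\varepsilon^2$. The Gaussian tail outside radius $R$ contributes $\lesssim\varepsilon^2$ by the choice of $R$. The derivative bounds of $v_Q^\star$ on this region scale like $\sigma^{-(d+1)}$, which explains that prefactor. Generalization uses a covering-number bound for the clipped class; its log-cardinality is $\lesssim\varepsilon^{-(d+1)}$ up to logs. The loss is bounded by $(1+2\pi_{\min}^{-1})\cdot O(1)$ using Assumption~\ref{ass:clipping_bounded_support}. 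A Bernstein-type local (fast-rate) inequality then yields $(1+2\pi_{\min}^{-1})\varepsilon^{-(d+1)}/n\asymp n^{-2/(d+5)}$, balanced against $\varepsilon^2$. The failure probability collects $O(n^{-1})$ from the concentration steps and $e^{-n\pi_{\min}}$ from $\mathcal E_a^c$, by a Chernoff bound on $\mathrm{Bin}(n,\pi_{\min})$.

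For the score field, I repeat the argument with the denoiser parametrization $s_u=(u-x)/\sigma^2$. The score loss equals $\sigma^{-4}$ times a denoising loss for $u^\star=\mathbb E[S_t\mid X_t]$, which accounts for the $\sigma^{-4}\mathcal A$ term. The worse prefactor $\sigma^{-(2d+6)}$ comes from the higher derivatives of $u^\star$ combined with the $\sigma^{-4}$ rescaling. I expect the main obstacle to be the Lipschitz control of the loss as a function of $y_1$ through the kernel $\widehat Q_n(\cdot\mid y_1)$, used in steps (i) and (ii). It needs a careful stability estimate for the extended EOT conditional kernel. I would try to bound the total variation of $\widehat Q_n(\cdot\mid y_1)-\widehat Q_n(\cdot\mid y_1')$ by $\varepsilon_{\mathrm{OT}}^{-1}\|y_1-y_1'\|$ via the explicit Gibbs form of the kernel.
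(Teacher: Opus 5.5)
Your decomposition into coupling replacement, doubly robust bias, uniform deviation and approximation is the same as the paper's (Proposition~\ref{prop:shared_DR_field_bound} with Corollary~\ref{Cor_Coupling_Replacement}). Steps (i), (ii), and the cube-plus-Gaussian-tail approximation in (iv) work as you describe. Bundling each training point with its own Monte Carlo draws into one i.i.d.\ unit is a legitimate shortcut for the Monte Carlo term. A TV estimate for $\widehat Q_n(\cdot\mid y_1)$ from the Gibbs form, multiplied by the diameter $2\sqrt d$, also gives the paper's $O(d/\varepsilon_{\mathrm{OT}})$ $W_1$-Lipschitz constant.

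The gap is the statistical term in (iii): the mechanism you state does not produce the claimed bound. Arithmetically, at $\varepsilon=n^{-1/(d+5)}$ one has $\varepsilon^{-(d+1)}/n=n^{-4/(d+5)}$, not $n^{-2/(d+5)}$. Balancing a fast rate $\varepsilon^{-(d+1)}/n$ against $\varepsilon^2$ would instead select $\varepsilon=n^{-1/(d+3)}$. The prescribed $\varepsilon=n^{-1/(d+5)}$ is exactly the balance of $\varepsilon^2$ against the slow rate $(\varepsilon^{-(d+1)}/n)^{1/2}$.

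The $\sigma$-prefactors are also misattributed. The derivatives of $v_Q^\star$ and $u_Q^\star$ are conditional covariances of bounded variables divided by $\sigma^2$, so they are $O(\sigma^{-2})$ on the cube (Lemmas~\ref{lem:noisy_oracle_lipschitz_on_cube} and~\ref{lem:noisy_conditional_mean_lipschitz_on_cube}), not $O(\sigma^{-(d+1)})$. No higher derivatives of $u_Q^\star$ are used. The $\sigma$-dependence actually enters through the metric entropy, which your count $\varepsilon^{-(d+1)}$ omits. The class must carry the scale $\Lambda\asymp\sigma^{-2}$, so $\log\mathcal N=\widetilde O\bigl((\sqrt d\,\Lambda/\varepsilon)^{d+1}\bigr)=\widetilde O\bigl((\sigma^2\varepsilon)^{-(d+1)}\bigr)$.

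Nor is a Bernstein/localization argument available off the shelf, for three reasons.
\begin{itemize}
\item The population DR risk is $\mathbb E_C[(\widehat\alpha/\alpha_a)m_v+(1-\widehat\alpha/\alpha_a)\widehat m_v]$, a squared loss against a signed mixture: the weight $1-\widehat\alpha/\alpha_a$ can be negative, since $\widehat\alpha$ may exceed $\alpha_a$. So the variance--excess-risk condition for squared loss need not hold.
\item The variance of the empirical excess loss is governed by $L^2$ distances under the true and oracle conditional laws coupled through $\widehat Q_n$. The risk you must control, however, lives under $\Gamma_{\mu_1,Q}$.
\item $v_Q^\star\notin\mathcal V_n$.
\end{itemize}

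The repair is to drop the fast rate and use the slow-rate uniform bound over a proper finite cover (Hoeffding plus a union bound, Lemma~\ref{lem:finite_cover_empirical_process}), with envelope $B_\Phi(1+2\pi_{\min}^{-1})$.
\begin{itemize}
\item Velocity ($B_\Phi=16d$): the bound is $\widetilde O\bigl((1+2\pi_{\min}^{-1})\sigma^{-(d+1)}\varepsilon^{-(d+1)/2}n^{-1/2}\bigr)$. At $\varepsilon=n^{-1/(d+5)}$ this equals $(1+2\pi_{\min}^{-1})\sigma^{-(d+1)}n^{-2/(d+5)}$. Since $\sigma\leq1$, it also absorbs the $2\varepsilon^2$ approximation error.
\item Score: the envelope is $4d/\sigma^4$, and $u$ must be approximated to accuracy $\sigma^2\varepsilon/\sqrt d$ with scale $\Lambda_u\asymp\sigma^{-2}$. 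Hence $\log\mathcal N=\widetilde O\bigl((\sigma^4\varepsilon)^{-(d+1)}\bigr)$, and the same computation yields the prefactor $\sigma^{-4}\cdot\sigma^{-2(d+1)}=\sigma^{-(2d+6)}$.
\end{itemize}
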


In Proposition~\ref{thm:final_velocity_L2_bound_DR}, the finite-\(M\) and finite-\(M_Q\)
  errors are dominated by the displayed \(n\)-dependent terms, up to
  logarithmic factors.

\noindent\textbf{Verification of Assumption~\ref{Assumption_Integrated_Estimation_Error}.}
Set
\(\widehat v_{t_k}=\widehat v_M(\cdot,t_k)\) and
\(\widehat s_{t_k}=\widehat s_M(\cdot,t_k)\). For the interpolation induced by
\(Q\), \(v_t=v_Q^\star(\cdot,t)\) and \(s_t=s_Q^\star(\cdot,t)\), while the
interpolation variable \(X_{t_k}\) and the exact SDE state \(Z_{t_k}\) both have
density \(q_{t_k}=q_{t_k,Q}\). Hence the expectations in the two field bounds equal the
corresponding expectations in
Assumption~\ref{Assumption_Integrated_Estimation_Error}. Since \(h=1/N\), the
two displayed grid averages are exactly
\(\varepsilon_{\mathrm{vel}}^2\) and
\(\varepsilon_{\mathrm{score}}^2\), respectively.

In each field bound, the three terms respectively capture coupling replacement,
doubly robust nuisance error, and final-field estimation. The first vanishes
for a coupling fixed independently of the prediction-based target, while the
second is small when either nuisance model is accurate. Substituting both
bounds into Theorem~\ref{Theorem_Sampling_Error} controls finite-step KL error
relative to the smoothed target. Corollary
\ref{cor:counterfactual_W1_from_KL} then adds the smoothing bias and converts
this control into a \(W_1\) guarantee for the original target, yielding the
following theorem.

\begin{theorem}[End-to-End Sample Complexity With Negligible Nuisance Error]
\label{thm:end_to_end_sample_complexity}
Fix \(d\geq1\), \(\pi_{\min}>0\), and
\(\varepsilon_{\mathrm{OT}}>0\). Assume the conditions of
Proposition~\ref{thm:final_velocity_L2_bound_DR}. Let
\(\mu_1=\int\mathbb P_{Y\mid C=c,A=a}\,d\mathbb P_C(c)\) and 
let \(\overline D_5>0\) be a deterministic coupling-displacement envelope such that, almost surely on the event where \(Q\) is defined,
\begin{align*}
    \left(\mathbb E_{\Gamma_{\mu_1,Q}}\|Y_1-Y_0\|^5\right)^{1/5}\leq \overline D_5.
\end{align*}
Suppose \(\epspsiemp\) and
\(\varepsilon_\alpha\varepsilon_{\mathrm{pred}}\) are negligible relative
to the explicit \(n\)-dependent terms. For
\(\varepsilon_{\mathrm{target}}\in(0,1)\), choose \(0<\sigma\leq1\) with
\(\sigma=\Theta(\varepsilon_{\mathrm{target}})\), set
\(\tau^2=\sigma\overline D_5/d\), and choose \(N\geq1\) so that
\begin{align*}
    h=1/N\leq
\frac{ c_h}{d\overline D_5^2/(\varepsilon_{\mathrm{target}}^4\sigma^2)
+d\overline D_5/\sigma},
\end{align*}
where \(c_h\leq2\) is a scaling
constant. Then, up to logarithmic factors, it suffices to take
\begin{align*}
n\gtrsim \max\Bigg\{&
\left[
\max\left\{
\frac{2+\varepsilon_{\mathrm{OT}}^{-1}}{\overline{D_5}}
\varepsilon_{\mathrm{target}}^{-7},
\frac{\overline{D_5}}{\varepsilon_{\mathrm{OT}}}
\varepsilon_{\mathrm{target}}^{-9}
\right\}
\right]^{d+2}, \\ &
\left[
(1+2\pi_{\min}^{-1})
\max\left\{
\overline{D_5}^{-1}\varepsilon_{\mathrm{target}}^{-(d+6)},
\overline{D_5}\varepsilon_{\mathrm{target}}^{-(2d+9)}
\right\}
\right]^{\frac{(d+5)}{2}}
\Bigg\}.
\end{align*}
This yields \(W_1(\mu_1,\mathcal L(\widehat Z_N))
\lesssim\varepsilon_{\mathrm{target}}\) with probability at least
\(1-\mathcal O(n^{-1})-e^{-n\pi_{\min}}\).
\end{theorem}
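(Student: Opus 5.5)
The proof chains three pieces. Proposition~\ref{thm:final_velocity_L2_bound_DR} controls the field errors. Theorem~\ref{Theorem_Sampling_Error} turns field errors into finite-step KL error relative to the smoothed target $\mu_1*\mathcal N(0,\sigma^2 I_d)$. Corollary~\ref{cor:counterfactual_W1_from_KL} converts this KL control into $W_1$ and adds the smoothing bias. I would work on the high-probability event of Proposition~\ref{thm:final_velocity_L2_bound_DR}, where $\mathcal E_a$ holds and $Q$ is defined. There, by the verification paragraph following the proposition, the grid averages equal $\varepsilon_{\mathrm{vel}}^2$ and $\varepsilon_{\mathrm{score}}^2$ for the interpolation induced by $Q$. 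Since $\Gamma_{\mu_1,Q}$ has source marginal $\widehat{\mathbb P}^{\mathrm{aux}}_{Y\mid A=a}$, the sampler initialization $\widehat Z_0$ has exactly the law $q_{0,Q}$, so no initialization error arises. The terminal law is $\mathcal L(Z_1)=q_{1,Q}=\mu_1*\mathcal N(0,\sigma^2I_d)$. Its $W_1$ distance to $\mu_1$ is at most $\sigma\sqrt d\lesssim\varepsilon_{\mathrm{target}}$.

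Next, I would plug into \eqref{eq:explicit_KL_sampling_bound} with $D_5$ replaced by the envelope $\overline D_5$; each term is monotone in $D_5$, so this is legitimate. With $\tau^2=\sigma\overline D_5/d$, the discretization terms reduce as in the Coupling-Dependent Scaling remark to $hd\overline D_5^2/\sigma^2+h^2d^2\overline D_5^3/\sigma^3$. The stated step-size condition, with $c_h\le 2$, makes this $\lesssim\varepsilon_{\mathrm{target}}^4$. I aim for KL of order $\varepsilon_{\mathrm{target}}^4$ rather than $\varepsilon_{\mathrm{target}}^2$. The reason is that the conversion from KL to $W_1$ on a Gaussian-smoothed, essentially bounded law (via a Pinsker/transport inequality on the bounded support, plus Gaussian tail truncation) loses a square root, giving $W_1\lesssim \sqrt{\mathrm{KL}}$ up to logs. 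The field-error contributions are $2\varepsilon_{\mathrm{vel}}^2/\tau^2=2d\,\varepsilon_{\mathrm{vel}}^2/(\sigma\overline D_5)$ and $\tfrac12\varepsilon_{\mathrm{score}}^2\tau^2=\sigma\overline D_5\varepsilon_{\mathrm{score}}^2/(2d)$. Both must be $\lesssim\varepsilon_{\mathrm{target}}^4$. With $\sigma\asymp\varepsilon_{\mathrm{target}}$ this requires
\[
\varepsilon_{\mathrm{vel}}^2\lesssim \overline D_5\,\varepsilon_{\mathrm{target}}^{5},\qquad
\varepsilon_{\mathrm{score}}^2\lesssim \overline D_5^{-1}\varepsilon_{\mathrm{target}}^{3}.
\]

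Then I would invert the bounds of Proposition~\ref{thm:final_velocity_L2_bound_DR}, with the nuisance terms dropped by hypothesis. For the velocity, the coupling-replacement term $\varepsilon_{\mathrm{OT}}^{-1}\sigma^{-2}n^{-1/(d+2)}\lesssim\overline D_5\varepsilon_{\mathrm{target}}^5$ gives $n^{1/(d+2)}\gtrsim \varepsilon_{\mathrm{OT}}^{-1}\overline D_5^{-1}\varepsilon_{\mathrm{target}}^{-7}$. The estimation term $(1+2\pi_{\min}^{-1})\sigma^{-(d+1)}n^{-2/(d+5)}$ gives the $\overline D_5^{-1}\varepsilon_{\mathrm{target}}^{-(d+6)}$ entry. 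For the score, $\sigma^{-6}\varepsilon_{\mathrm{OT}}^{-1}n^{-1/(d+2)}\lesssim\overline D_5^{-1}\varepsilon_{\mathrm{target}}^3$ gives $\overline D_5\varepsilon_{\mathrm{OT}}^{-1}\varepsilon_{\mathrm{target}}^{-9}$. The score estimation term $\sigma^{-(2d+6)}n^{-2/(d+5)}$ gives $\overline D_5\varepsilon_{\mathrm{target}}^{-(2d+9)}$. Taking the maximum reproduces the displayed lower bound on $n$, with the constant factors $(2+\varepsilon_{\mathrm{OT}}^{-1})$ absorbing the $(1+\varepsilon_{\mathrm{OT}}^{-1})$ pieces. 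I would also check that the finite-$M$, $M_Q$ terms are dominated, as stated after the proposition.

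The main obstacle I expect is the KL-to-$W_1$ conversion in Corollary~\ref{cor:counterfactual_W1_from_KL}. Both laws must have controlled moments: $\mathcal L(\widehat Z_N)$ is a Gaussian mixture whose drift is clipped, with bounded endpoints. One then needs a bound of the form $W_1\le C\sqrt{\mathrm{KL}}$ with $C$ polylogarithmic. I would first try to bound the moments of $\widehat Z_N$ directly from clipping and the Gaussian increments, and then apply a weighted-Pinsker inequality. If that fails, I would truncate at radius $R$ and bound the total-variation part separately.
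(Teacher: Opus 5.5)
Your overall route is the paper's: work on the event of Proposition~\ref{thm:final_velocity_L2_bound_DR}, insert $\overline D_5$ into Theorem~\ref{Theorem_Sampling_Error} with $\tau^2=\sigma\overline D_5/d$, demand $\operatorname{KL}\lesssim\varepsilon_{\mathrm{target}}^4$, and invert the field bounds. Your four thresholds on $n$ and your discretization check are correct.

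The gap is in the KL-to-$W_1$ step, where you propose to bound the moments of $\widehat Z_N$ ``from clipping''. The learned drift is not clipped. The score class is $\widehat s_{t_k}(x)=\{\widehat u_{t_k}(x)-x\}/\sigma^2$, which has no uniform bound (Lemma~\ref{lem:score_network_class_uniform_bounds}). Each Euler step therefore reads
\[
\widehat Z_{k+1}=a_h\widehat Z_k+h\xi_k+\tau\sqrt h\,G_k,\qquad a_h=1-\frac{h\tau^2}{2\sigma^2},\qquad \|\xi_k\|\le\Bigl(2+\frac{\tau^2}{2\sigma^2}\Bigr)\sqrt d .
\]
If $h\tau^2/(2\sigma^2)>2$, then $|a_h|>1$ and the moments can grow like $|a_h|^N$. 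Even the bounded part $\xi_k$ is of size $\tau^2/\sigma^2$. The paper tames it by using $a_h\in[0,1]$, which gives $h\sum_k a_h^{N-1-k}\le\min\{1,2\sigma^2/\tau^2\}$ and hence a contribution of at most $3\sqrt d$.

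This is exactly the hypothesis $h\tau^2/(2\sigma^2)\le1$ of Corollary~\ref{cor:counterfactual_W1_from_KL}, and you never verify it. It is the real purpose of the condition $c_h\le2$; in the discretization bound any fixed $c_h$ would do. Since $h\le c_h\sigma/(d\overline D_5)$, one gets $h\tau^2/(2\sigma^2)=h\overline D_5/(2d\sigma)\le c_h/(2d^2)\le1$. You should also record that $D_5\le\overline D_5<\infty$, Lemma~\ref{lem:exact_sampling_conditions}, and continuity of the networks together give the Standard Sampling Conditions required by Theorem~\ref{Theorem_Sampling_Error}.

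Second, your stated reason for targeting $\varepsilon_{\mathrm{target}}^4$ contradicts itself. If the conversion gave $W_1\lesssim\sqrt{\operatorname{KL}}$, then $\operatorname{KL}\lesssim\varepsilon_{\mathrm{target}}^2$ would suffice. The exponent $4$ is what Corollary~\ref{cor:counterfactual_W1_from_KL} needs, because it loses two square roots:
\begin{itemize}
\item Pinsker gives $\mathrm{TV}\le\sqrt{\operatorname{KL}/2}$;
\item Cauchy--Schwarz under a maximal coupling gives $W_1\le(\mathbb E\|X-\widetilde X\|^2)^{1/2}\,\mathrm{TV}^{1/2}$.
\end{itemize}
Together these give $W_1\lesssim\sigma+\operatorname{KL}^{1/4}$. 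Your truncation idea can in fact give $W_1\lesssim\sqrt{\operatorname{KL}}$ up to logarithms once the tails of $\widehat Z_N$ are controlled, which again requires the stability condition above. That would permit weaker thresholds on $n$. Since you keep the $\varepsilon_{\mathrm{target}}^4$ target anyway, the stated thresholds remain sufficient, so this is an inconsistency in your justification rather than an error in the result.
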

This gives an end-to-end guarantee for counterfactual flow matching.
Under Assumption \ref{ass:clipping_bounded_support}, we may clip the final generated sample
coordinatewise to $[-1,1]$. This post-processing won't increase its $W_1$
distance to the target and gives a sharper bound
in $W_1$; see Corollary~\ref{cor:terminal_projection_W1}.
  Although Theorem~\ref{Theorem_Sampling_Error} has near-linear dimension
  dependence, the end-to-end sample complexity still suffers from the curse of
  dimensionality, as expected under ambient-dimensional nonparametric assumptions
  and consistent with related flow-matching and diffusion results
  \citep{zhou2025an,su2025flow,wibisono2024optimal,cai2025minimax}.

\section{Experiments}

We first isolate the effects of coupling and discretization in
two dimensions, then evaluate finite-step generation on two semi-synthetic
image benchmarks.

\subsection{Angular Mode Reallocation}

We use a two-dimensional mixture of eight Gaussians arranged
on a circle to isolate coupling effects. The observed source places total mass
\(3/4\) on the axis-aligned modes and \(1/4\) on the diagonal modes, while the
counterfactual target reverses these masses. We compare IID and entropic
optimal-transport couplings across regularization levels and use the resulting
exact empirical velocity and score fields as references for the learned
models. Figures~\ref{Fig_vel_Gaussian} and~\ref{Fig_score_Gaussian} show that
smaller \(D_5\) generally corresponds to smaller field-estimation error.
Appendix~\ref{app:angular_experiment_details} gives the complete mixture
construction and experimental protocol.
The remaining panels isolate finite-step sampling error using
exact fields. Figure~\ref{Fig_KL_D5_Gaussian} shows how the \(8\)-step KL error
varies with \(D_5\) when
\(\tau=0.5\sqrt{\sigma D_5/d}\). For fixed \(h\), \(\sigma\), and \(d\),
Theorem~\ref{Theorem_Sampling_Error} predicts the coupling-dependent order
\(\operatorname{KL}(\mathcal L(Z_{t_N})\|\mathcal L(\widehat Z_N))
\lesssim D_5^2+D_5^3\), consistent with the empirical trend.
Figure~\ref{Fig_Gaussian_covariate} gives a complementary mode-mass view: with
\(4\) Euler steps, both the ODE and SDE samplers substantially misallocate
mass, whereas with \(64\) steps both closely match the target and assign little
mass outside the mode regions.

\subsection{Image Benchmarks}
\label{sec:image_experiments}

We evaluate finite-step counterfactual generation on two
semi-synthetic colored-image benchmarks: MNIST digits \(1\) and \(6\), adapted
from \citet{dance2026debiased}, and Fashion-MNIST sneakers and ankle boots. In
both tasks, \(C\sim\operatorname{Unif}(0,1)\) controls image color, \(A\)
determines the class, and treatment assignment follows
\(\mathbb P(A=1\mid C=c)
=\exp\{5(c-\tfrac12)\}/(1+\exp\{5(c-\tfrac12)\})\).
This selection mechanism biases the observed \(A=1\) images toward larger
values of \(C\), whereas \(C\) remains uniform under
\(\operatorname{do}(A=1)\); the clean counterfactual target can therefore be
sampled directly. We vary the SDE noise level \(\tau\), compare with the
corresponding \(\tau=0\) ODE and a pure \(\sigma=0\) ODE baseline, and use
\(50\) Euler steps throughout. We report sliced Wasserstein-2 (SW2) distance to
the clean target. Appendix~\ref{app:image_benchmark_details} gives the complete
data construction, training, and evaluation protocols.

\textbf{Digit ColorMNIST.}
Here \(A=0\) denotes digit \(1\), \(A=1\) denotes digit \(6\),
and the observational sample size is \(n=20{,}000\). We evaluate smoothing
levels \(\sigma\in\{0.01,0.015\}\); smaller \(\sigma\) reduces smoothing bias
and places the smoothed target closer to the clean target.
Figure~\ref{Fig_Mnist_1_6} shows that, for both values of \(\sigma\), some
\(\tau>0\) improves on the corresponding \(\tau=0\) ODE, and the best results
also outperform the pure \(\sigma=0\) baseline. For \(\sigma=0.01\),
performance is best near \(\tau=0.075\). Along the corresponding SDE
trajectory, the color-distribution KS statistic decreases from \(0.266\) at
\(t=0\) to \(0.079\) at \(t=1\).

\textbf{Fashion-MNIST.}\label{sec:shoe_experiment}
Here \(A=0\) denotes sneakers and \(A=1\) denotes ankle boots.
We evaluate \(\sigma\in\{0.005,0.01\}\). As shown in
Figure~\ref{fig:shoe}, the benefit of stochastic sampling depends strongly on
the smoothing level. For \(\sigma=0.005\), performance improves with \(\tau\),
is best near \(\tau=0.02\), and clearly outperforms the pure ODE baseline
before deteriorating for larger \(\tau\). For \(\sigma=0.01\), varying
\(\tau\) yields no clear improvement over the pure ODE baseline
within the reported variation,
possibly because of the larger smoothing error. Along the
\(\sigma=0.005\) trajectory, the red--blue color-statistic KS value decreases
from \(0.260\) at \(t=0\) to \(0.063\) at \(t=1\). Overall,
\(\sigma=0.005\) performs substantially better, consistent with its smaller
smoothing error. Qualitative samples are provided in
Appendix~\ref{app:image_benchmark_details}. \vspace{-0.1in}

\input{experiment_figures.tex}

\section{Conclusion}

We developed a counterfactual flow-matching framework that combines doubly robust field estimation, learned source--target couplings, Gaussian smoothing, and score-corrected stochastic sampling. 
We prove a coupling-sensitive finite-step guarantee in which the Euler sampling error depends explicitly on the source--target displacement under the chosen coupling. Separately, our finite-sample analysis separates coupling replacement, nuisance estimation, and statistical errors. Together, these results yield an end-to-end guarantee relative to the original counterfactual target. 
Synthetic and image experiments show that stochastic sampling can improve over the corresponding ODE at fixed step budgets, with performance depending on smoothing and SDE
noise.

%% file: experiment_figures.tex
\begingroup
\setlength{\abovecaptionskip}{2pt}
\setlength{\belowcaptionskip}{0pt}
\subfigtopskip=0pt
\subfigbottomskip=0pt
\subfigcapskip=0pt
\subfigcaptopadj=0pt

\newcommand{\histgridpanel}[5][0 0 0 0]{%
    \begin{minipage}[b]{0.49\linewidth}
    \centering
    \subfigure[#2]{%
        \includegraphics[width=\linewidth,height=#3,keepaspectratio,trim=#1,clip]{#4}%
        \label{#5}%
    }
    \end{minipage}%
}

\begin{figure}[!t]
\centering
\setcounter{subfigure}{0}
\subfigure[$\varepsilon_{\mathrm{vel}}^{2}$]{
    \includegraphics[width=0.47\linewidth,trim=0 6bp 0 6bp,clip]{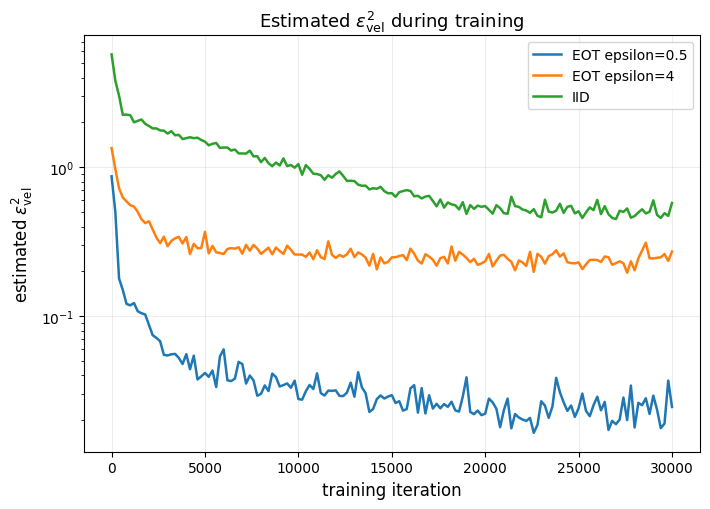}
    \label{Fig_vel_Gaussian}
}\hfill
\subfigure[$\varepsilon_{\mathrm{score}}^{2}$]{
    \includegraphics[width=0.47\linewidth,trim=0 6bp 0 6bp,clip]{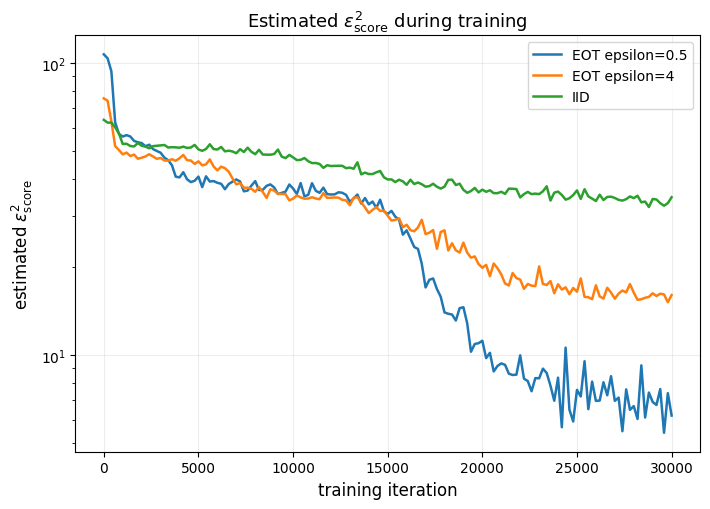}
    \label{Fig_score_Gaussian}
}
\par
\subfigure[KL vs.\ $D_5$]{
    \includegraphics[width=0.47\linewidth,trim=0 6bp 0 6bp,clip]{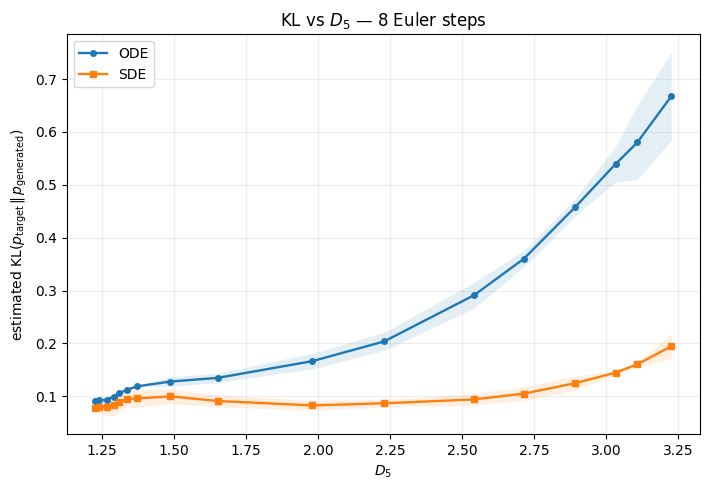}
    \label{Fig_KL_D5_Gaussian}
}\hfill
\subfigure[Mode mass]{
    \includegraphics[width=0.47\linewidth,trim=0 4bp 0 4bp,clip]{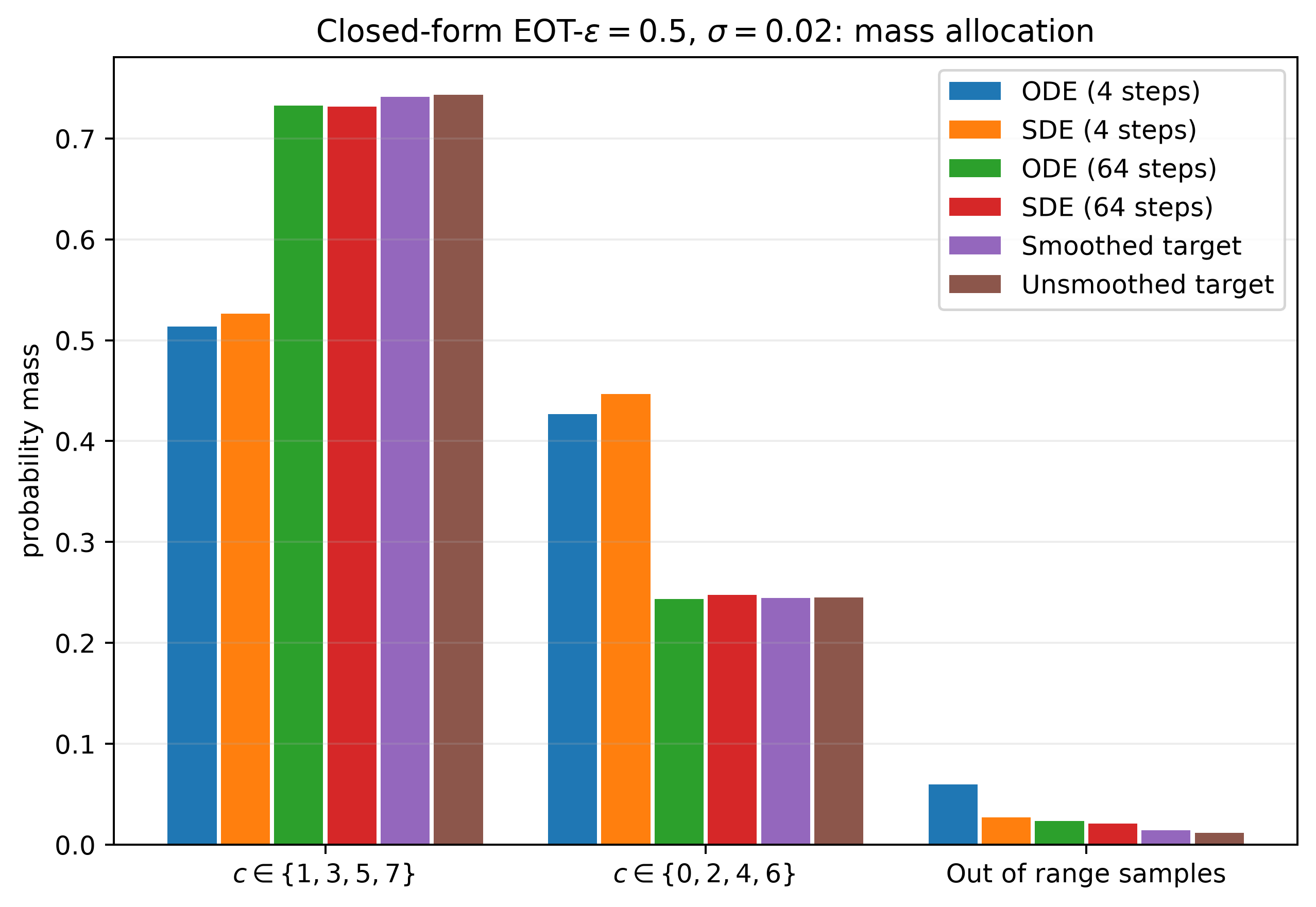}
    \label{Fig_Gaussian_covariate}
}
\caption{\footnotesize
Angular-mode experiment. (a)--(b) Velocity and score errors across couplings
($\sigma=0.12$). (c) Exact-field $8$-step KL versus $D_5$, using
$2{,}000$ samples and $\tau=0.5\sqrt{\sigma D_5/d}$.
(d) Mode-mass allocation for $4$ and $64$ steps
(EOT $\varepsilon_{\mathrm{OT}}=0.5$, $\sigma=0.02$):
$4$-step ODE/SDE misallocate mass, while $64$-step ODE/SDE match the target modes.}
\label{Fig_eps_Gaussian}
\end{figure}

\begin{figure}[!t]
\centering
\setcounter{subfigure}{0}

\histgridpanel[0 6bp 0 6bp]{SW2}{4.4cm}
    {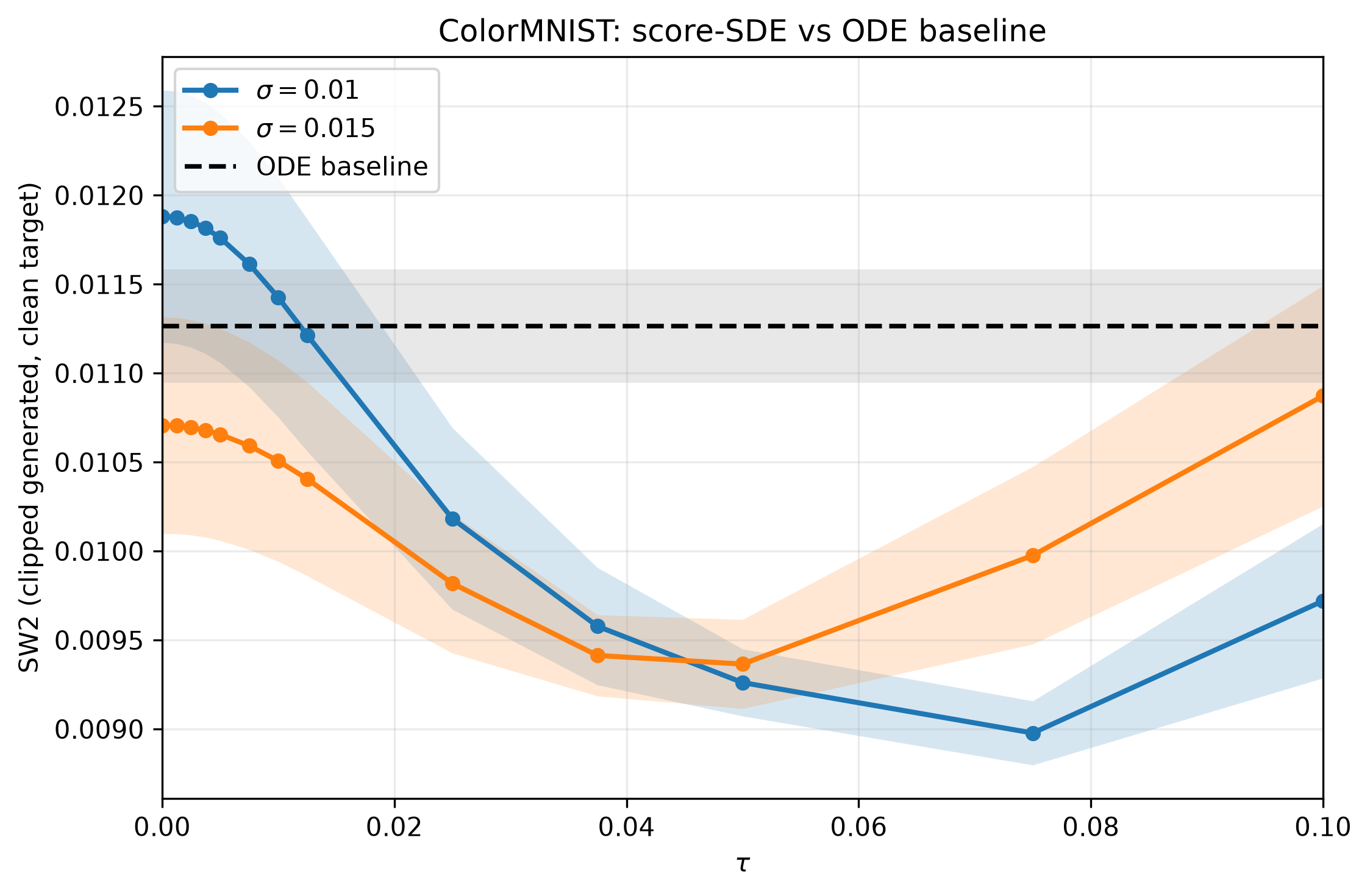}
    {Fig_Mnist_Eval}
\hfill
\histgridpanel[0 10bp 0 10bp]{$t=0$}{4.5cm}
    {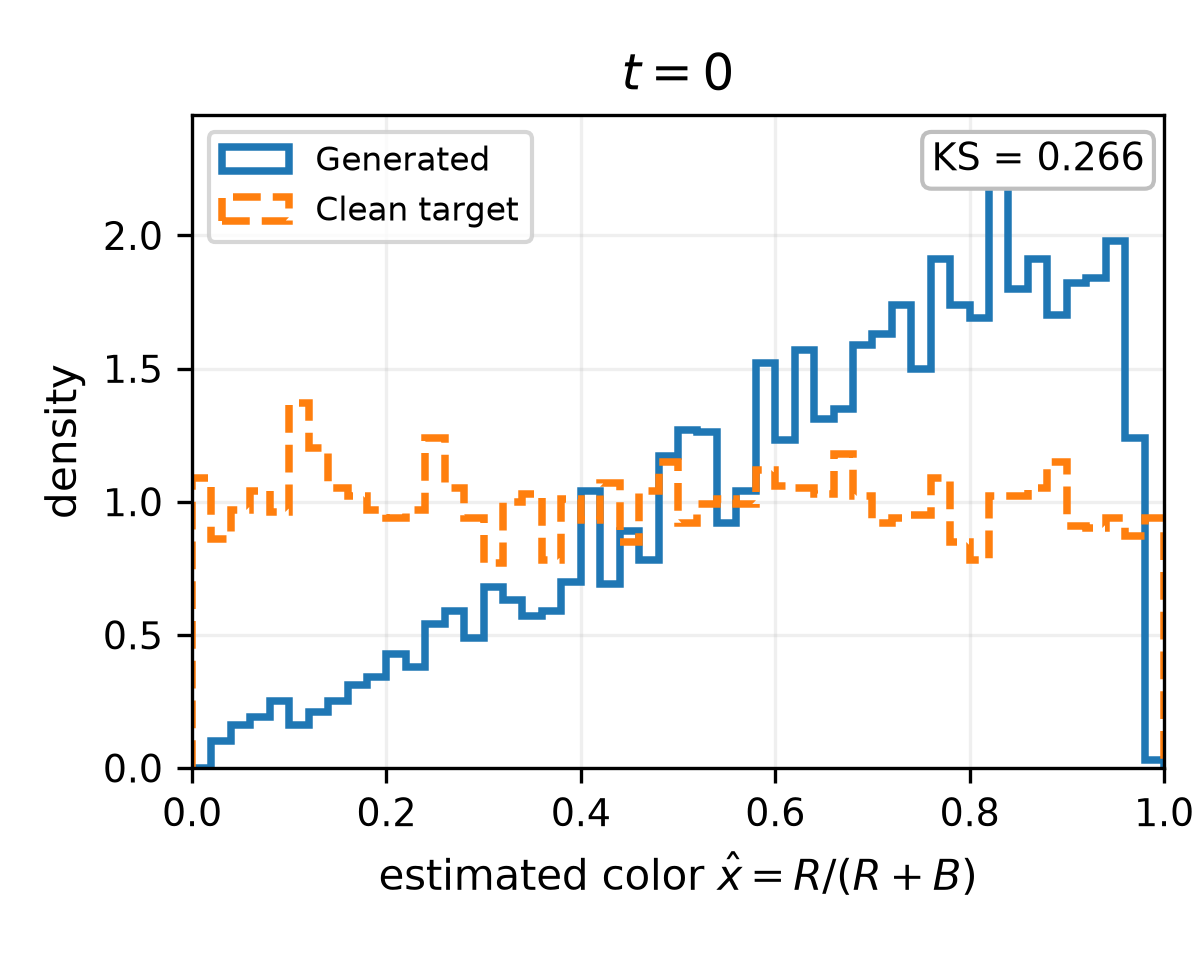}
    {Fig_Mnist_hist}
\par
\histgridpanel[0 10bp 0 10bp]{$t=0.5$}{4.5cm}
    {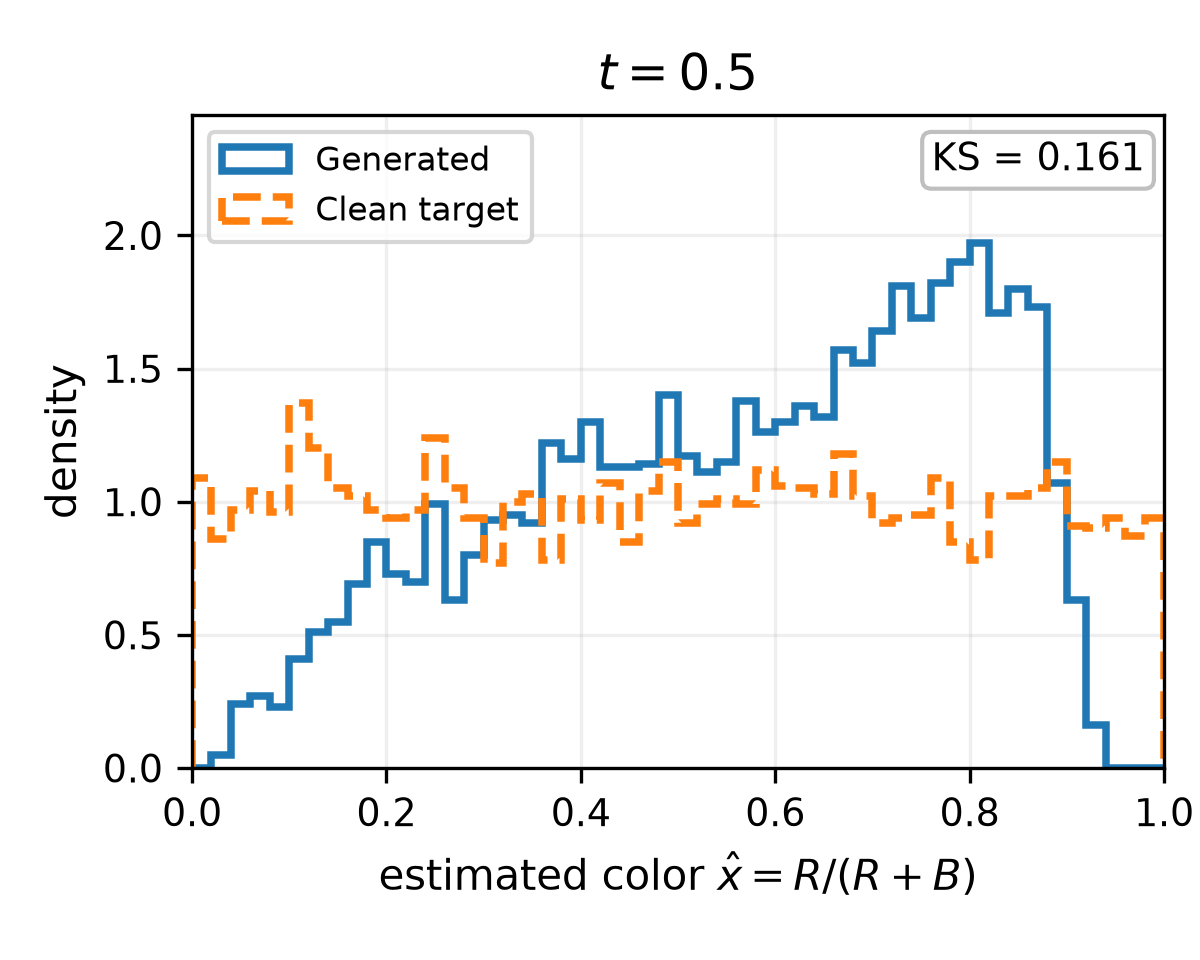}
    {Fig_Mnist_hist_t0p5}
\hfill
\histgridpanel[0 10bp 0 10bp]{$t=1$}{4.5cm}
    {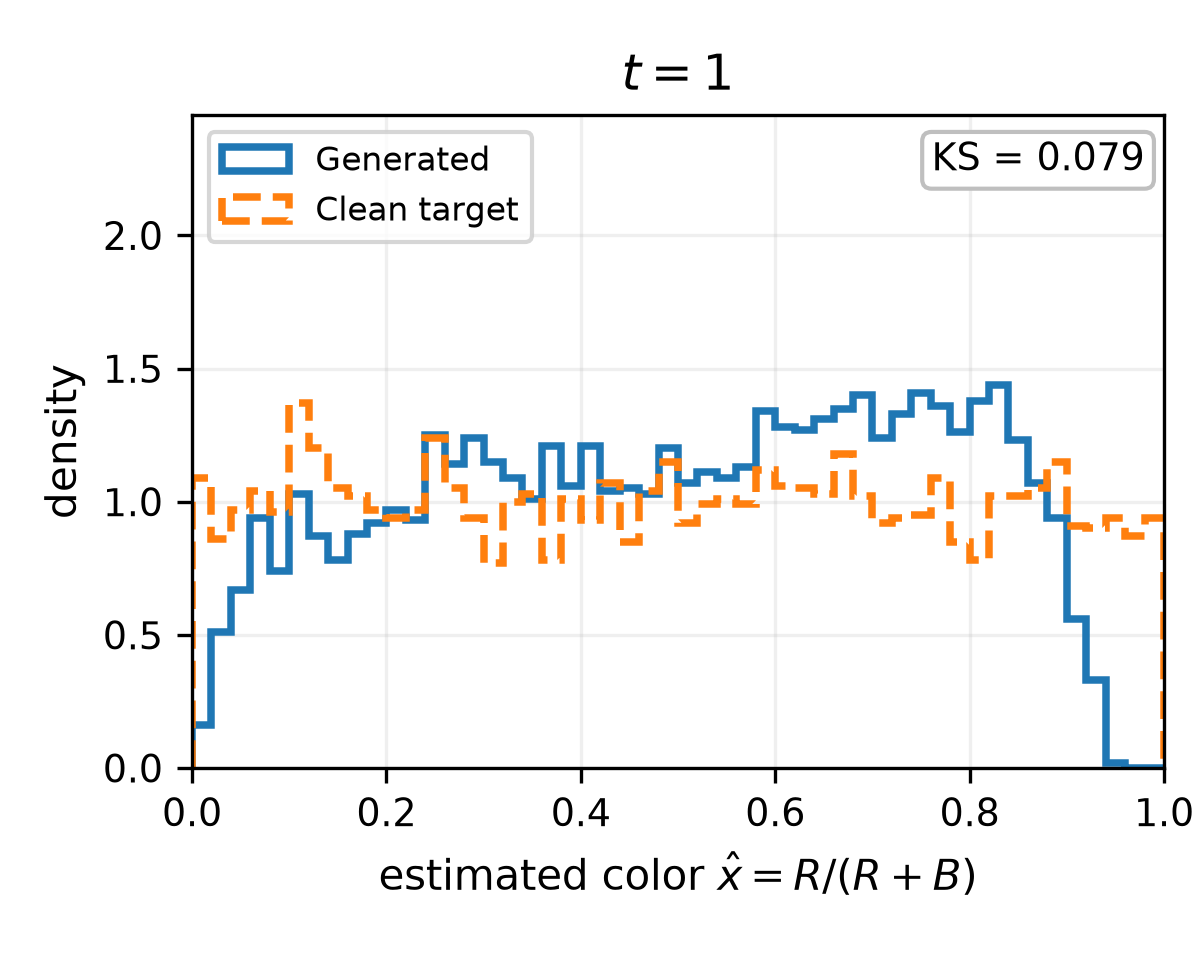}
    {Fig_Mnist_hist_t1}

\caption{\footnotesize
ColorMNIST 1--6. (a) Clean-target SW2 for $\sigma\in\{0.01,0.015\}$
($5{,}000$ samples, five seeds; dashed: pure ODE).
(b)--(d) Color evolution along the $\sigma=0.01$ SDE at $t=0,0.5,1$
($5{,}000$ samples; dashed: clean target).}
\label{Fig_Mnist_1_6}

\setcounter{subfigure}{0}

\histgridpanel[0 2bp 0 2bp]{SW2}{4.4cm}
    {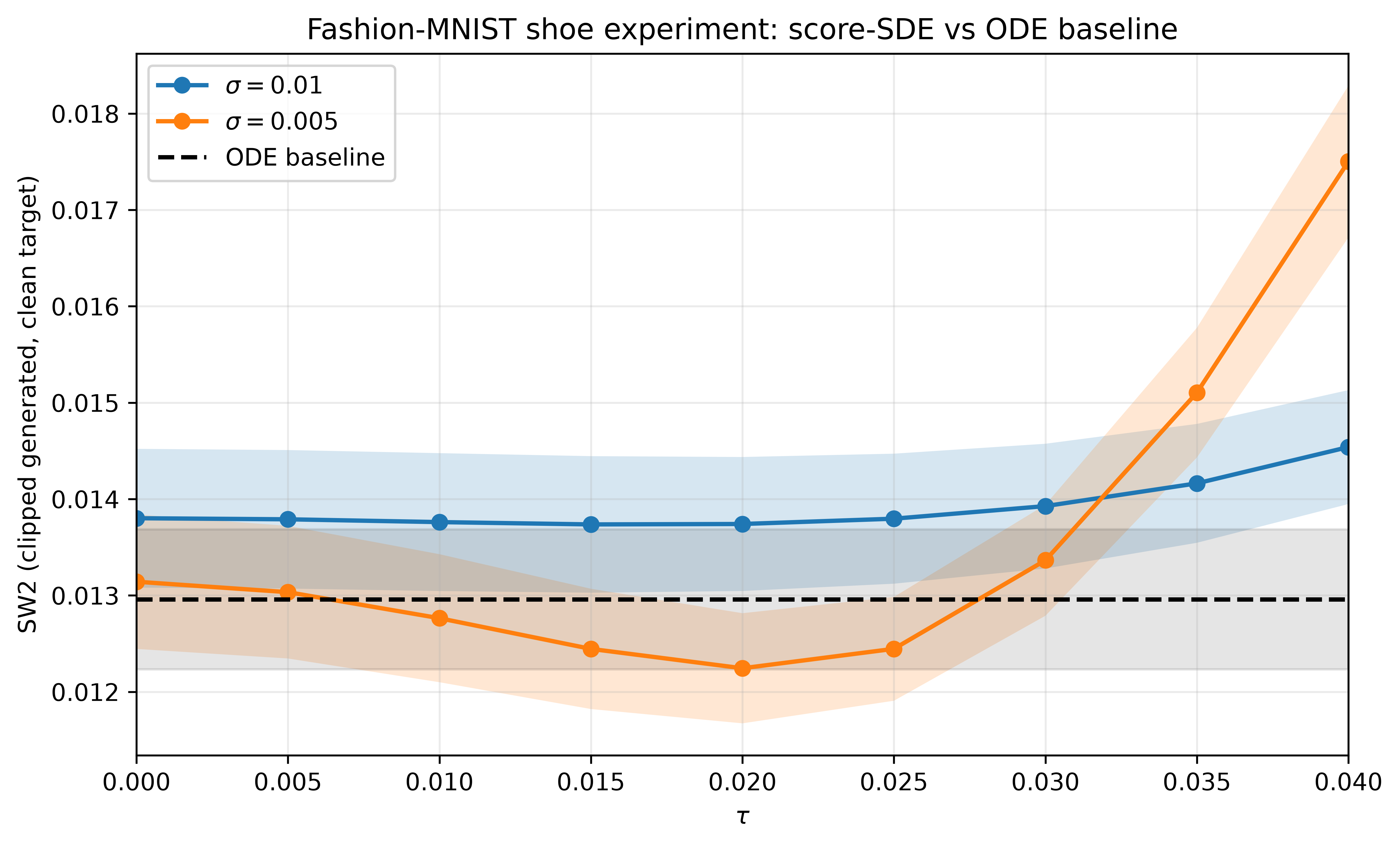}
    {fig:shoe_tau}
\hfill
\histgridpanel[0 10bp 0 10bp]{$t=0$}{4.5cm}
    {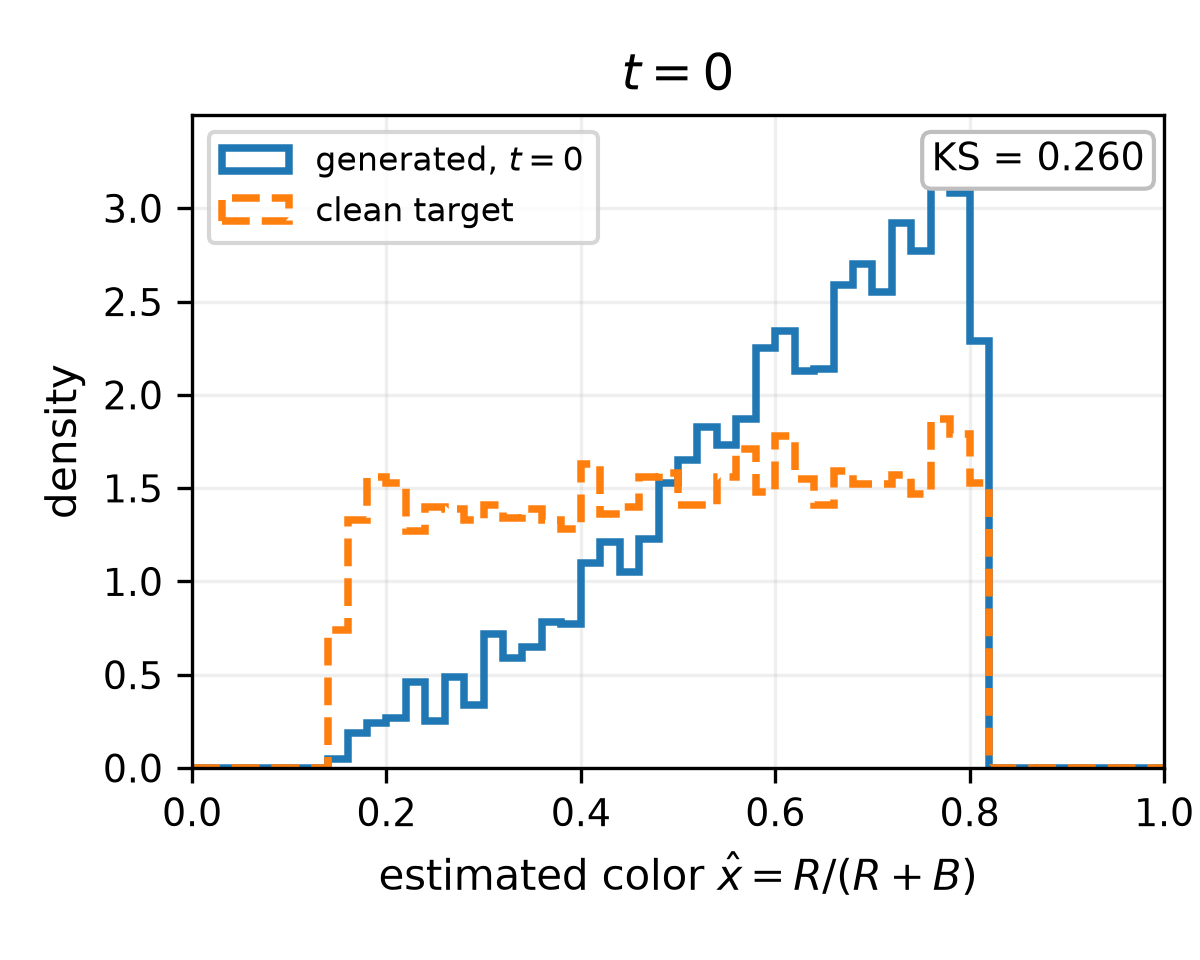}
    {fig:shoe_trajectory}
\par
\histgridpanel[0 10bp 0 10bp]{$t=0.5$}{4.5cm}
    {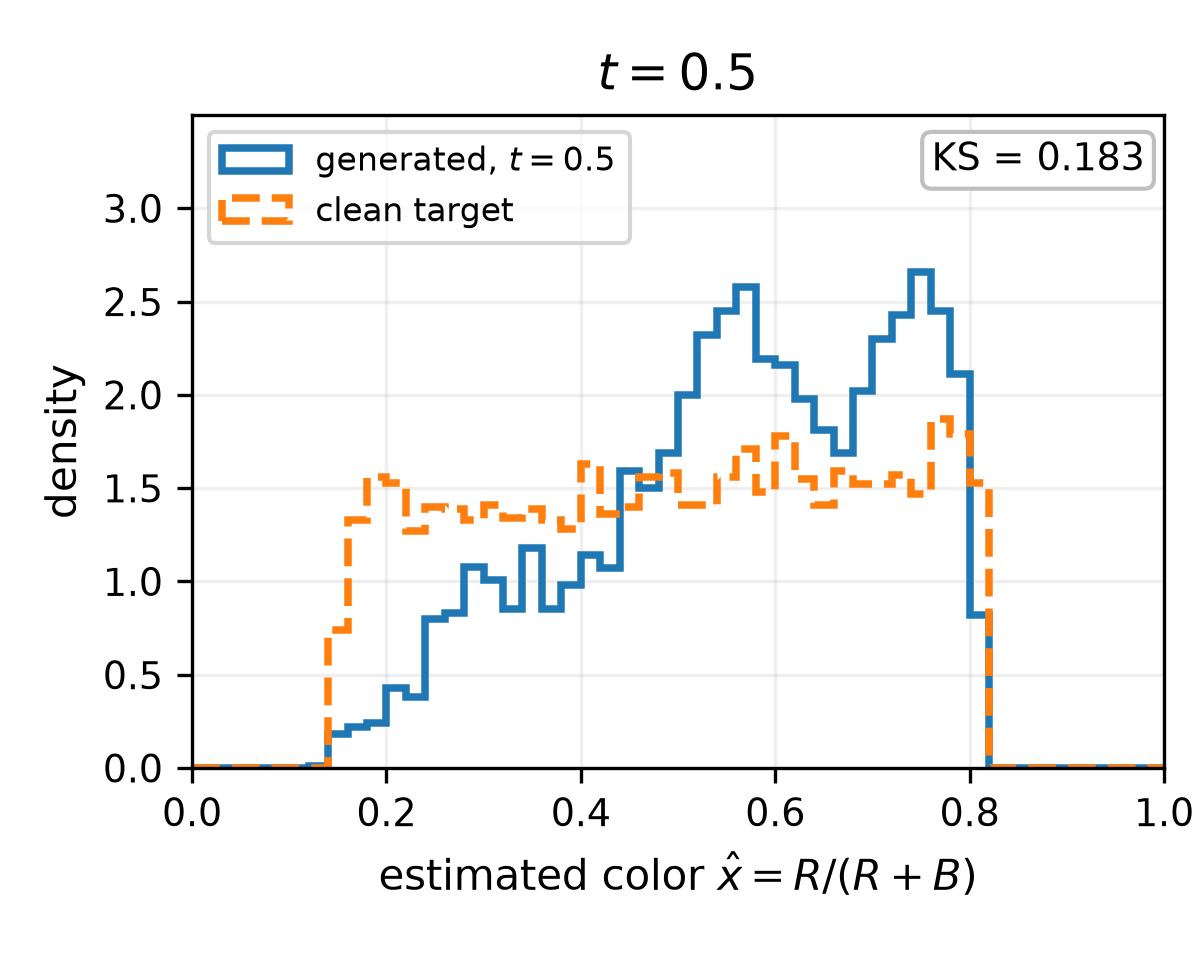}
    {fig:shoe_trajectory_t0p5}
\hfill
\histgridpanel[0 10bp 0 10bp]{$t=1$}{4.5cm}
    {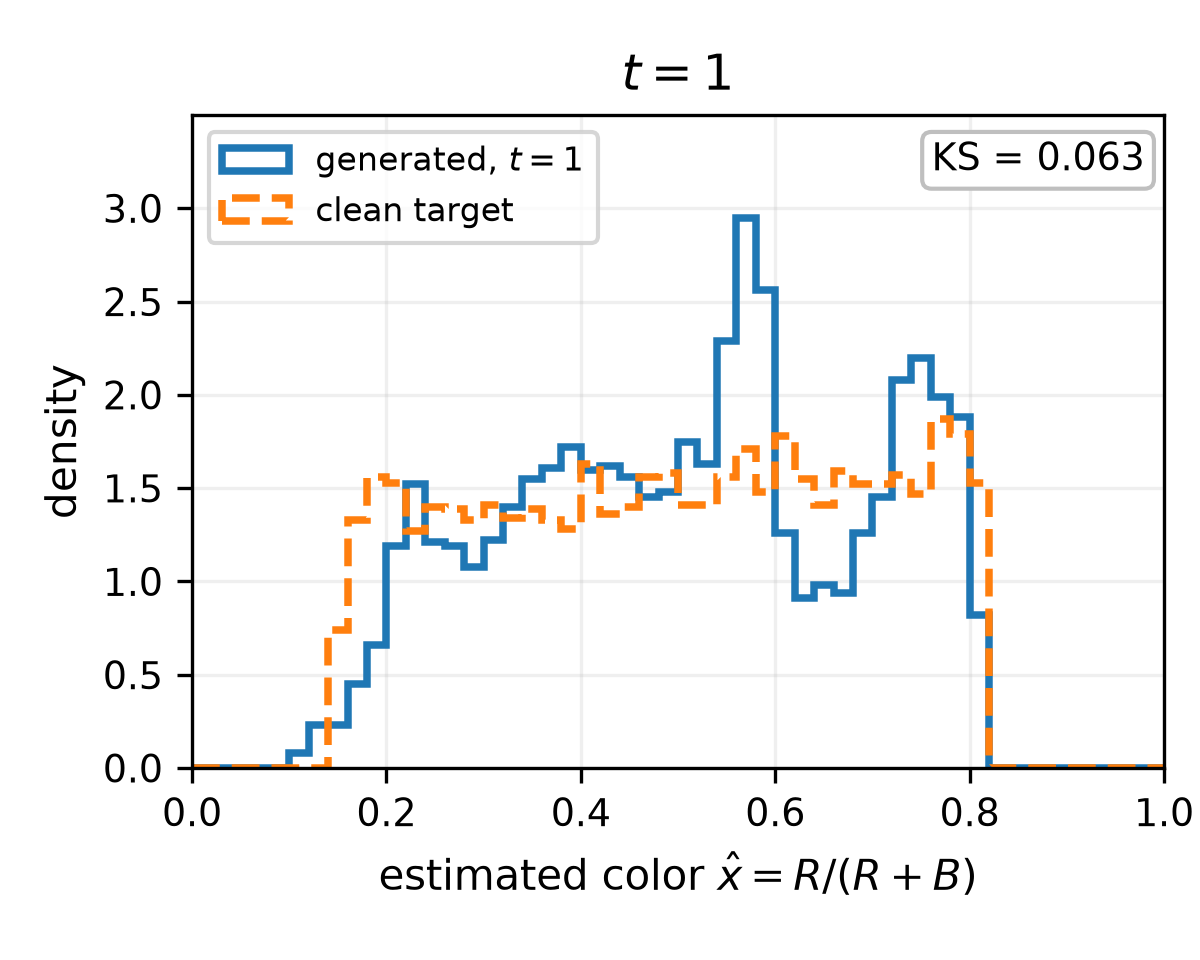}
    {fig:shoe_trajectory_t1}

\caption{\footnotesize
Fashion-MNIST shoes. (a) Clean-target SW2 versus $\tau$ for
$\sigma\in\{0.005,0.01\}$ ($5{,}000$ samples, five seeds; dashed: pure ODE).
(b)--(d) Color evolution along the $\sigma=0.005$ SDE at $t=0,0.5,1$
($5{,}000$ samples; dashed: clean target).}
\label{fig:shoe}
\end{figure}

\endgroup

%% file: Appendix_Experiments.tex
\section{Additional Experiment Details}\label{app:angular_experiment_details}

\subsection{Angular-Mode Experiment Details}

\paragraph{Data-Generating Process.}
{
Let \(C\in\{0,\ldots,7\}\) index eight Gaussian components equally spaced on a
circle, with
\[
\mu_c=2(\cos(c\pi/4),\sin(c\pi/4))^\top,\qquad
P_c=\mathcal N(\mu_c,\sigma_Y^2I_2),\qquad
\sigma_Y=0.08.
\]
The population assigns weight \(1/16\) to each axis-aligned component
\(c\in\{0,2,4,6\}\) and \(3/16\) to each diagonal component
\(c\in\{1,3,5,7\}\). Treatment assignment satisfies
\(\mathbb P(A=1\mid C=c)=0.9\) for axis-aligned components and \(0.1\) for
diagonal components. Bayes' rule therefore gives conditional weights \(3/16\)
and \(1/16\), respectively, among observations with \(A=1\).

Taking \(\mathbb P_{Y\mid C=c,A=1}=P_c\), the observed source and
counterfactual target are
\[
\begin{aligned}
\mathbb P_{Y\mid A=1}
&=\sum_{c\in\{0,2,4,6\}}\frac{3}{16}P_c
+\sum_{c\in\{1,3,5,7\}}\frac{1}{16}P_c,\\
\int\mathbb P_{Y\mid C=c,A=1}\,d\mathbb P_C(c)
&=\sum_{c\in\{0,2,4,6\}}\frac{1}{16}P_c
+\sum_{c\in\{1,3,5,7\}}\frac{3}{16}P_c.
\end{aligned}
\]
Thus, the two distributions share the same components but assign opposite
weights to the axis-aligned and diagonal groups.
}

\paragraph{Couplings and Evaluation.}
{
For fixed empirical source and target samples, we compare IID and entropic
optimal-transport couplings at different regularization levels. Each coupling
assigns mass to source--target pairs; after Gaussian smoothing, these weighted
pairs determine the interpolation law and the exactly computable empirical
fields
\[
v_t(x)=\mathbb E[Y_1-Y_0\mid X_t=x],
\qquad
s_t(x)=\nabla_x\log q_t(x).
\]
These exact fields serve as references for the learned models. The velocity
and score experiments use three-hidden-layer MLPs of width \(128\) and
\(\sigma=0.12\).

To isolate discretization error, we generate \(2{,}000\) exact-field samples
with \(8\) Euler steps and estimate the generated density by kernel density
estimation to compute KL divergence. We set
\(\tau=0.5\sqrt{\sigma D_5/d}\), following the scaling
\(\tau\asymp\sqrt{\sigma D_5/d}\) suggested by
Theorem~\ref{Theorem_Sampling_Error}. The mode-mass experiment uses the
closed-form EOT coupling with \(\varepsilon_{\mathrm{OT}}=0.5\) and
\(\sigma=0.02\), and compares \(4\)- and \(64\)-step ODE and SDE samplers.
}

\subsection{Image-Benchmark Details and Additional Results}
\label{app:image_benchmark_details}

\paragraph{Digit ColorMNIST.}
{ 
Following \citet{dance2026debiased}, we draw
\(C\sim\operatorname{Unif}(0,1)\) and color each MNIST image using RGB value
\((C,0,1-C)\), with the original image intensity controlling brightness.
Treatment \(A=0\) selects digit \(1\), \(A=1\) selects digit \(6\), and
\[
\mathbb P(A=1\mid C=c)
=\frac{\exp\{5(c-\tfrac12)\}}{1+\exp\{5(c-\tfrac12)\}}.
\]
Conditional on \(A\), an image is sampled with replacement from the
corresponding MNIST class and colored according to \(C\). We generate
\(n=20{,}000\) observational samples and target \(a=1\). Because \(C\) remains
uniform under \(\operatorname{do}(A=1)\), the clean counterfactual target can
be sampled directly.
}

{ 
We train smoothed models with \(\sigma\in\{0.01,0.015\}\) for \(1{,}000\)
epochs and evaluate the SDE over a range of \(\tau\), with \(\tau=0\)
corresponding to the ODE for the same smoothed model. The pure ODE baseline is
the same doubly robust flow trained with \(\sigma=0\); its best checkpoint
through epoch \(1{,}000\) occurs at epoch \(700\). All samplers use \(50\)
Euler steps. Generated images are clipped to \([0,1]\), and SW2 is computed
against the clean target using \(5{,}000\) samples over five random seeds. The
baseline and smoothed models use the same fitted propensity model,
conditional-flow prediction oracle, and conditional-sample reservoir. For
\(\sigma=0.01\), the epoch-\(1000\) model performs best near
\(\tau=0.075\).
}

\paragraph{Fashion-MNIST.}
{ 
We use \(C\sim\operatorname{Unif}(0,1)\), with \(A=0\) for
sneakers and \(A=1\) for ankle boots, and use the same treatment-assignment
probability as in the digit experiment. Given \(C=c\), the grayscale image is
colored using
\[
(0.15+0.75c,\;0.10+0.55\sin^2(\pi c),\;0.85-0.65c).
\]
Consequently, observed ankle boots tend to have larger \(C\), whereas the
clean \(\operatorname{do}(A=1)\) target contains the full population color
range.
}

{ 
We train models with \(\sigma=0.01\) and \(\sigma=0.005\), together with the
pure ODE baseline \((\sigma=0)\), for \(400\) epochs. Checkpoints are saved
every \(20\) epochs. For \(\sigma=0.01\), the selected velocity and score
checkpoints are epochs \(100\) and \(220\); for \(\sigma=0.005\), they are
epochs \(100\) and \(120\). The selected pure ODE checkpoint is epoch \(140\).
We vary \(\tau\), and every evaluation uses \(50\) Euler steps, \(5{,}000\)
generated samples, \(256\) random projection directions, and five evaluation
seeds. The color-distribution plots use the red--blue statistic
\(\hat{x}=R/(R+B)\).
}

Figure~\ref{fig:shoe_samples} shows images before and after
transport. The source contains observed ankle boots with color-selection bias;
the generated samples remain realistic ankle boots while covering colors
closer to the clean target population.

Figures~\ref{fig:shirt} and~\ref{fig:shirt_samples} provide
complementary results for the shirt experiment: the former reports sampling
performance and color evolution, while the latter shows that the generated
samples preserve shirt structure while changing their colors toward the
target distribution.
\begingroup
\setlength{\abovecaptionskip}{2pt}
\setlength{\belowcaptionskip}{0pt}
\subfigtopskip=0pt
\subfigbottomskip=0pt
\subfigcapskip=0pt
\subfigcaptopadj=0pt

\newcommand{\shirtgridpanel}[5][0 0 0 0]{%
    \begin{minipage}[b]{0.49\linewidth}
    \centering
    \subfigure[#2]{%
        \includegraphics[width=\linewidth,height=#3,keepaspectratio,trim=#1,clip]{#4}%
        \label{#5}%
    }
    \end{minipage}%
}

\begin{figure}[!htbp]
\centering
\setcounter{subfigure}{0}

\shirtgridpanel{Performance evaluation}{4.4cm}
    {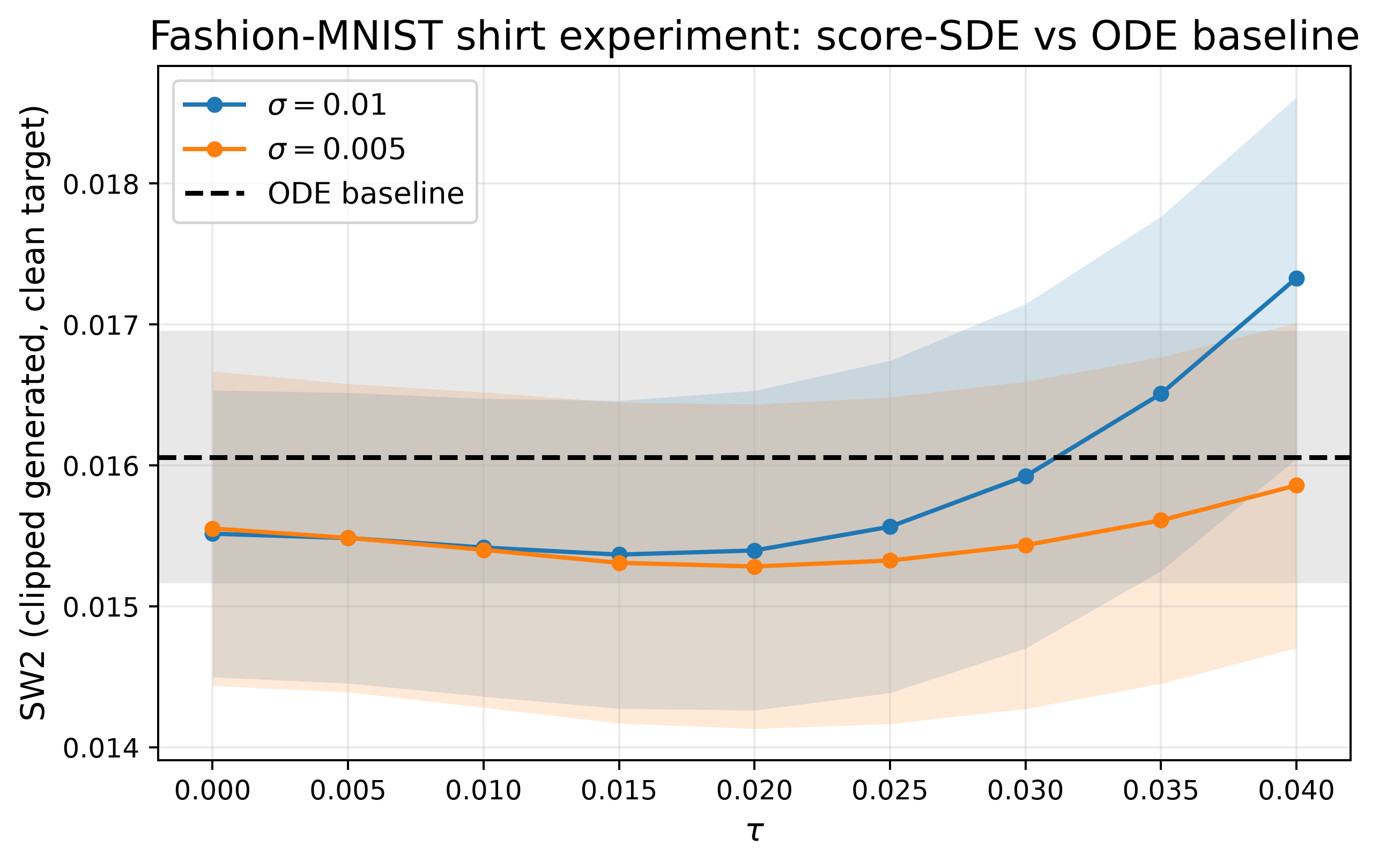}
    {fig:shirt_tau}
\hfill
\shirtgridpanel[0 10bp 0 10bp]{$t=0$}{4.5cm}
    {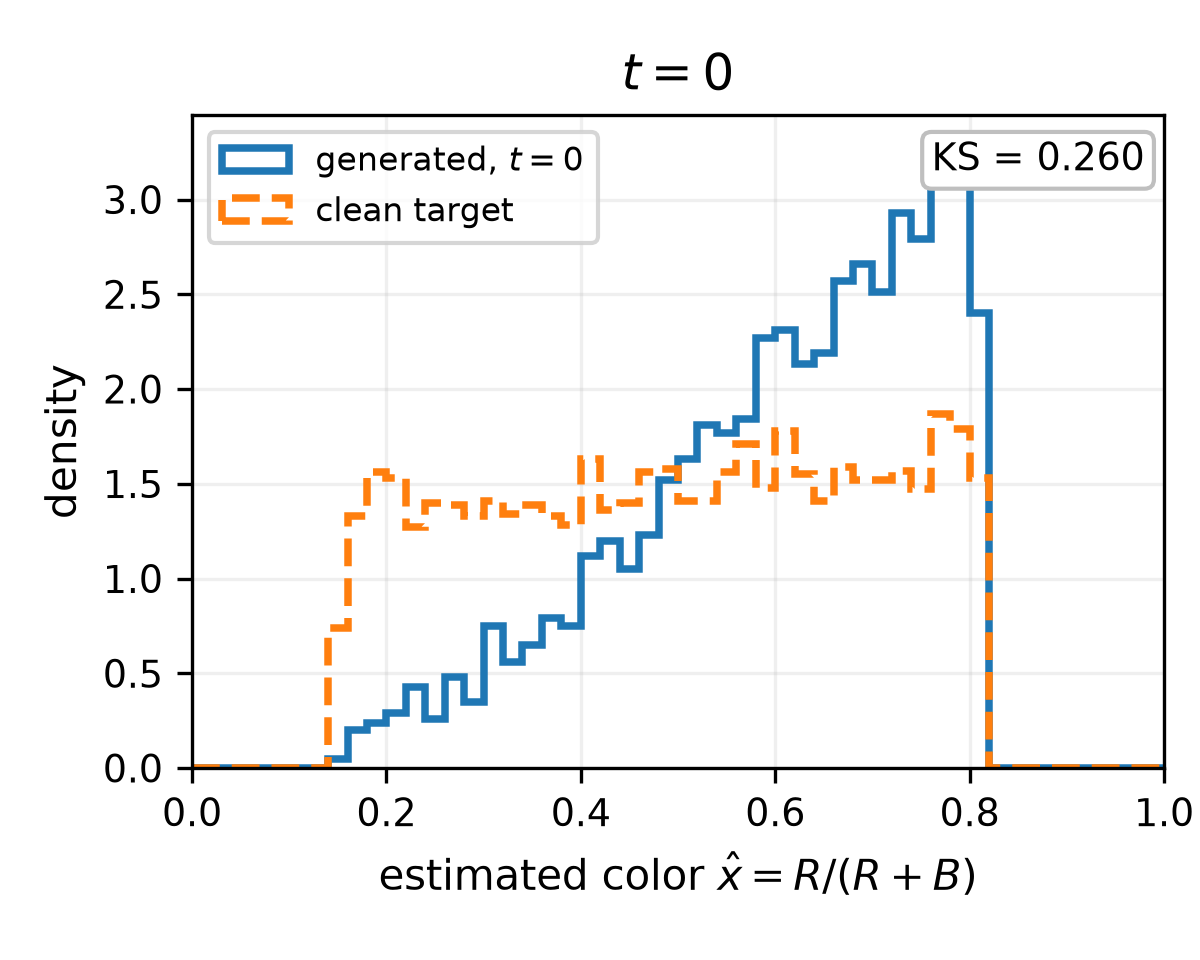}
    {fig:shirt_trajectory}
\par
\shirtgridpanel[0 10bp 0 10bp]{$t=0.5$}{4.5cm}
    {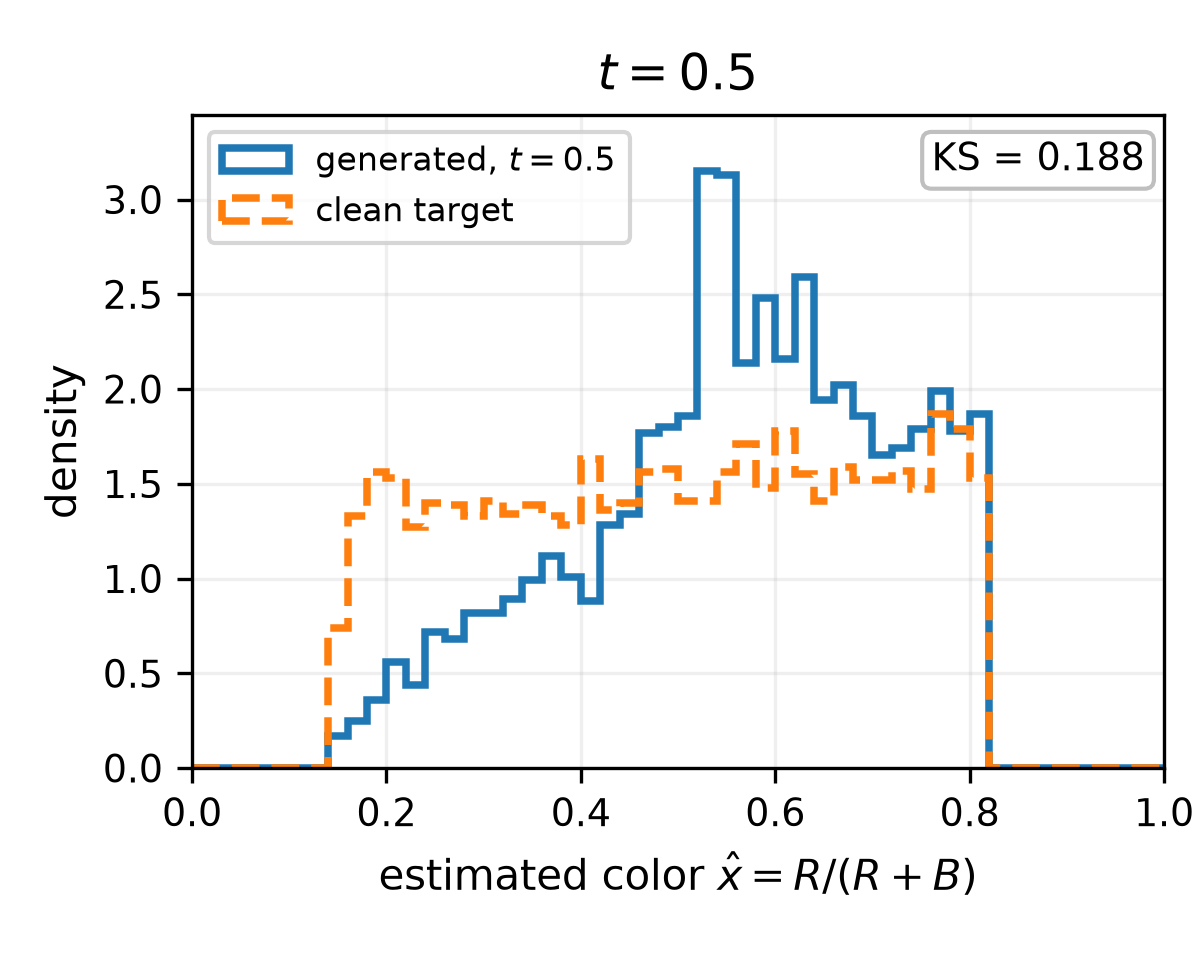}
    {fig:shirt_trajectory_t0p5}
\hfill
\shirtgridpanel[0 10bp 0 10bp]{$t=1$}{4.5cm}
    {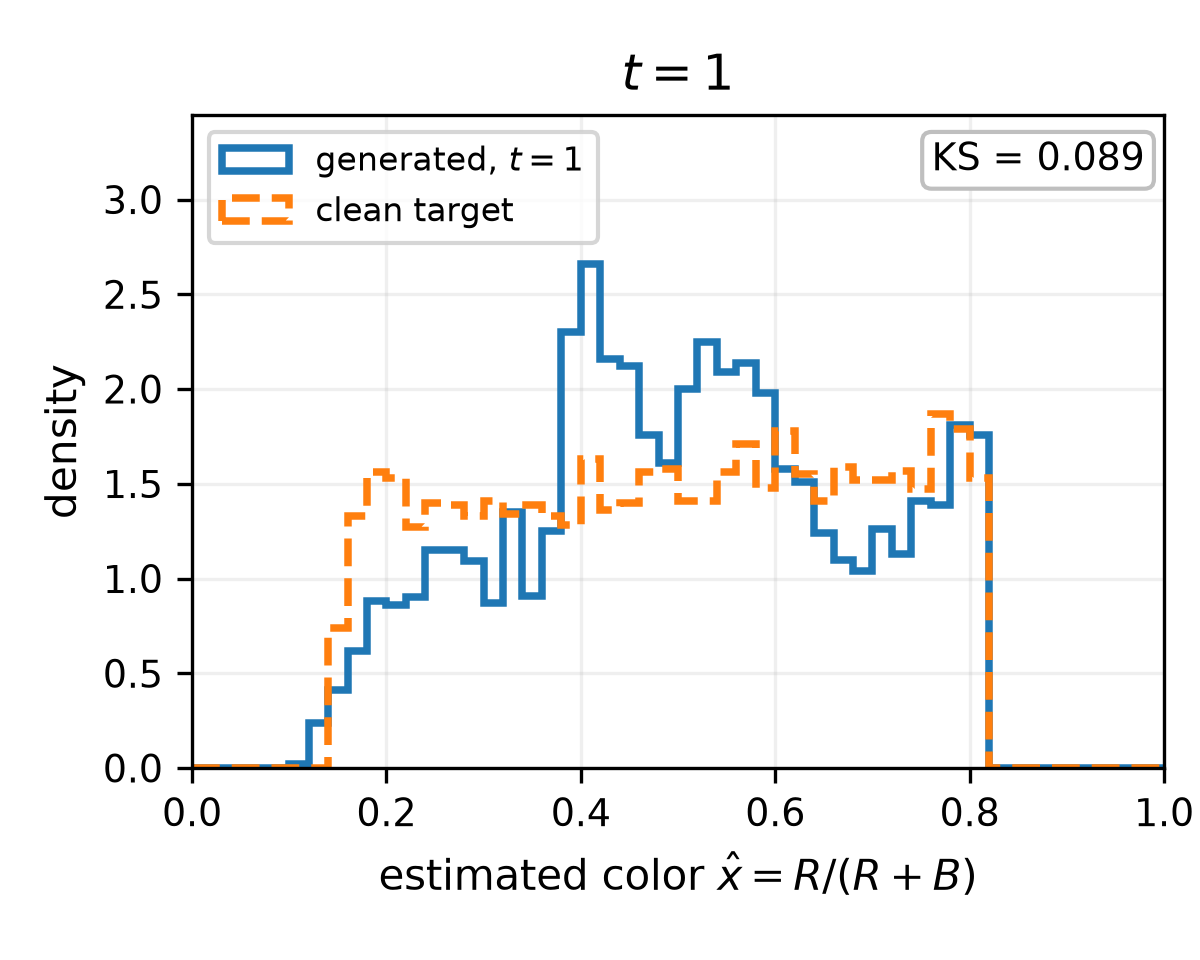}
    {fig:shirt_trajectory_t1}

\caption{\footnotesize
Fashion-MNIST shirts experiment.
(a) Clean-target SW2 as a function of $\tau$ for the two smoothing levels;
shaded regions show variation across five evaluation seeds, and the dashed
line denotes the pure ODE baseline.
(b)--(d) Color distributions at $t=0,0.5,1$ along the $\sigma=0.005$ SDE
trajectory; dashed distributions denote the clean target.}
\label{fig:shirt}
\end{figure}

\endgroup

\begin{figure}[!ht]
    \centering
    \includegraphics[width=0.95\linewidth]{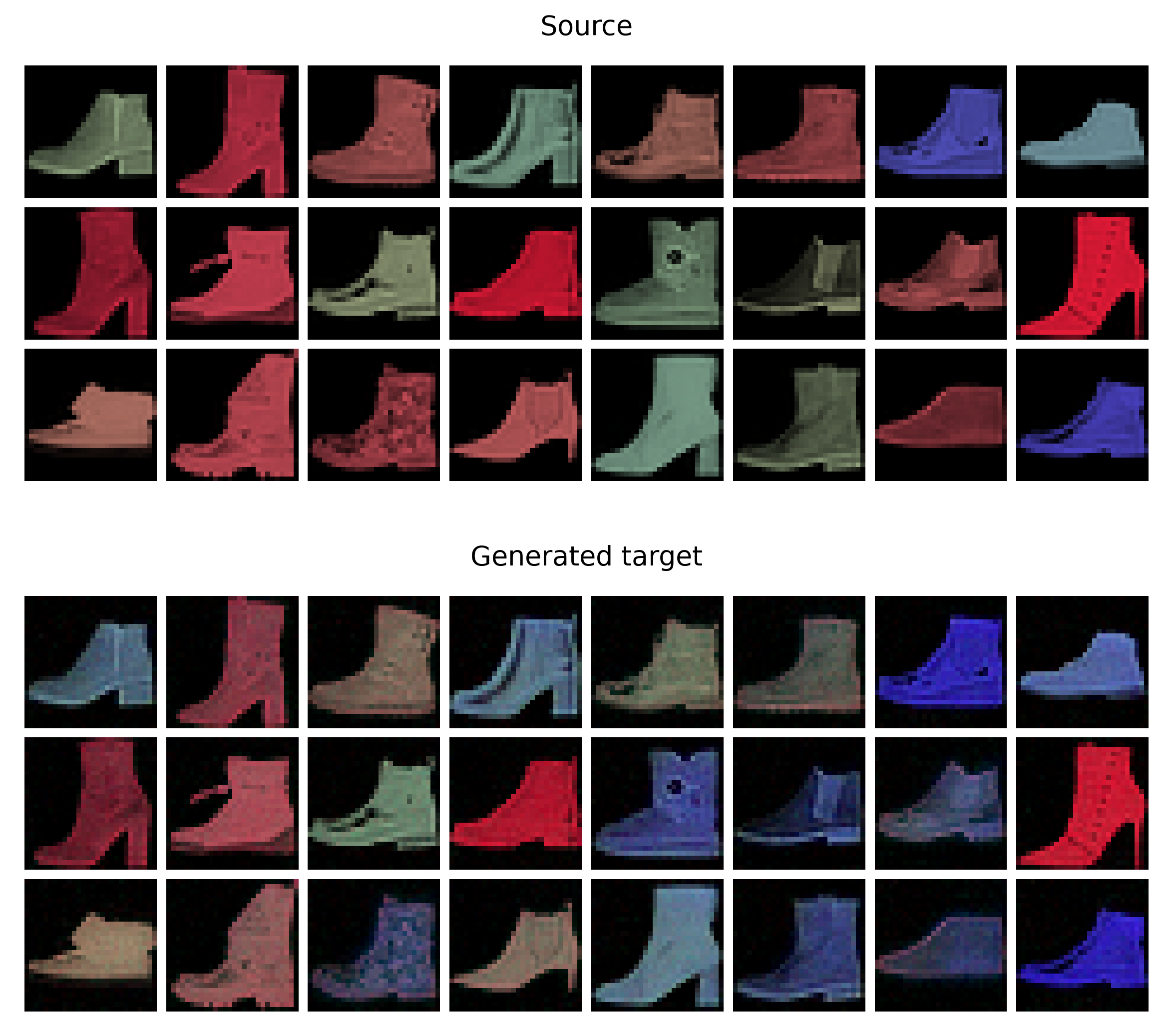}
    \caption{
    Qualitative examples from the observed source distribution (top),
    samples generated by the $\sigma=0.005$ SDE (bottom). The generated samples preserve the boots shape and structure while showing
clear, coherent color changes.
    }
    \label{fig:shoe_samples}
\end{figure}

\begin{figure}[!ht]
    \centering
    \includegraphics[width=0.95\linewidth]{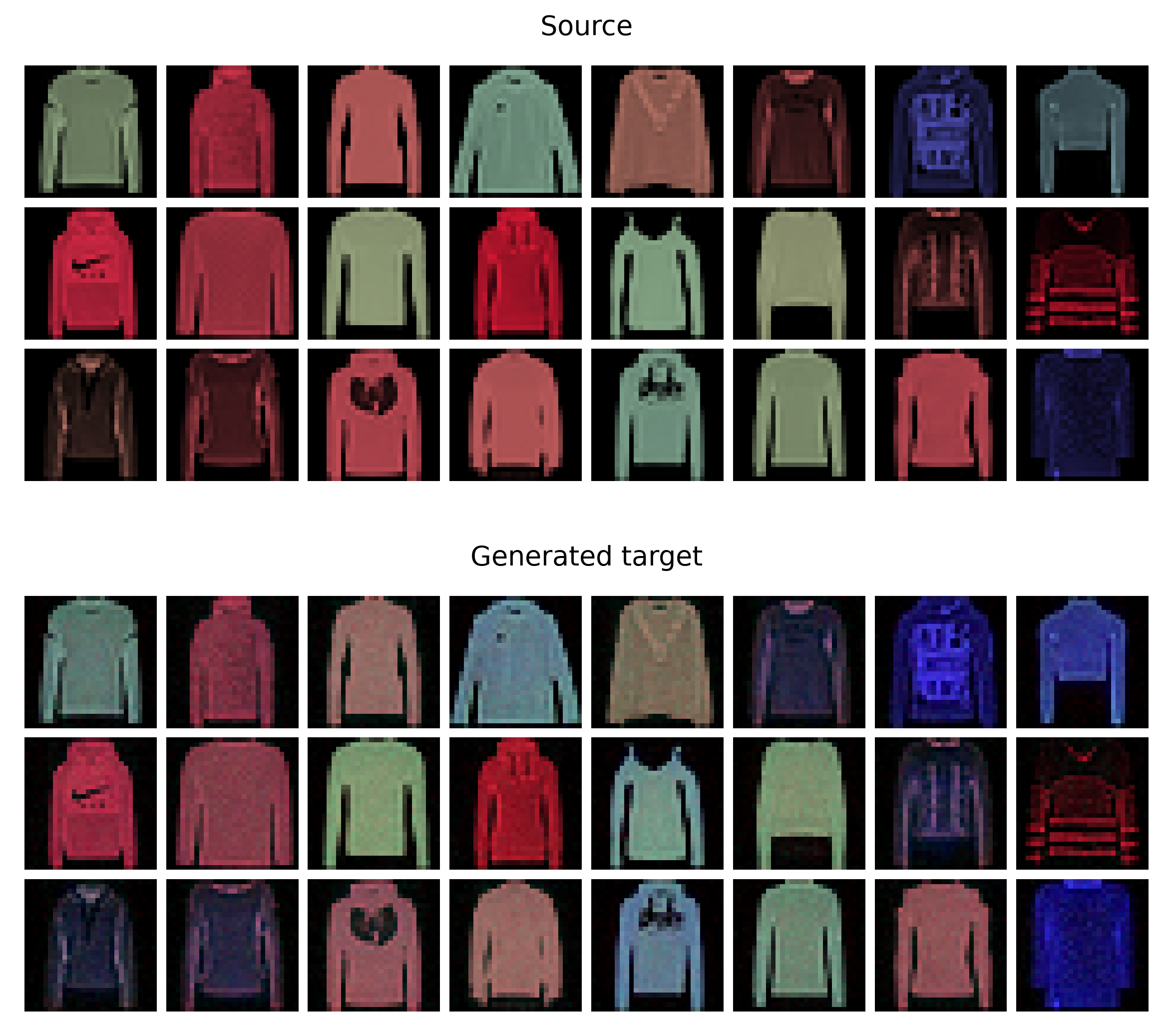}

    \caption{
    Qualitative examples from the observed source distribution (top),
    samples generated by the $\sigma=0.005$ SDE (bottom). The generated samples preserve the shirt shape and structure while showing clear, coherent color changes.
    }
    \label{fig:shirt_samples}
\end{figure}

%% file: Appendix_Additional.tex
\section{Additional Related-Work Details}
\label{sect_existing_work_appendix}

The closest comparisons are
\citep{liu2025finitesde,liu2026finitetime}. For stochastic sampling,
\citep{liu2025finitesde} considers a more general stochastic interpolant with
time-varying smoothing \(\gamma_t\). Under a uniform Euler discretization,
their complexity has leading dimension dependence
\(\widetilde{\mathcal O}(d^2)\) and polynomial dependence on the endpoint
cutoffs \(t_0\) and \(1-t_N\). These cutoff parameters are introduced to
avoid the singular behavior near the two endpoints and play a role analogous
to the fixed smoothing level \(\sigma\) in our analysis. In comparison, our
uniform-step result achieves near-linear \(\widetilde{\mathcal O}(d)\)
dimension dependence with leading smoothing dependence of order
\(1/\sigma^2\). This comparison holds with displacement moment and target
accuracy fixed, using \(\tau^2=\sigma D_5/d\) and controlled field error;
\(D_5\) itself can grow with dimension.
\citet{liu2025finitesde} further considers a specially designed nonuniform
time schedule, which improves the polynomial endpoint dependence (on
\(1/t_0\) and \(1/(1-t_N)\)) to logarithmic dependence while retaining a
leading \(\widetilde{\mathcal O}(d^2)\) dimension dependence. Thus, compared
with their uniform discretization, our result improves the dimension
dependence, while their nonuniform schedule achieves sharper endpoint
dependence through a more specialized discretization.
Although their bounds contain source--target displacement moments, they also
retain coupling-independent score terms and do not develop a
coupling-quality interpretation analogous to our \(D_5\) control.
The companion ODE analysis \citep{liu2026finitetime} likewise gives a
\(\widetilde{\mathcal O}(d^2)\) dimension dependence for Euler
discretization.

Diffusion models admit sharper guarantees by exploiting the specific
Gaussian noising structure of the diffusion probability path.
Under weak distributional assumptions, \citep{benton2024nearly}
and \citep{li2025d} obtain near-linear dependence on \(d\)
for reverse-diffusion sampling.
More recent results establish sublinear dependence under additional
regularity. \citet{jiao2025optimal} introduce a
non-uniform regularity parameter \(L\) and obtain dimension dependence
\(\widetilde{\mathcal O}(\sqrt d\min\{\sqrt d,L\})\) for the standard DDPM
sampler. They showed \(L\) can be small for some specific class of targets,
but in general the bound could still be linear in \(d\).

We remark that related works such as
\citep{zhang2025sublinear,li2025improved} obtain sublinear dimension
dependence using randomized midpoint schemes, rather than the Euler
discretization considered here. Specifically, \citep{zhang2025sublinear}
obtain
\(\widetilde{\mathcal O}(\widetilde\beta_0\sqrt{d+M_2^2})\), which reduces
to \(\widetilde{\mathcal O}(\sqrt d)\) under bounded regularity constants,
while \citep{li2025improved} obtain
\(\widetilde{\mathcal O}(L d^{1/3})\). These improved dimension rates depend
explicitly on score regularity constants such as
\(\widetilde\beta_0\), \(M_2\), and \(L\).

Table~\ref{tab:sampling_related_work} compares our sampling bound with the
most relevant existing results, highlighting the dimension dependence,
assumptions, and whether the bound reflects the quality of the source--target
coupling.

Finally, we mention two additional related works. \citet{yang2025elucidating} extends the analysis of \citep{chen2023the} to the flow-matching setting using a similar predictor--corrector framework, under assumptions including \(L_2\) estimation error, bounded support, and uniform regularity of the learned vector field. \citet{su2025flow} also establishes a KL-divergence error bound for flow matching with a deterministic ODE sampler. However, their analysis relies on strong global-in-space regularity assumptions on the density path and velocity fields and does not analyze numerical ODE discretization error.

\begin{table}[h]
\centering
\caption{Comparison with related finite-step sampling results.
Dimension dependence suppresses logarithmic factors and holds accuracy
and the other problem parameters fixed; for our result this includes
\(\sigma,D_5\), with \(\tau^2=\sigma D_5/d\) and controlled field error.``Partial'' means that source--target displacement enters the bound, but coupling-independent score terms remain.}
\label{tab:sampling_related_work}
\renewcommand{\arraystretch}{1.15}

\begin{tabular}{@{}llllll@{}}
\toprule
\textbf{Work}
& \makecell{\textbf{Setting /}\\\textbf{Method}}
& \textbf{Metric}
& \textbf{Dim.}
& \textbf{Assumption}
& \makecell{\textbf{Coupling}\\\textbf{control}}
\\
\midrule

\multicolumn{6}{@{}l}{\textit{Stochastic interpolants and flow matching}}\\
\midrule

\makecell[l]{\textbf{This work}\\Main theorem}
& \makecell[l]{constant-\\smoothing SI\\SDE, Euler}
& KL
& $\widetilde{\mathcal O}(d)$
& $L_2$ error
& $D_5$
\\
\midrule

\makecell[l]{\citep{liu2025finitesde}\\Thm.~4.3; Sec.~5}
& \makecell[l]{SI\\SDE, Euler}
& KL
& $\widetilde{\mathcal O}(d^2)$
& $L_2$ error
& Partial
\\
\midrule

\makecell[l]{\citep{liu2026finitetime}\\Thm.~4.5}
& \makecell[l]{SI\\ODE, Euler}
& TV
& $\widetilde{\mathcal O}(d^2)$
& \makecell[l]{drift/div.\ errors;\\unif.\ reg.}
& No
\\

\midrule
\multicolumn{6}{@{}l}{\textit{Diffusion-model baselines}}\\
\midrule

\makecell[l]{\citep{benton2024nearly}\\Cor.~1}
& \makecell[l]{diffusion\\exp. integrator}
& KL
& $\widetilde{\mathcal O}(d)$
& $L_2$ error
& --
\\
\midrule

\makecell[l]{\citep{li2025d}\\Thm.~1}
& DDPM
& TV
& $\widetilde{\mathcal O}(d)$
& $L_2$ error
& --
\\
\midrule

\makecell[l]{\citep{jiao2025optimal}\\Thm.~1}
& DDPM
& KL
& \makecell[l]{$\widetilde{\mathcal O}(\sqrt d$ \\
  $\min\{\sqrt d,L\})$}
& \makecell[l]{weighted \(L_2\) error;\\nonuniform reg.}
& --
\\

\bottomrule
\end{tabular}
\end{table}

\subsection[Comparison With Dance et al. (2026)]
{Comparison With Dance et al.\ (2026)}
\label{app:comparison_dance}

The main difference from Algorithm 3 of \citep{dance2026debiased} is that our
training loss is a direct minibatch implementation of the doubly robust loss.
That is, for a fixed treatment \(a\), the plug-in term is evaluated for every
sample in the minibatch, while the observed correction is included only for
samples with \(A_i=a\):
\[
\frac{1}{|\mathcal B|}
\sum_{i\in\mathcal B}
\left[
\widehat \ell_i^{a}(v)
+
\frac{\mathbf 1\{A_i=a\}}{\widehat\pi_a(C_i)}
\left\{
\ell_i^{\mathrm{obs},a}(v)
-
\widehat \ell_i^{a}(v)
\right\}
\right].
\]
Note that the plug-in term is averaged
over the full covariate distribution and the inverse-propensity term corrects
only the observed treatment-\(a\) samples.

In contrast, Algorithm 3 of \citep{dance2026debiased} forms, for each
treatment \(a\), a subset
\(\mathcal B_a
=\{i\in\mathcal B:A_i=a\}\)
and accumulates a loss only over \(i\in\mathcal B_a\). This is
not a direct implementation of the standard DR loss: the plug-in term is then
averaged over the covariate distribution of the observed treatment-\(a\)
subsample rather than over the full minibatch \footnote{This description refers to Algorithm 3 as stated in the first arXiv version of \citep{dance2026debiased}; the discrepancy may simply reflect a typo or pseudocode-level issue rather than the intended implementation.}.

A minor difference is how the coupling is handled.
\citep{dance2026debiased} recomputes the coupling within each minibatch,
whereas our analysis estimates a single kernel \(\widehat Q_n\) and uses it
for the full dataset of size \(n\). Using one fixed global kernel gives a
well-defined empirical loss associated with a specific coupling.

Moreover, we remark that \citep{dance2026debiased} implements an ODE sampler, whereas we consider an SDE sampler.

%% file: Appendix_Start.tex
\section{Reader's Guide and Proof Roadmap}

This section lists the subclaims needed for the end-to-end result and points
to the proof of each one.

\paragraph{Appendix Organization.}
Appendix~\ref{app:angular_experiment_details} provides experimental details and additional results.
Appendix~\ref{sect_existing_work_appendix} gives additional
related-work details first. We then collect the notation and algorithmic
details, give the proofs of Examples~\ref{example_fixed_sigma_ode_kl} and
\ref{example_no_sigma}, and present the proof of
Theorem~\ref{Theorem_Sampling_Error}, the coupling and field-estimation
analyses, and the proof of Theorem~\ref{thm:end_to_end_sample_complexity}.

\paragraph{Motivating Examples.}
The proofs of Examples~\ref{example_fixed_sigma_ode_kl} and
\ref{example_no_sigma} are given in
Appendices~\ref{Appendix_Proof_Example} and
\ref{Appendix_Proof_Example_No_Smoothing}, respectively.

\paragraph{Sampling Theorem.}
Theorem~\ref{Theorem_Sampling_Error} follows from the following chain.
\begin{enumerate}[leftmargin=2.2em]
\item \emph{Marginal Correctness of the Exact SDE.}
Lemma~\ref{lem:exact_sampling_conditions} shows that \(q_t\) solves the
Fokker--Planck equation of the score-corrected SDE and satisfies the
integrability and continuity conditions required by the superposition
principle. It therefore yields a weak solution \(Z\) with \(Z_t\sim q_t\)
for every \(t\in[0,1]\).

\item \emph{Girsanov KL bound for the Euler process.}
Lemma~\ref{Lemma_Girsanov_Euler_KL} localizes the exact-to-Euler drift
perturbation, applies Girsanov before the stopping time, and passes to the
unstopped process by weak convergence and lower semicontinuity of KL.
Proposition~\ref{Prop_Pathwise_Drift_Finiteness} verifies the pathwise
integrability condition when the drifts and sample paths are continuous.

\item \emph{Posterior Derivative Formulas.}
Lemmas~\ref{Lemma_Score_Identity} and \ref{Lemma_Derivative_Identity} express
the score, velocity Jacobian, velocity Hessian, score Jacobian, and score
Laplacian as conditional moments of \(D=Y_1-Y_0\) and the Gaussian variable
\(G\).

\item \emph{Integrated Derivative Bounds.}
Lemmas~\ref{Lemma_Integrated_Score_Bounds} and
\ref{Lemma_Integrated_Derivative_Bounds} convert the posterior identities
into \(L^2(q_t)\) bounds using conditional variance contraction,
Cauchy--Schwarz, and explicit Gaussian moments.

\item \emph{Euler Increment Bounds.}
Lemmas~\ref{Lemma_Vel_regularity_sum} and
\ref{Lemma_Score_regularity_sum} apply It\^{o}'s formula along the exact SDE.
The score proof uses the continuity equation to cancel
\(D_xs_t\,v_t\), leaving only the Hessian, mixed velocity--score, and pure
score terms controlled by the integrated derivative bounds.

\item \emph{Collection of Constants.}
The proof of Theorem~\ref{Theorem_Sampling_Error} substitutes the Euler
increment bounds into the Girsanov reduction and records the explicit
coefficients \(A_h\) and \(B_h\), together with the explicit \(\tau^4\),
\(\tau^6\), and displacement terms. Corollary
\ref{cor:counterfactual_W1_from_KL} then adds the Gaussian smoothing error and
converts KL to \(W_1\) using a maximal coupling, Pinsker's inequality, and a
second-moment bound for the Euler recursion.
\end{enumerate}


\paragraph{Field-Estimation Propositions.}
The proof of Proposition~\ref{thm:final_velocity_L2_bound_DR} first derives
Proposition~\ref{prop:shared_DR_field_bound} and then verifies its hypotheses
for the velocity and score losses.
\begin{enumerate}[leftmargin=2.2em]
\item \emph{Coupling Construction and Replacement Error.}
Lemma~\ref{Lemma_EOT_Stability} proves Lipschitz stability of the EOT
conditional kernel.  Lemma~\ref{lem:existence_population_valid_Q} constructs
an ideal kernel \(Q\) with the required source and target marginals, and
Corollary~\ref{Cor_Coupling_Replacement} controls the joint-law discrepancy
between \(Q\) and \(\widehat Q_n\).

\item \emph{Exact Excess-Risk Identities.}
For both fields, conditional projection identifies excess population loss
with the grid-averaged squared \(L^2(q_{t_k})\) field error.  The score proof
uses the conditional-mean parametrization
\(s_u(x,t)=\{u(x,t)-x\}/\sigma^2\), for which the Gaussian term cancels from
the loss.

\item \emph{Approximation.}
Propositions~\ref{prop:relu_l2_approximation_noisy_oracle} and
\ref{prop:relu_l2_approximation_noisy_score_oracle} construct network
approximations uniformly on a bounded cube and control the Gaussian tail
outside the cube. 

\item \emph{Loss Perturbations.}
The coupling-replacement error is controlled using
Corollary~\ref{Cor_Coupling_Replacement}, together with the endpoint
Lipschitz bound for the loss. For the doubly robust error, we first write the
population bias as
\(\mathbb E_C[(1-\widehat\alpha/\alpha_a)(m_v-\widehat m_v)]\).
We then control \(m_v-\widehat m_v\) through the conditional-generator
\(W_1\) error and the stability of \(\widehat Q_n\), and apply
Cauchy--Schwarz to obtain the final product bound.

\item \emph{Statistical Generalization.}
Lemma~\ref{lem:finite_cover_empirical_process} controls the
empirical-to-population difference for any bounded loss class from a proper
finite cover.  The two field analyses supply their own network covering
numbers and loss-to-network transfer constants.

\item \emph{Monte Carlo Approximation.} 
We first decompose the implementation error according to the structure of the doubly robust loss and define separate observed-correction and auxiliary Monte Carlo classes. 
For each class, the corresponding error can be written as the difference between a conditional expectation and its realized Monte Carlo
average.  Lemma~\ref{lem:finite_cover_empirical_process} is then applied
conditionally to each array.
A key step is to condition on the training observations and fitted auxiliary
objects, which gives the conditional independence required in
Proposition~\ref{prop:shared_DR_field_bound}.

\item \emph{End-to-End Substitution.}
Combining the preceding velocity- and score-field steps gives the integrated errors
\(\varepsilon_{\mathrm{vel}}^2\) and
\(\varepsilon_{\mathrm{score}}^2\).  Substitution into the explicit sampling
theorem, followed by the Wasserstein conversion, proves
Theorem~\ref{thm:end_to_end_sample_complexity}.
\end{enumerate}

\paragraph{Technical Novelty.}
Our proof makes two main contributions. First, for fixed Gaussian smoothing, we obtain near-linear rather than quadratic dimension dependence in the number of uniform Euler steps, with $D_5$, $\sigma$, and target accuracy held fixed. We bound velocity derivatives and mixed velocity--score terms using displacement moments, and keep the noise factors in the score terms explicit. With exact fields, choosing $\tau^2=\sigma D_5/d$ then reduces the dimension dependence and makes every discretization term shrink with $D_5$, including the pure-score terms. This gives both improved dimension dependence and full coupling-sensitive control of the discretization error.

Second, we give an end-to-end analysis of counterfactual generation with a learned source--target coupling, rather than only independent pairing. We analyze a doubly robust loss that combines prediction-based losses with weighted corrections from observed outcomes, rather than the usual flow-matching loss. Because the training loss itself is approximated using finitely many random draws from the prediction oracle and the coupling, we bound the error caused by this Monte Carlo approximation. We also separate this error from the finite-data error and the Monte Carlo error in the target samples used to learn the coupling. Combining these learning bounds with the Euler and smoothing errors gives a guarantee for the final output distribution.

\subsection{Summary of Notation}
\label{sec:notation_summary}

Table~\ref{tab:notation_summary} summarizes the notation used throughout the paper.

{
\footnotesize
\setlength{\tabcolsep}{2pt}
\renewcommand{\arraystretch}{0.96}
\setlength{\LTpre}{3pt}
\setlength{\LTpost}{3pt}
\begin{longtable}{
@{}p{0.13\linewidth}p{0.32\linewidth}
@{\hspace{0.025\linewidth}}
p{0.13\linewidth}p{0.32\linewidth}@{}}
\caption{Summary of notation used throughout the paper.}
\label{tab:notation_summary}\\
\hline
Notation & Meaning & Notation & Meaning \\
\hline
\endfirsthead
\endhead
$(C,A,Y)$ & Observed covariate, treatment, and outcome.
& $\mathcal V_n,\mathcal S_n$
& Clipped sparse-ReLU velocity class and score class induced by clipped sparse-ReLU denoisers. \\
$a$ & Fixed target treatment value.
& $R,\varepsilon$
& Network input-clipping radius and approximation tolerance; field-specific choices appear in Proposition~\ref{thm:final_velocity_L2_bound_DR}. \\
$\mathbb P_{Y\mid C=c,A=a}$ & Conditional outcome law at covariate value \(c\) under treatment \(a\).
& $\varepsilon_{\mathrm{vel}},\varepsilon_{\mathrm{score}}$
& Integrated estimation errors of the learned velocity and score fields. \\
$\mu_1$
& Target law \(\int\mathbb P_{Y\mid C=c,A=a}\,d\mathbb P_C(c)\).
& $\tau$ & Noise level of the score-corrected SDE sampler. \\
$\mathbb P_{Y\mid A=a}$ & Observed outcome law among samples with treatment \(a\).
& $N,h$ & Number of sampling steps and Euler step size, with \(h=1/N\). \\
$\mathcal D_{\mathrm{tr}},\mathcal D_{\mathrm{aux}}$ & Training and auxiliary splits, each containing \(n\) observations.
& $t_k$ & Sampling grid point \(t_k=kh\). \\
$\widehat{\mathbb P}_{Y\mid C=c,A=a}$ & Learned conditional outcome law, or prediction oracle.
& $M$ & Monte Carlo size in the final doubly robust training objective. \\
$\widehat{\mathbb P}_{Y\mid A=a}^{\mathrm{aux}}$ & Empirical source law formed from treatment-\(a\) outcomes in \(\mathcal D_{\mathrm{aux}}\).
& $M_Q$ & Prediction-oracle samples per auxiliary covariate used to construct \(\widehat{\mathbb P}_{Y_1}^{Q}\). \\
$\widehat{\mathbb P}_{Y_1}^{Q}$ & Finite-sample fitted-target approximation based on the prediction oracle.
& $\alpha_a(c),\widehat\alpha(c)$
& True and estimated inverse propensities for treatment \(a\). \\
$\widehat Q_n(dy_0\mid y_1)$ & Estimated coupling kernel between the empirical source and fitted target.
& $\pi_{\min}$ & Lower bound on \(\mathbb P(A=a\mid C=c)\). \\
$Q(dy_0\mid y_1),\Gamma_{\mu_1,Q}$
& Population-valid kernel and joint endpoint law \(\Gamma_{\mu_1,Q}=Q\mu_1\).
& $\varepsilon_\alpha$ & Estimation error of the inverse propensity model. \\
$(Y_0,Y_1)$ & Source--target pair under the specified coupling.
& $\epspsiemp$ & Empirical prediction-oracle error on auxiliary covariates. \\
$D_5$ & Coupling displacement moment \((\mathbb E\|Y_1-Y_0\|^5)^{1/5}\).
& $\epspsigen$ & Prediction-oracle generalization error. \\
$S_t$ & Linear interpolation \(Y_0+t(Y_1-Y_0)\).
& $\varepsilon_{\mathrm{OT}}$ & Entropic regularization parameter for EOT. \\
$X_t$ & Smoothed interpolation \(S_t+\sigma G\), \(G\sim\mathcal N(0,I_d)\).
& $\mathcal E_a$
& Event that \(\mathcal D_{\mathrm{aux}}\) contains at least one observation with \(A_i=a\). \\
$\sigma$ & Gaussian smoothing level.
& $\operatorname{supp}(P)$
& Topological support: \(x\) such that \(P(B(x,r))>0\) for every \(r>0\). \\
$v_t,s_t$ & Population velocity and score fields for the smoothed interpolation.
& $d$ & Outcome dimension. \\
$q_{t,Q},v_Q^\star,s_Q^\star$
& Density, exact velocity, and exact score induced by \(\Gamma_{\mu_1,Q}\).
& $W_1,W_{1,\rho}$ & Wasserstein distances on the outcome and coupled endpoint spaces. \\
$\widehat v_M,\widehat s_M$ & Learned doubly robust velocity and score fields.
& $\operatorname{KL}$ & Kullback--Leibler divergence. \\
\hline
\multicolumn{4}{@{}p{0.97\linewidth}@{}}{\textbf{Conditioning Convention.}
In the finite-sample analysis, we first condition on
\(\mathcal D_{\mathrm{aux}}\) and the prediction-oracle target reservoir.
The fitted nuisance models, \(\widehat Q_n\), \(Q\),
\(\Gamma_{\mu_1,Q}\), \(q_{t,Q}\), \(v_Q^\star\), \(s_Q^\star\), and the
corresponding \(D_5\) are then interpreted conditionally. Unconditional
high-probability statements include the corresponding auxiliary-sample events
in their failure probabilities.}\\
\end{longtable}
}

\section[Algorithms and Auxiliary-Model Details]
{Algorithms and Auxiliary-Model Details}

\subsection{Prediction-Oracle Accuracy}
\label{App_Prediction_Oracle_Accuracy}

We give sufficient conditions for Assumption~\ref{ass:prediction_oracle_accuracy}. The first controls the prediction-oracle error on the auxiliary covariates used to construct the coupling.

\begin{assumption}[Empirical Generator Accuracy on Auxiliary Covariates]
\label{ass:psi_empirical_training_W1_accuracy}
The learned conditional outcome model satisfies
\begin{align*}
\left[
\frac{1}{n}
\sum_{i=n+1}^{2n}
W_1
\left(
\mathbb P_{Y\mid C=C^i,A=a},
\widehat{\mathbb P}_{Y\mid C=C^i,A=a}
\right)^2
\right]^{1/2}
&\leq
\epspsiemp.
\end{align*}
\end{assumption}

This assumption measures how well the learned conditional outcome model fits
the conditional laws at the empirical auxiliary covariates. It is not a population-\(C\) quantity.

\begin{assumption}[Generator Generalization Gap]
\label{ass:psi_generator_generalization_gap}
Let \(\Psi_n\) denote the class for prediction oracle, and let
\(\widehat{\mathbb P}^{\psi}_{Y\mid C=c,A=a}\) be the conditional law generated
by \(\psi\). Assume
\begin{align*}
&\sup_{\psi\in\Psi_n}
\left|
\mathbb E_C
\left[
W_1
\left(
\mathbb P_{Y\mid C=c,A=a},
\widehat{\mathbb P}^{\psi}_{Y\mid C=c,A=a}
\right)^2
\right]
\right.
\\
&\hspace{3cm}
\left.
-
\frac{1}{n}
\sum_{i=n+1}^{2n}
W_1
\left(
\mathbb P_{Y\mid C=C^i,A=a},
\widehat{\mathbb P}^{\psi}_{Y\mid C=C^i,A=a}
\right)^2
\right|
\leq
\epspsigen^2.
\end{align*}
\end{assumption}

Together, these two assumptions imply the population prediction-oracle accuracy used in the main text.

\begin{proposition}[Population Generator Accuracy From Empirical Accuracy]
\label{prop:psi_population_accuracy_from_empirical_accuracy}
Under Assumptions~\ref{ass:psi_empirical_training_W1_accuracy} and~\ref{ass:psi_generator_generalization_gap},
\begin{align*}
\bigg[\mathbb E_C W_1(\mathbb P_{Y\mid C=c,A=a},\predorac)^2\bigg]^{1/2}
&\leq [(\epspsiemp)^2+\epspsigen^2]^{1/2}.
\end{align*}
Hence Assumption~\ref{ass:prediction_oracle_accuracy} holds.
\end{proposition}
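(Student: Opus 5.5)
The argument is a direct ``empirical plus generalization gap'' computation applied to the fitted oracle \(\predorac\). Write \(\psi\) for the fitted element of \(\Psi_n\) that generates \(\predorac\). Define
\[
F(\psi)=\mathbb E_C\,W_1\bigl(\mathbb P_{Y\mid C=c,A=a},\widehat{\mathbb P}^{\psi}_{Y\mid C=c,A=a}\bigr)^2
\]
and its empirical counterpart
\[
\widehat F(\psi)=\frac1n\sum_{i=n+1}^{2n}W_1\bigl(\mathbb P_{Y\mid C=C^i,A=a},\widehat{\mathbb P}^{\psi}_{Y\mid C=C^i,A=a}\bigr)^2.
\]
The chain of inequalities is
\[
F(\psi)\le \widehat F(\psi)+|F(\psi)-\widehat F(\psi)|\le (\epspsiemp)^2+\sup_{\psi'\in\Psi_n}|F(\psi')-\widehat F(\psi')|\le(\epspsiemp)^2+\epspsigen^2 .
\]
The first step is the triangle inequality. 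The second uses Assumption~\ref{ass:psi_empirical_training_W1_accuracy}, after squaring its displayed bound. The third uses Assumption~\ref{ass:psi_generator_generalization_gap}. Taking square roots gives the claimed bound, which is exactly the bound required in Assumption~\ref{ass:prediction_oracle_accuracy} with \(\varepsilon_{\mathrm{pred}}=[(\epspsiemp)^2+\epspsigen^2]^{1/2}\).

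The only point needing care is that the fitted oracle depends on the auxiliary covariates, so one cannot treat \(\widehat F(\psi)\) as an unbiased estimate of \(F(\psi)\) at this particular \(\psi\). This is why the generalization assumption is stated as a supremum over \(\Psi_n\). Since \(\psi\in\Psi_n\), the uniform bound applies pointwise at the data-dependent choice, and no independence argument is needed.

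Measurability and finiteness also need a brief check. Under Assumption~\ref{ass:clipping_bounded_support}, both conditional laws are supported in \([-1,1]^d\), so each \(W_1\) term is bounded by \(2\sqrt d\) and all expectations are finite. Beyond this, I expect no real obstacle.
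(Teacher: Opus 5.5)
Your proof is correct and is the paper's argument: apply the uniform generalization bound at the fitted oracle, which lies in $\Psi_n$, then use the squared empirical-accuracy bound, and take square roots. Your finiteness remark invoking Assumption~\ref{ass:clipping_bounded_support} is harmless but unnecessary, since that assumption is not a hypothesis of the proposition and a finite gap bound already forces the population quantity to be finite.
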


\begin{proof}
Since Assumption~\ref{ass:psi_generator_generalization_gap} holds uniformly over $\psi\in\Psi_n$, it also applies to the learned prediction oracle. Therefore,
\begin{align*}
\mathbb E_C W_1(\mathbb P_{Y\mid C=c,A=a},\predorac)^2
&\leq \frac{1}{n}\sum_{i=n+1}^{2n}W_1(\mathbb P_{Y\mid C=C_i,A=a},\widehat{\mathbb P}_{Y\mid C=C_i,A=a})^2+\epspsigen^2 \\ &
\leq (\epspsiemp)^2+\epspsigen^2.
\end{align*}
Taking square roots proves the claim.
\end{proof}

The prediction oracle $\predorac$ can be obtained by training an auxiliary conditional flow model on $\mathcal D_{\mathrm{aux}}$. For each observation $(Y_i,C_i,A_i)$, conditional on $(C_i,A_i)$, $Y_i$ is a draw from $\mathbb P_{Y\mid C=C_i,A=A_i}$. Let $P_0$ be a source distribution, for example ${\mathcal N}(0,I_d)$. The population flow-matching loss is
\begin{align*}
\mathcal L_{\mathrm{aux}}(u)
&= \mathbb E_{C,A,Y}\mathbb E_{Z\sim P_0,\,t\sim\operatorname{Unif}[0,1]}
[\|u(X_t,t,C,A)-(Y-Z)\|^2], \qquad X_t=(1-t)Z+tY.
\end{align*}
The auxiliary observations $(C_i,A_i,Y_i)$, $i=n+1,\ldots,2n$, provide samples for training this model. After training, fixing $A=a$ and conditioning on $C=c$ gives the prediction oracle $\widehat{\mathbb P}_{Y\mid C=c,A=a}$.

\subsection[Score-Field Training Objective]
{Score-Field Training Objective}
\label{App_score_loss}

We now explain how the score field in the score-corrected SDE is learned. For each $t\in[0, 1]$, let $q_t$ denote the density of $X_t$ and define $s_t(x)=\nabla_x\log q_t(x)$. The corresponding population score-matching loss is
\begin{align*}
\mathcal L_{\mathrm{score,true}}(\phi)
&=
\int_0^1
\mathbb E\left[
\left\|
s_\phi(X_t, t)-s_t(X_t)
\right\|^2
\right]dt.
\end{align*}
This loss cannot be evaluated directly because $s_t$ is unknown.
By the Gaussian score identity proved in Lemma \ref{Lemma_Score_Identity}, $s_t(X_t)=\mathbb E[-G/\sigma\mid X_t]$. Therefore, $-G/\sigma=(S_t-X_t)/\sigma^2$ can be used as a regression target for the score.
We define the denoising score-matching loss by
\begin{align*}
\mathcal L_{\mathrm{score}}(\phi)
= \int_0^1
\mathbb E_{(Y_0,Y_1)\sim\gamma}
\mathbb E_{G\sim\mathcal N(0,I_d)}
\left[
\left\|
s_\phi(X_t, t)
+
\frac{G}{\sigma}
\right\|^2
\right]dt
=
\int_0^1
\mathbb E
\left[
\left\|
s_\phi(X_t, t)
-
\frac{S_t-X_t}{\sigma^2}
\right\|^2
\right]dt.
\end{align*}

The equivalence follows from the conditional Pythagorean identity.  For each
fixed \(t\), set \(U=-G/\sigma\).  Since
\(s_t(X_t)=\mathbb E[U\mid X_t]\), every square-integrable candidate
\(s_\phi(t,\cdot)\) satisfies
\begin{align*}
\mathbb E\|s_\phi(X_t, t)-U\|^2
&=
\mathbb E\|s_\phi(X_t, t)-s_t(X_t)\|^2
+
\mathbb E\|U-s_t(X_t)\|^2.
\end{align*}
The second term is independent of \(\phi\).  Integrating over \(t\) proves
that the denoising and true-score losses have the same minimizers.  Since
\(X_t-S_t=\sigma G\), the implementable objective may equivalently be
written as
\begin{align*}
\mathcal L_{\mathrm{score}}(\phi)
&=
\int_0^1
\mathbb E
\left[
\left\|
s_\phi(X_t, t)-\frac{S_t-X_t}{\sigma^2}
\right\|^2
\right]dt.
\end{align*}

\subsection{Algorithms}

Algorithm~\ref{alg:score_corrected_sde_sampler} gives the
finite-step score-corrected sampler, while
Algorithm~\ref{alg:minibatch_dr_flow_training} gives the doubly robust
procedure used to train its velocity and score fields.

\begin{algorithm}[t]
\caption{Score-Corrected SDE Sampler}
\label{alg:score_corrected_sde_sampler}
\begin{algorithmic}[1]
\Require Learned velocity field \(\widehat v\), learned score field \(\widehat s\), empirical source distribution \(\widehat{\mathbb P}_{Y\mid A=a}^{\mathrm{aux}}\), smoothing level \(\sigma>0\), SDE noise level \(\tau>0\), number of time steps \(N\).
\State Set \(h=1/N\) and \(t_k=kh\), \(k=0,\ldots,N\).
\State Draw \(Y_0^{\mathrm{init}}\sim \widehat{\mathbb P}_{Y\mid A=a}^{\mathrm{aux}}\) and \(G^{\mathrm{init}}\sim {\mathcal N}(0,I_d)\).
\State Initialize $\widehat Z_0 = Y_0^{\mathrm{init}} + \sigma G^{\mathrm{init}}$.
\For{\(k=0,\ldots,N-1\)}
    \State Draw \(G_k\sim {\mathcal N}(0,I_d)\).
    \State Update $\widehat Z_{k+1} = \widehat Z_k  + h (\widehat v(\widehat Z_k,t_k) +     \frac{\tau^2}{2}     \widehat s(\widehat Z_k,t_k)) + \tau\sqrt h\,G_k$.
\EndFor
\State \Return \(\widehat Z_N\).
\end{algorithmic}
\end{algorithm}

\begin{algorithm}[t]
\caption{Minibatch Doubly Robust Flow Training for a Fixed Treatment   \(a\)}
\label{alg:minibatch_dr_flow_training}
\begin{algorithmic}[1]
\Require Training data \(\{(C_i,A_i,Y_i)\}_{i=1}^n\) for the final flow model, and training data \(\{(C_i,A_i,Y_i)\}_{i=n+1}^{2n}\) for the following models: propensity estimator \(\widehat{\mathbb P}(A=a\mid C=c)\), conditional outcome sampler \(\predorac\), estimated coupling kernel \(\widehat Q_n(dy_0\mid y_1)\).
\Require Vector-field class \(\mathcal V_n\),  score-field class \(\mathcal S_n\), noise level \(\sigma>0\), Monte Carlo size \(M\).

\State Initialize \(v\in\mathcal V_n\) and \(s\in\mathcal S_n\).

\For{\(i\in \{1, ..., n\}\)}
    \For{\(k=1,\ldots,M\)}
        \State Draw \(\widehat Y_{i,k} \sim \widehat{\mathbb P}_{Y\mid C=C_{i},A=a}\).
        \State Draw \(\widehat Y_{0,i,k}\sim\widehat Q_n(\cdot\mid \widehat Y_{i,k})\), \(t_{i,k}\sim \operatorname{Unif}\{t_0, ..., t_{N-1}\}\), and \(G_{i,k}\sim {\mathcal N}(0,I_d)\).
    \EndFor
    \If{\(A_i=a\)}
        \State Draw \(\widehat Y_{0,i}^{\mathrm{obs}}\sim\widehat Q_n(\cdot\mid Y_i)\), \(t_i^{\mathrm{obs}}\sim \operatorname{Unif}\{t_0, ..., t_{N-1}\}\), and \(G_i^{\mathrm{obs}}\sim {\mathcal N}(0,I_d)\).
    \EndIf
\EndFor
\For{each training iteration}
    \State Draw a minibatch \(\mathcal B\subseteq\{1,\ldots,n\}\).

    \State Compute the minibatch doubly robust flow loss
    \begin{align*}
    \widehat L_{\mathcal B,\mathrm{DR}}(v)
    &=
    \frac{1}{|\mathcal B|}
    \sum_{i\in\mathcal B}
    \Bigg[
    \frac{\mathbf 1\{A_i=a\}}{\widehat{\mathbb P}(A=a\mid C=C_i)}
    \left\{
    \widehat\ell_i^{\mathrm{obs}}(v)
    -
    \frac{1}{M}
    \sum_{k=1}^{M}
    \widehat\ell_{i,k}^{\mathrm{plug}}(v)
    \right\}
    +
    \frac{1}{M}
    \sum_{k=1}^{M}
    \widehat\ell_{i,k}^{\mathrm{plug}}(v)
    \Bigg],
    \end{align*}
    where
    \begin{align*}
    \widehat\ell_i^{\mathrm{obs}}(v)
    &=
    \left\|
    v\left(
    \widehat Y_{0,i}^{\mathrm{obs}}
    +
    t_i^{\mathrm{obs}}
    (Y_i-\widehat Y_{0,i}^{\mathrm{obs}})
    +
    \sigma G_i^{\mathrm{obs}},
    t_i^{\mathrm{obs}}
    \right)
    -
    (Y_i-\widehat Y_{0,i}^{\mathrm{obs}})
    \right\|^2,
    \\
    \widehat\ell_{i,k}^{\mathrm{plug}}(v)
    &=
    \left\|
    v\left(
    \widehat Y_{0,i,k}
    +
    t_{i,k}
    (\widehat Y_{i,k}-\widehat Y_{0,i,k})
    +
    \sigma G_{i,k},
    t_{i,k}
    \right)
    -
    (\widehat Y_{i,k}-\widehat Y_{0,i,k})
    \right\|^2.
    \end{align*}

    \State Update \(v\) by one optimizer step using \(\widehat L_{\mathcal B,\mathrm{DR}}(v)\).

    \State Compute the minibatch doubly robust score loss
    \begin{align*}
    \widehat L_{\mathcal B,\mathrm{DR}}^{\mathrm{score}}(s)
    &=
    \frac{1}{|\mathcal B|}
    \sum_{i\in\mathcal B}
    \Bigg[
    \frac{\mathbf 1\{A_i=a\}}{\widehat{\mathbb P}(A=a\mid C=C_i)}
    \left\{
    \widehat r_i^{\mathrm{obs}}(s)
    -
    \frac{1}{M}
    \sum_{k=1}^{M}
    \widehat r_{i,k}^{\mathrm{plug}}(s)
    \right\}
    +
    \frac{1}{M}
    \sum_{k=1}^{M}
    \widehat r_{i,k}^{\mathrm{plug}}(s)
    \Bigg],
    \end{align*}
    where
    \begin{align*}
    \widehat r_i^{\mathrm{obs}}(s)
    &=
    \left\|
    s\left(
    \widehat Y_{0,i}^{\mathrm{obs}}  +  t_i^{\mathrm{obs}}  \left(  Y_i-\widehat Y_{0,i}^{\mathrm{obs}} \right) + \sigma G_i^{\mathrm{obs}},
    t_i^{\mathrm{obs}}
    \right)
    +
    \frac{G_i^{\mathrm{obs}}}{\sigma}
    \right\|^2,
    \\
    \widehat r_{i,k}^{\mathrm{plug}}(s)
    &=
    \left\|
    s\left(
    \widehat Y_{0,i,k} +  t_{i,k} \left( \widehat Y_{i,k}-\widehat Y_{0,i,k} \right) + \sigma G_{i,k},
    t_{i,k}
    \right)
    +
    \frac{G_{i,k}}{\sigma}
    \right\|^2.
    \end{align*}

    \State Update \(s\) by one optimizer step using \(\widehat L_{\mathcal B,\mathrm{DR}}^{\mathrm{score}}(s)\).
\EndFor

\State \Return $v$ and $s$.
\end{algorithmic}
\end{algorithm}


\clearpage
\section[Proof of Example \ref{example_fixed_sigma_ode_kl}]
{Proof of Example~\ref{example_fixed_sigma_ode_kl}}
\label{Appendix_Proof_Example}

\begin{proof}[Proof of Example \ref{example_fixed_sigma_ode_kl}]
In this proof, write \(\widehat Z_k\) for the Euler ODE iterates, so
\(\widehat Z_N=\widehat X_N^{\mathrm{ODE}}\) in the example.
Starting from \(\widehat Z_0\overset{d}{=}X_0\), the recursion is
\begin{align*}
\widehat Z_{k+1}=\widehat Z_k+h v_{kh}(\widehat Z_k),\qquad k=0,\ldots,N-1.
\end{align*}
Here $N$ is the number of Euler steps, and $h = 1/N$ is the step size.

Write $D=Y_1-Y_0$. We have $v_t(x)=\mathbb E[D\mid X_t=x]$.
Let $\varphi_V$ denote the density of \(\mathcal N(0,V)\). Conditional on $B=0$ we have \(X_0\sim\mathcal N(0,1+\sigma^2)\) and conditional on $B=1$ we have \(X_0\sim\mathcal N(0,4+\sigma^2)\).
Bayes' rule implies
\begin{align*}
\mathbb P(B=1\mid X_0=x)
=\frac{p\varphi_{4+\sigma^2}(x)}
{(1-p)\varphi_{1+\sigma^2}(x)+p\varphi_{4+\sigma^2}(x)}.
\end{align*}

Since $D=0$ on $\{B=0\}$,
\begin{align*}
v_0(x)&=\mathbb P(B=1\mid X_0=x) \mathbb E[D\mid X_0=x,B=1] + \mathbb P(B=0\mid X_0=x) \mathbb E[D\mid X_0=x,B=0] \\
&=\mathbb P(B=1\mid X_0=x) \mathbb E[D\mid X_0=x,B=1].
\end{align*}

We have that 
\begin{align*}
    \mathbb E[U\mid 2U+\sigma G=x]
    = \int_{-\infty}^{\infty} u P_{u \mid 2u+\sigma G = x}(u) du.
\end{align*}
To compute $P_{u \mid 2u+\sigma G = x}(u)$, we use the formula $p_{X\mid Y = y}(x) = \frac{p_{X, Y}(x, y)}{p_{Y}(y)}$. 
We get 
\begin{align*}
    p_{U\mid 2U + \sigma G = x}(u) = \frac{p_{U, 2U + \sigma G}(u, x)}{p_{2U + \sigma G}(x)}
    \propto p_{U, 2U + \sigma G}(u, x) = \frac{1}{\sigma} p_{U, G}(u, \frac{x-2u}{\sigma}),
\end{align*}
where we used change of variable formula. 
Then we have 
\begin{align*}
    p_{U\mid 2U + \sigma G = x}(u)
    &\propto \frac{1}{\sigma} \exp(-u^2/2 - (x-2u)^2/(2 \sigma^2)) 
    \propto \exp(- \frac{(u - \frac{2x}{\sigma^2+4})^2 }{2\sigma^2 /(\sigma^2 + 4)}).
\end{align*}
Hence $\mathbb E[U\mid 2U+\sigma G=x]=\frac{2}{4+\sigma^2}x$.
Using the definition of $\lambda$,
\begin{align*}
\mathbb E[D\mid X_0=x,B=1]
=-2\lambda\frac{2}{4+\sigma^2}x
=-\frac{x}{h},
\end{align*}
so we have $v_0 (x) = -\frac{\mathbb P(B=1\mid X_0=x)}{h}x.$

We next examine the uniform spatial regularity of the population velocity. By the expression above for $\mathbb P(B=1\mid X_0=x)$,
\begin{align*}
\mathbb P(B=1\mid X_0=x)
&=\frac{p\varphi_{4+\sigma^2}(x)}
{(1-p)\varphi_{1+\sigma^2}(x)+p\varphi_{4+\sigma^2}(x)} =\left(
1+\frac{1-p}{p}\frac{\varphi_{1+\sigma^2}(x)}{\varphi_{4+\sigma^2}(x)}
\right)^{-1}\\
&=\left(
1+\frac{1-p}{p}\sqrt{\frac{4+\sigma^2}{1+\sigma^2}}
\exp\left\{-\frac{x^2}{2(1+\sigma^2)}+\frac{x^2}{2(4+\sigma^2)}\right\}
\right)^{-1}\\
&=\left(
1+\frac{1-p}{p}\sqrt{\frac{4+\sigma^2}{1+\sigma^2}}
\exp\left\{-\frac{3x^2}{2(1+\sigma^2)(4+\sigma^2)}\right\}
\right)^{-1}.
\end{align*}
Therefore,
\begin{align*}
v_0(x)
=-\frac{x}{h}
\left(
1+\frac{1-p}{p}\sqrt{\frac{4+\sigma^2}{1+\sigma^2}}
\exp\left\{-\frac{3x^2}{2(1+\sigma^2)(4+\sigma^2)}\right\}
\right)^{-1}.
\end{align*}
Differentiating with respect to $x$ gives
\begin{align*}
\partial_x v_0(x)
={}&-\frac{1}{h}
\left(
1+\frac{1-p}{p}\sqrt{\frac{4+\sigma^2}{1+\sigma^2}}
\exp\left\{-\frac{3x^2}{2(1+\sigma^2)(4+\sigma^2)}\right\}
\right)^{-1}\\
&-\frac{3x^2}{h(1+\sigma^2)(4+\sigma^2)}
\frac{
\frac{1-p}{p}\sqrt{\frac{4+\sigma^2}{1+\sigma^2}}
\exp\left\{-\frac{3x^2}{2(1+\sigma^2)(4+\sigma^2)}\right\}
}{
\left(
1+\frac{1-p}{p}\sqrt{\frac{4+\sigma^2}{1+\sigma^2}}
\exp\left\{-\frac{3x^2}{2(1+\sigma^2)(4+\sigma^2)}\right\}
\right)^2
}.
\end{align*}
For all sufficiently small $p$, we have
\begin{align*}
\frac{1-p}{p}\sqrt{\frac{4+\sigma^2}{1+\sigma^2}}>1.
\end{align*}
Choose $x>0$ such that $x^2=\frac{2(1+\sigma^2)(4+\sigma^2)}{3}
\log\left(
\frac{1-p}{p}\sqrt{\frac{4+\sigma^2}{1+\sigma^2}}
\right)$, i.e., $\frac{1-p}{p}\sqrt{\frac{4+\sigma^2}{1+\sigma^2}}
\exp\left\{-\frac{3x^2}{2(1+\sigma^2)(4+\sigma^2)}\right\}=1$.
Substituting this value of $x$ into the expression for $\partial_xv_0(x)$ gives
\begin{align*}
\partial_xv_0(x)=-\frac{1}{2h}
-\frac{3x^2}{4h(1+\sigma^2)(4+\sigma^2)}=-\frac{1}{2h}
\left[
1+
\log\left(
\frac{1-p}{p}\sqrt{\frac{4+\sigma^2}{1+\sigma^2}}
\right)
\right].
\end{align*}
Consequently,
\begin{align*}
\sup_{t\in[0,1]}\sup_{x\in\mathbb R}|\partial_xv_t(x)| \geq\sup_{x\in\mathbb R}|\partial_xv_0(x)| \geq
\frac{1}{2h}
\left[
1+
\log\left(
\frac{1-p}{p}\sqrt{\frac{4+\sigma^2}{1+\sigma^2}}
\right)
\right].
\end{align*}

By construction, $D=0$ on $\{B=0\}$ and $D=-2\lambda U$ on $\{B=1\}$. Hence $D_5
=\bigl(\mathbb E|D|^5\bigr)^{1/5}
=2\lambda\bigl(\mathbb E|U|^5\bigr)^{1/5}p^{1/5}$. So we conclude that when $p \to 0$, $\sup_{t\in[0,1]}\sup_{x\in\mathbb R}|\partial_xv_t(x)|
\to +\infty$ but $D_5\to0$.

For the integrated derivative claim, keep \(h,\sigma\) fixed, so that
\(\lambda\) is fixed. Since
\(\mathbb E|D|^4=16\lambda^4p\,\mathbb E|U|^4=48\lambda^4p\),
Lemma~\ref{Lemma_Integrated_Derivative_Bounds} with \(d=1\) gives
\[
\int_0^1\mathbb E|\partial_xv_t(X_t)|^2\,dt
\leq\frac{\sqrt3}{\sigma^2}(48\lambda^4p)^{1/2}
=\frac{12\lambda^2}{\sigma^2}\sqrt p\longrightarrow0.
\]

Next, we show the sampling KL error would blow up.
The first Euler update is the deterministic map
\begin{align*}
\widehat Z_1=T_p(\widehat Z_0),
\qquad
T_p(x)=x+h v_0(x)=x(1-\mathbb P(B=1\mid X_0=x)).
\end{align*}
Moreover,
\begin{align*}
1-\mathbb P(B=1\mid X_0=x)
&\leq\frac{1-p}{p}\frac{\varphi_{1+\sigma^2}(x)}{\varphi_{4+\sigma^2}(x)}\\
&=\frac{1-p}{p}\sqrt{\frac{4+\sigma^2}{1+\sigma^2}}
\exp\left\{-\frac{x^2}{2}\left(\frac{1}{1+\sigma^2}-\frac{1}{4+\sigma^2}\right)\right\}.
\end{align*}
Since $T_p$ is continuous, its image is bounded. Thus $\widehat Z_1$ is supported on a compact interval.

For $v_t(x)$, we change the condition $X_0 = 2U + \sigma G = x$ to $X_t = 2U(1-\lambda t) + \sigma G = x$.
\begin{align*}
\mathbb E[D\mid X_t=x,B=1]
=-\frac{4\lambda(1-\lambda t)}{4(1-\lambda t)^2+\sigma^2}x,
\end{align*}
so 
\begin{align*}
v_t(x)
=-\frac{4\lambda(1-\lambda t)}{4(1-\lambda t)^2+\sigma^2}
\frac{p\varphi_{4(1-\lambda t)^2+\sigma^2}(x)}
{(1-p)\varphi_{1+\sigma^2}(x)+p\varphi_{4(1-\lambda t)^2+\sigma^2}(x)}x.
\end{align*}
Note that $|\frac{4\lambda(1-\lambda t)}{4(1-\lambda t)^2+\sigma^2}| \le |\frac{4\lambda(1-\lambda t)}{\sigma^2}|$ and $\frac{p\varphi_{4(1-\lambda t)^2+\sigma^2}(x)}
{(1-p)\varphi_{1+\sigma^2}(x)+p\varphi_{4(1-\lambda t)^2+\sigma^2}(x)} \le 1$ are uniformly bounded.
Consequently,
\begin{align*}
x\mapsto x+h v_{kh}(x)
\end{align*}
is continuous for every $k=0,\ldots,N-1$. Since a continuous image of a compact set is compact, $\widehat Z_N$ is supported on some compact set.

On the other hand, the density \(q_1(x)\) of \(X_1\) is strictly positive for every $x\in\mathbb R$. Therefore, $\mathcal L(X_1)$ is not absolutely continuous with respect to $\mathcal L(\widehat Z_N)$, which proves
\begin{align*}
\mathrm{KL}\bigl(\mathcal L(X_1)\,\|\,\mathcal L(\widehat Z_N)\bigr)=+\infty.
\end{align*}
The result on integrated regularity follows from Lemma \ref{Lemma_Integrated_Derivative_Bounds}.
\end{proof}

\section[Proof of Example \ref{example_no_sigma}]
{Proof of Example~\ref{example_no_sigma}}
\label{Appendix_Proof_Example_No_Smoothing}

\begin{proof}[Proof of Example \ref{example_no_sigma}]
We first compute the coupling displacement. Conditional on $Y_0=x$, the displacement $Y_1-Y_0$ is equal to $\delta$, $-\delta$, or $0$, with respective probabilities $p_m(x)$, $p_m(x)$, and $1-2p_m(x)$. Therefore
\[
\mathbb E\left[|Y_1-Y_0|^5\mid Y_0=x\right]
=2p_m(x)\delta^5
=\delta^5\left(\frac12+\frac14\sin(K_mx)\right).
\]

Since $Y_0$ is symmetric about zero and $x\mapsto\sin(K_mx)$ is odd, $\mathbb E[\sin(K_mY_0)]=0$.
Hence \(\mathbb E|Y_1-Y_0|^5=\delta^5/2\), and therefore
\[
D_5=\frac{\delta}{2^{1/5}}=\varepsilon.
\]
In particular, $D_5$ is independent of $m$ even though $K_m\to\infty$.

We next verify that the score is well defined despite the absence of smoothing. Let
\[
\phi(x)=\frac{1}{\sqrt{2\pi}}e^{-x^2/2}
\]
denote the standard Gaussian density. Conditional on the three possible values of $Y_1-Y_0$, the interpolation satisfies
\[
Y_t=
\begin{cases}
Y_0+t\delta,&Y_1-Y_0=\delta,\\
Y_0-t\delta,&Y_1-Y_0=-\delta,\\
Y_0,&Y_1-Y_0=0.
\end{cases}
\]
It follows that
\begin{align*}
q_t^{{(m)}}(y)
=\phi(y-t\delta)p_m(y-t\delta)
+\phi(y+t\delta)p_m(y+t\delta)+\phi(y)\left(1-2p_m(y)\right).
\end{align*}
Since $p_m(y)\leq 3/8$, $1-2p_m(y)\geq\frac14$,
and hence $q_t^{{(m)}}(y)\geq\frac14\phi(y)>0$.
Moreover, $q_t^{{(m)}}$ is smooth because it is a finite sum of
products of smooth functions. Thus
$s_t^{{(m)}}=\partial_x\log q_t^{{(m)}}$ and
$\partial_xs_t^{{(m)}}$ are well defined for every
$t\in[0,1]$.

Since
$\partial_xs_t^{{(m)}}(x)
=\partial_x\frac{\bigl(q_t^{{(m)}}\bigr)'(x)}
{q_t^{{(m)}}(x)}
=\frac{\bigl(q_t^{{(m)}}\bigr)''(x)
q_t^{{(m)}}(x)
-\bigl(\bigl(q_t^{{(m)}}\bigr)'(x)\bigr)^2}
{\bigl(q_t^{{(m)}}(x)\bigr)^2}
=\frac{\bigl(q_t^{{(m)}}\bigr)''(x)}
{q_t^{{(m)}}(x)}
-s_t^{{(m)}}(x)^2$,
we have
\begin{align*}
\mathbb E[\partial_xs_t^{{(m)}}(Y_t)]
&=\int_{\mathbb R}\partial_xs_t^{{(m)}}(y)
q_t^{{(m)}}(y)\,dy\\
&=\int_{\mathbb R}\bigl(q_t^{{(m)}}\bigr)''(y)\,dy
-\mathbb E[s_t^{{(m)}}(Y_t)^2]
=-\mathbb E[s_t^{{(m)}}(Y_t)^2],
\end{align*}
because $\bigl(q_t^{{(m)}}\bigr)'(y)\to0$ as
$|y|\to\infty$. Jensen's inequality therefore yields
\[
\mathbb E[|\partial_xs_t^{{(m)}}(Y_t)|^2]
\geq
\left|\mathbb E[\partial_xs_t^{{(m)}}(Y_t)]\right|^2
=
\left(\mathbb E[s_t^{{(m)}}(Y_t)^2]\right)^2.
\]

It therefore remains to lower bound
$\mathbb E[s_t^{{(m)}}(Y_t)^2]$. Rather than differentiating the
explicit expression for $q_t^{{(m)}}$ and attempting to control
the resulting cross terms pointwise, we use a variational consequence of the
score identity. For any smooth bounded function $f$,
\begin{align*}
\mathbb E[f'(Y_t)]
&=\int_{\mathbb R}f'(y)q_t^{{(m)}}(y)\,dy\\
&=-\int_{\mathbb R}f(y)
\bigl(q_t^{{(m)}}\bigr)'(y)\,dy\\
&=-\mathbb E[f(Y_t)s_t^{{(m)}}(Y_t)],
\end{align*}
where the boundary term vanishes because $q_t^{{(m)}}$ has
Gaussian tails. Hence, by Cauchy--Schwarz,
\begin{align*}
|\mathbb E[f'(Y_t)]|^2
&\leq \mathbb E[f(Y_t)^2]
\mathbb E[s_t^{{(m)}}(Y_t)^2],
\end{align*}
and therefore
\begin{align*}
\mathbb E[s_t^{{(m)}}(Y_t)^2]
\geq
\frac{|\mathbb E[f'(Y_t)]|^2}{\mathbb E[f(Y_t)^2]}
=
\frac{\left(\int q_t^{{(m)}}(y)f'(y)\,dy\right)^2}
{\int q_t^{{(m)}}(y)f(y)^2\,dy}.
\end{align*}

We want to choose $f$ so that the lower bound doesn't decrease, as $m$ increases. Take $f(y)=\cos(K_my)$.
Its derivative is $f'(y)=-K_m\sin(K_my)$, 
so the preceding inequality gives
\begin{align*}
\mathbb E[s_t^{{(m)}}(Y_t)^2]
\geq
\frac{K_m^2|\mathbb E[\sin(K_mY_t)]|^2}
{\mathbb E[\cos^2(K_mY_t)]} \geq
K_m^2|\mathbb E[\sin(K_mY_t)]|^2,
\end{align*}
where the last inequality uses $\cos^2(K_mY_t)\leq1$. By construction,
\begin{align*}
\mathbb E[\sin(K_mY_t)\mid Y_0=x]
={}&p_m(x)\sin(K_mx+tK_m\delta)
+p_m(x)\sin(K_mx-tK_m\delta)\\
&+\left(1-2p_m(x)\right)\sin(K_mx).
\end{align*}
Using $\sin(a+b)+\sin(a-b)=2\sin(a)\cos(b)$
we obtain
\begin{align*}
\mathbb E[\sin(K_mY_t)\mid Y_0=x]
&=2p_m(x)\sin(K_mx)\cos(tK_m\delta)
+\left(1-2p_m(x)\right)\sin(K_mx)\\
&=\sin(K_mx)
\left[
1+2p_m(x)\left(\cos(tK_m\delta)-1\right)
\right].
\end{align*}
Taking expectation with respect to $Y_0$ and using $\mathbb E[\sin(K_mY_0)]=0$ gives
\begin{align*}
\mathbb E[\sin(K_mY_t)]
&=\left(\cos(tK_m\delta)-1\right)
\mathbb E\left[2p_m(Y_0)\sin(K_mY_0)\right].
\end{align*}
Recall by definition $2p_m(Y_0)=\frac12+\frac14\sin(K_mY_0)$, so we have
\begin{align*}
\mathbb E\left[2p_m(Y_0)\sin(K_mY_0)\right] =\frac12\mathbb E[\sin(K_mY_0)]
+\frac14\mathbb E[\sin^2(K_mY_0)] =\frac14\mathbb E[\sin^2(K_mY_0)].
\end{align*}
Using $\sin^2(u)=\frac{1-\cos(2u)}{2}$
and $\mathbb E[\cos(aY_0)] =\int_{\mathbb R} \phi(x) \cos(ax) dx=e^{-a^2/2}$,
we obtain
\[
\mathbb E[\sin^2(K_mY_0)]
=\frac{1-e^{-2K_m^2}}{2}.
\]
Consequently,
\begin{align*}
\mathbb E[\sin(K_mY_t)]
=-\frac{\left(1-\cos(tK_m\delta)\right)\left(1-e^{-2K_m^2}\right)}{8}=-\frac{\left(1-\cos((2m+1)\pi t)\right)\left(1-e^{-2K_m^2}\right)}{8},
\end{align*}
where the second equality uses $K_m\delta=(2m+1)\pi$. Substituting this expression back yields
\begin{align*}
\mathbb E[s_t^{{(m)}}(Y_t)^2]
&\geq K_m^2|\mathbb E[\sin(K_mY_t)]|^2\\
&=
\frac{K_m^2(1-e^{-2K_m^2})^2}{64}
\left(1-\cos((2m+1)\pi t)\right)^2.
\end{align*}
It follows that 
\begin{align*}
\mathbb E[|\partial_xs_t^{{(m)}}(Y_t)|^2]
&\geq
\left(
\frac{K_m^2(1-e^{-2K_m^2})^2}{64}
\left(1-\cos((2m+1)\pi t)\right)^2
\right)^2\\
&=
\frac{K_m^4(1-e^{-2K_m^2})^4}{4096}
\left(1-\cos((2m+1)\pi t)\right)^4.
\end{align*}
Integrating over $t\in[0,1]$ therefore gives
\begin{align*}
\int_0^1\mathbb E[|\partial_xs_t^{{(m)}}(Y_t)|^2]\,dt
\geq
\frac{K_m^4(1-e^{-2K_m^2})^4}{4096}
\int_0^1
\left(1-\cos((2m+1)\pi t)\right)^4dt.
\end{align*}
It remains only to check that the oscillatory factor does not disappear after integration in time. Expanding,
\begin{align*}
(1-\cos u)^4
=
1-4\cos u+6\cos^2u-4\cos^3u+\cos^4u.
\end{align*}
With the change of variables $u=(2m+1)\pi t$,
\begin{align*}
\int_0^1
\left(1-\cos((2m+1)\pi t)\right)^4dt
=
\frac{1}{(2m+1)\pi}
\int_0^{(2m+1)\pi}(1-\cos u)^4du.
\end{align*}
Since $2m+1$ is an integer,
\begin{align*}
\int_0^{(2m+1)\pi}\cos u\,du&=0,\\
\int_0^{(2m+1)\pi}\cos^3u\,du&=0,\\
\int_0^{(2m+1)\pi}\cos^2u\,du&=\frac{(2m+1)\pi}{2},\\
\int_0^{(2m+1)\pi}\cos^4u\,du&=\frac{3(2m+1)\pi}{8}.
\end{align*}
Hence
\begin{align*}
\int_0^1
\left(1-\cos((2m+1)\pi t)\right)^4dt
=
1+6\cdot\frac12+\frac38
=
\frac{35}{8}.
\end{align*}
Consequently,
\begin{align*}
\int_0^1\mathbb E[|\partial_xs_t^{{(m)}}(Y_t)|^2]\,dt
\geq
\frac{35}{32768}
K_m^4(1-e^{-2K_m^2})^4.
\end{align*}
Recall that $D_5=\varepsilon$ for every $m$, whereas
\[
K_m=\frac{(2m+1)\pi}{\delta}\longrightarrow\infty.
\]
Therefore,
\begin{align*}
\int_0^1\mathbb E[|\partial_xs_t^{{(m)}}(Y_t)|^2]\,dt
\longrightarrow+\infty
\end{align*}
while $D_5$ remains fixed at the arbitrarily prescribed value $\varepsilon$. Thus, without smoothing, a small coupling displacement does not imply small integrated regularity. 

\end{proof}

%% file: Proof_Stochastic_Sampler.tex
\section[Proof of Theorem \ref{Theorem_Sampling_Error}]
{Proof of Theorem~\ref{Theorem_Sampling_Error}}
\label{Appendix_Sampling_Proof}

\phantomsection
\label{cond:standing_sampling}
\noindent\textbf{Standard Sampling Conditions.}
For the sampling analysis, we require that the exact score-corrected drift
\(b_t\) is jointly continuous on \([0,1]\times\mathbb R^d\), and that the
exact SDE initialized from \(q_0\),
\begin{align*}
dZ_t=b_t(Z_t)\,dt+\tau\,dB_t,
\end{align*}
admits a weak solution on \([0,1]\) satisfying
\(\mathcal L(Z_t)=q_t\) for every \(t\in[0,1]\). 
We also require each grid-frozen learned drift
\(\widehat b_{t_k}\) to be continuous.
The exact-process requirements are verified in
Lemma~\ref{lem:exact_sampling_conditions}.
Continuity of the learned drifts follows from that of the
neural networks.

\begin{lemma}[Sampling well-posedness]
\label{lem:exact_sampling_conditions}
Let $\sigma,\tau>0$ and $D_5<\infty$. Then $b_t(x)=v_t(x)+\frac{\tau^2}{2}s_t(x)$ is jointly continuous on $[0,1]\times\mathbb R^d$, and the exact SDE
\begin{align*}
dZ_t=b_t(Z_t)\,dt+\tau\,dB_t,\qquad Z_0\sim q_0,
\end{align*}
admits a weak solution on $[0,1]$ satisfying $\mathcal L(Z_t)=q_t$ for every $t\in[0,1]$.
\end{lemma}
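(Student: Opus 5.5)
The proof has three steps: show that \(q_t\) solves the Fokker--Planck equation of the score-corrected SDE, verify the integrability and continuity conditions of the superposition principle, and apply that principle to produce a weak solution with marginals \(q_t\). This is the route announced in the proof roadmap.

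\emph{Step 1 (regularity and moments).} Write \(D=Y_1-Y_0\). The density is
\[
q_t(x)=\mathbb E\bigl[\varphi_{\sigma^2}(x-Y_0-tD)\bigr],
\]
where \(\varphi_{\sigma^2}\) is the \(\mathcal N(0,\sigma^2I_d)\) density. Since \(\sigma>0\), this is strictly positive. The quantities \(D_5<\infty\) and \(\mathbb E\|Y_0\|^k\) do not obviously control each other. For joint continuity I only need \(\mathbb E\|D\|^k<\infty\) for \(k\le5\), together with local-in-\(x\) domination. The Gaussian factor \(\varphi_{\sigma^2}(x-S_t)\), multiplied by polynomials in \((x-S_t)/\sigma^2\) and in \(D\), is bounded by \(C(1+\|D\|^k)\) uniformly in \((t,x)\). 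Dominated convergence then gives that
\[
q_t,\qquad \nabla q_t,\qquad \partial_t q_t=-\mathbb E\bigl[\nabla\varphi_{\sigma^2}(x-S_t)\cdot D\bigr],\qquad N_t(x):=\mathbb E\bigl[D\,\varphi_{\sigma^2}(x-S_t)\bigr]
\]
are all jointly continuous. Because \(q_t>0\) is continuous, \(v_t=N_t/q_t\) and \(s_t=\nabla q_t/q_t\) are jointly continuous. Hence \(b_t=v_t+\tfrac{\tau^2}{2}s_t\) is jointly continuous.

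\emph{Step 2 (Fokker--Planck in weak form).} Fix a test function \(f\in C_c^\infty\) and differentiate \(\mathbb E f(X_t)\) under the expectation:
\[
\frac{d}{dt}\,\mathbb E f(X_t)=\mathbb E\bigl[\nabla f(X_t)\cdot D\bigr]=\mathbb E\bigl[\nabla f(X_t)\cdot v_t(X_t)\bigr].
\]
This is the continuity equation \(\partial_t q_t+\nabla\cdot(q_tv_t)=0\). Integrating by parts with the score,
\[
\int \nabla f\cdot s_t\,q_t=\int \nabla f\cdot\nabla q_t=-\int \Delta f\, q_t .
\]
Adding \(\tfrac{\tau^2}{2}\) times this to the continuity equation cancels the new drift term against the diffusion term. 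Therefore
\[
\frac{d}{dt}\int f\,q_t=\int\Bigl(b_t\cdot\nabla f+\tfrac{\tau^2}{2}\Delta f\Bigr)q_t ,
\]
so \(q_t\) solves the Fokker--Planck equation of \(dZ_t=b_t(Z_t)\,dt+\tau\,dB_t\). The map \(t\mapsto q_t\) is weakly continuous because \(X_t\) is almost surely continuous in \(t\).

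\emph{Step 3 (superposition).} Invoke Figalli's superposition principle (or Trevisan's version for SDEs). It requires
\[
\int_0^1\!\int \|b_t\|\,q_t\,dx\,dt<\infty .
\]
Conditional Jensen gives \(\mathbb E\|v_t(X_t)\|\le\mathbb E\|D\|\le D_5\). The score identity \(s_t(X_t)=\mathbb E[-G/\sigma\mid X_t]\) gives \(\mathbb E\|s_t(X_t)\|\le\sqrt d/\sigma\). Both bounds are finite and uniform in \(t\). The principle then yields a probability measure on path space under which the canonical process is a martingale solution with one-time marginals \(q_t\). This is equivalent to a weak solution with \(Z_0\sim q_0\).

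The main obstacle is Step 1, specifically justifying differentiation under the expectation in Step 2 with only displacement moments. No moment of \(Y_0\) is assumed, so the argument must rely on boundedness of the Gaussian factor and its derivatives rather than on moments of \(X_t\). The point to check is that \(\varphi_{\sigma^2}(z)\,\|z\|^j\) is bounded for every \(j\). This makes the dominating functions \(C(1+\|D\|^2)\), which are integrable, and the same domination also justifies the boundary-free integration by parts against compactly supported \(f\).
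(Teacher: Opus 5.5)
Your proposal is correct and follows essentially the same route as the paper. The paper proves joint continuity of $q_t$, $\nabla_x q_t$ and $\mathbb E[D\varphi_\sigma(x-S_t)]$ by dominated convergence, derives the weak Fokker--Planck identity by differentiating under the expectation and integrating by parts against the score, checks the integrability bounds $\mathbb E\|D\|$ and $\mathbb E\|G\|/\sigma$ together with narrow continuity, and then applies Trevisan's superposition principle. Only two small differences: the paper needs just $\mathbb E\|D\|<\infty$, so your continuity of $\partial_t q_t$ and the higher-moment dominations are unnecessary, and it passes explicitly from the martingale-problem solution to a weak solution via L\'evy's characterization of the martingale part.
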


\begin{proof}
Let $\varphi_\sigma$ be the density of $\mathcal N(0,\sigma^2I_d)$ and recall that $S_t=Y_0+tD$, where $D=Y_1-Y_0$. Then
\begin{align*}
q_t(x)=\mathbb E[\varphi_\sigma(x-S_t)],\qquad
q_t(x)v_t(x)=\mathbb E[D\varphi_\sigma(x-S_t)].
\end{align*}
Fix $(t_n,x_n)\to(t,x)$. Since $S_{t_n}\to S_t$ almost surely,
\begin{align*}
\varphi_\sigma(x_n-S_{t_n})&\to\varphi_\sigma(x-S_t),\\
\nabla_x\varphi_\sigma(x_n-S_{t_n})&\to\nabla_x\varphi_\sigma(x-S_t)
\end{align*}
almost surely. Both $\varphi_\sigma$ and $\nabla_x\varphi_\sigma$ are bounded, so dominated convergence gives joint continuity of $q_t(x)$ and $\nabla_xq_t(x)$. Moreover,
\begin{align*}
D\varphi_\sigma(x_n-S_{t_n})\to D\varphi_\sigma(x-S_t)
\end{align*}
almost surely and
\begin{align*}
\|D\varphi_\sigma(x_n-S_{t_n})\|
\leq \|\varphi_\sigma\|_\infty\|D\|.
\end{align*}
Since $D_5<\infty$ implies $\mathbb E\|D\|<\infty$, dominated convergence also gives joint continuity of $q_t(x)v_t(x)$.

Because $\varphi_\sigma>0$, we have $q_t(x)>0$ for every $(t,x)$. Hence
\begin{align*}
v_t(x)=\frac{\mathbb E[D\varphi_\sigma(x-S_t)]}{q_t(x)},
\qquad
s_t(x)=\frac{\nabla_xq_t(x)}{q_t(x)}
\end{align*}
are jointly continuous, and therefore so is $b_t$.

We next verify the Fokker--Planck equation satisfied by $(q_t)_{t\in[0,1]}$. For every $f\in C_c^\infty(\mathbb R^d)$,
\begin{align*}
\frac{d}{dt}\int_{\mathbb R^d}f(x)q_t(x)\,dx
&=\frac{d}{dt}\mathbb E[f(X_t)]\\
&=\mathbb E[\nabla_xf(X_t)^\top D]\\
&=\int_{\mathbb R^d}\nabla_xf(x)^\top v_t(x)q_t(x)\,dx,
\end{align*}
where differentiation under the expectation is justified by
$\|\nabla_xf(X_t)^\top D\|\leq\|\nabla_xf\|_\infty\|D\|$.
Since $q_ts_t=\nabla_xq_t$, integration by parts gives
\begin{align*}
\int_{\mathbb R^d}\nabla_xf(x)^\top s_t(x)q_t(x)\,dx
=-\int_{\mathbb R^d}\Delta_xf(x)q_t(x)\,dx.
\end{align*}
Thus
\begin{align*}
\frac{d}{dt}\int_{\mathbb R^d}f(x)q_t(x)\,dx
=
\int_{\mathbb R^d}
\left(
\nabla_xf(x)^\top b_t(x)
+\frac{\tau^2}{2}\Delta_xf(x)
\right)q_t(x)\,dx.
\end{align*}

It remains to verify the integrability required by the superposition principle. By conditional Jensen's inequality,
\begin{align*}
\int_{\mathbb R^d}\|v_t(x)\|q_t(x)\,dx
&=\mathbb E\|\mathbb E[D\mid X_t]\|
\leq\mathbb E\|D\|,
\end{align*}
while the Gaussian score identity gives
\begin{align*}
\int_{\mathbb R^d}\|s_t(x)\|q_t(x)\,dx
&=\frac{1}{\sigma}\mathbb E\|\mathbb E[G\mid X_t]\|
\leq\frac{\mathbb E\|G\|}{\sigma}.
\end{align*}
Consequently,
\begin{align*}
\int_0^1\int_{\mathbb R^d}\|b_t(x)\|q_t(x)\,dx\,dt
\leq
\mathbb E\|D\|+\frac{\tau^2}{2\sigma}\mathbb E\|G\|<\infty.
\end{align*}
Moreover, $t\mapsto q_t(x)\,dx$ is narrowly continuous: for every bounded continuous $f$, almost-sure continuity of $t\mapsto X_t$ and dominated convergence imply that $t\mapsto\mathbb E[f(X_t)]$ is continuous.

Therefore, \cite[Theorem~2.5]{trevisan2016wellposedness} yields a probability measure on $C([0,1];\mathbb R^d)$ solving the martingale problem associated with drift $b_t$ and diffusion matrix $\tau^2I_d$, with time-$t$ marginal $q_t$. For the canonical process $Z$, its martingale part has quadratic covariation $\tau^2tI_d$; hence, by the Lévy characterization, it can be written as $\tau B_t$ for a Brownian motion $B$. Thus $Z$ is a weak solution of the displayed SDE and satisfies $\mathcal L(Z_t)=q_t$ for every $t\in[0,1]$.
\end{proof}

\subsection{Path-Space Comparison via Girsanov}

We now introduce a framework \cite{chen2023sampling}, \cite{chen2023the} that controls the KL divergence between the laws of two SDEs.

Let $T > 0$ (for our flow matching setting, $T=1$). Let $0=t_0<t_1<\cdots<t_N=T$ be a uniform time grid with step size $h=t_{k+1}-t_k$. 
For every \(t\in[t_k,t_{k+1})\), define $\eta_h(t)=t_k$. Recall that our sampling SDE is written as $dZ_t=b_t(Z_t)\,dt+\tau\,dB_t$. For the continuous-time Euler discretization, we have 
\begin{align*}
d\widehat Z_t
&=
\widehat b_{\eta_h(t)}(\widehat Z_{\eta_h(t)})\,dt
+
\tau\,dB_t.
\end{align*}
Equivalently, for \(t\in[t_k,t_{k+1})\),
\begin{align*}
d\widehat Z_t
&=
\widehat b_{t_k}(\widehat Z_{t_k})\,dt
+
\tau\,dB_t.
\end{align*}
Assume both processes start from the same initial law: $Z_0\sim \rho_0$ and $\widehat Z_0 \sim \rho_0$.

\begin{lemma}[Girsanov Bound for the Euler Process via Localization]
\label{Lemma_Girsanov_Euler_KL}
Let \(\tau>0\), and let \(b\) and
\(\{\widehat b_{t_k}\}_{k=0}^{N-1}\) be continuous drift fields.
Fix a weak solution \(Z\) of the exact SDE on \([0,T]\),
with \(Z_0\sim\rho_0\), and let \(\widehat Z\) be the full
continuous-time Euler process initialized from \(\rho_0\).
Write
\(\mathbb P=\mathcal L(Z)\) and
\(\widehat{\mathbb P}=\mathcal L(\widehat Z)\)
for their laws on \(C([0,T];\mathbb R^d)\).


Define
$\Delta_t(Z)=\widehat b_{\eta_h(t)}(Z_{\eta_h(t)})-b_t(Z_t)$, and assume that $\int_0^T \mathbb E\left[\|\Delta_t(Z)\|^2\,\right] dt<\infty$. Also assume \begin{align*}
    \int_{0}^{T} \| \Delta_{t} (\widehat{Z})\|^{2} dt < \infty, \quad \widehat{\mathbb P}\text{-a.s.}
\end{align*}
Then
    \begin{align*}
\operatorname{KL}
\left(
\mathcal L(Z_T)
\|
\mathcal L(\widehat Z_T)
\right) \leq
\frac{1}{2\tau^2}
\int_0^T
\mathbb E
\left[
\left\|
b_t(Z_t)
-
\widehat b_{\eta_h(t)}(Z_{\eta_h(t)})
\right\|^2
\right]dt.
\end{align*}
\end{lemma}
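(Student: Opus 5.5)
The approach is to compare the two path laws on $C([0,T];\mathbb R^d)$ by Girsanov, localized so that Novikov-type conditions are never needed, and then to project to time $T$ using the data-processing inequality: $\operatorname{KL}(\mathcal L(Z_T)\|\mathcal L(\widehat Z_T))\le\operatorname{KL}(\mathbb P\|\widehat{\mathbb P})$. For $R>0$ introduce the stopping time
\[
\zeta_R=\inf\Bigl\{t\in[0,T]:\textstyle\int_0^t\|\Delta_s(\omega)\|^2ds\ge R\Bigr\}\wedge T,
\]
defined on the canonical path space. This is a genuine functional of the path because $\Delta_s$ depends only on $\omega_s$ and $\omega_{\eta_h(s)}$. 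Here $\int_0^t\|\Delta_s\|^2ds$ is finite $\mathbb P$-a.s. by $\int_0^T\mathbb E\|\Delta_t(Z)\|^2dt<\infty$, and finite $\widehat{\mathbb P}$-a.s. by the second hypothesis; hence $\zeta_R\uparrow T$ almost surely under both laws.

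Next, work under $\widehat{\mathbb P}$, where $\widehat Z$ is driven by a Brownian motion $\widehat B$ with drift $\widehat b_{\eta_h(t)}(\widehat Z_{\eta_h(t)})$. Define the stopped exponential martingale
\[
\mathcal E^R_T=\exp\Bigl(-\tfrac1\tau\int_0^{\zeta_R}\Delta_t(\widehat Z)^\top d\widehat B_t-\tfrac1{2\tau^2}\int_0^{\zeta_R}\|\Delta_t(\widehat Z)\|^2dt\Bigr).
\]
Its quadratic variation is bounded by $R/\tau^2$, so Novikov applies and $\mathcal E^R$ is a true martingale. By Girsanov, under $d\mathbb Q^R=\mathcal E^R_T\,d\widehat{\mathbb P}$ the canonical process solves the exact SDE with drift $b$ up to $\zeta_R$ and the Euler SDE afterwards. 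I would then compare $\mathbb Q^R$ with the law $\mathbb P^R$ of the process that follows $Z$ up to $\zeta_R$ and then switches to Euler dynamics. The cleanest way to avoid uniqueness issues for the exact SDE, which we do not have, is to construct $\mathbb P^R$ directly from $Z$ and apply Girsanov in the reverse direction starting from $\mathbb P$. That is, on the probability space of $Z$, the process $\widetilde B_t=B_t+\tau^{-1}\int_0^{t\wedge\zeta_R}\Delta_s(Z)ds$ becomes a Brownian motion after a Novikov-admissible change of measure, under which $Z$ follows the Euler drift up to $\zeta_R$. Doing it this way also requires only weak existence for the exact SDE.

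From the explicit density one reads off the stopped-path KL bound
\[
\operatorname{KL}\bigl(\mathcal L(Z_{\cdot\wedge\zeta_R})\,\big\|\,\mathcal L(\widehat Z_{\cdot\wedge\zeta_R})\bigr)\le \frac1{2\tau^2}\mathbb E\int_0^{\zeta_R}\|\Delta_t(Z)\|^2dt\le\frac1{2\tau^2}\int_0^T\mathbb E\|\Delta_t(Z)\|^2dt.
\]
The expectation of the stochastic integral vanishes because it is an $L^2$ martingale on $[0,\zeta_R]$. Finally, let $R\to\infty$. Since $\zeta_R\to T$ a.s. and paths are continuous, $Z_{\cdot\wedge\zeta_R}\to Z$ and $\widehat Z_{\cdot\wedge\zeta_R}\to\widehat Z$ uniformly a.s., hence weakly. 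Lower semicontinuity of KL under weak convergence of both arguments then gives the same bound for $\operatorname{KL}(\mathbb P\|\widehat{\mathbb P})$, and data processing finishes the proof.

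The main obstacle is the middle step: making the change of measure rigorous without pathwise uniqueness of the exact SDE and without global integrability of $\Delta(\widehat Z)$. Applying Girsanov on the $Z$-side with the stopping time $\zeta_R$ is what lets weak existence plus the two integrability hypotheses suffice. Care is needed to check that the tilted $Z$ has the same stopped law as $\widehat Z$; this uses weak uniqueness of the Euler recursion, which is explicit given continuous $\widehat b_{t_k}$.
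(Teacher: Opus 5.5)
Your proposal is correct and is essentially the paper's proof. Both use pathwise energy stopping times, Girsanov applied on the side of the exact weak solution $Z$ (so only weak existence is needed), the KL identity from the stopped density, identification of the tilted stopped law with the Euler law through the explicit Euler recursion driven by the new Brownian motion (the paper also checks $\zeta_m(Z)=\zeta_m(\widehat Z)$ for this), and lower semicontinuity of KL; your final limit, comparing stopped path laws directly with $\zeta_R=T$ eventually under both $\mathbb P$ and $\widehat{\mathbb P}$, is a slightly leaner version of the paper's truncation $\pi_\varepsilon$ followed by $\varepsilon\downarrow0$. One sign slip: with $\Delta=\widehat b-b$, the Brownian motion under the tilted measure is $B_t-\tau^{-1}\int_0^{t\wedge\zeta_R}\Delta_s(Z)\,ds$, not $B_t+\tau^{-1}\int_0^{t\wedge\zeta_R}\Delta_s(Z)\,ds$.
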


The proof has four steps. First, we define a path-dependent stopping-time
construction \(\zeta_m(x)\), which gives the stopping times
\(\zeta_m(Z)\) along the exact process and \(\zeta_m(\widehat Z)\) along
the full Euler process. Second, for \(\zeta_m(Z)\), Novikov's condition
holds for each localized approximation, so we can apply Girsanov's theorem
and obtain a KL bound for each of them. Third, the additional integrability condition on $\widehat Z$ is used to show that \(\zeta_m(\widehat Z)\uparrow T\). 
Finally, we show that the localized Euler laws converge
to the full Euler law on every truncated interval \([0,T-\epsilon]\); lower
semicontinuity of relative entropy and then \(\epsilon\downarrow0\) complete
the argument.

\begin{proof}

Let $(\Omega,\mathcal F,(\mathcal F_t)_{0\leq t\leq T},\mathbf P)$
be a filtered probability space carrying the exact weak solution \(Z\) discussed in Lemma \ref{lem:exact_sampling_conditions} and
its driving Brownian motion \(W\), so that
\begin{align*}
dZ_t=b_t(Z_t)\,dt+\tau\,dW_t.
\end{align*}
Let
\(\mathbb P=\mathcal L_{\mathbf P}(Z)\) be the corresponding path law on
\(C([0,T];\mathbb R^d)\), and let \(\widehat{\mathbb P}\) denote the path
law of the full continuous-time Euler process.

For every integer \(m\geq 1\), define the stopping time \begin{align*} \zeta_m (x) &= \inf \left\{ t\in[0,T]: \frac{1}{\tau^2} \int_0^t \|\Delta_s(x)\|^2ds \geq m \right\} \wedge T. \end{align*}
Since
\(
t\mapsto \tau^{-2}\int_0^t\|\Delta_s(Z)\|^2ds
\)
is adapted and continuous, its first hitting time of \([m,\infty)\) is an
\((\mathcal F_t)\)-stopping time. Thus \(\zeta_m\) is a stopping time.
When no confusion arises, we write \(\zeta_m\) for \(\zeta_m(Z)\) in the Girsanov construction, and for \(\zeta_m(\widehat Z)\) in the synchronous coupling that will be discussed later.
By construction, $\frac{1}{\tau^2} \int_0^{\zeta_m} \|\Delta_t(Z)\|^2dt = \frac{1}{\tau^2} \int_0^T \| \Delta_t(Z)\mathbf 1_{\{t\leq \zeta_m\}} \|^2dt \leq m$. Therefore Novikov's condition holds for the stopped drift perturbation: \begin{align*} \mathbb E_{\mathbf P} \left[ \exp\left( \frac{1}{2\tau^2} \int_0^T \left\| \Delta_t(Z)\mathbf 1_{\{t\leq \zeta_m\}} \right\|^2dt \right) \right] &\leq \exp\left( \frac{m}{2} \right) < \infty. \end{align*} 
Hence the stopped stochastic exponential \begin{align*} 
M_{m}^{T} := \exp\left( \frac{1}{\tau} \int_0^T \left( \Delta_t(Z)\mathbf 1_{\{t\leq \zeta_m\}} \right)^\top dW_t - \frac{1}{2\tau^2} \int_0^T \left\| \Delta_t(Z)\mathbf 1_{\{t\leq \zeta_m\}} \right\|^2dt \right) \end{align*} 
is a true martingale with expectation $\mathbb E_{\mathbf P}[M_{m}^{T}] = \mathbb E_{\mathbf P}[M_{m}^{0}] = 1$. 

Thus it defines a probability measure \(\widehat{\mathbf P}^{(m)}\) on
\((\Omega,\mathcal F)\) by $\frac{d\widehat{\mathbf P}^{(m)}}{d\mathbf P}=M_m^T$ (note that $\int d \widehat{\mathbf P}^{(m)} = \int M_{m}^{T} d\mathbf P = \mathbb E_{\mathbf P}[M_{m}^{T}] = 1 $ so $\widehat{\mathbf P}^{(m)}$ is a probability measure).
By Girsanov's theorem, under \(\widehat{\mathbf P}^{(m)}\),
\begin{align*}
W_t^{(m)}
=
W_t-\frac1\tau\int_0^t
\Delta_s(Z)\mathbf 1_{\{s\leq\zeta_m\}}\,ds
\end{align*}
is a Brownian motion.

Therefore, \begin{align*} dW_t &= dW_t^{(m)} + \frac{1}{\tau} \Delta_t(Z)\mathbf 1_{\{t\leq \zeta_m\}}dt. \end{align*} Since, under \(\mathbf P\), \begin{align*} dZ_t &= b_t(Z_t)dt+\tau dW_t, \end{align*} we may rewrite the same process under \(\widehat{\mathbf P}^{(m)}\) as 
\begin{align*} 
dZ_t &= b_t(Z_t)dt + \tau \left( dW_t^{(m)} + \frac{1}{\tau} \Delta_t(Z)\mathbf 1_{\{t\leq \zeta_m\}}dt \right) \\ &= \left( b_t(Z_t) + \Delta_t(Z)\mathbf 1_{\{t\leq \zeta_m\}} \right)dt + \tau dW_t^{(m)}. 
\end{align*} 

Thus, before the stopping time \(\zeta_m\), the drift is \begin{align*} b_t(Z_t)+\Delta_t(Z) &= \widehat b_{\eta_h(t)}(Z_{\eta_h(t)}), \end{align*} and after the stopping time, the drift returns to \(b_t(Z_t)\). Define
$\widehat{\mathbb P}^{(m)}
:=
\mathcal L_{\widehat{\mathbf P}^{(m)}}(Z)$.
Thus \(\widehat{\mathbb P}^{(m)}\) is the path law of the localized Euler
approximation. For each \(m\), the path-space KL divergence is well-defined. Since \begin{align*}
\log\frac{d\mathbf P}{d\widehat{\mathbf P}^{(m)}}
=
-\frac1\tau
\int_0^T
\left(
\Delta_t(Z)\mathbf 1_{\{t\leq\zeta_m\}}
\right)^\top dW_t+
\frac1{2\tau^2}
\int_0^T
\left\|
\Delta_t(Z)\mathbf 1_{\{t\leq\zeta_m\}}
\right\|^2dt.
\end{align*} 

Since \(Z\) is continuous and adapted, \(b_t(Z_t)\) is predictable, while
the grid-frozen term admits a predictable modification differing only at the
deterministic grid points. Hence
\(\Delta_t(Z)\mathbf 1_{\{t\leq\zeta_m\}}\) admits a predictable version.
Moreover, by the definition of \(\zeta_m\),
\[
\int_0^T
\|\Delta_t(Z)\mathbf 1_{\{t\leq\zeta_m\}}\|^2dt
\leq m\tau^2,
\]
so it is square-integrable and the stochastic integral has mean zero.

Taking expectation under \(\mathbf P\), we obtain \begin{align*}
\operatorname{KL}
\left(
\mathbf P
\middle\|
\widehat{\mathbf P}^{(m)}
\right)
&=
\frac1{2\tau^2}
\mathbb E_{\mathbf P}
\left[
\int_0^{\zeta_m}
\|\Delta_t(Z)\|^2dt
\right]\\
&\leq
\frac1{2\tau^2}
\mathbb E_{\mathbf P}
\left[
\int_0^T
\|\Delta_t(Z)\|^2dt
\right].
\end{align*}
Consequently, by the data-processing inequality under the map
\(\omega\mapsto Z(\omega)\),
\begin{align*}
\operatorname{KL}
\left(
\mathbb P
\middle\|
\widehat{\mathbb P}^{(m)}
\right)
\leq
\operatorname{KL}
\left(
\mathbf P
\middle\|
\widehat{\mathbf P}^{(m)}
\right).
\end{align*}
Therefore, 
\begin{align*} \operatorname{KL} \left( \mathbb P \middle\| \widehat{\mathbb P}^{(m)} \right) &\le \frac{1}{2\tau^2} \mathbb E_{\mathbf P} \left[ \int_0^T \left\| \Delta_t(Z)\mathbf 1_{\{t\leq \zeta_m\}} \right\|^2dt \right] = \frac{1}{2\tau^2} \mathbb E_{\mathbf P} \left[ \int_0^{\zeta_m} \|\Delta_t(Z)\|^2dt \right] \\ &\leq \frac{1}{2\tau^2} \mathbb E_{\mathbf P} \left[ \int_0^T \|\Delta_t(Z)\|^2dt \right]. 
\end{align*}

It remains to pass from the localized Euler approximation to the full continuous-time Euler approximation. For each fixed \(m\), since \(M_m^t\) is a martingale with \(M_m^0=1\),
\[
\mathbb E_{\mathbf P}[M_m^T\mid\mathcal F_0]=1,
\]
so the change of measure preserves the initial law and \(Z_0\sim\rho_0\) under \(\widehat{\mathbf P}^{(m)}\).

Under \(\widehat{\mathbf P}^{(m)}\), on the same probability space, construct the continuous time Euler process
\(\widehat Z\) using the same initial value \(Z_0\) and Brownian motion
\(W^{(m)}\):
\begin{align*}
\widehat Z_t
&=
\widehat Z_{t_k}
+(t-t_k)\widehat b_{t_k}(\widehat Z_{t_k})
+\tau\left(W_t^{(m)}-W_{t_k}^{(m)}\right),
\qquad t\in[t_k,t_{k+1}],
\end{align*}
with \(\widehat Z_0=Z_0\).
Because \(W^{(m)}\) is Brownian with respect to
\((\mathcal F_t)\) and \(Z_0\) is \(\mathcal F_0\)-measurable,
\(W^{(m)}\) is independent of \(Z_0\). The change of measure preserves the
law \(\rho_0\) of \(Z_0\), and the grid-by-grid Euler recursion is a
measurable function of \((Z_0,W^{(m)})\). Hence
$\mathcal L_{\widehat{\mathbf P}^{(m)}}(\widehat Z)
=
\widehat{\mathbb P}$.

Let \(\rho_m=\zeta_m(Z)\wedge\zeta_m(\widehat Z)\). Before \(\rho_m\), both \(Z\) and \(\widehat Z\) satisfy the same continuous-time Euler update with the same initial value and Brownian motion, so they agree grid-by-grid on \([0,\rho_m]\). Hence \(\int_0^t\|\Delta_s(Z)\|^2ds=\int_0^t\|\Delta_s(\widehat Z)\|^2ds\) for every \(t\leq\rho_m\).

Suppose \(\zeta_m(Z)<\zeta_m(\widehat Z)\). Then \(\rho_m=\zeta_m(Z)<T\), so continuity of the defining integral gives \(\tau^{-2}\int_0^{\rho_m}\|\Delta_s(Z)\|^2ds=m\). The preceding equality implies the same for \(\widehat Z\), hence \(\zeta_m(\widehat Z)\leq\rho_m=\zeta_m(Z)\), a contradiction. The reverse strict inequality is ruled out in the same way. Therefore \(\zeta_m(Z)=\zeta_m(\widehat Z)\).

Fix \(\varepsilon\in(0,T)\) and define the path truncation map \(\pi_\varepsilon:C([0,T];\mathbb R^d)\to C([0,T];\mathbb R^d)\) by
\[
\left(\pi_\varepsilon(\omega)\right)(t)=\omega(t\wedge(T-\varepsilon)),\qquad 0\leq t\leq T.
\]
On the event \(\{\zeta_m(Z)>T-\varepsilon\}\), we have $\pi_\varepsilon(Z)=\pi_\varepsilon(\widehat Z).$
Moreover,
\begin{align*}
\widehat{\mathbf P}^{(m)}\left(\zeta_m(Z)\leq T-\varepsilon\right)
&=\widehat{\mathbf P}^{(m)}\left(\zeta_m(\widehat Z)\leq T-\varepsilon\right)=\widehat{\mathbb P}\left(\zeta_m(\widehat Z)\leq T-\varepsilon\right)\longrightarrow0 \text{ as } m \to \infty.
\end{align*}
Indeed, \(\int_0^T\|\Delta_t(\widehat Z)\|^2dt<\infty\), \(\widehat{\mathbb P}\)-almost surely, implies that \(\zeta_m(\widehat Z)=T\) for all sufficiently large \(m\), almost surely.

Therefore, for every bounded continuous \(F:C([0,T];\mathbb R^d)\to\mathbb R\),
\begin{align*}
&\left|
\mathbb E_{\widehat{\mathbf P}^{(m)}}\left[F(\pi_\varepsilon(Z))\right]
-\mathbb E_{\widehat{\mathbf P}^{(m)}}\left[F(\pi_\varepsilon(\widehat Z))\right]
\right|\\
&\qquad\leq
2\|F\|_\infty\,
\widehat{\mathbf P}^{(m)}\left(\zeta_m(Z)\leq T-\varepsilon\right)
\longrightarrow0.
\end{align*}
Since
\(\mathcal L_{\widehat{\mathbf P}^{(m)}}(\widehat Z)
=\widehat{\mathbb P}\), we obtain
\begin{align*}
\mathcal L_{\widehat{\mathbf P}^{(m)}}(\pi_\varepsilon(Z))
\overset{w}{\longrightarrow}
(\pi_\varepsilon)_\#\widehat{\mathbb P}.
\end{align*}
By lower semicontinuity of KL divergence and the data-processing inequality,
\begin{align*}
\operatorname{KL}
\left(
(\pi_\varepsilon)_\#\mathbb P
\middle\|
(\pi_\varepsilon)_\#\widehat{\mathbb P}
\right)
&\leq
\liminf_{m\to\infty}
\operatorname{KL}
\left(
\mathcal L_{\mathbf P}(\pi_\varepsilon(Z))
\middle\|
\mathcal L_{\widehat{\mathbf P}^{(m)}}(\pi_\varepsilon(Z))
\right)\\
&\leq
\liminf_{m\to\infty}
\operatorname{KL}
\left(
\mathbf P
\middle\|
\widehat{\mathbf P}^{(m)}
\right)\\
&\leq
\frac1{2\tau^2}
\mathbb E_{\mathbf P}
\left[
\int_0^T\|\Delta_t(Z)\|^2dt
\right].
\end{align*}

Finally, since $\mathcal{L} (\pi_{\varepsilon}(Z)) \overset{w}{\to} \mathcal{L} (Z)$ and $\mathcal{L} (\pi_{\varepsilon}(\widehat{Z})) \overset{w}{\to} \mathcal{L} (\widehat{Z}) $ as $\varepsilon \to 0$,  applying lower semicontinuity of KL divergence once more gives 
\begin{align*} \operatorname{KL} \left( \mathbb P \middle\| \widehat{\mathbb P} \right) \leq \liminf_{\varepsilon\downarrow0} \operatorname{KL} \left( \mathcal{L} (\pi_{\varepsilon}(Z)) \middle\| \mathcal{L} (\pi_{\varepsilon}(\widehat{Z})) \right) \leq \frac{1}{2\tau^2} \mathbb E_{\mathbf P} \left[ \int_0^T \|\Delta_t(Z)\|^2dt \right]. 
\end{align*} 
Again, by the data-processing inequality, 
\begin{align*} 
\operatorname{KL} \left( \mathcal L(Z_T) \middle\| \mathcal L(\widehat Z_T) \right) 
\leq \operatorname{KL} \left( \mathbb P \middle\| \widehat{\mathbb P} \right) 
\leq \frac{1}{2\tau^2} \mathbb E_{\mathbf P} \left[ \int_0^T \|\Delta_t(Z)\|^2dt \right]. 
\end{align*} Equivalently, \begin{align*} \operatorname{KL} \left( \mathcal L(Z_T) \middle\| \mathcal L(\widehat Z_T) \right) &\leq \frac{1}{2\tau^2} \int_0^T \mathbb E_{\mathbf P} \left[ \left\| b_t(Z_t) - \widehat b_{\eta_h(t)}(Z_{\eta_h(t)}) \right\|^2 \right]dt. \end{align*}

\end{proof}

\paragraph{A point concerning the localization limit.}
The approximation argument in \citet[Section~5.2]{chen2023sampling} places the localized and full Euler processes on a common probability space and subsequently invokes convergence of the localizing times. The Lipschitz-SDE result used there, cited as Corollary~8.5 in \citet[Section~5.2]{chen2023sampling}, comes from the chapter on stochastic differential equations in \citet{legall2016stochasticcalc} and provides a measurable strong-solution representation. Although this result can support a synchronous realization when its hypotheses hold, it does not by itself show that localizing times constructed under the reference law converge to \(T\) under the full Euler law. This Euler-side exhaustion is required to obtain agreement of the localized and full Euler paths on every interval \([0,T-\varepsilon]\), and hence the weak convergence of the localized laws. We avoid this issue by using the explicit energy stopping times \(\zeta_m\) defined above. It remains to verify that

$$
    \int_0^T \bigl\|\Delta_t(\widehat Z)\bigr\|^2\,dt<\infty,
    \qquad \widehat{\mathbb P}\text{-a.s.},
$$

because this condition implies that \(\zeta_m(\widehat Z)\uparrow T\) almost surely; indeed, \(\zeta_m(\widehat Z)=T\) for all sufficiently large \(m\). The following proposition establishes this property under our continuity assumptions.

\begin{proposition}[Almost-Sure Finiteness of the Pathwise Drift Mismatch]
\label{Prop_Pathwise_Drift_Finiteness}
Let \(0=t_0<t_1<\cdots<t_N=T\) be a finite time grid, and define
\(\eta_h(t)=t_k\) for \(t\in[t_k,t_{k+1})\). Let \(\widehat Z=(\widehat Z_t)_{0\leq t\leq T}\) be a continuous-path process under \(\widehat{\mathbb P}\).

Let \(b:[0,T]\times\mathbb R^d\to\mathbb R^d\) be continuous. For every \(k\in\{0,\ldots,N-1\}\), let \(\widehat b_{t_k}:\mathbb R^d\to\mathbb R^d\) be continuous. Define
\begin{align*}
\Delta_t(\widehat Z)
&=
\widehat b_{\eta_h(t)}(\widehat Z_{\eta_h(t)})
- b_t(\widehat Z_t),
\qquad
0\leq t\leq T.
\end{align*}
Then
\begin{align*}
\int_0^T
\left\|
\Delta_t(\widehat Z)
\right\|^2dt
&<
\infty,
\qquad
\widehat{\mathbb P}\text{-a.s.}
\end{align*}
\end{proposition}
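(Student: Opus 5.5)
The argument is pathwise and purely topological: fix an outcome \(\omega\) with \(t\mapsto\widehat Z_t(\omega)\) continuous on \([0,T]\), which is a \(\widehat{\mathbb P}\)-full set by assumption. The plan is to show that on this set the integrand \(t\mapsto\|\Delta_t(\widehat Z(\omega))\|^2\) is bounded on \([0,T]\). A bounded integrand integrates to a finite value over the compact interval \([0,T]\). No moment or integrability assumption is needed, only continuity and compactness.

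\textbf{Bounding the two pieces.} Take the pieces of \(\Delta_t\) one at a time.
\begin{itemize}
\item \emph{Exact drift.} The path image \(K_\omega=\{\widehat Z_t(\omega):t\in[0,T]\}\) is compact, being the continuous image of \([0,T]\). Hence \([0,T]\times K_\omega\) is compact in \([0,T]\times\mathbb R^d\). Since \(b\) is jointly continuous, \(\sup_{t\in[0,T]}\|b_t(\widehat Z_t(\omega))\|\le\sup_{[0,T]\times K_\omega}\|b\|<\infty\).
\item \emph{Grid-frozen drift.} On each \([t_k,t_{k+1})\), the term \(\widehat b_{\eta_h(t)}(\widehat Z_{\eta_h(t)})\) equals the constant vector \(\widehat b_{t_k}(\widehat Z_{t_k}(\omega))\). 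This is a finite vector for each \(k\), and there are only finitely many \(k\). At the single endpoint \(t=T\), \(\eta_h(T)\) can be set by any convention, since a single point does not change the integral.
\end{itemize}
Consequently \(\sup_t\|\Delta_t(\widehat Z(\omega))\|\le C_\omega<\infty\), and \(\int_0^T\|\Delta_t\|^2dt\le TC_\omega^2\).

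\textbf{Measurability.} I would also check that the integral is well defined, i.e.\ that \(t\mapsto\Delta_t(\widehat Z(\omega))\) is Borel measurable. It is the difference of a continuous function of \(t\) (the composition \(t\mapsto b(t,\widehat Z_t)\)) and a step function with finitely many pieces. Joint measurability in \((t,\omega)\) follows in the same way.

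The only mildly delicate point is the first step, which uses joint continuity of \(b\) rather than continuity in \(x\) for each fixed \(t\). Joint continuity is exactly what Lemma~\ref{lem:exact_sampling_conditions} supplies, so I expect no real obstacle.
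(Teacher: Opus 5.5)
Your proof is correct and follows essentially the same route as the paper: fix a continuous path, use compactness of $[0,T]\times K_\omega$ and joint continuity of $b$ to bound the exact drift, bound the grid-frozen drift, and integrate the resulting uniform bound over $[0,T]$. The only minor difference is in the grid-frozen term. The paper bounds $\widehat b_{t_k}$ by its supremum over the compact set $K_\omega$ using continuity, whereas you observe that it is evaluated at only the finitely many points $\widehat Z_{t_k}(\omega)$, so continuity of $\widehat b_{t_k}$ is not even needed for boundedness.
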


\begin{proof}

It suffices to show, there exists some $\Omega_{0}$ with $\widehat{ \mathbb P} (\Omega_{0}) = 1$ s.t. $\sup_{t \in [0, T]}\|\Delta_{t}(\widehat Z(\omega))\| < \infty, \forall \omega \in \Omega_{0}$.
Since \(\widehat Z\) has continuous paths under \(\widehat{\mathbb P}\), there exists \(\Omega_0\) such that $\widehat{\mathbb P}(\Omega_0)=1$ and for every \(\omega\in\Omega_0\), the map \(t\mapsto \widehat Z_t(\omega)\) is continuous on \([0,T]\).

Fix \(\omega\in\Omega_0\). Since \([0,T]\) is compact and \(t\mapsto \widehat Z_t(\omega)\) is continuous, the image set $K_\omega := \left\{\widehat Z_t(\omega):0\leq t\leq T\right\}$
is a compact subset of \(\mathbb R^d\).

Because \(b\) and $\hat{b}_{t_{k}}$ are continuous, and because \([0,T]\times K_\omega\) is compact, we have
\begin{align*}
M_b(\omega)
&:=
\sup_{0\leq t\leq T}
\sup_{x\in K_\omega}
\left\|
b_t(x)
\right\|
<
\infty, \\
M_{\widehat b,k}(\omega) &:=\sup_{x\in K_\omega}\left\|\widehat b_{t_k}(x) \right\|<\infty, \\
M_{\widehat b}(\omega) &:=
\max_{0\leq k\leq N-1}
M_{\widehat b,k}(\omega)
<
\infty.
\end{align*}

Therefore $\left\|b_t(\widehat Z_t(\omega))
\right\|\leq M_b(\omega), \forall t\in[0,T]$ and $\left\|\widehat b_{\eta_h(t)}\left(\widehat Z_{\eta_h(t)}(\omega)\right)\right\|\leq M_{\widehat b}(\omega)$.
Hence, for every \(t\in[0,T]\),
\begin{align*}
\left\|
\Delta_t(\widehat Z(\omega))
\right\|
&=
\left\|
b_t(\widehat Z_t(\omega))
-
\widehat b_{\eta_h(t)}
\left(
\widehat Z_{\eta_h(t)}(\omega)
\right)
\right\| 
\leq \left\| b_t(\widehat Z_t(\omega)) \right\|
+ \left\| \widehat b_{\eta_h(t)}
\left(
\widehat Z_{\eta_h(t)}(\omega)
\right)
\right\|
\\
&\leq
M_b(\omega)+M_{\widehat b}(\omega).
\end{align*}
Therefore,
\begin{align*}
\int_0^T
\left\|
\Delta_t(\widehat Z(\omega))
\right\|^2dt
&\leq
\int_0^T
\left(
M_b(\omega)+M_{\widehat b}(\omega)
\right)^2dt = T
\left(
M_b(\omega)+M_{\widehat b}(\omega)
\right)^2 < \infty.
\end{align*}
Since this holds for every \(\omega\in\Omega_0\), and since \(\widehat{\mathbb P}(\Omega_0)=1\), we conclude that
\begin{align*}
\int_0^T
\left\|
\Delta_t(\widehat Z)
\right\|^2dt
&<
\infty,
\qquad
\widehat{\mathbb P}\text{-a.s.}
\end{align*}
This proves the claim.
\end{proof}

\subsection{Posterior Derivative Identities}

This subsection collects the derivative identities and integrated bounds needed for the combined It\^{o} analysis of the score increment
\begin{align*}
s_t(Z_t)-s_{t_k}(Z_{t_k}).
\end{align*}
The key point is that the term
\begin{align*}
\nabla_x(\nabla_x\cdot v_t)
\end{align*}
requires control of the full Hessian tensor \(D_x^2v_t\), not merely the componentwise Laplacian \(\Delta_xv_t\).

\begin{lemma}[Score Identity Under Gaussian Smoothing]\label{Lemma_Score_Identity}
Let \(Y_0\) and \(Y_1\) be random variables taking values in \(\mathbb R^d\), and fix \(t\in[0,T]\). Let \(D=Y_1-Y_0\). Define \(S_t=Y_0+tD\). Let \(G\sim {\mathcal N}(0,I_d)\) be independent of \((Y_0,D)\), and define \(X_t=S_t+\sigma G\) with \(\sigma>0\).

Let \(q_t\) be the density of \(X_t\), and define the score \(s_t:\mathbb R^d\to\mathbb R^d\) by \(s_t(x)=\nabla_x\log q_t(x)\). Then
\begin{align*}
s_t(x)
&=
-\frac{1}{\sigma}
\mathbb E
\left[
G\mid X_t=x
\right].
\end{align*}
\end{lemma}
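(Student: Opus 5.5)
The plan is to prove the identity by writing $q_t$ as a Gaussian mixture, differentiating under the integral, and recognizing the result as a conditional expectation. Since $G$ is independent of $S_t$ and $X_t=S_t+\sigma G$, conditioning on $S_t$ gives
\[
q_t(x)=\mathbb E\bigl[\varphi_\sigma(x-S_t)\bigr],
\qquad
\varphi_\sigma(z)=(2\pi\sigma^2)^{-d/2}\exp\bigl(-\|z\|^2/(2\sigma^2)\bigr),
\]
and no moment assumption on $(Y_0,Y_1)$ is needed for this. The density is strictly positive everywhere because $\varphi_\sigma>0$. Hence $\log q_t$ is well defined, and $s_t=\nabla_xq_t/q_t$ wherever $q_t$ is differentiable.

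The first step is to justify $\nabla_xq_t(x)=\mathbb E[\nabla_x\varphi_\sigma(x-S_t)]$. The gradient is
\[
\nabla_x\varphi_\sigma(z)=-\frac{z}{\sigma^2}\,\varphi_\sigma(z),
\]
and $\sup_z\|z\|\varphi_\sigma(z)<\infty$. This gives a uniform bound, independent of $x$ and $S_t$, on the gradient of the integrand. Dominated convergence, via the mean value theorem on difference quotients, therefore lets us differentiate under the expectation. The same argument shows $\nabla_xq_t$ is continuous, as already noted in the proof of Lemma~\ref{lem:exact_sampling_conditions}. Thus
\[
\nabla_xq_t(x)=\mathbb E\Bigl[-\frac{x-S_t}{\sigma^2}\,\varphi_\sigma(x-S_t)\Bigr].
\]

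The second step identifies this with a conditional expectation. The joint law of $(S_t,X_t)$ has $X_t$-conditional density $\varphi_\sigma(x-S_t)$ given $S_t$. So for any measurable $F$ with $\mathbb E\|F(S_t,X_t)\|<\infty$, a version of $\mathbb E[F(S_t,X_t)\mid X_t=x]$ is given by Bayes' formula:
\[
\mathbb E\bigl[F(S_t,X_t)\mid X_t=x\bigr]=\frac{\mathbb E\bigl[F(S_t,x)\varphi_\sigma(x-S_t)\bigr]}{q_t(x)}.
\]
To verify this, I will check the defining property $\mathbb E[F\,g(X_t)]=\mathbb E[\psi(X_t)g(X_t)]$ for bounded $g$, using Fubini. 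I will apply it with $F=(S_t-X_t)/\sigma^2=-G/\sigma$, which is integrable since $\mathbb E\|G\|<\infty$. Dividing the gradient formula by $q_t(x)$ then gives
\[
s_t(x)=\mathbb E\Bigl[\frac{S_t-X_t}{\sigma^2}\Bigm| X_t=x\Bigr]=-\frac1\sigma\,\mathbb E[G\mid X_t=x].
\]

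The only delicate point is that the identity holds for a specific version of the conditional expectation, namely the continuous one given by the Bayes formula. Conditional expectations are otherwise only defined $q_t$-a.e., so this is how the pointwise equality should be read. I expect no substantive obstacle beyond the domination argument for differentiating under the expectation.
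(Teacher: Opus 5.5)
Your proposal is correct and follows essentially the same route as the paper: write $q_t$ as a Gaussian mixture over the law of $S_t$, differentiate under the integral, and use Bayes' rule to identify $\nabla_x q_t(x)/q_t(x)$ as the posterior mean of $(S_t-x)/\sigma^2=-G/\sigma$. Your domination argument via $\sup_z\|z\|\varphi_\sigma(z)<\infty$, the Fubini check of the Bayes formula, and the remark about using the continuous version of the conditional expectation fill in justifications that the paper's proof leaves implicit.
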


\begin{proof}
Let \(\mathsf P_{Y_0,Y_1}\) be the law of \((Y_0,Y_1)\). For fixed \(t\in[0,T]\), define
\begin{align*}
s_t^{\mathrm{path}}(y_0,y_1)
&=
(1-t)y_0+ty_1.
\end{align*}
The superscript ``path'' is only used in this proof to avoid confusing the interpolation point \(s_t^{\mathrm{path}}(y_0,y_1)\) with the score \(s_t(x)\).

Since \(X_t=S_t+\sigma G\), conditional on \((Y_0,Y_1)=(y_0,y_1)\), the random variable \(X_t\) is Gaussian with mean \(s_t^{\mathrm{path}}(y_0,y_1)\) and covariance matrix \(\sigma^2 I_d\). Therefore the conditional density of \(X_t=x\) given \((Y_0,Y_1)=(y_0,y_1)\) is
\begin{align*}
\frac{1}{(2\pi\sigma^2)^{d/2}}
\exp\left(
-\frac{
\left\|
x-s_t^{\mathrm{path}}(y_0,y_1)
\right\|^2
}{
2\sigma^2
}
\right).
\end{align*}
Hence the marginal density of \(X_t\) is
\begin{align*}
q_t(x)
&=
\int_{\mathbb R^d\times\mathbb R^d}
\frac{1}{(2\pi\sigma^2)^{d/2}}
\exp\left(
-\frac{
\left\|
x-s_t^{\mathrm{path}}(y_0,y_1)
\right\|^2
}{
2\sigma^2
}
\right)
\,d\mathsf P_{Y_0,Y_1}(y_0,y_1).
\end{align*}
Because the Gaussian density is strictly positive, \(q_t(x)>0\) for every \(x\in\mathbb R^d\).

We now compute \(\nabla_x q_t(x)\). For each coordinate \(j\in\{1,\ldots,d\}\),
\begin{align*}
\partial_{x_j}
\exp\left(
-\frac{
\left\|
x-s_t^{\mathrm{path}}(y_0,y_1)
\right\|^2
}{
2\sigma^2
}
\right)
&=
-\frac{
x_j-s_{t,j}^{\mathrm{path}}(y_0,y_1)
}{
\sigma^2
}
\exp\left(
-\frac{
\left\|
x-s_t^{\mathrm{path}}(y_0,y_1)
\right\|^2
}{
2\sigma^2
}
\right).
\end{align*}
Therefore,
\begin{align*}
\partial_{x_j}q_t(x)
&=
\int_{\mathbb R^d\times\mathbb R^d}
-\frac{
x_j-s_{t,j}^{\mathrm{path}}(y_0,y_1)
}{
\sigma^2
}
\frac{1}{(2\pi\sigma^2)^{d/2}}
\exp\left(
-\frac{
\left\|
x-s_t^{\mathrm{path}}(y_0,y_1)
\right\|^2
}{
2\sigma^2
}
\right)
\,d\mathsf P_{Y_0,Y_1}(y_0,y_1).
\end{align*}
Since \(q_t(x)>0\),
\begin{align*}
\partial_{x_j}\log q_t(x)
&=
\frac{\partial_{x_j}q_t(x)}{q_t(x)}
\\
&=
-\frac{1}{\sigma^2}
\frac{
\int_{\mathbb R^d\times\mathbb R^d}
\left(
x_j-s_{t,j}^{\mathrm{path}}(y_0,y_1)
\right)
\frac{1}{(2\pi\sigma^2)^{d/2}}
\exp\left(
-\frac{
\left\|
x-s_t^{\mathrm{path}}(y_0,y_1)
\right\|^2
}{
2\sigma^2
}
\right)
\,d\mathsf P_{Y_0,Y_1}(y_0,y_1)
}{
q_t(x)
}.
\end{align*}

By Bayes' rule, the conditional law of \((Y_0,Y_1)\) given \(X_t=x\) is obtained by weighting the prior law \(\mathsf P_{Y_0,Y_1}\) by the same Gaussian likelihood. Thus
\begin{align*}
\partial_{x_j}\log q_t(x)
&=
-\frac{1}{\sigma^2}
\mathbb E
\left[
x_j-S_{t,j}
\mid X_t=x
\right].
\end{align*}
Recall by definition $x = S_t+\sigma G$, so we have $x_j-S_{t,j}=\sigma G_j$ inside the conditional expectation. Therefore,
\begin{align*}
\partial_{x_j}\log q_t(x) = -\frac{1}{\sigma^2}
\mathbb E
\left[
\sigma G_j
\mid X_t=x
\right] =
-\frac{1}{\sigma}
\mathbb E
\left[
G_j
\mid X_t=x
\right].
\end{align*}
Since this holds for every \(j\in\{1,\ldots,d\}\), we obtain
\begin{align*}
\nabla_x\log q_t(x)
&=
-\frac{1}{\sigma}
\mathbb E
\left[
G\mid X_t=x
\right].
\end{align*}
By definition \(s_t(x)=\nabla_x\log q_t(x)\), so
\begin{align*}
s_t(x)
&=
-\frac{1}{\sigma}
\mathbb E
\left[
G\mid X_t=x
\right].
\end{align*}
This proves the lemma.
\end{proof}

\begin{lemma}[Posterior Derivative Identities for the Smoothed Velocity and Score]\label{Lemma_Derivative_Identity}
Let \(Y_0\) and \(Y_1\) be random variables taking values in \(\mathbb R^d\), and fix \(t\in[0,T]\). Let \(D=Y_1-Y_0\). Define \(S_t=Y_0+tD\). Let \(G\sim {\mathcal N}(0,I_d)\) be independent of \((Y_0,D)\), and define \(X_t=S_t+\sigma G\) with \(\sigma>0\).
Assume \(\mathbb E\|D\|^3<\infty\).

Let \(q_t\) be the density of \(X_t\). Define the smoothed velocity \(v_t:\mathbb R^d\to\mathbb R^d\) by
\begin{align*}
v_t(x)
&=
\mathbb E
\left[
D\mid X_t=x
\right],
\end{align*}
and define the score \(s_t:\mathbb R^d\to\mathbb R^d\) by \(s_t(x)=\nabla_x\log q_t(x)\).
Under the stated moment condition, all derivatives below exist and the
required differentiations under the Gaussian-mixture integral are
justified.

Then the score satisfies
\begin{align*}
s_t(x)
&=
-\frac{1}{\sigma}
\mathbb E
\left[
G\mid X_t=x
\right].
\end{align*}
The velocity derivatives satisfy
\begin{align*}
D_xv_t(x)
&=
-\frac{1}{\sigma}
\operatorname{Cov}
\left(
D,G\mid X_t=x
\right),
\\
\partial_t v_t(x)
&=
\frac{1}{\sigma}
\operatorname{Cov}
\left(
D,D^\top G\mid X_t=x
\right),
\end{align*}
and for every \(i,j,\ell\in\{1,\ldots,d\}\),
\begin{align*}
\partial_{x_jx_\ell}v_{t,i}(x)
&=
\frac{1}{\sigma^2}
\mathbb E
\left[
\left(
D_i-\mathbb E[D_i\mid X_t=x]
\right)
\left(
G_j-\mathbb E[G_j\mid X_t=x]
\right)
\left(
G_\ell-\mathbb E[G_\ell\mid X_t=x]
\right)
\mid X_t=x
\right].
\end{align*}
In particular,
\begin{align*}
\Delta_xv_t(x)
&=
\frac{1}{\sigma^2}
\mathbb E
\left[
\left(
D-\mathbb E[D\mid X_t=x]
\right)
\left\|
G-\mathbb E[G\mid X_t=x]
\right\|^2
\mid X_t=x
\right].
\end{align*}

The score derivatives satisfy
\begin{align*}
D_xs_t(x)
&=
\frac{1}{\sigma^2}
\left(
\operatorname{Cov}
\left(
G,G\mid X_t=x
\right)
-
I_d
\right),
\end{align*}
and
\begin{align*}
\Delta_xs_t(x)
&=
-\frac{1}{\sigma^3}
\mathbb E
\left[
\left(
G-\mathbb E[G\mid X_t=x]
\right)
\left\|
G-\mathbb E[G\mid X_t=x]
\right\|^2
\mid X_t=x
\right].
\end{align*}
\end{lemma}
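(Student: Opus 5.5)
The plan is to write every quantity as a ratio of Gaussian-mixture integrals and differentiate under the integral sign. With \(\varphi_\sigma\) the \(\mathcal N(0,\sigma^2I_d)\) density, we have
\[
q_t(x)=\mathbb E[\varphi_\sigma(x-S_t)],\qquad q_t(x)v_t(x)=\mathbb E[D\varphi_\sigma(x-S_t)].
\]
Posterior expectations of any \(F(Y_0,D,G)\) can be written with \(G\) replaced by \((x-S_t)/\sigma\):
\[
\mathbb E[F\mid X_t=x]=\frac{\mathbb E[F(Y_0,D,(x-S_t)/\sigma)\,\varphi_\sigma(x-S_t)]}{q_t(x)}.
\]

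\textbf{Step 1: justify differentiation.} Derivatives of \(\varphi_\sigma(x-S_t)\) in \(x\) up to order two are polynomials in \((x-S_t)/\sigma\) times \(\varphi_\sigma\). They are therefore bounded uniformly in \(x\) by constants depending on \(\sigma\) and \(d\), which handles the \(x\)-derivatives. For the time derivative, differentiation produces one factor of \(D\), since \(\partial_t(x-S_t)=-D\). Combined with the extra \(D\) in \(q_tv_t\) and the moment terms, at most three powers of \(D\) appear. Locally uniform dominating functions are then available because \(\mathbb E\|D\|^3<\infty\) and \(\sup_u|u|^k\varphi_\sigma(u)<\infty\). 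Dominated convergence justifies all interchanges, and \(q_t>0\) is already known from Lemma~\ref{Lemma_Score_Identity}.

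\textbf{Step 2: a generic derivative rule.} The score identity is just Lemma~\ref{Lemma_Score_Identity}. For the rest, the workhorse is the following. Let \(F\) be a function not depending on \(x\) except through \(G\). Since
\[
\nabla_x\varphi_\sigma(x-S_t)=-\frac{G}{\sigma}\varphi_\sigma(x-S_t),
\]
the quotient rule gives
\[
\nabla_x\mathbb E[F\mid X_t=x]=-\frac1\sigma\operatorname{Cov}(F,G\mid X_t=x)+\mathbb E[\nabla_xF\mid X_t=x],
\]
where \(\nabla_x G=\sigma^{-1}I_d\).

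Applied to \(F=D\), this yields \(D_xv_t\). Applied to \(F=-G/\sigma\), it yields
\[
D_xs_t=\frac{1}{\sigma^2}\operatorname{Cov}(G,G\mid X_t=x)-\frac{1}{\sigma^2}I_d.
\]
The \(\partial_t\) rule is analogous: \(\partial_t\varphi_\sigma(x-S_t)=\sigma^{-1}D^\top G\,\varphi_\sigma\), and \(\partial_t D=0\). Hence
\[
\partial_tv_t=\frac1\sigma\operatorname{Cov}(D,D^\top G\mid X_t=x).
\]

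\textbf{Step 3: second derivatives.} Differentiate the covariance \(-\sigma^{-1}\mathbb E[(D_i-\bar D_i)G_j\mid X_t=x]\) once more with the same rule. Here bars denote posterior means; \(\bar D_i\) also depends on \(x\), and its gradient contributes \(D_x\bar D\) terms. Collecting terms gives the centered third mixed cumulant:
\[
\frac{1}{\sigma^2}\mathbb E\bigl[(D_i-\bar D_i)(G_j-\bar G_j)(G_\ell-\bar G_\ell)\mid X_t=x\bigr].
\]
The \(\sigma^{-1}\delta_{j\ell}\) term from \(\nabla_xG_j\) multiplies \(\mathbb E[D_i-\bar D_i\mid X_t=x]=0\) and drops out. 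Tracing over \(j=\ell\) gives \(\Delta_xv_t\).

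For \(\Delta_xs_t\), differentiate \(\sigma^{-2}\operatorname{Cov}(G_i,G_j\mid X_t=x)\) in \(x_j\). Two pieces arise:
\[
-\frac{1}{\sigma^3}\mathbb E\bigl[(G_i-\bar G_i)(G_j-\bar G_j)^2\mid X_t=x\bigr]
\]
from the likelihood weight, plus terms from \(\nabla_xG=\sigma^{-1}I\) acting on \(G_i\) and \(G_j\) inside the covariance. The latter again multiply centered first moments and vanish. Summing over \(j\) gives the stated formula.

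\textbf{Main obstacle.} The bookkeeping in Step 3 is the delicate part. One must check carefully that the explicit \(x\)-dependence of \(G=(x-S_t)/\sigma\) inside the posterior expectation cancels exactly, leaving pure centered third cumulants. I would verify this first by the standard identity that derivatives of \(\log\) of a Laplace-type integral produce cumulants of the sufficient statistic \((x-S_t)/\sigma^2\). This also confirms the signs.
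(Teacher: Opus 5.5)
Your proposal is correct and follows essentially the same route as the paper. Both use the Gaussian-weight quotient rule $\partial_{x_j}\mathbb E[F\mid X_t=x]=\mathbb E[\partial_{x_j}F\mid X_t=x]-\sigma^{-1}\operatorname{Cov}(F,G_j\mid X_t=x)$, and its time analogue with weight derivative $\sigma^{-1}D^\top G$. This rule is applied to $D$ and $G$, then once more to the resulting covariances. Differentiating the centered form $\mathbb E[(D_i-\bar D_i)G_j\mid X_t=x]$ directly is only more compact bookkeeping than the paper's split of $\operatorname{Cov}(D_i,G_j\mid X_t=x)$ into three conditional expectations.

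Your dominated-convergence sketch goes beyond what the paper writes. One small correction: $\Delta_x s_t$ needs third (not just second) $x$-derivatives of $\varphi_\sigma(x-S_t)$, though your polynomial-times-Gaussian boundedness argument covers that order equally well.
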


\begin{proof}

We first record the posterior derivative identity used throughout the
proof. Let
\(F:\mathbb R^d\times\mathbb R^d\to\mathbb R^m\) be differentiable in its second argument and assume that \(F(D,G)\) and
\(\partial_{g_j}F(D,G)\) are integrable against the relevant Gaussian
posterior weights. In the expression \(F(D,G)\), the first input is \(D\), and the second input is \(G\). For \(x\in\mathbb R^d\), define
\begin{align*}
m_F(x)
&=
\mathbb E
\left[
F(D,G)\mid X_t=x
\right].
\end{align*}
Here \(\partial_{x_j}m_F(x)\) denotes the derivative of the function \(x\mapsto m_F(x)\) with respect to the \(j\)-th coordinate of \(x\). Also, \(\partial_{g_j}F(D,G)\) denotes the derivative of the function \(g\mapsto F(D,g)\) with respect to the \(j\)-th coordinate of \(g\), evaluated at \(g=G\). Then, for every \(j\in\{1,\ldots,d\}\),
\begin{align*}
\partial_{x_j}m_F(x)
&=
\frac{1}{\sigma}
\mathbb E
\left[
\partial_{g_j}F(D,G)\mid X_t=x
\right]
-
\frac{1}{\sigma}
\operatorname{Cov}
\left(
F(D,G),G_j\mid X_t=x
\right).
\end{align*}
If \(F\) is vector-valued, the covariance is interpreted componentwise:
\begin{align*}
\operatorname{Cov}
\left(
F(D,G),G_j\mid X_t=x
\right)
&=
\mathbb E
\left[
\left(
F(D,G)-\mathbb E[F(D,G)\mid X_t=x]
\right)
\left(
G_j-\mathbb E[G_j\mid X_t=x]
\right)
\mid X_t=x
\right].
\end{align*}

We prove the identity. Since \(X_t=S_t+\sigma G\), conditioning on \(X_t=x\) implies \(G=(x-S_t)/\sigma\). Thus \(m_F(x)\) can be written as
\begin{align*}
m_F(x)
&=
\frac{
\mathbb E_{\Gamma(Y_0, Y_1)}
\left[
F\left(D,\frac{x-S_t}{\sigma}\right)
\exp\left(
-\frac{1}{2}
\left\|
\frac{x-S_t}{\sigma}
\right\|^2
\right)
\right]
}{
\mathbb E_{\Gamma(Y_0, Y_1)}
\left[
\exp\left(
-\frac{1}{2}
\left\|
\frac{x-S_t}{\sigma}
\right\|^2
\right)
\right]
}.
\end{align*}
The normalizing constant of the standard Gaussian density is omitted because it appears in both numerator and denominator and cancels.

Define
\begin{align*}
A_F(x)
&=
\mathbb E_{\Gamma(Y_0, Y_1)}
\left[
F\left(D,\frac{x-S_t}{\sigma}\right)
\exp\left(
-\frac{1}{2}
\left\|
\frac{x-S_t}{\sigma}
\right\|^2
\right)
\right],
\\
B(x)
&=
\mathbb E_{\Gamma(Y_0, Y_1)}
\left[
\exp\left(
-\frac{1}{2}
\left\|
\frac{x-S_t}{\sigma}
\right\|^2
\right)
\right].
\end{align*}
Then \(m_F(x)=A_F(x)/B(x)\).

We differentiate \(A_F(x)\) with respect to \(x_j\). The derivative of the \(F\)-term is
\begin{align*}
\partial_{x_j}
F\left(D,\frac{x-S_t}{\sigma}\right)
&=
\frac{1}{\sigma}
\partial_{g_j}F(D,G).
\end{align*}
The derivative of the Gaussian weight is
\begin{align*}
\partial_{x_j}
\exp\left(
-\frac{1}{2}
\left\|
\frac{x-S_t}{\sigma}
\right\|^2
\right)
&=
-\frac{1}{\sigma}
G_j
\exp\left(
-\frac{\|G\|^2}{2}
\right).
\end{align*}
Therefore,
\begin{align*}
\partial_{x_j}A_F(x)
&=
\frac{1}{\sigma}
\mathbb E_{\Gamma(Y_0, Y_1)}
\left[
\partial_{g_j}F(D,G)
\exp\left(
-\frac{\|G\|^2}{2}
\right)
\right]
-
\frac{1}{\sigma}
\mathbb E_{\Gamma(Y_0, Y_1)}
\left[
F(D,G)G_j
\exp\left(
-\frac{\|G\|^2}{2}
\right)
\right].
\end{align*}
Similarly,
\begin{align*}
\partial_{x_j}B(x)
&=
-\frac{1}{\sigma}
\mathbb E_{\Gamma(Y_0, Y_1)}
\left[
G_j
\exp\left(
-\frac{\|G\|^2}{2}
\right)
\right].
\end{align*}

By the quotient rule,
\begin{align*}
\partial_{x_j}m_F(x)
&=
\frac{\partial_{x_j}A_F(x)}{B(x)}
-
\frac{A_F(x)}{B(x)^2}
\partial_{x_j}B(x).
\end{align*}
Substituting the formulas for \(\partial_{x_j}A_F(x)\) and \(\partial_{x_j}B(x)\), we obtain
\begin{align*}
\partial_{x_j}m_F(x)
&=
\frac{1}{\sigma}
\frac{
\mathbb E_{\Gamma(Y_0, Y_1)}
\left[
\partial_{g_j}F(D,G)
\exp\left(
-\frac{\|G\|^2}{2}
\right)
\right]
}{
B(x)
}
-
\frac{1}{\sigma}
\frac{
\mathbb E_{\Gamma(Y_0, Y_1)}
\left[
F(D,G)G_j
\exp\left(
-\frac{\|G\|^2}{2}
\right)
\right]
}{
B(x)
}
\\
&\quad+
\frac{1}{\sigma}
\frac{
A_F(x)
}{
B(x)
}
\frac{
\mathbb E_{\Gamma(Y_0, Y_1)}
\left[
G_j
\exp\left(
-\frac{\|G\|^2}{2}
\right)
\right]
}{
B(x)
}.
\end{align*}
Each weighted ratio is a conditional expectation given \(X_t=x\). Hence
\begin{align*}
\partial_{x_j}m_F(x)
&=
\frac{1}{\sigma}
\mathbb E
\left[
\partial_{g_j}F(D,G)\mid X_t=x
\right]
-
\frac{1}{\sigma}
\mathbb E
\left[
F(D,G)G_j\mid X_t=x
\right]
\\
&\quad+
\frac{1}{\sigma}
\mathbb E
\left[
F(D,G)\mid X_t=x
\right]
\mathbb E
\left[
G_j\mid X_t=x
\right].
\end{align*}
The last two terms are minus the conditional covariance between \(F(D,G)\) and \(G_j\). Therefore,
\begin{align*}
\partial_{x_j}m_F(x)
&=
\frac{1}{\sigma}
\mathbb E
\left[
\partial_{g_j}F(D,G)\mid X_t=x
\right]
-
\frac{1}{\sigma}
\operatorname{Cov}
\left(
F(D,G),G_j\mid X_t=x
\right).
\end{align*}

We now derive the score identity. Recall we have $s_t(x) = -\frac{1}{\sigma} \mathbb E [ G\mid X_t=x ]$.

Apply the posterior derivative identity with \(F(D,G)=D_i\). Since \(D_i\) has no explicit dependence on \(G\), we have \(\partial_{g_j}D_i=0\). Therefore,
\begin{align*}
\partial_{x_j}v_{t,i}(x) = \partial_{x_j}
\mathbb E
\left[
D_i\mid X_t=x
\right] = -\frac{1}{\sigma}
\operatorname{Cov}
\left(
D_i,G_j\mid X_t=x
\right).
\end{align*}
This proves
\begin{align*}
D_xv_t(x) =
-\frac{1}{\sigma}
\operatorname{Cov}
\left(
D,G\mid X_t=x
\right).
\end{align*}
Next, we compute the time derivative of \(v_t(x)\). Fix \(x\in\mathbb R^d\). For \((y_0,y_1)\in\mathbb R^d\times\mathbb R^d\), write $s_t(y_0,y_1)=y_0+t(y_1-y_0)$.
Let \(\gamma\) denote the coupling law of \((Y_0,Y_1)\). Recall the formula for \(v_t(x)\) is
\begin{align*}
v_t(x)
&=
\frac{
\int_{\mathbb R^d\times\mathbb R^d}
(y_1-y_0)
\exp\left(
-\frac{\left\|x-s_t(y_0,y_1)\right\|^2}{2\sigma^2}
\right)
\,\gamma(dy_0,dy_1)
}{
\int_{\mathbb R^d\times\mathbb R^d}
\exp\left(
-\frac{\left\|x-s_t(y_0,y_1)\right\|^2}{2\sigma^2}
\right)
\,\gamma(dy_0,dy_1)
}.
\end{align*}
For every \(i\in\{1,\ldots,d\}\), define the numerator of the \(i\)-th coordinate by
\begin{align*}
A_i(t,x)
&=
\int_{\mathbb R^d\times\mathbb R^d}
(y_1 - y_0)_{i}
\exp\left(
-\frac{\left\|x-s_t(y_0,y_1)\right\|^2}{2\sigma^2}
\right)
\,\gamma(dy_0,dy_1),
\end{align*}
and define the denominator by
\begin{align*}
B(t,x)
&=
\int_{\mathbb R^d\times\mathbb R^d}
\exp\left(
-\frac{\left\|x-s_t(y_0,y_1)\right\|^2}{2\sigma^2}
\right)
\,\gamma(dy_0,dy_1).
\end{align*}
Then $v_{t,i}(x)=
\frac{A_i(t,x)}{B(t,x)}$. We now differentiate the Gaussian weight with respect to \(t\). Since $\partial_t s_t(y_0,y_1)=y_1-y_0$, we have
\begin{align*}
\partial_t
\exp\left(
-\frac{\left\|x-s_t(y_0,y_1)\right\|^2}{2\sigma^2}
\right)=
\frac{
(y_1-y_0)^\top
\left(
x-s_t(y_0,y_1)
\right)
}{
\sigma^2
}
\exp\left(
-\frac{\left\|x-s_t(y_0,y_1)\right\|^2}{2\sigma^2}
\right).
\end{align*}
Using this derivative inside the integral defining \(A_i(t,x)\), we get
\begin{align*}
\partial_t A_i(t,x) &=
\frac{1}{\sigma^2}
\int_{\mathbb R^d\times\mathbb R^d}
(y_1 - y_0)_{i}
(y_1-y_0)^\top
\left(
x-s_t(y_0,y_1)
\right)\cdot
\exp\left(
-\frac{\left\|x-s_t(y_0,y_1)\right\|^2}{2\sigma^2}
\right)
\,\gamma(dy_0,dy_1), \\
\partial_t B(t,x) &= \frac{1}{\sigma^2}
\int_{\mathbb R^d\times\mathbb R^d}
(y_1-y_0)^\top
\left(
x-s_t(y_0,y_1)
\right)\cdot
\exp\left(
-\frac{\left\|x-s_t(y_0,y_1)\right\|^2}{2\sigma^2}
\right)
\,\gamma(dy_0,dy_1).
\end{align*}

By the quotient rule,
\begin{align*}
\partial_t v_{t,i}(x)
&=
\frac{
\partial_t A_i(t,x)
}{
B(t,x)
}
-
\frac{
A_i(t,x)
}{
B(t,x)^2
}
\partial_t B(t,x).
\end{align*}
Substituting the formulas for \(\partial_t A_i(t,x)\) and \(\partial_t B(t,x)\), and rewriting the weighted ratios as posterior conditional expectations given \(X_t=x\), we obtain
\begin{align*}
\partial_t v_{t,i}(x)=\frac{1}{\sigma^2}
\mathbb E
\left[
D_iD^\top(x-S_t)
\mid X_t=x
\right]-
\frac{1}{\sigma^2}
\mathbb E
\left[D_i\mid X_t=x\right]
\mathbb E \left[D^\top(x-S_t)\mid X_t=x \right].
\end{align*}
The right-hand side is exactly the conditional covariance between \(D_i\) and \(D^\top(x-S_t)\). Hence
\begin{align*}
\partial_t v_{t,i}(x)
&=
\frac{1}{\sigma^2}
\operatorname{Cov}
\left(
D_i,
D^\top(x-S_t)
\mid X_t=x
\right).
\end{align*}
Since this holds for every \(i\in\{1,\ldots,d\}\), we get the vector identity
\begin{align*}
\partial_t v_t(x)
&=
\frac{1}{\sigma^2}
\operatorname{Cov}
\left(
D,
D^\top(x-S_t)
\mid X_t=x
\right).
\end{align*}
Finally, under the conditioning \(X_t=x\), we have \(x-S_t=\sigma G\). Therefore,
\begin{align*}
\partial_t v_t(x)
&=
\frac{1}{\sigma}
\operatorname{Cov}
\left(
D,
D^\top G
\mid X_t=x
\right).
\end{align*}



Now apply the posterior derivative identity with \(F(D,G)=G_i\). Since \(\partial_{g_j}G_i\) equals \(1\) if \(i=j\), and equals \(0\) otherwise,
\begin{align*}
\partial_{x_j}
\mathbb E
\left[
G_i\mid X_t=x
\right]
&=
\frac{1}{\sigma}\mathbf 1_{\{i=j\}}
-
\frac{1}{\sigma}
\operatorname{Cov}
\left(
G_i,G_j\mid X_t=x
\right).
\end{align*}
Since \(s_{t,i}(x)=-(1/\sigma)\mathbb E[G_i\mid X_t=x]\), we get
\begin{align*}
\partial_{x_j}s_{t,i}(x)
&=
\frac{1}{\sigma^2}
\left(
\operatorname{Cov}
\left(
G_i,G_j\mid X_t=x
\right)
-
\mathbf 1_{\{i=j\}}
\right).
\end{align*}
Therefore,
\begin{align*}
D_xs_t(x)
&=
\frac{1}{\sigma^2}
\left(
\operatorname{Cov}
\left(
G,G\mid X_t=x
\right)
-
I_d
\right).
\end{align*}

Finally, we compute the Laplacian of the score. Differentiating the last display once more with respect to \(x_j\), and using the same centered-product calculation as above, gives
\begin{align*}
\partial_{x_jx_j}s_{t,i}(x)
&=
-\frac{1}{\sigma^3}
\mathbb E
\left[
\left(
G_i-\mathbb E[G_i\mid X_t=x]
\right)
\left(
G_j-\mathbb E[G_j\mid X_t=x]
\right)^2
\mid X_t=x
\right].
\end{align*}
Summing over \(j=1,\ldots,d\) gives
\begin{align*}
\Delta_xs_{t,i}(x)
&=
-\frac{1}{\sigma^3}
\mathbb E
\left[
\left(
G_i-\mathbb E[G_i\mid X_t=x]
\right)
\left\|
G-\mathbb E[G\mid X_t=x]
\right\|^2
\mid X_t=x
\right].
\end{align*}
Since this holds for every \(i\in\{1,\ldots,d\}\),
\begin{align*}
\Delta_xs_t(x)
&=
-\frac{1}{\sigma^3}
\mathbb E
\left[
\left(
G-\mathbb E[G\mid X_t=x]
\right)
\left\|
G-\mathbb E[G\mid X_t=x]
\right\|^2
\mid X_t=x
\right].
\end{align*}
The preceding argument proves all identities in the statement except the
full Hessian formula for \(v_t\).  We now establish that remaining identity.
Fix \(i,j,\ell\in\{1,\ldots,d\}\).  From the Jacobian identity,
\begin{align*}
\partial_{x_j}v_{t,i}(x)
&=
-\frac{1}{\sigma}
\operatorname{Cov}
\left(
D_i,G_j\mid X_t=x
\right).
\end{align*}
Therefore,
\begin{align*}
\partial_{x_jx_\ell}v_{t,i}(x)
&=
-\frac{1}{\sigma}
\partial_{x_\ell}
\operatorname{Cov}
\left(
D_i,G_j\mid X_t=x
\right).
\end{align*}

By definition,
\begin{align*}
\operatorname{Cov}
\left(
D_i,G_j\mid X_t=x
\right)
&=
\mathbb E
\left[
D_iG_j\mid X_t=x
\right]
-
\mathbb E
\left[
D_i\mid X_t=x
\right]
\mathbb E
\left[
G_j\mid X_t=x
\right].
\end{align*}
Here the conditional expectations are taken with respect to the posterior law of \((Y_0,Y_1)\) given \(X_t=x\). Explicitly, under this posterior law,
\begin{align*}
D_i
&=
Y_{1,i}-Y_{0,i},
\\
G_j
&=
\frac{
x_j-\left((1-t)Y_{0,j}+tY_{1,j}\right)
}{
\sigma
}.
\end{align*}

We differentiate each of the three conditional expectations appearing above. First, apply the posterior derivative identity with
\begin{align*}
F(D,G)
&=
D_iG_j.
\end{align*}
Since \(D_i\) does not depend on \(G\), while \(G_j\) depends on the \(j\)-th coordinate of \(G\), we have
\begin{align*}
\partial_{g_\ell}
\left(
D_iG_j
\right)
&=
D_i\mathbf 1_{\{j=\ell\}}.
\end{align*}
Therefore,
\begin{align*}
\partial_{x_\ell}
\mathbb E
\left[
D_iG_j\mid X_t=x
\right]
&=
\frac{1}{\sigma}
\mathbb E
\left[
D_i\mathbf 1_{\{j=\ell\}}\mid X_t=x
\right]
-
\frac{1}{\sigma}
\operatorname{Cov}
\left(
D_iG_j,G_\ell\mid X_t=x
\right).
\end{align*}

Second, apply the posterior derivative identity with $F(D,G)=D_i$, we obtain 
\begin{align*}
\partial_{x_\ell}
\mathbb E
\left[
D_i\mid X_t=x
\right]=-\frac{1}{\sigma}
\operatorname{Cov}
\left(
D_i,G_\ell\mid X_t=x
\right).
\end{align*}

Third, apply the posterior derivative identity with $F(D,G)=G_j$, we get
\begin{align*}
\partial_{x_\ell}
\mathbb E
\left[
G_j\mid X_t=x
\right]=\frac{1}{\sigma}
\mathbf 1_{\{j=\ell\}}
-
\frac{1}{\sigma}
\operatorname{Cov}
\left(
G_j,G_\ell\mid X_t=x
\right).
\end{align*}

Now differentiate the covariance formula:
\begin{align*}
&\partial_{x_\ell}
\operatorname{Cov}
\left(
D_i,G_j\mid X_t=x
\right)
\\
&=
\partial_{x_\ell}
\mathbb E
\left[
D_iG_j\mid X_t=x
\right]
-
\left(
\partial_{x_\ell}
\mathbb E
\left[
D_i\mid X_t=x
\right]
\right)
\mathbb E
\left[
G_j\mid X_t=x
\right]
\\
&\quad-
\mathbb E
\left[
D_i\mid X_t=x
\right]
\left(
\partial_{x_\ell}
\mathbb E
\left[
G_j\mid X_t=x
\right]
\right).
\end{align*}
Substituting the three derivative identities above gives
\begin{align*}
&\partial_{x_\ell}
\operatorname{Cov}
\left(
D_i,G_j\mid X_t=x
\right)
\\
&=
\frac{1}{\sigma}
\mathbb E
\left[
D_i\mathbf 1_{\{j=\ell\}}\mid X_t=x
\right]
-
\frac{1}{\sigma}
\operatorname{Cov}
\left(
D_iG_j,G_\ell\mid X_t=x
\right)
\\
&\quad+
\frac{1}{\sigma}
\operatorname{Cov}
\left(
D_i,G_\ell\mid X_t=x
\right)
\mathbb E
\left[
G_j\mid X_t=x
\right]
\\
&\quad-
\frac{1}{\sigma}
\mathbb E
\left[
D_i\mid X_t=x
\right]
\mathbf 1_{\{j=\ell\}}
+
\frac{1}{\sigma}
\mathbb E
\left[
D_i\mid X_t=x
\right]
\operatorname{Cov}
\left(
G_j,G_\ell\mid X_t=x
\right).
\end{align*}
The two terms containing \(\mathbf 1_{\{j=\ell\}}\) cancel:
\begin{align*}
\frac{1}{\sigma}
\left(
\mathbb E
\left[
D_i\mid X_t=x
\right]
-
\mathbb E
\left[
D_i\mid X_t=x
\right]
\right)
\mathbf 1_{\{j=\ell\}}
&=
0.
\end{align*}
Therefore,
\begin{align*}
&\partial_{x_\ell}
\operatorname{Cov}
\left(
D_i,G_j\mid X_t=x
\right)
\\
&=
-\frac{1}{\sigma}
\operatorname{Cov}
\left(
D_iG_j,G_\ell\mid X_t=x
\right)
+
\frac{1}{\sigma}
\operatorname{Cov}
\left(
D_i,G_\ell\mid X_t=x
\right)
\mathbb E
\left[
G_j\mid X_t=x
\right]
\\
&\quad+
\frac{1}{\sigma}
\mathbb E
\left[
D_i\mid X_t=x
\right]
\operatorname{Cov}
\left(
G_j,G_\ell\mid X_t=x
\right).
\end{align*}

We now rewrite the right-hand side as a centered third moment. Let
\begin{align*}
\overline D_i(x)
&=
\mathbb E
\left[
D_i\mid X_t=x
\right],
\\
\overline G_j(x)
&=
\mathbb E
\left[
G_j\mid X_t=x
\right],
\\
\overline G_\ell(x)
&=
\mathbb E
\left[
G_\ell\mid X_t=x
\right].
\end{align*}
First,
\begin{align*}
\operatorname{Cov}
\left(
D_iG_j,G_\ell\mid X_t=x
\right)
&=
\mathbb E
\left[
D_iG_jG_\ell\mid X_t=x
\right]
-
\mathbb E
\left[
D_iG_j\mid X_t=x
\right]
\overline G_\ell(x).
\end{align*}
Second,
\begin{align*}
\operatorname{Cov}
\left(
D_i,G_\ell\mid X_t=x
\right)
&=
\mathbb E
\left[
D_iG_\ell\mid X_t=x
\right]
-
\overline D_i(x)\overline G_\ell(x).
\end{align*}
Third,
\begin{align*}
\operatorname{Cov}
\left(
G_j,G_\ell\mid X_t=x
\right)
&=
\mathbb E
\left[
G_jG_\ell\mid X_t=x
\right]
-
\overline G_j(x)\overline G_\ell(x).
\end{align*}
Substituting these three formulas gives
\begin{align*}
&\partial_{x_\ell}
\operatorname{Cov}
\left(
D_i,G_j\mid X_t=x
\right)
\\
&=
-\frac{1}{\sigma}
\mathbb E
\left[
D_iG_jG_\ell\mid X_t=x
\right]
+
\frac{1}{\sigma}
\mathbb E
\left[
D_iG_j\mid X_t=x
\right]
\overline G_\ell(x)
\\
&\quad+
\frac{1}{\sigma}
\mathbb E
\left[
D_iG_\ell\mid X_t=x
\right]
\overline G_j(x)
-
\frac{1}{\sigma}
\overline D_i(x)\overline G_\ell(x)\overline G_j(x)
\\
&\quad+
\frac{1}{\sigma}
\overline D_i(x)
\mathbb E
\left[
G_jG_\ell\mid X_t=x
\right]
-
\frac{1}{\sigma}
\overline D_i(x)\overline G_j(x)\overline G_\ell(x).
\end{align*}
Combining the two identical last-product terms, this becomes
\begin{align*}
&\partial_{x_\ell}
\operatorname{Cov}
\left(
D_i,G_j\mid X_t=x
\right)
\\
&=
-\frac{1}{\sigma}
\left[
\mathbb E
\left[
D_iG_jG_\ell\mid X_t=x
\right]
-
\mathbb E
\left[
D_iG_j\mid X_t=x
\right]
\overline G_\ell(x)
\right.
\\
&\qquad\left.
-
\mathbb E
\left[
D_iG_\ell\mid X_t=x
\right]
\overline G_j(x)
-
\overline D_i(x)
\mathbb E
\left[
G_jG_\ell\mid X_t=x
\right]
\right.
\\
&\qquad\left.
+
2\overline D_i(x)\overline G_j(x)\overline G_\ell(x)
\right].
\end{align*}
The bracket is exactly
\begin{align*}
&\mathbb E
\left[
\left(
D_i-\overline D_i(x)
\right)
\left(
G_j-\overline G_j(x)
\right)
\left(
G_\ell-\overline G_\ell(x)
\right)
\mid X_t=x
\right].
\end{align*}
Indeed, expanding this centered product gives
\begin{align*}
&\mathbb E
\left[
\left(
D_i-\overline D_i(x)
\right)
\left(
G_j-\overline G_j(x)
\right)
\left(
G_\ell-\overline G_\ell(x)
\right)
\mid X_t=x
\right]
\\
&=
\mathbb E
\left[
D_iG_jG_\ell\mid X_t=x
\right]
-
\mathbb E
\left[
D_iG_j\mid X_t=x
\right]
\overline G_\ell(x)
\\
&\quad-
\mathbb E
\left[
D_iG_\ell\mid X_t=x
\right]
\overline G_j(x)
-
\overline D_i(x)
\mathbb E
\left[
G_jG_\ell\mid X_t=x
\right]
\\
&\quad+
2\overline D_i(x)\overline G_j(x)\overline G_\ell(x).
\end{align*}
Therefore,
\begin{align*}
&\partial_{x_\ell}
\operatorname{Cov}
\left(
D_i,G_j\mid X_t=x
\right)
\\
&=
-\frac{1}{\sigma}
\mathbb E
\left[
\left(
D_i-\mathbb E[D_i\mid X_t=x]
\right)
\left(
G_j-\mathbb E[G_j\mid X_t=x]
\right)
\left(
G_\ell-\mathbb E[G_\ell\mid X_t=x]
\right)
\mid X_t=x
\right].
\end{align*}

Returning to the Hessian of \(v_t\), we obtain
\begin{align*}
\partial_{x_jx_\ell}v_{t,i}(x)
&=
-\frac{1}{\sigma}
\partial_{x_\ell}
\operatorname{Cov}
\left(
D_i,G_j\mid X_t=x
\right)
\\
&=
\frac{1}{\sigma^2}
\mathbb E
\left[
\left(
D_i-\mathbb E[D_i\mid X_t=x]
\right)
\left(
G_j-\mathbb E[G_j\mid X_t=x]
\right)
\left(
G_\ell-\mathbb E[G_\ell\mid X_t=x]
\right)
\mid X_t=x
\right].
\end{align*}
Finally, summing over \(j=\ell\) gives the componentwise Laplacian:
\begin{align*}
\Delta_xv_{t,i}(x)
&=
\sum_{j=1}^d
\partial_{x_jx_j}v_{t,i}(x)
\\
&=
\frac{1}{\sigma^2}
\mathbb E
\left[
\left(
D_i-\mathbb E[D_i\mid X_t=x]
\right)
\sum_{j=1}^d
\left(
G_j-\mathbb E[G_j\mid X_t=x]
\right)^2
\mid X_t=x
\right]
\\
&=
\frac{1}{\sigma^2}
\mathbb E
\left[
\left(
D_i-\mathbb E[D_i\mid X_t=x]
\right)
\left\|
G-\mathbb E[G\mid X_t=x]
\right\|^2
\mid X_t=x
\right].
\end{align*}
Since this holds for every \(i\in\{1,\ldots,d\}\), we get
\begin{align*}
\Delta_xv_t(x)
&=
\frac{1}{\sigma^2}
\mathbb E
\left[
\left(
D-\mathbb E[D\mid X_t=x]
\right)
\left\|
G-\mathbb E[G\mid X_t=x]
\right\|^2
\mid X_t=x
\right].
\end{align*}
\end{proof}

\subsection{Integrated Derivative Bounds}

\begin{lemma}[Integrated Bounds Required for the Combined Score Analysis]
\label{Lemma_Integrated_Score_Bounds}
Let \(Y_0\), \(Y_1\), \(D\), \(S_t\), \(G\), \(X_t\), \(q_t\),
\(v_t\), and \(s_t\) be as in Lemma
\ref{Lemma_Derivative_Identity}. Assume
\begin{align*}
\mathbb E
\left[
\|Y_1-Y_0\|^4
\right]
&<
\infty,
\end{align*}
where this expectation is taken under the joint law of \((Y_0,Y_1)\). Then, for every \(t\in[0,T]\), the following bounds hold.

The full Hessian of \(v_t\) satisfies
\begin{align*}
\mathbb E
\left[
\left\|
D_x^2v_t(X_t)
\right\|_{\mathrm F}^2
\right]
&\leq
\frac{
16\sqrt{d(d+2)(d+4)(d+6)}
}{
\sigma^4
}
\left(
\mathbb E
\left[
\|Y_1-Y_0\|^4
\right]
\right)^{1/2}.
\end{align*}
Consequently,
\begin{align*}
\mathbb E
\left[
\left\|
\nabla_x(\nabla_x\cdot v_t)(X_t)
\right\|^2
\right]
&\leq
\frac{
16d\sqrt{d(d+2)(d+4)(d+6)}
}{
\sigma^4
}
\left(
\mathbb E
\left[
\|Y_1-Y_0\|^4
\right]
\right)^{1/2}.
\end{align*}
The mixed velocity-score term satisfies
\begin{align*}
\mathbb E
\left[
\left\|
(D_xv_t(X_t))^\top s_t(X_t)
\right\|^2
\right]
&\leq \frac{
\sqrt{
d(d+2)(d+4)(d+6)
}
}{
\sigma^4
}
\left(
\mathbb E
\left[
\|Y_1-Y_0\|^4
\right]
\right)^{1/2}.
\end{align*}
The same bound holds for $\mathbb E \left[ \left\| (D_xv_t(X_t)) s_t(X_t) \right\|^2 \right]$.
The pure score terms satisfy
\begin{align*}
\mathbb E
\left[
\left\|
D_xs_t(X_t)
\right\|_{\mathrm F}^2
\right]
&\leq
\frac{
2d(d+2)+2d
}{
\sigma^4
},
\\
\mathbb E
\left[
\left\|
D_xs_t(X_t)
\right\|_{\mathrm F}^4
\right]
&\leq
\frac{
8d(d+2)(d+4)(d+6)+8d^2
}{
\sigma^8
},
\\
\mathbb E
\left[
\left\|
s_t(X_t)
\right\|^4
\right]
&\leq
\frac{
d(d+2)
}{
\sigma^4
},
\\
\mathbb E
\left[
\left\|
D_xs_t(X_t)s_t(X_t)
\right\|^2
\right]
&\leq
\frac{
\sqrt{
\left(
8d(d+2)(d+4)(d+6)+8d^2
\right)d(d+2)
}
}{
\sigma^6
},
\\
\mathbb E
\left[
\left\|
\Delta_xs_t(X_t)
\right\|^2
\right]
&\leq
\frac{
64d(d+2)(d+4)
}{
\sigma^6
}.
\end{align*}
All expectations in these bounds are taken under the joint law of \((Y_0,Y_1,G)\).
\end{lemma}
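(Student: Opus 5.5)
Every bound will come from plugging the posterior identities of Lemma~\ref{Lemma_Derivative_Identity} into the tower property. Each derivative is then a conditional central moment of \((D,G)\) given \(X_t\), and the outer expectation under \(q_t\) recombines into an unconditional expectation over \((Y_0,Y_1,G)\). The two working tools are conditional Jensen, \(\|\mathbb E[F\mid X_t]\|^2\le\mathbb E[\|F\|^2\mid X_t]\), and the variance contraction \(\mathbb E\|F-\mathbb E[F\mid X_t]\|^{2p}\le 2^{2p}\mathbb E\|F\|^{2p}\), which removes the posterior centering at the price of a numerical constant. The final step is always to use independence of \(G\) and \(D\) unconditionally, together with explicit Gaussian moments such as \(\mathbb E\|G\|^4=d(d+2)\) and \(\mathbb E\|G\|^8=d(d+2)(d+4)(d+6)\).

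\textbf{Velocity Hessian.} Write \(\bar D=D-\mathbb E[D\mid X_t]\) and \(\bar G=G-\mathbb E[G\mid X_t]\). Then
\[
\|D_x^2v_t(x)\|_{\mathrm F}^2=\sigma^{-4}\|\mathbb E[\bar D\otimes\bar G\otimes\bar G\mid X_t=x]\|_{\mathrm F}^2\le\sigma^{-4}\mathbb E[\|\bar D\|^2\|\bar G\|^4\mid X_t=x].
\]
Cauchy--Schwarz in the full expectation gives the bound \(\sigma^{-4}(\mathbb E\|\bar D\|^4)^{1/2}(\mathbb E\|\bar G\|^8)^{1/2}\). Centering contraction turns this into \((2^4\mathbb E\|D\|^4)^{1/2}(2^8\mathbb E\|G\|^8)^{1/2}\), which produces the constant \(16\sqrt{d(d+2)(d+4)(d+6)}\). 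I would fix the precise powers of \(2\) by bounding \(\mathbb E\|\bar D\|^4\le 16\,\mathbb E\|D\|^4\) and \(\mathbb E\|\bar G\|^8\le 2^8\mathbb E\|G\|^8\), both via Jensen, and then check the arithmetic against the stated constant.

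\textbf{Consequences of the Hessian bound.} The gradient of the divergence is a contraction of the Hessian, so \(\|\nabla(\nabla\cdot v_t)\|^2\le d\|D_x^2v_t\|_{\mathrm F}^2\); this accounts for the extra factor \(d\).

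\textbf{Mixed velocity--score term.} Use \(D_xv_t=-\sigma^{-1}\mathbb E[\bar D\bar G^\top\mid X_t]\) and \(s_t=-\sigma^{-1}\mathbb E[G\mid X_t]\). Bound \(\|(D_xv_t)^\top s_t\|\le\|D_xv_t\|_{\mathrm{op}}\|s_t\|\), then apply Cauchy--Schwarz:
\[
\mathbb E\|(D_xv_t)^\top s_t\|^2\le\sigma^{-4}(\mathbb E\|\mathbb E[\bar D\bar G^\top\mid X_t]\|^4)^{1/2}(\mathbb E\|\mathbb E[G\mid X_t]\|^4)^{1/2}.
\]
Jensen controls the first factor by \(\mathbb E\|\bar D\|^4\|\bar G\|^4\), and a further Cauchy--Schwarz reduces it to \(D\)- and \(G\)-moments. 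Getting the stated constant (no \(16\)) looks delicate. I expect to avoid centering \(D\) by using the covariance identity \(\operatorname{Cov}(D,G)=\mathbb E[D\bar G^\top]\), and possibly to center only \(G\) and then use \(\mathbb E[\bar G\bar G^\top\mid X_t]\preceq\mathbb E[GG^\top\mid X_t]\). The term \((D_xv_t)s_t\) is handled identically.

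\textbf{Pure score terms.}
\begin{itemize}
\item For \(\|s_t\|^4\), Jensen gives directly \(\sigma^{-4}\mathbb E\|G\|^4=\sigma^{-4}d(d+2)\).
\item For \(D_xs_t=\sigma^{-2}(\operatorname{Cov}(G\mid X_t)-I)\), apply \((a+b)^2\le2a^2+2b^2\) to get \(\|\cdot\|_{\mathrm F}^2\le2\sigma^{-4}(\|\operatorname{Cov}\|_{\mathrm F}^2+d)\). Since \(\|\operatorname{Cov}\|_{\mathrm F}^2\le\mathbb E[\|\bar G\|^4\mid X_t]\) and \(\mathbb E\|\bar G\|^4\le\mathbb E\|G\|^4\) (the conditional mean minimizes the second-moment-type quantity), this yields \(2d(d+2)+2d\). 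The fourth power works the same way with \(8(a^4+b^4)\).
\item The product \(D_xs_t\,s_t\) follows by Cauchy--Schwarz from the preceding two bounds.
\item The score Laplacian is \(\sigma^{-6}\mathbb E\|\bar G\|^6\), and \(\mathbb E\|G\|^6=d(d+2)(d+4)\) with a centering constant \(\le64\).
\end{itemize}

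\textbf{Main obstacle.} The hard step is justifying the centering bounds for even powers of posterior-centered \(G\) without losing constants. This is crucial for the \(\|\bar G\|^4\le\|G\|^4\)-type inequalities, which I would prove by minimizing over shifts only where the stated constant forces it, and otherwise use the crude \(2^{2p}\) bound.
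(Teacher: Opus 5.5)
Several pieces of your plan coincide with the paper's proof. These are the bound $\|\nabla_x(\nabla_x\cdot v_t)\|^2\le d\,\|D_x^2v_t\|_{\mathrm F}^2$, the Jensen bound for $\mathbb E\|s_t(X_t)\|^4$, Cauchy--Schwarz for $D_xs_t\,s_t$, and the score Laplacian with centering constant $2^6$. The gap is in \emph{where} you apply Cauchy--Schwarz in the remaining items. It breaks the explicit constants and, for the mixed term, the moment hypothesis.

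For the Hessian, after $\|D_x^2v_t\|_{\mathrm F}^2\le\sigma^{-4}\mathbb E[\|\bar D\|^2\|\bar G\|^4\mid X_t]$ you split unconditionally. With $\mathbb E\|\bar D\|^4\le16\,\mathbb E\|D\|^4$ and $\mathbb E\|\bar G\|^8\le2^8\,\mathbb E\|G\|^8$, this gives the constant $\sqrt{2^4\cdot2^8}=64$, not $16$; your arithmetic there is off.

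For the mixed term, splitting $\|D_xv_t\|_{\mathrm{op}}$ from $\|s_t\|$ by unconditional Cauchy--Schwarz forces $\mathbb E\|D_xv_t(X_t)\|^4$. Your Jensen/Cauchy--Schwarz reduction of that quantity produces $\mathbb E\|\bar D\|^8$, an eighth moment of $D$. This is unavailable under $\mathbb E\|D\|^4<\infty$ and is not of the form $(\mathbb E\|D\|^4)^{1/2}$.

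For $D_xs_t$ you use $\mathbb E\|\bar G\|^4\le\mathbb E\|G\|^4$, and an eighth-moment analogue for the fourth power, on the grounds that the conditional mean minimizes. It minimizes $\mathbb E[\|G-c\|^2\mid X_t]$, not fourth powers. For example, a scalar equal to $-2$ or $0.5$ with probabilities $0.1,0.9$ has $\mathbb E X^4\approx1.66$ but $\mathbb E(X-\mathbb EX)^4\approx2.57$. The crude $2^{2p}$ fallback gives $32d(d+2)+2d$ instead of $2d(d+2)+2d$.

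The missing idea is to separate $D$ from $G$ \emph{inside} the posterior expectation. Then $D$ enters only through its conditional second moment, where centering is free since $\mathbb E[\|\bar D\|^2\mid X_t]\le\mathbb E[\|D\|^2\mid X_t]$, and the outer Cauchy--Schwarz comes only afterwards.

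\emph{Hessian.} Apply conditional Cauchy--Schwarz entrywise: $|\mathbb E[\bar D_i\bar G_j\bar G_\ell\mid X_t]|^2\le\mathbb E[\bar D_i^2\mid X_t]\,\mathbb E[\bar G_j^2\bar G_\ell^2\mid X_t]$. Summing gives at most $\sigma^{-4}\mathbb E[\|D\|^2\mid X_t]\,\mathbb E[\|\bar G\|^4\mid X_t]$. The outer split then costs $(\mathbb E\|D\|^4)^{1/2}$ with constant $1$, and $16\sqrt{\mathbb E\|G\|^8}$ on the Gaussian side.

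\emph{Mixed term.} Write $m_t=\mathbb E[G\mid X_t]$, which is fixed under the conditioning. Bound pointwise, up to the factor $\sigma^{-4}$, $\|\operatorname{Cov}(D,G\mid X_t)^\top m_t\|^2\le\mathbb E[\|D\|^2\mid X_t]\bigl(\mathbb E[\|G\|^2\mid X_t]\bigr)^2$; the same holds for the untransposed product. Only then apply Cauchy--Schwarz, pairing $(\mathbb E[\|D\|^2\mid X_t])^2$ with $(\mathbb E[\|G\|^2\mid X_t])^4$ to get $(\mathbb E\|D\|^4)^{1/2}(\mathbb E\|G\|^8)^{1/2}$.

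\emph{Score Jacobian.} For $D_xs_t$, use positive semidefiniteness: $\|\operatorname{Cov}(G,G\mid X_t)\|_{\mathrm F}\le\operatorname{tr}\operatorname{Cov}(G,G\mid X_t)\le\mathbb E[\|G\|^2\mid X_t]$. Jensen on the second or fourth power then gives exactly $d(d+2)$ and $d(d+2)(d+4)(d+6)$.
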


\begin{proof}
We first prove the full Hessian bound for \(v_t\).
By Lemma \ref{Lemma_Derivative_Identity}, for every
\(i,j,\ell\in\{1,\ldots,d\}\),
\begin{align*}
\partial_{x_jx_\ell}v_{t,i}(X_t)
&=
\frac{1}{\sigma^2}
\mathbb E
\left[
\left(
D_i-\mathbb E[D_i\mid X_t]
\right)
\left(
G_j-\mathbb E[G_j\mid X_t]
\right)
\left(
G_\ell-\mathbb E[G_\ell\mid X_t]
\right)
\mid X_t
\right].
\end{align*}
By definition of the Frobenius norm for the full Hessian tensor,
\begin{align*}
\left\|
D_x^2v_t(X_t)
\right\|_{\mathrm F}^2
&=
\sum_{i=1}^{d}
\sum_{j=1}^{d}
\sum_{\ell=1}^{d}
\left|
\partial_{x_jx_\ell}v_{t,i}(X_t)
\right|^2.
\end{align*}
Substituting the componentwise formula for \(\partial_{x_jx_\ell}v_{t,i}(X_t)\), we get
\begin{align*}
\left\|
D_x^2v_t(X_t)
\right\|_{\mathrm F}^2
&=
\frac{1}{\sigma^4}
\sum_{i=1}^{d}
\sum_{j=1}^{d}
\sum_{\ell=1}^{d}
\left|
\mathbb E
\left[
\left(
D_i-\mathbb E[D_i\mid X_t]
\right)
\left(
G_j-\mathbb E[G_j\mid X_t]
\right)
\left(
G_\ell-\mathbb E[G_\ell\mid X_t]
\right)
\mid X_t
\right]
\right|^2.
\end{align*}

Now apply conditional Cauchy--Schwarz to each fixed triple \((i,j,\ell)\). We have
\begin{align*}
&\left|
\mathbb E
\left[
\left(
D_i-\mathbb E[D_i\mid X_t]
\right)
\left(
G_j-\mathbb E[G_j\mid X_t]
\right)
\left(
G_\ell-\mathbb E[G_\ell\mid X_t]
\right)
\mid X_t
\right]
\right|^2
\\
&\leq
\mathbb E
\left[
\left|
D_i-\mathbb E[D_i\mid X_t]
\right|^2
\mid X_t
\right]
\mathbb E
\left[
\left|
G_j-\mathbb E[G_j\mid X_t]
\right|^2
\left|
G_\ell-\mathbb E[G_\ell\mid X_t]
\right|^2
\mid X_t
\right].
\end{align*}
Therefore,
\begin{align*}
\left\|
D_x^2v_t(X_t)
\right\|_{\mathrm F}^2
&\leq
\frac{1}{\sigma^4}
\sum_{i=1}^{d}
\sum_{j=1}^{d}
\sum_{\ell=1}^{d}
\mathbb E
\left[
\left|
D_i-\mathbb E[D_i\mid X_t]
\right|^2
\mid X_t
\right]
\mathbb E
\left[
\left|
G_j-\mathbb E[G_j\mid X_t]
\right|^2
\left|
G_\ell-\mathbb E[G_\ell\mid X_t]
\right|^2
\mid X_t
\right].
\end{align*}
Since the first conditional expectation does not depend on \(j\) or \(\ell\), and the second conditional expectation does not depend on \(i\), we can rearrange the sums:
\begin{align*}
\left\|
D_x^2v_t(X_t)
\right\|_{\mathrm F}^2
&\leq
\frac{1}{\sigma^4}
\left(
\sum_{i=1}^{d}
\mathbb E
\left[
\left|
D_i-\mathbb E[D_i\mid X_t]
\right|^2
\mid X_t
\right]
\right)
\\
&\quad\cdot
\left(
\sum_{j=1}^{d}
\sum_{\ell=1}^{d}
\mathbb E
\left[
\left|
G_j-\mathbb E[G_j\mid X_t]
\right|^2
\left|
G_\ell-\mathbb E[G_\ell\mid X_t]
\right|^2
\mid X_t
\right]
\right).
\end{align*}
The first sum is
\begin{align*}
\sum_{i=1}^{d}
\mathbb E
\left[
\left|
D_i-\mathbb E[D_i\mid X_t]
\right|^2
\mid X_t
\right]
&=
\mathbb E
\left[
\left\|
D-\mathbb E[D\mid X_t]
\right\|^2
\mid X_t
\right].
\end{align*}
The second double sum is
\begin{align*}
&\sum_{j=1}^{d}
\sum_{\ell=1}^{d}
\mathbb E
\left[
\left|
G_j-\mathbb E[G_j\mid X_t]
\right|^2
\left|
G_\ell-\mathbb E[G_\ell\mid X_t]
\right|^2
\mid X_t
\right]
\\
&=
\mathbb E
\left[
\sum_{j=1}^{d}
\sum_{\ell=1}^{d}
\left|
G_j-\mathbb E[G_j\mid X_t]
\right|^2
\left|
G_\ell-\mathbb E[G_\ell\mid X_t]
\right|^2
\mid X_t
\right]
\\
&=
\mathbb E
\left[
\left(
\sum_{j=1}^{d}
\left|
G_j-\mathbb E[G_j\mid X_t]
\right|^2
\right)
\left(
\sum_{\ell=1}^{d}
\left|
G_\ell-\mathbb E[G_\ell\mid X_t]
\right|^2
\right)
\mid X_t
\right]
\\
&=
\mathbb E
\left[
\left\|
G-\mathbb E[G\mid X_t]
\right\|^4
\mid X_t
\right].
\end{align*}
Combining the two identities gives
\begin{align*}
\left\|
D_x^2v_t(X_t)
\right\|_{\mathrm F}^2
&\leq
\frac{1}{\sigma^4}
\mathbb E
\left[
\left\|
D-\mathbb E[D\mid X_t]
\right\|^2
\mid X_t
\right]
\mathbb E
\left[
\left\|
G-\mathbb E[G\mid X_t]
\right\|^4
\mid X_t
\right].
\end{align*}

Taking expectation and applying Cauchy--Schwarz gives
\begin{align*}
\mathbb E
\left[
\left\|
D_x^2v_t(X_t)
\right\|_{\mathrm F}^2
\right]\leq
\frac{1}{\sigma^4}
\left(
\mathbb E
\left[
\left(
\mathbb E
\left[
\left\|
D-\mathbb E[D\mid X_t]
\right\|^2
\mid X_t
\right]
\right)^2
\right]
\right)^{1/2}
\left(
\mathbb E
\left[
\left(
\mathbb E
\left[
\left\|
G-\mathbb E[G\mid X_t]
\right\|^4
\mid X_t
\right]
\right)^2
\right]
\right)^{1/2}.
\end{align*}
For the first factor, conditional variance contraction and conditional
Jensen give
\begin{align*}
\mathbb E\!\left[
\|D-\mathbb E[D\mid X_t]\|^2\mid X_t
\right]
&\leq
\mathbb E[\|D\|^2\mid X_t],\\
\left(
\mathbb E
\left[
\left(
\mathbb E
\left[
\left\|
D-\mathbb E[D\mid X_t]
\right\|^2
\mid X_t
\right]
\right)^2
\right]
\right)^{1/2}
&\leq
\left(
\mathbb E
\left[
\|D\|^4
\right]
\right)^{1/2}.
\end{align*}
For the second factor, Jensen's inequality gives
\begin{align*}
\left(
\mathbb E
\left[
\left(
\mathbb E
\left[
\left\|
G-\mathbb E[G\mid X_t]
\right\|^4
\mid X_t
\right]
\right)^2
\right]
\right)^{1/2}
&\leq
\left(
\mathbb E
\left[
\left\|
G-\mathbb E[G\mid X_t]
\right\|^8
\right]
\right)^{1/2}.
\end{align*}
Using
\begin{align*}
\left\|
G-\mathbb E[G\mid X_t]
\right\|^8
&\leq
2^7
\left(
\|G\|^8
+
\left\|
\mathbb E[G\mid X_t]
\right\|^8
\right),
\end{align*}
and Jensen's inequality $\mathbb E[ \| \mathbb E[G\mid X_t] \|^8] \leq \mathbb E [\|G\|^8 ]$, 
we get
\begin{align*}
\mathbb E
\left[
\left\|
G-\mathbb E[G\mid X_t]
\right\|^8
\right] \leq
2^8
\mathbb E
\left[
\|G\|^8
\right].
\end{align*}
Since \(G\sim {\mathcal N}(0,I_d)\), we know $\mathbb E[\|G\|^8 ] = d(d+2)(d+4)(d+6)$. Therefore,
\begin{align*}
\left(
\mathbb E
\left[
\left\|
G-\mathbb E[G\mid X_t]
\right\|^8
\right]
\right)^{1/2}
&\leq
16
\sqrt{
d(d+2)(d+4)(d+6)
}.
\end{align*}
Combining the estimates gives
\begin{align*}
\mathbb E
\left[
\left\|
D_x^2v_t(X_t)
\right\|_{\mathrm F}^2
\right]
&\leq
\frac{
16\sqrt{d(d+2)(d+4)(d+6)}
}{
\sigma^4
}
\left(
\mathbb E
\left[
\|D\|^4
\right]
\right)^{1/2}.
\end{align*}
Since \(D=Y_1-Y_0\), this is the first displayed bound.

Next, we explain how the bound for the full Hessian \(D_x^2v_t\) implies a bound for
\(\nabla_x(\nabla_x\cdot v_t)\) (gradient of divergence). 

The full Hessian \(D_x^2v_t(x)\) contains all second derivatives
\begin{align*}
\partial_{x_jx_\ell}v_{t,i}(x),
\qquad
i,j,\ell\in\{1,\ldots,d\}.
\end{align*}
Its Frobenius norm is defined by
\begin{align*}
\left\|
D_x^2v_t(x)
\right\|_{\mathrm F}^2
&=
\sum_{i=1}^{d}
\sum_{j=1}^{d}
\sum_{\ell=1}^{d}
\left|
\partial_{x_jx_\ell}v_{t,i}(x)
\right|^2.
\end{align*}

By contrast, \(\nabla_x(\nabla_x\cdot v_t)(x)\) is a vector in \(\mathbb R^d\). First,
\begin{align*}
\nabla_x\cdot v_t(x)
&=
\sum_{i=1}^{d}
\partial_{x_i}v_{t,i}(x).
\end{align*}
Therefore, for every \(\ell\in\{1,\ldots,d\}\), the \(\ell\)-th coordinate of
\(\nabla_x(\nabla_x\cdot v_t)(x)\) is
\begin{align*}
\left[
\nabla_x(\nabla_x\cdot v_t)(x)
\right]_\ell
&=
\partial_{x_\ell}
\left(
\sum_{i=1}^{d}
\partial_{x_i}v_{t,i}(x)
\right)
\\
&=
\sum_{i=1}^{d}
\partial_{x_\ell x_i}v_{t,i}(x).
\end{align*}
For each fixed \(\ell\), Cauchy--Schwarz gives
\begin{align*}
\left|
\sum_{i=1}^{d}
\partial_{x_\ell x_i}v_{t,i}(x)
\right|^2
&\leq
d
\sum_{i=1}^{d}
\left|
\partial_{x_\ell x_i}v_{t,i}(x)
\right|^2.
\end{align*}
Now sum this inequality over \(\ell=1,\ldots,d\). We obtain
\begin{align*}
\left\|
\nabla_x(\nabla_x\cdot v_t)(x)
\right\|^2
&=
\sum_{\ell=1}^{d}
\left|
\left[
\nabla_x(\nabla_x\cdot v_t)(x)
\right]_\ell
\right|^2 =
\sum_{\ell=1}^{d}
\left|
\sum_{i=1}^{d}
\partial_{x_\ell x_i}v_{t,i}(x)
\right|^2 \leq
d
\sum_{\ell=1}^{d}
\sum_{i=1}^{d}
\left|
\partial_{x_\ell x_i}v_{t,i}(x)
\right|^2.
\end{align*}
The last double sum is only part of the full Hessian norm, because the full Hessian norm sums over all triples
\((i,j,\ell)\). Hence
\begin{align*}
\sum_{\ell=1}^{d}
\sum_{i=1}^{d}
\left|
\partial_{x_\ell x_i}v_{t,i}(x)
\right|^2
&\leq
\sum_{i=1}^{d}
\sum_{j=1}^{d}
\sum_{\ell=1}^{d}
\left|
\partial_{x_jx_\ell}v_{t,i}(x)
\right|^2 =
\left\|
D_x^2v_t(x)
\right\|_{\mathrm F}^2.
\end{align*}
Therefore,
\begin{align*}
\left\|
\nabla_x(\nabla_x\cdot v_t)(x)
\right\|^2
&\leq
d
\left\|
D_x^2v_t(x)
\right\|_{\mathrm F}^2.
\end{align*}

Evaluating this inequality at \(x=X_t\) and taking expectation gives
\begin{align*}
\mathbb E
\left[
\left\|
\nabla_x(\nabla_x\cdot v_t)(X_t)
\right\|^2
\right]
&\leq
d
\mathbb E
\left[
\left\|
D_x^2v_t(X_t)
\right\|_{\mathrm F}^2
\right].
\end{align*}
Using the full Hessian estimate established above in this proof,
\begin{align*}
\mathbb E
\left[
\left\|
D_x^2v_t(X_t)
\right\|_{\mathrm F}^2
\right]
&\leq
\frac{
16\sqrt{d(d+2)(d+4)(d+6)}
}{
\sigma^4
}
\left(
\mathbb E
\left[
\|D\|^4
\right]
\right)^{1/2},
\end{align*}
we conclude that
\begin{align*}
\mathbb E
\left[
\left\|
\nabla_x(\nabla_x\cdot v_t)(X_t)
\right\|^2
\right]
&\leq
\frac{
16d\sqrt{d(d+2)(d+4)(d+6)}
}{
\sigma^4
}
\left(
\mathbb E
\left[
\|D\|^4
\right]
\right)^{1/2}.
\end{align*}

We now prove the mixed velocity-score bound directly. From the posterior derivative identity for \(v_t\),
\begin{align*}
D_xv_t(X_t)
&=
-\frac{1}{\sigma}
\operatorname{Cov}
\left(
D,G\mid X_t
\right).
\end{align*}
From the Gaussian score identity,
\begin{align*}
s_t(X_t)
&=
-\frac{1}{\sigma}
\mathbb E
\left[
G\mid X_t
\right].
\end{align*}
Therefore,
\begin{align*}
(D_xv_t(X_t))^\top s_t(X_t)
&=
\frac{1}{\sigma^2}
\operatorname{Cov}
\left(
D,G\mid X_t
\right)^\top
\mathbb E
\left[
G\mid X_t
\right].
\end{align*}
Hence
\begin{align*}
\left\|
(D_xv_t(X_t))^\top s_t(X_t)
\right\|^2
&=
\frac{1}{\sigma^4}
\left\|
\operatorname{Cov}
\left(
D,G\mid X_t
\right)^\top
\mathbb E
\left[
G\mid X_t
\right]
\right\|^2.
\end{align*}

We now bound the conditional quantity. Let $\overline D(X_t) = \mathbb E \left[ D\mid X_t \right]$ and $\overline G(X_t) = \mathbb E \left[ G\mid X_t \right]$.
By the definition of conditional covariance,
\begin{align}\label{eq_lemma_transpose_dxvs}
\operatorname{Cov}
\left(
D,G\mid X_t
\right)^\top
\overline G(X_t)
&=
\mathbb E
\left[
\left(
G-\overline G(X_t)
\right)
\left(
D-\overline D(X_t)
\right)^\top
\mid X_t
\right]
\overline G(X_t) \nonumber
\\
&=
\mathbb E
\left[
\left(
G-\overline G(X_t)
\right)
\left(
D-\overline D(X_t)
\right)^\top
\overline G(X_t)
\mid X_t
\right].
\end{align}
Since \(\overline G(X_t)\) is fixed after conditioning on \(X_t\), conditional Cauchy--Schwarz gives
\begin{align*}
&\left\|
\operatorname{Cov}
\left(
D,G\mid X_t
\right)^\top
\overline G(X_t)
\right\|^2
\\
&\leq
\mathbb E
\left[
\left\|
G-\overline G(X_t)
\right\|^2
\mid X_t
\right]
\mathbb E
\left[
\left|
\left(
D-\overline D(X_t)
\right)^\top
\overline G(X_t)
\right|^2
\mid X_t
\right].
\end{align*}
For the second factor,
\begin{align*}
\left|
\left(
D-\overline D(X_t)
\right)^\top
\overline G(X_t)
\right|^2
&\leq
\left\|
D-\overline D(X_t)
\right\|^2
\left\|
\overline G(X_t)
\right\|^2.
\end{align*}
Therefore,
\begin{align*}
&\left\|
\operatorname{Cov}
\left(
D,G\mid X_t
\right)^\top
\overline G(X_t)
\right\|^2
\\
&\leq
\mathbb E
\left[
\left\|
G-\overline G(X_t)
\right\|^2
\mid X_t
\right]
\mathbb E
\left[
\left\|
D-\overline D(X_t)
\right\|^2
\mid X_t
\right]
\left\|
\overline G(X_t)
\right\|^2.
\end{align*}
Using the conditional variance inequality,
\begin{align*}
\mathbb E
\left[
\left\|
D-\overline D(X_t)
\right\|^2
\mid X_t
\right]
&\leq
\mathbb E
\left[
\|D\|^2
\mid X_t
\right],
\\
\mathbb E
\left[
\left\|
G-\overline G(X_t)
\right\|^2
\mid X_t
\right]
&\leq
\mathbb E
\left[
\|G\|^2
\mid X_t
\right],
\end{align*}
and Jensen's inequality, $\left\| \overline G(X_t) \right\|^2 = \left\| \mathbb E \left[ G\mid X_t \right] \right\|^2 \leq \mathbb E \left[ \|G\|^2 \mid X_t \right]$, we get
\begin{align*}
&\left\|
\operatorname{Cov}
\left(
D,G\mid X_t
\right)^\top
\overline G(X_t)
\right\|^2 \leq
\mathbb E
\left[
\|D\|^2
\mid X_t
\right]
\left(
\mathbb E
\left[
\|G\|^2
\mid X_t
\right]
\right)^2.
\end{align*}
Taking expectation gives
\begin{align*}
&\mathbb E
\left[
\left\|
\operatorname{Cov}
\left(
D,G\mid X_t
\right)^\top
\mathbb E[G\mid X_t]
\right\|^2
\right]
\leq
\mathbb E
\left[
\mathbb E
\left[
\|D\|^2
\mid X_t
\right]
\left(
\mathbb E
\left[
\|G\|^2
\mid X_t
\right]
\right)^2
\right].
\end{align*}
By Cauchy--Schwarz,
\begin{align*}
&\mathbb E
\left[
\mathbb E
\left[
\|D\|^2
\mid X_t
\right]
\left(
\mathbb E
\left[
\|G\|^2
\mid X_t
\right]
\right)^2
\right]
\\
&\leq
\left(
\mathbb E
\left[
\left(
\mathbb E
\left[
\|D\|^2
\mid X_t
\right]
\right)^2
\right]
\right)^{1/2}
\left(
\mathbb E
\left[
\left(
\mathbb E
\left[
\|G\|^2
\mid X_t
\right]
\right)^4
\right]
\right)^{1/2}.
\end{align*}
By Jensen's inequality,
\begin{align*}
\mathbb E
\left[
\left(
\mathbb E
\left[
\|D\|^2
\mid X_t
\right]
\right)^2
\right]
&\leq
\mathbb E
\left[
\|D\|^4
\right],
\\
\mathbb E
\left[
\left(
\mathbb E
\left[
\|G\|^2
\mid X_t
\right]
\right)^4
\right]
&\leq
\mathbb E
\left[
\|G\|^8
\right].
\end{align*}
Since \(G\sim {\mathcal N}(0,I_d)\),
\begin{align*}
\mathbb E
\left[
\|G\|^8
\right]
&=
d(d+2)(d+4)(d+6).
\end{align*}
Therefore,
\begin{align*}
&\mathbb E
\left[
\left\|
\operatorname{Cov}
\left(
D,G\mid X_t
\right)^\top
\mathbb E[G\mid X_t]
\right\|^2
\right]
\\
&\leq
\sqrt{
d(d+2)(d+4)(d+6)
}
\left(
\mathbb E
\left[
\|D\|^4
\right]
\right)^{1/2}.
\end{align*}
Finally, recalling the factor \(1/\sigma^4\), we conclude that
\begin{align*}
\mathbb E
\left[
\left\|
(D_xv_t(X_t))^\top s_t(X_t)
\right\|^2
\right]
&\leq
\frac{
\sqrt{
d(d+2)(d+4)(d+6)
}
}{
\sigma^4
}
\left(
\mathbb E
\left[
\|D\|^4
\right]
\right)^{1/2}.
\end{align*}
The non-transposed orientation requires a separate argument.  Write
\(C_t=\operatorname{Cov}(D,G\mid X_t)\) and
\(\bar G_t=\mathbb E[G\mid X_t]\).  Conditional Cauchy--Schwarz gives
\[
\|C_t\|_{\mathrm F}^2
\leq
\mathbb E[\|D-\mathbb E[D\mid X_t]\|^2\mid X_t]\,
\mathbb E[\|G-\bar G_t\|^2\mid X_t].
\]
Consequently,
\[
\|C_t\bar G_t\|^2
\leq
\mathbb E[\|D\|^2\mid X_t]\,
\bigl(\mathbb E[\|G\|^2\mid X_t]\bigr)^2,
\]
where we used conditional variance contraction and
\(\|\bar G_t\|^2\leq\mathbb E[\|G\|^2\mid X_t]\).  The same outer
Cauchy--Schwarz and Jensen calculation as above therefore proves
\[
\mathbb E\|(D_xv_t(X_t))s_t(X_t)\|^2
\leq
\frac{\sqrt{d(d+2)(d+4)(d+6)}}{\sigma^4}
\left(\mathbb E\|D\|^4\right)^{1/2}.
\]

We next prove the pure score bounds.
By Lemma \ref{Lemma_Derivative_Identity},
\begin{align*}
D_xs_t(X_t)
&=
\frac{1}{\sigma^2}
\left(
\operatorname{Cov}
\left(
G,G\mid X_t
\right)
-
I_d
\right).
\end{align*}
Using \(\|A-B\|_{\mathrm F}^2\leq 2\|A\|_{\mathrm F}^2+2\|B\|_{\mathrm F}^2\),
\begin{align*}
\left\|
D_xs_t(X_t)
\right\|_{\mathrm F}^2
&\leq
\frac{2}{\sigma^4}
\left\|
\operatorname{Cov}
\left(
G,G\mid X_t
\right)
\right\|_{\mathrm F}^2
+
\frac{2}{\sigma^4}
\left\|
I_d
\right\|_{\mathrm F}^2.
\end{align*}
Since \(\|I_d\|_{\mathrm F}^2=d\), and since the conditional covariance matrix is positive semidefinite,
\begin{align*}
\left\|
\operatorname{Cov}
\left(
G,G\mid X_t
\right)
\right\|_{\mathrm F}^2
&\leq
\left(
\operatorname{tr}
\operatorname{Cov}
\left(
G,G\mid X_t
\right)
\right)^2
=
\left(
\mathbb E
\left[
\left\|
G-\mathbb E[G\mid X_t]
\right\|^2
\mid X_t
\right]
\right)^2
\\
&=
\left(
\mathbb E
\left[
\|G\|^2
\mid X_t
\right]
-
\left\|
\mathbb E[G\mid X_t]
\right\|^2
\right)^2
\\
&\leq
\left(
\mathbb E
\left[
\|G\|^2
\mid X_t
\right]
\right)^2.
\end{align*}
Taking expectation and using Jensen's inequality gives
\begin{align*}
\mathbb E
\left[
\left\|
\operatorname{Cov}
\left(
G,G\mid X_t
\right)
\right\|_{\mathrm F}^2
\right]
&\leq
\mathbb E
\left[
\|G\|^4
\right]
\\
&=
d(d+2).
\end{align*}
Therefore,
\begin{align*}
\mathbb E
\left[
\left\|
D_xs_t(X_t)
\right\|_{\mathrm F}^2
\right]
&\leq
\frac{
2d(d+2)+2d
}{
\sigma^4
}.
\end{align*}

For the fourth moment of \(D_xs_t(X_t)\), use \(\|A-B\|_{\mathrm F}^4\leq 8\|A\|_{\mathrm F}^4+8\|B\|_{\mathrm F}^4\):
\begin{align*}
\left\|
D_xs_t(X_t)
\right\|_{\mathrm F}^4
&\leq
\frac{8}{\sigma^8}
\left\|
\operatorname{Cov}
\left(
G,G\mid X_t
\right)
\right\|_{\mathrm F}^4
+
\frac{8}{\sigma^8}
\left\|
I_d
\right\|_{\mathrm F}^4.
\end{align*}
Since \(\|I_d\|_{\mathrm F}^4=d^2\), and since the conditional covariance matrix is positive semidefinite,
\begin{align*}
\left\|
\operatorname{Cov}
\left(
G,G\mid X_t
\right)
\right\|_{\mathrm F}^4 \leq
\left(
\operatorname{tr}
\operatorname{Cov}
\left(
G,G\mid X_t
\right)
\right)^4 \leq
\left(
\mathbb E
\left[
\|G\|^2\mid X_t
\right]
\right)^4.
\end{align*}
Taking expectation and applying conditional Jensen only at this stage gives
\begin{align*}
\mathbb E
\left[
\left\|
\operatorname{Cov}(G,G\mid X_t)
\right\|_{\mathrm F}^4
\right]
&\leq
\mathbb E
\left[
\left(
\mathbb E[\|G\|^2\mid X_t]
\right)^4
\right]\\
&\leq
\mathbb E\|G\|^8
=d(d+2)(d+4)(d+6).
\end{align*}
Therefore,
\begin{align*}
\mathbb E
\left[
\left\|
D_xs_t(X_t)
\right\|_{\mathrm F}^4
\right]
&\leq
\frac{
8d(d+2)(d+4)(d+6)+8d^2
}{
\sigma^8
}.
\end{align*}

The fourth moment of the score follows directly from
\(s_t(X_t)=-\sigma^{-1}\mathbb E[G\mid X_t]\):
\begin{align*}
\mathbb E
\left[
\left\|
s_t(X_t)
\right\|^4
\right]
&=
\frac{1}{\sigma^4}
\mathbb E\left\|\mathbb E[G\mid X_t]\right\|^4\\
&\leq
\frac{1}{\sigma^4}
\mathbb E\!\left[\mathbb E[\|G\|^4\mid X_t]\right]
=
\frac{d(d+2)}{\sigma^4}.
\end{align*}
Hence, by Cauchy--Schwarz,
\begin{align*}
&\mathbb E
\left[
\left\|
D_xs_t(X_t)s_t(X_t)
\right\|^2
\right]
\\
&\leq
\left(
\mathbb E
\left[
\left\|
D_xs_t(X_t)
\right\|_{\mathrm F}^4
\right]
\right)^{1/2}
\left(
\mathbb E
\left[
\left\|
s_t(X_t)
\right\|^4
\right]
\right)^{1/2}
\\
&\leq
\frac{
\sqrt{
\left(
8d(d+2)(d+4)(d+6)+8d^2
\right)d(d+2)
}
}{
\sigma^6
}.
\end{align*}

Finally, 
Lemma \ref{Lemma_Derivative_Identity} gives
\begin{align*}
\Delta_xs_t(X_t)
&=
-\frac{1}{\sigma^3}
\mathbb E
\left[
\left(
G-\mathbb E[G\mid X_t]
\right)
\left\|
G-\mathbb E[G\mid X_t]
\right\|^2
\mid X_t
\right].
\end{align*}
By Jensen's inequality,
\begin{align*}
\left\|
\Delta_xs_t(X_t)
\right\|^2
&\leq
\frac{1}{\sigma^6}
\mathbb E
\left[
\left\|
G-\mathbb E[G\mid X_t]
\right\|^6
\mid X_t
\right].
\end{align*}
Taking expectation gives
\begin{align*}
\mathbb E
\left[
\left\|
\Delta_xs_t(X_t)
\right\|^2
\right]
&\leq
\frac{1}{\sigma^6}
\mathbb E
\left[
\left\|
G-\mathbb E[G\mid X_t]
\right\|^6
\right].
\end{align*}
Using \(\|u-w\|^6\leq 2^5\|u\|^6+2^5\|w\|^6\), and using Jensen's inequality,
\begin{align*}
\mathbb E
\left[
\left\|
G-\mathbb E[G\mid X_t]
\right\|^6
\right]
&\leq
2^6
\mathbb E
\left[
\|G\|^6
\right].
\end{align*}
Since \(G\sim {\mathcal N}(0,I_d)\),
\begin{align*}
\mathbb E
\left[
\|G\|^6
\right]
&=
d(d+2)(d+4).
\end{align*}
Therefore,
\begin{align*}
\mathbb E
\left[
\left\|
\Delta_xs_t(X_t)
\right\|^2
\right]
&\leq
\frac{
64d(d+2)(d+4)
}{
\sigma^6
}.
\end{align*}
This completes the proof.
\end{proof}

\begin{lemma}[Integrated Derivative Bounds in Terms of Displacement Moments]
\label{Lemma_Integrated_Derivative_Bounds}
Let \(Y_0\), \(Y_1\), \(D\), \(S_t\), \(G\), \(X_t\), \(q_t\), and
\(v_t\) be as in Lemma \ref{Lemma_Derivative_Identity}. Assume that
\begin{align*}
\mathbb E
\left[
\|Y_1-Y_0\|^5
\right]
&<
\infty,
\end{align*}
where the expectation is taken under the joint law of $(Y_0,Y_1)$. Then, for every $t\in[0,T]$,
\begin{align*}
\mathbb E
\left[
\left\|
D_xv_t(X_t)
\right\|_{\mathrm F}^2
\right]
&\leq
\frac{
\sqrt{d(d+2)}
}{
\sigma^2
}
\left(
\mathbb E
\left[
\|Y_1-Y_0\|^4
\right]
\right)^{1/2},
\\
\mathbb E
\left[
\left\|
\partial_t v_t(X_t)
\right\|^2
\right]
&\leq
\frac{\sqrt{d(d+2)}}{
\sigma^2
}
\mathbb E
\left[
\|Y_1-Y_0\|^4
\right],
\\
\mathbb E
\left[
\left\|
\Delta_x v_t(X_t)
\right\|^2
\right]
&\leq
\frac{
16\sqrt{d(d+2)(d+4)(d+6)}
}{
\sigma^4
}
\left(
\mathbb E
\left[
\|Y_1-Y_0\|^4
\right]
\right)^{1/2}.
\end{align*}
Moreover, 
\begin{align*}
\mathbb E
\left[
\left\|
D_xv_t(X_t)b_t(X_t)
\right\|^2
\right] &\leq
\frac{2}{\sigma^2}
\left(
\mathbb E
\left[
\|Y_1-Y_0\|^5
\right]
\right)^{4/5}
\left(
d(d+2)(d+4)(d+6)(d+8)
\right)^{1/5}
\\
&\quad+
\frac{\tau^4}{2\sigma^4}
\sqrt{d(d+2)(d+4)(d+6)}
\left(
\mathbb E
\left[
\|Y_1-Y_0\|^4
\right]
\right)^{1/2}.
\end{align*}
All expectations in these bounds are taken under the joint law of $(Y_0,Y_1,G)$.
\end{lemma}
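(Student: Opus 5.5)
Every bound comes from the posterior identities of Lemma~\ref{Lemma_Derivative_Identity} and Lemma~\ref{Lemma_Score_Identity}. Each is followed by conditional Cauchy--Schwarz, conditional variance contraction, an outer H\"older step, and explicit Gaussian moments.

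\emph{Jacobian of $v_t$.} We write $D_xv_t(X_t)=-\sigma^{-1}\operatorname{Cov}(D,G\mid X_t)$. Conditional Cauchy--Schwarz gives
\[
\|\operatorname{Cov}(D,G\mid X_t)\|_{\mathrm F}^2\le \mathbb E[\|D\|^2\mid X_t]\,\mathbb E[\|G\|^2\mid X_t].
\]
Taking expectations and applying Cauchy--Schwarz with Jensen yields $(\mathbb E\|D\|^4)^{1/2}(\mathbb E\|G\|^4)^{1/2}$, and $\mathbb E\|G\|^4=d(d+2)$.

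\emph{Time derivative.} We use $\partial_tv_t=\sigma^{-1}\operatorname{Cov}(D,D^\top G\mid X_t)$. Conditional Cauchy--Schwarz bounds its squared norm by $\mathbb E[\|D\|^2\mid X_t]\,\mathbb E[(D^\top G)^2\mid X_t]$. Then $(D^\top G)^2\le\|D\|^2\|G\|^2$, and a further Cauchy--Schwarz inside the conditional expectation gives
\[
\mathbb E[\|D\|^2\|G\|^2\mid X_t]\le \mathbb E[\|D\|^4\mid X_t]^{1/2}\,\mathbb E[\|G\|^4\mid X_t]^{1/2}.
\]
Combining with $\mathbb E[\|D\|^2\mid X_t]\le \mathbb E[\|D\|^4\mid X_t]^{1/2}$ leaves $\mathbb E[\|D\|^4\mid X_t]\,\mathbb E[\|G\|^4\mid X_t]^{1/2}$. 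The outer expectation of this product is the delicate point. A plain Cauchy--Schwarz would need an eighth moment of $D$, so instead I use the cruder route: $D$ is independent of $G$, so $\mathbb E[\|D\|^4\|G\|^4]=\mathbb E\|D\|^4\,\mathbb E\|G\|^4$ unconditionally. To exploit this I would bound more carefully, keeping the factor $\sigma^{-2}$ and organizing the conditional estimates so that only $\mathbb E[\|D\|^4\|G\|^2\cdots]$-type joint moments appear. Independence then factorizes them into $\mathbb E\|D\|^4\sqrt{d(d+2)}$, with $\sqrt{d(d+2)}$ coming from $\mathbb E\|G\|^4$.

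\emph{Laplacian of $v_t$.} This is immediate from Lemma~\ref{Lemma_Integrated_Score_Bounds}. Since $\|\Delta_xv_t\|^2=\sum_i|\sum_j\partial_{jj}v_{t,i}|^2$, I repeat the Hessian argument directly: conditional Cauchy--Schwarz on $(D-\bar D)\|G-\bar G\|^2$ gives the same bound $16\sqrt{d(d+2)(d+4)(d+6)}\,\sigma^{-4}(\mathbb E\|D\|^4)^{1/2}$, with no extra factor of $d$.

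\emph{Mixed term $D_xv_t\,b_t$.} I split $b_t=v_t+\tfrac{\tau^2}{2}s_t$ and use $\|a+b\|^2\le2\|a\|^2+2\|b\|^2$. The score part is $2\cdot\tfrac{\tau^4}{4}\,\mathbb E\|D_xv_t\,s_t\|^2$, which Lemma~\ref{Lemma_Integrated_Score_Bounds} (non-transposed orientation) bounds by the stated $\tau^4$ term. For the velocity part, I bound
\[
\|\operatorname{Cov}(D,G\mid X_t)\,\bar D\|^2\le \mathbb E[\|D\|^2\mid X_t]\,\mathbb E[\|G\|^2\mid X_t]\,\|\bar D\|^2\le (\mathbb E[\|D\|^2\mid X_t])^2\,\mathbb E[\|G\|^2\mid X_t].
\]
Then I apply H\"older with exponents $5/4$ and $5$:
\[
\mathbb E\bigl[(\mathbb E[\|D\|^2\mid X_t])^2\,\mathbb E[\|G\|^2\mid X_t]\bigr]\le (\mathbb E\|D\|^5)^{4/5}(\mathbb E\|G\|^{10})^{1/5},
\]
using Jensen. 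Here $\mathbb E\|G\|^{10}=d(d+2)(d+4)(d+6)(d+8)$, which with the factor $2/\sigma^2$ gives the first term.

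The main obstacle is the $\partial_tv_t$ bound. It must be linear in $\mathbb E\|D\|^4$ without invoking higher moments of $D$, and it requires arranging the conditional inequalities so that the final unconditional expectation factorizes by independence of $D$ and $G$, rather than through a lossy outer Cauchy--Schwarz.
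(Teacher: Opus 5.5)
Your Jacobian, Laplacian, and $D_xv_t\,b_t$ arguments match the paper's: conditional Cauchy--Schwarz, variance contraction, an outer Cauchy--Schwarz or H\"older step with exponents $5/4$ and $5$, then Gaussian moments. The gap is in the $\partial_tv_t$ bound, and your own inner step creates it.

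Once you split the joint conditional moment as $\mathbb E[\|D\|^2\|G\|^2\mid X_t]\le\mathbb E[\|D\|^4\mid X_t]^{1/2}\,\mathbb E[\|G\|^4\mid X_t]^{1/2}$, the tower property leaves $\mathbb E\bigl[\|D\|^4\,\mathbb E[\|G\|^4\mid X_t]^{1/2}\bigr]$. Here $D$ and the $G$-factor are coupled through $X_t$, so independence of $D$ and $G$ can no longer be used. From that point, Cauchy--Schwarz needs $\mathbb E\|D\|^8$. H\"older with the available fifth moment gives at best $(\mathbb E\|D\|^5)^{4/5}(\mathbb E\|G\|^{10})^{1/5}$, not the claimed $\sqrt{d(d+2)}\,\mathbb E\|D\|^4$. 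The eighth-moment obstruction you describe is an artifact of that split. Your proposed remedy, reorganizing so that only joint moments appear, is not yet an argument.

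The fix is simply not to split. Start from
\[
\|\operatorname{Cov}(D,D^\top G\mid X_t)\|^2\le\mathbb E[\|D\|^2\mid X_t]\,\mathbb E[\|D\|^2\|G\|^2\mid X_t].
\]
Take expectations, apply the outer Cauchy--Schwarz to this product of two conditional expectations, and apply conditional Jensen to each factor:
\[
\mathbb E\bigl[\mathbb E[\|D\|^2\mid X_t]\,\mathbb E[\|D\|^2\|G\|^2\mid X_t]\bigr]
\le\bigl(\mathbb E\|D\|^4\bigr)^{1/2}\bigl(\mathbb E[\|D\|^4\|G\|^4]\bigr)^{1/2}
=\mathbb E\|D\|^4\,\bigl(\mathbb E\|G\|^4\bigr)^{1/2}.
\]
Independence of $D$ and $G$ enters only in the last equality, for an unconditional joint moment, which is exactly where it is valid. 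Multiplying by $\sigma^{-2}$ and using $\mathbb E\|G\|^4=d(d+2)$ gives the stated bound using only fourth moments of $D$. This is the paper's route.
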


\begin{proof}
We first bound the Jacobian. By Lemma \ref{Lemma_Derivative_Identity}, 
\begin{align*}
D_xv_t(X_t)
&=
-\frac{1}{\sigma}
\operatorname{Cov}
\left(
D,G\mid X_t
\right).
\end{align*}
By Cauchy-Schwarz, $\left\|\operatorname{Cov} \left(D,G\mid X_t \right) \right\|_{\mathrm F}^2 \leq \mathbb E \left[ \|D\|^2 \mid X_t \right] \mathbb E \left[\|G\|^2 \mid X_t \right]$. 
Taking expectation over $X_t$ and apply Cauchy-Schwarz, we get 
\begin{align*}
\mathbb E
\left[
\left\|
\operatorname{Cov}
\left(
D,G\mid X_t
\right)
\right\|_{\mathrm F}^2
\right] &\leq \mathbb E
\left[
\mathbb E
\left[
\|D\|^2
\mid X_t
\right]
\mathbb E
\left[
\|G\|^2
\mid X_t
\right]
\right]
\\
&\leq
\left(
\mathbb E
\left[
\left(
\mathbb E
\left[
\|D\|^2
\mid X_t
\right]
\right)^2
\right]
\right)^{1/2}
\left(
\mathbb E
\left[
\left(
\mathbb E
\left[
\|G\|^2
\mid X_t
\right]
\right)^2
\right]
\right)^{1/2} \\
& \leq \left(\mathbb E
\left[
\|D\|^4
\right]\right)^{1/2} \left(\mathbb E \left[ \|G\|^4 \right]\right)^{1/2}.
\end{align*}
Since $G\sim {\mathcal N}(0,I_d)$, $\mathbb E\left[\|G\|^4\right] = d(d+2)$. Therefore,
\begin{align*}
\mathbb E
\left[
\left\|
D_xv_t(X_t)
\right\|_{\mathrm F}^2
\right] =
\frac{1}{\sigma^2}
\mathbb E
\left[
\left\|
\operatorname{Cov}
\left(
D,G\mid X_t
\right)
\right\|_{\mathrm F}^2
\right] \leq
\frac{
\sqrt{d(d+2)}
}{
\sigma^2
}
\left(
\mathbb E
\left[
\|D\|^4
\right]
\right)^{1/2}.
\end{align*}

For time derivative, by Lemma \ref{Lemma_Derivative_Identity}, 
\begin{align*}
\partial_t v_t(X_t)
&=
\frac{1}{\sigma}
\operatorname{Cov}
\left(
D,D^\top G\mid X_t
\right).
\end{align*}
Similarly, apply Cauchy-Schwarz, we have 
\begin{align*}
    \left\| \operatorname{Cov} \left( D,D^\top G\mid X_t \right) \right\|^2 \leq \mathbb E
\left[
\|D\|^2
\mid X_t
\right]
\mathbb E
\left[
|D^\top G|^2
\mid X_t
\right].
\end{align*}
Take expectation and apply Cauchy-Schwarz, we have 
\begin{align*}
    \mathbb E \left\| \operatorname{Cov} \left( D,D^\top G\mid X_t \right) \right\|^2 &\leq \mathbb E \left[  (\mathbb E
\left[
\|D\|^2
\mid X_t
\right])^{2} \right]^{\frac{1}{2}} \mathbb E \left[
(\mathbb E
\left[
\|D\|^2 \|G\|^2
\mid X_t
\right])^{2} \right]^{\frac{1}{2}} \\
&\leq \mathbb E 
\left[
\|D\|^4
\right]^{\frac{1}{2}} \mathbb E\left[
\|D\|^4 \|G\|^4\right]^{\frac{1}{2}}
= \mathbb E 
\left[
\|D\|^4
\right] \mathbb E\left[
\|G\|^4\right]^{\frac{1}{2}},
\end{align*}
where note that $D$ and $G$ are independent. Therefore,
\begin{align*}
\mathbb E
\left[
\left\|
\partial_t v_t(X_t)
\right\|^2
\right]
&\leq
\frac{\sqrt{d(d+2)}}{\sigma^2}
\mathbb E
\left[
\|D\|^4
\right].
\end{align*}

Now consider the Laplacian.
By Lemma \ref{Lemma_Derivative_Identity},
\begin{align*}
\Delta_xv_t(X_t)
&=
\frac{1}{\sigma^2}
\mathbb E
\left[
\left(
D-\mathbb E[D\mid X_t]
\right)
\left\|
G-\mathbb E[G\mid X_t]
\right\|^2
\mid X_t
\right].
\end{align*}
Conditional Cauchy--Schwarz gives
\begin{align*}
\left\|
\Delta_xv_t(X_t)
\right\|^2 \leq
\frac{1}{\sigma^4}
\mathbb E
\left[
\left\|
D-\mathbb E[D\mid X_t]
\right\|^2
\mid X_t
\right]
\mathbb E
\left[
\left\|
G-\mathbb E[G\mid X_t]
\right\|^4
\mid X_t
\right].
\end{align*}
Following the same proof as we did for $D_{x}^{2} v_{t}$, we obtain 
\begin{align*}
\mathbb E
\left[
\left\|
\Delta_x v_t(X_t)
\right\|^2
\right]
&\leq
\frac{
16\sqrt{d(d+2)(d+4)(d+6)}
}{
\sigma^4
}
\left(
\mathbb E
\left[
\|D\|^4
\right]
\right)^{1/2}.
\end{align*}

We now bound the term
\begin{align*}
\mathbb E
\left[
\left\|
D_xv_t(X_t)b_t(X_t)
\right\|^2
\right].
\end{align*}
Recall that \(s_t(X_t)=-(1/\sigma)\mathbb E[G\mid X_t]\) and $b_t(X_t)=v_t(X_t)+\frac{\tau^2}{2}s_t(X_t)$.
Therefore,
\begin{align*}
D_xv_t(X_t)b_t(X_t)
&=
D_xv_t(X_t)v_t(X_t)
+
\frac{\tau^2}{2}D_xv_t(X_t)s_t(X_t).
\end{align*}
Using \(\|a+b\|^2\leq 2\|a\|^2+2\|b\|^2\), we get
\begin{align*}
\left\|
D_xv_t(X_t)b_t(X_t)
\right\|^2
&\leq
2
\left\|
D_xv_t(X_t)v_t(X_t)
\right\|^2
+
\frac{\tau^4}{2}
\left\|
D_xv_t(X_t)s_t(X_t)
\right\|^2.
\end{align*}
Thus,
\begin{align*}
&\mathbb E
\left[
\left\|
D_xv_t(X_t)b_t(X_t)
\right\|^2
\right]
\\
&\leq
2
\mathbb E
\left[
\left\|
D_xv_t(X_t)v_t(X_t)
\right\|^2
\right]
+
\frac{\tau^4}{2}
\mathbb E
\left[
\left\|
D_xv_t(X_t)s_t(X_t)
\right\|^2
\right].
\end{align*}

We first bound the velocity part. From the posterior derivative identity,
\begin{align*}
D_xv_t(X_t)
&=
-\frac{1}{\sigma}
\operatorname{Cov}
\left(
D,G\mid X_t
\right).
\end{align*}
Since \(v_t(X_t)=\mathbb E[D\mid X_t]\), we have
\begin{align*}
D_xv_t(X_t)v_t(X_t)
&=
-\frac{1}{\sigma}
\operatorname{Cov}
\left(
D,G\mid X_t
\right)
\mathbb E
\left[
D\mid X_t
\right].
\end{align*}
For readability, define
\begin{align*}
\overline D(X_t)
&=
\mathbb E
\left[
D\mid X_t
\right],
\\
\overline G(X_t)
&=
\mathbb E
\left[
G\mid X_t
\right].
\end{align*}
By the definition of conditional covariance,
\begin{align*}
\operatorname{Cov}
\left(
D,G\mid X_t
\right)
\overline D(X_t)
&=
\mathbb E
\left[
\left(
D-\overline D(X_t)
\right)
\left(
G-\overline G(X_t)
\right)^\top
\mid X_t
\right]
\overline D(X_t)
\\
&=
\mathbb E
\left[
\left(
D-\overline D(X_t)
\right)
\left(
G-\overline G(X_t)
\right)^\top
\overline D(X_t)
\mid X_t
\right].
\end{align*}
Since \(\overline D(X_t)\) is fixed after conditioning on \(X_t\), conditional Cauchy--Schwarz gives
\begin{align*}
&\left\|
\operatorname{Cov}
\left(
D,G\mid X_t
\right)
\overline D(X_t)
\right\|^2
\\
&\leq
\mathbb E
\left[
\left\|
D-\overline D(X_t)
\right\|^2
\mid X_t
\right]
\mathbb E
\left[
\left|
\left(
G-\overline G(X_t)
\right)^\top
\overline D(X_t)
\right|^2
\mid X_t
\right].
\end{align*}
For the second conditional expectation,
\begin{align*}
\left|
\left(
G-\overline G(X_t)
\right)^\top
\overline D(X_t)
\right|^2
&\leq
\left\|
G-\overline G(X_t)
\right\|^2
\left\|
\overline D(X_t)
\right\|^2.
\end{align*}
Hence
\begin{align*}
&\left\|
\operatorname{Cov}
\left(
D,G\mid X_t
\right)
\overline D(X_t)
\right\|^2
\\
&\leq
\mathbb E
\left[
\left\|
D-\overline D(X_t)
\right\|^2
\mid X_t
\right]
\mathbb E
\left[
\left\|
G-\overline G(X_t)
\right\|^2
\mid X_t
\right]
\left\|
\overline D(X_t)
\right\|^2.
\end{align*}
Using the conditional variance inequalities,
\begin{align*}
\mathbb E
\left[
\left\|
D-\overline D(X_t)
\right\|^2
\mid X_t
\right]
&\leq
\mathbb E
\left[
\|D\|^2
\mid X_t
\right],
\\
\mathbb E
\left[
\left\|
G-\overline G(X_t)
\right\|^2
\mid X_t
\right]
&\leq
\mathbb E
\left[
\|G\|^2
\mid X_t
\right],
\end{align*}
and Jensen's inequality,
\begin{align*}
\left\|
\overline D(X_t)
\right\|^2
&=
\left\|
\mathbb E
\left[
D\mid X_t
\right]
\right\|^2
\\
&\leq
\mathbb E
\left[
\|D\|^2
\mid X_t
\right],
\end{align*}
we obtain
\begin{align*}
\left\|
\operatorname{Cov}
\left(
D,G\mid X_t
\right)
\overline D(X_t)
\right\|^2
&\leq
\left(
\mathbb E
\left[
\|D\|^2
\mid X_t
\right]
\right)^2
\mathbb E
\left[
\|G\|^2
\mid X_t
\right].
\end{align*}
Therefore,
\begin{align*}
&\mathbb E
\left[
\left\|
D_xv_t(X_t)v_t(X_t)
\right\|^2
\right]
\\
&\leq
\frac{1}{\sigma^2}
\mathbb E
\left[
\left(
\mathbb E
\left[
\|D\|^2
\mid X_t
\right]
\right)^2
\mathbb E
\left[
\|G\|^2
\mid X_t
\right]
\right].
\end{align*}
Apply Hölder's inequality with conjugate exponents \(5/4\) and \(5\). Then
\begin{align*}
&\mathbb E
\left[
\left(
\mathbb E
\left[
\|D\|^2
\mid X_t
\right]
\right)^2
\mathbb E
\left[
\|G\|^2
\mid X_t
\right]
\right]
\\
&\leq
\left(
\mathbb E
\left[
\left(
\mathbb E
\left[
\|D\|^2
\mid X_t
\right]
\right)^{5/2}
\right]
\right)^{4/5}
\left(
\mathbb E
\left[
\left(
\mathbb E
\left[
\|G\|^2
\mid X_t
\right]
\right)^5
\right]
\right)^{1/5}.
\end{align*}
For the first factor, Jensen's inequality gives
\begin{align*}
\mathbb E
\left[
\left(
\mathbb E
\left[
\|D\|^2
\mid X_t
\right]
\right)^{5/2}
\right]
&\leq
\mathbb E
\left[
\|D\|^5
\right].
\end{align*}
For the second factor, Jensen's inequality gives
\begin{align*}
\mathbb E
\left[
\left(
\mathbb E
\left[
\|G\|^2
\mid X_t
\right]
\right)^5
\right]
&\leq
\mathbb E
\left[
\|G\|^{10}
\right].
\end{align*}
Since \(G\sim {\mathcal N}(0,I_d)\),
\begin{align*}
\mathbb E
\left[
\|G\|^{10}
\right]
&=
d(d+2)(d+4)(d+6)(d+8).
\end{align*}
Thus,
\begin{align*}
\mathbb E
\left[
\left\|
D_xv_t(X_t)v_t(X_t)
\right\|^2
\right]
&\leq
\frac{1}{\sigma^2}
\left(
\mathbb E
\left[
\|D\|^5
\right]
\right)^{4/5}
\\
&\quad\cdot
\left(
d(d+2)(d+4)(d+6)(d+8)
\right)^{1/5}.
\end{align*}

Also recall 
\begin{align*}
\mathbb E
\left[
\left\|
D_xv_t(X_t)s_t(X_t)
\right\|^2
\right]
&\leq
\frac{
\sqrt{d(d+2)(d+4)(d+6)}
}{
\sigma^4
}
\left(
\mathbb E
\left[
\|D\|^4
\right]
\right)^{1/2}.
\end{align*}

Combining the velocity part and the score part, we obtain
\begin{align*}
&\mathbb E
\left[
\left\|
D_xv_t(X_t)b_t(X_t)
\right\|^2
\right]
\\
&\leq
\frac{2}{\sigma^2}
\left(
\mathbb E
\left[
\|D\|^5
\right]
\right)^{4/5}
\left(
d(d+2)(d+4)(d+6)(d+8)
\right)^{1/5}
\\
&\quad+
\frac{\tau^4}{2\sigma^4}
\sqrt{d(d+2)(d+4)(d+6)}
\left(
\mathbb E
\left[
\|D\|^4
\right]
\right)^{1/2}.
\end{align*}
Since \(D=Y_1-Y_0\), this can be written as
\begin{align*}
&\mathbb E
\left[
\left\|
D_xv_t(X_t)b_t(X_t)
\right\|^2
\right]
\\
&\leq
\frac{2}{\sigma^2}
\left(
\mathbb E
\left[
\|Y_1-Y_0\|^5
\right]
\right)^{4/5}
\left(
d(d+2)(d+4)(d+6)(d+8)
\right)^{1/5}
\\
&\quad+
\frac{\tau^4}{2\sigma^4}
\sqrt{d(d+2)(d+4)(d+6)}
\left(
\mathbb E
\left[
\|Y_1-Y_0\|^4
\right]
\right)^{1/2}.
\end{align*}

\end{proof}

\subsection{Euler Increment Bounds}

\begin{lemma}[It\^{o} Control of the Euler Velocity Term]\label{Lemma_Vel_regularity_sum}
Let \(Y_0\), \(Y_1\), \(D\), \(S_t\), \(G\), \(X_t\), \(q_t\), \(v_t\),
and \(s_t\) be as in Lemma \ref{Lemma_Derivative_Identity}, assume
\(\mathbb E\|Y_1-Y_0\|^5<\infty\), and set
\(b_t=v_t+\frac{\tau^2}{2}s_t\).
Let $Z_t$ solve
\begin{align*}
dZ_t
&=
b_t(Z_t)dt+\tau dB_t,
\end{align*}
and assume that $Z_t\sim q_t$ for every $t\in[0,T]$.
Let \(0=t_0<t_1<\cdots<t_N=T\) be a uniform grid with step size
\(h\), so \(t_{k+1}-t_k=h\) and \(Nh=T\). Then
\begin{align*}
&\sum_{k=0}^{N-1}
\int_{t_k}^{t_{k+1}}
\mathbb E
\left[
\left\|
v_t(Z_t)-v_{t_k}(Z_{t_k})
\right\|^2
\right]dt
\\
&\leq
\frac{2T h^2}{3}
\Bigg[
\frac{3\sqrt{d(d+2)}}{\sigma^2}
\mathbb E
\left[
\|Y_1-Y_0\|^4
\right]
\\
&\quad+
\frac{6}{\sigma^2}
\left(
\mathbb E
\left[
\|Y_1-Y_0\|^5
\right]
\right)^{4/5}
\left(
d(d+2)(d+4)(d+6)(d+8)
\right)^{1/5}
\\
&\quad+
\frac{14\tau^4\sqrt{d(d+2)(d+4)(d+6)}}{\sigma^4}
\left(
\mathbb E
\left[
\|Y_1-Y_0\|^4
\right]
\right)^{1/2}
\Bigg]
\\
&\quad+
\frac{\tau^2T h\sqrt{d(d+2)}}{\sigma^2}
\left(
\mathbb E
\left[
\|Y_1-Y_0\|^4
\right]
\right)^{1/2}.
\end{align*}
All expectations in the displayed bound are taken under the coupling law of $(Y_0,Y_1)$, except the expectation on the left-hand side, which is taken under the law of the exact SDE path $(Z_t)_{0\leq t\leq T}$.
\end{lemma}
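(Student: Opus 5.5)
We apply It\^{o}'s formula to each coordinate of \(v_t\) along the exact SDE on \([t_k,t]\). The map \((t,x)\mapsto v_t(x)\) is smooth in \(x\), by the Gaussian-mixture representation and Lemma~\ref{Lemma_Derivative_Identity}, and differentiable in \(t\). This gives the decomposition
\begin{align*}
v_t(Z_t)-v_{t_k}(Z_{t_k})
=\int_{t_k}^{t}\Bigl(\partial_r v_r+D_xv_r\,b_r+\tfrac{\tau^2}{2}\Delta_xv_r\Bigr)(Z_r)\,dr
+\tau\int_{t_k}^{t}D_xv_r(Z_r)\,dB_r .
\end{align*}
We then use \(\|a+b\|^2\le 2\|a\|^2+2\|b\|^2\) to separate the drift part from the martingale part. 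This explains the two groups in the target bound: an \(h^2\) group coming from the drift, and a \(\tau^2 h\) group coming from the martingale.

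\emph{Drift term.} We split the drift into its three pieces using \(\|a+b+c\|^2\le 3(\|a\|^2+\|b\|^2+\|c\|^2)\). By Cauchy--Schwarz in time,
\[
\Bigl\|\int_{t_k}^t f_r\,dr\Bigr\|^2\le (t-t_k)\int_{t_k}^t\|f_r\|^2dr .
\]
Since \(Z_r\sim q_r\), each expectation equals an expectation at \(X_r\), which Lemma~\ref{Lemma_Integrated_Derivative_Bounds} bounds uniformly in \(r\):
\begin{itemize}
\item \(\mathbb E\|\partial_rv_r(X_r)\|^2\) is bounded by \(\sqrt{d(d+2)}\,\mathbb E\|D\|^4/\sigma^2\);
\item \(\mathbb E\|D_xv_rb_r\|^2\) is bounded by the fifth-moment term plus the \(\tau^4\) term;
\item \(\tfrac{\tau^4}{4}\mathbb E\|\Delta_xv_r\|^2\) is bounded by \(\tfrac{\tau^4}{4}\cdot 16\sqrt{d(d+2)(d+4)(d+6)}\,(\mathbb E\|D\|^4)^{1/2}/\sigma^4\).
\end{itemize}
Integrating gives \(\int_{t_k}^{t_{k+1}}(t-t_k)^2dt=h^3/3\), and summing over \(N=T/h\) intervals gives \(Th^2/3\). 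With the prefactor \(2\cdot 3=6\), this yields the displayed form \(\tfrac{2Th^2}{3}[3\cdot(\cdots)]\):
\begin{itemize}
\item the \(\partial_t\) term gives \(3\sqrt{d(d+2)}\,\mathbb E\|D\|^4/\sigma^2\);
\item the \(D_xv\,b\) term gives \(6\sigma^{-2}(\mathbb E\|D\|^5)^{4/5}(\cdots)^{1/5}\), together with a \(\tfrac{3}{2}\tau^4\) contribution;
\item the Laplacian gives a \(12\tau^4\) contribution.
\end{itemize}
Combining the \(\tau^4\) pieces, with slack absorbed to reach the stated coefficient \(14\), matches the bracket.

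\emph{Martingale term.} By It\^{o}'s isometry,
\[
\mathbb E\Bigl\|\tau\int_{t_k}^tD_xv_r\,dB_r\Bigr\|^2=\tau^2\int_{t_k}^t\mathbb E\|D_xv_r(Z_r)\|_{\mathrm F}^2dr .
\]
The Jacobian bound of Lemma~\ref{Lemma_Integrated_Derivative_Bounds} bounds this by
\[
\tau^2(t-t_k)\sqrt{d(d+2)}\,(\mathbb E\|D\|^4)^{1/2}/\sigma^2 .
\]
Integrating over the interval gives \(h^2/2\), and summing over intervals gives \(Th/2\). With the factor 2 this gives exactly \(\tau^2Th\sqrt{d(d+2)}(\mathbb E\|D\|^4)^{1/2}/\sigma^2\).

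\emph{Main obstacle.} The main obstacle is justifying It\^{o}'s formula and the isometry rigorously. The field \(v_t\) is not globally Lipschitz and \(Z\) is only a weak solution, so we need:
\begin{itemize}
\item \(C^{1,2}\) regularity of \((t,x)\mapsto v_t(x)\), which follows from differentiating under the Gaussian-mixture integral with \(\mathbb E\|D\|^5<\infty\);
\item square-integrability of \(D_xv_r(Z_r)\) on \([t_k,t_{k+1}]\), which makes the stochastic integral a true martingale. This follows from the \(L^2(q_r)\) Jacobian bound together with \(Z_r\sim q_r\).
\end{itemize}
If needed, we first apply It\^{o} up to a localizing stopping time, where the local martingale is bounded. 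We then pass to the limit: Fatou's lemma handles the left side, and monotone convergence handles the integrated bounds on the right.
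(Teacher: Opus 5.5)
Your proposal is correct and follows the paper's proof essentially line by line. The shared steps are: It\^{o}'s formula on $[t_k,t]$; the factor-2 split into drift and martingale parts; Cauchy--Schwarz in time together with It\^{o}'s isometry; the factor-3 split bounded through Lemma~\ref{Lemma_Integrated_Derivative_Bounds} using $Z_r\sim q_r$; and the same constant bookkeeping ($\tfrac32+12\le 14$ for the $\tau^4$ terms, and $2h^3/3$ and $\tau^2h^2$ per interval), with your localization remark adding rigor the paper omits when it applies It\^{o}'s formula and the isometry directly.
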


\begin{proof}
For $t\in[t_k,t_{k+1})$, It\^{o}'s formula applied to $v_t(Z_t)$ gives
\begin{align*}
v_t(Z_t)-v_{t_k}(Z_{t_k})
&=
\int_{t_k}^{t}
\left(
\partial_r v_r(Z_r)
+
D_xv_r(Z_r)b_r(Z_r)
+
\frac{\tau^2}{2}\Delta_xv_r(Z_r)
\right)dr
\\
&\quad+
\tau
\int_{t_k}^{t}
D_xv_r(Z_r)dB_r.
\end{align*}
Using $\|u+w\|^2\leq 2\|u\|^2+2\|w\|^2$, Jensen's inequality, and It\^{o} isometry, we get
\begin{align*}
&\mathbb E
\left[
\left\|
v_t(Z_t)-v_{t_k}(Z_{t_k})
\right\|^2
\right]
\\
&\leq
2(t-t_k)
\int_{t_k}^{t}
\mathbb E
\left[
\left\|
\partial_r v_r(Z_r)
+
D_xv_r(Z_r)b_r(Z_r)
+
\frac{\tau^2}{2}\Delta_xv_r(Z_r)
\right\|^2
\right]dr
\\
&\quad+
2\tau^2
\int_{t_k}^{t}
\mathbb E
\left[
\left\|
D_xv_r(Z_r)
\right\|_{\mathrm F}^2
\right]dr.
\end{align*}
Since $Z_r\sim q_r$ and $X_r\sim q_r$,
Lemma \ref{Lemma_Integrated_Derivative_Bounds} applies with \(X_r\)
replaced by \(Z_r\). Using $\|u+w+z\|^2\leq 3\|u\|^2+3\|w\|^2+3\|z\|^2$, we have
\begin{align*}
&\mathbb E
\left[
\left\|
\partial_r v_r(Z_r)
+
D_xv_r(Z_r)b_r(Z_r)
+
\frac{\tau^2}{2}\Delta_xv_r(Z_r)
\right\|^2
\right]
\\
&\leq
3
\mathbb E
\left[
\left\|
\partial_r v_r(Z_r)
\right\|^2
\right]
+
3
\mathbb E
\left[
\left\|
D_xv_r(Z_r)b_r(Z_r)
\right\|^2
\right]
+
3
\mathbb E
\left[
\left\|
\frac{\tau^2}{2}\Delta_xv_r(Z_r)
\right\|^2
\right]
\\
&\leq
\frac{3\sqrt{d(d+2)}}{\sigma^2}
\mathbb E
\left[
\|Y_1-Y_0\|^4
\right]
\\
&\quad+
\frac{6}{\sigma^2}
\left(
\mathbb E
\left[
\|Y_1-Y_0\|^5
\right]
\right)^{4/5}
\left(
d(d+2)(d+4)(d+6)(d+8)
\right)^{1/5}
\\
&\quad+
\frac{3\tau^4}{2\sigma^4}
\sqrt{d(d+2)(d+4)(d+6)}
\left(
\mathbb E
\left[
\|Y_1-Y_0\|^4
\right]
\right)^{1/2}
\\
&\quad+
\frac{12\tau^4\sqrt{d(d+2)(d+4)(d+6)}}{\sigma^4}
\left(
\mathbb E
\left[
\|Y_1-Y_0\|^4
\right]
\right)^{1/2}.
\end{align*}
Also,
\begin{align*}
\mathbb E
\left[
\left\|
D_xv_r(Z_r)
\right\|_{\mathrm F}^2
\right]
&\leq
\frac{
\sqrt{d(d+2)}
}{
\sigma^2
}
\left(
\mathbb E
\left[
\|Y_1-Y_0\|^4
\right]
\right)^{1/2}.
\end{align*}
Therefore, for $t\in[t_k,t_{k+1})$,
\begin{align*}
&\mathbb E
\left[
\left\|
v_t(Z_t)-v_{t_k}(Z_{t_k})
\right\|^2
\right]
\\
&\leq
2(t-t_k)^2
\Big[
\frac{3\sqrt{d(d+2)}}{\sigma^2}
\mathbb E
\left[
\|Y_1-Y_0\|^4
\right]
\\
&\quad+
\frac{6}{\sigma^2}
\left(
\mathbb E
\left[
\|Y_1-Y_0\|^5
\right]
\right)^{4/5}
\left(
d(d+2)(d+4)(d+6)(d+8)
\right)^{1/5}
\\
&\quad+
\frac{3\tau^4}{2\sigma^4}
\sqrt{d(d+2)(d+4)(d+6)}
\left(
\mathbb E
\left[
\|Y_1-Y_0\|^4
\right]
\right)^{1/2}
\\
&\quad+
\frac{12\tau^4\sqrt{d(d+2)(d+4)(d+6)}}{\sigma^4}
\left(
\mathbb E
\left[
\|Y_1-Y_0\|^4
\right]
\right)^{1/2}
\Big]
\\
&\quad+
2\tau^2(t-t_k)
\frac{
\sqrt{d(d+2)}
}{
\sigma^2
}
\left(
\mathbb E
\left[
\|Y_1-Y_0\|^4
\right]
\right)^{1/2}.
\end{align*}
Integrating this estimate over $t\in[t_k,t_{k+1}]$ gives
\begin{align*}
&\int_{t_k}^{t_{k+1}}
\mathbb E
\left[
\left\|
v_t(Z_t)-v_{t_k}(Z_{t_k})
\right\|^2
\right]dt
\\
&\leq
\frac{2h^3}{3}
\Bigg[
\frac{3\sqrt{d(d+2)}}{\sigma^2}
\mathbb E
\left[
\|Y_1-Y_0\|^4
\right]
\\
&\quad+
\frac{6}{\sigma^2}
\left(
\mathbb E
\left[
\|Y_1-Y_0\|^5
\right]
\right)^{4/5}
\left(
d(d+2)(d+4)(d+6)(d+8)
\right)^{1/5}
\\
&\quad+
\frac{14\tau^4\sqrt{d(d+2)(d+4)(d+6)}}{\sigma^4}
\left(
\mathbb E
\left[
\|Y_1-Y_0\|^4
\right]
\right)^{1/2}
\Bigg]
\\
&\quad+
\frac{\tau^2h^2\sqrt{d(d+2)}}{\sigma^2}
\left(
\mathbb E
\left[
\|Y_1-Y_0\|^4
\right]
\right)^{1/2}.
\end{align*}
Finally, summing over 
\(k=0,\ldots,N-1\) and using
\(Nh=T\), we obtain
\begin{align*}
&\sum_{k=0}^{N-1}
\int_{t_k}^{t_{k+1}}
\mathbb E
\left[
\left\|
v_t(Z_t)-v_{t_k}(Z_{t_k})
\right\|^2
\right]dt
\\
&\leq
\frac{2T h^2}{3}
\Bigg[
\frac{3\sqrt{d(d+2)}}{\sigma^2}
\mathbb E
\left[
\|Y_1-Y_0\|^4
\right]
\\
&\quad+
\frac{6}{\sigma^2}
\left(
\mathbb E
\left[
\|Y_1-Y_0\|^5
\right]
\right)^{4/5}
\left(
d(d+2)(d+4)(d+6)(d+8)
\right)^{1/5}
\\
&\quad+
\frac{14\tau^4\sqrt{d(d+2)(d+4)(d+6)}}{\sigma^4}
\left(
\mathbb E
\left[
\|Y_1-Y_0\|^4
\right]
\right)^{1/2}
\Bigg]
\\
&\quad+
\frac{\tau^2T h\sqrt{d(d+2)}}{\sigma^2}
\left(
\mathbb E
\left[
\|Y_1-Y_0\|^4
\right]
\right)^{1/2}.
\end{align*}
This proves the lemma.
\end{proof}

\begin{lemma}[Combined Score Increment Regularity]\label{Lemma_Score_regularity_sum}
Let $Y_0$ and $Y_1$ be random variables taking values in $\mathbb R^d$. Let $D=Y_1-Y_0$. Assume that
\begin{align*}
\mathbb E
\left[
\|Y_1-Y_0\|^4
\right]
&<
\infty,
\end{align*}
where the expectation is taken under the joint law of $(Y_0,Y_1)$.

For $t\in[0,T]$, define $S_t=Y_0+tD$. Let $G\sim {\mathcal N}(0,I_d)$ be independent of $(Y_0,D)$, and define $X_t=S_t+\sigma G$ with $\sigma>0$. Let $q_t$ be the density of $X_t$. Define
$s_t(x)=\nabla_x\log q_t(x)$ and
\begin{align*}
v_t(x)
&=
\mathbb E
\left[
Y_1-Y_0\mid X_t=x
\right].
\end{align*}
Let
\begin{align*}
b_t(x)
&=
v_t(x)+\frac{\tau^2}{2}s_t(x),
\end{align*}
where $\tau>0$. Let $Z_t$ solve the exact SDE
\begin{align*}
dZ_t
&=
b_t(Z_t)dt+\tau dB_t,
\end{align*}
and assume that $Z_t\sim q_t$ for every $t\in[0,T]$.

Let $0=t_0<t_1<\cdots<t_N=T$ be a uniform grid with step size $h$, so $t_{k+1}-t_k=h$ and $Nh=T$. Then
\begin{align*}
&\sum_{k=0}^{N-1}
\int_{t_k}^{t_{k+1}}
\mathbb E
\left[
\left\|
s_t(Z_t)-s_{t_k}(Z_{t_k})
\right\|^2
\right]dt
\\
&\leq
\frac{
\tau^2T h\left(2d(d+2)+2d\right)
}{
\sigma^4
}
\\
&\quad+
\frac{2T h^2}{3}
\left[
\frac{
64d\sqrt{d(d+2)(d+4)(d+6)}
}{
\sigma^4
}
\left(
\mathbb E
\left[
\|Y_1-Y_0\|^4
\right]
\right)^{1/2}
\right.
\\
&\qquad\left.
+
\frac{4\sqrt{
d(d+2)(d+4)(d+6)
}
}{
\sigma^4
}
\left(
\mathbb E
\left[
\|Y_1-Y_0\|^4
\right]
\right)^{1/2}
\right.
\\
&\qquad\left.
+
\frac{
\tau^4
\sqrt{
\left(8d(d+2)(d+4)(d+6)+8d^2\right)d(d+2)
}
}{
\sigma^6
}
\right.
\\
&\qquad\left.
+
\frac{
64\tau^4d(d+2)(d+4)
}{
\sigma^6
}
\right].
\end{align*}
The expectation on the left-hand side is taken under the law of the exact SDE path $(Z_t)_{0\leq t\leq T}$.
\end{lemma}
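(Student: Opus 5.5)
Following the template of Lemma~\ref{Lemma_Vel_regularity_sum}, I would apply It\^{o}'s formula to each coordinate of \(s_t(Z_t)\) on \([t_k,t)\). This gives
\begin{align*}
s_t(Z_t)-s_{t_k}(Z_{t_k})
=\int_{t_k}^{t}\Bigl(\partial_r s_r+D_xs_r\,b_r+\tfrac{\tau^2}{2}\Delta_xs_r\Bigr)(Z_r)\,dr
+\tau\int_{t_k}^{t}D_xs_r(Z_r)\,dB_r.
\end{align*}
The martingale term is handled by It\^{o} isometry together with the bound \(\mathbb E\|D_xs_r(X_r)\|_{\mathrm F}^2\le(2d(d+2)+2d)/\sigma^4\) from Lemma~\ref{Lemma_Integrated_Score_Bounds}; here \(Z_r\sim q_r\) lets us replace \(Z_r\) by \(X_r\). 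After the factor \(2\tau^2(t-t_k)\), integration over \([t_k,t_{k+1}]\), and summation over \(k\), this term produces exactly \(\tau^2Th(2d(d+2)+2d)/\sigma^4\).

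The main work is the drift term, where the quantity \(\partial_r s_r\) has not yet been bounded. The key step is to compute it from the continuity equation for the interpolation, \(\partial_t q_t=-\nabla\cdot(q_tv_t)\), which holds by the Fokker--Planck computation in Lemma~\ref{lem:exact_sampling_conditions}. Dividing by \(q_t\) gives \(\partial_t\log q_t=-\nabla\cdot v_t-v_t^\top s_t\), and taking the gradient yields
\begin{align*}
\partial_t s_t=-\nabla_x(\nabla_x\cdot v_t)-(D_xv_t)^\top s_t-(D_xs_t)v_t,
\end{align*}
using that \(D_xs_t\) is symmetric. Adding \(D_xs_t\,b_t=D_xs_t\,v_t+\frac{\tau^2}{2}D_xs_t\,s_t\) cancels the \(D_xs_t\,v_t\) term. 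The drift integrand therefore reduces to four pieces: \(-\nabla(\nabla\cdot v)\), \(-(D_xv)^\top s\), \(\frac{\tau^2}{2}D_xs\,s\), and \(\frac{\tau^2}{2}\Delta_xs\).

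I would bound this with \(\|a+b+c+e\|^2\le4(\|a\|^2+\|b\|^2+\|c\|^2+\|e\|^2)\) and Jensen in time. Each piece is controlled by Lemma~\ref{Lemma_Integrated_Score_Bounds}:
\begin{itemize}
\item the gradient of divergence, contributing \(4\cdot16d\sqrt{\cdots}\,(\mathbb E\|D\|^4)^{1/2}/\sigma^4\);
\item the transposed mixed term, contributing \(4\sqrt{\cdots}\,(\mathbb E\|D\|^4)^{1/2}/\sigma^4\);
\item \(D_xs\,s\), contributing \(4\cdot\frac{\tau^4}{4}\sqrt{\cdots}/\sigma^6\);
\item \(\Delta s\), contributing \(4\cdot\frac{\tau^4}{4}\cdot64d(d+2)(d+4)/\sigma^6\).
\end{itemize}
These coefficients match the stated bracket. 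The bound \(2(t-t_k)^2[\cdots]\), integrated over the step to give \(2h^3/3\) and summed using \(Nh=T\), then yields the \(\frac{2Th^2}{3}\) term.

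The main obstacle is justifying the \(\partial_t s_t\) identity rigorously. Differentiating \(\log q_t\) in \(t\) and \(x\) under the Gaussian-mixture integral needs moment conditions, and \(\mathbb E\|D\|^4<\infty\) suffices by dominated convergence, as in Lemma~\ref{Lemma_Derivative_Identity}. Applying It\^{o}'s formula also needs \(C^{1,2}\) regularity, which the smooth Gaussian kernel gives, and the stochastic integral must be square integrable, which the \(D_xs\) bound gives. If needed, one can localize and pass to the limit with Fatou.
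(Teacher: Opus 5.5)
Your proposal is correct and follows essentially the same route as the paper. The shared steps are: It\^{o}'s formula for \(s_r(Z_r)\); the continuity-equation identity for \(\partial_r s_r\), which cancels \(D_xs_r\,v_r\); the four-term split controlled by Lemma~\ref{Lemma_Integrated_Score_Bounds}; and It\^{o} isometry with the \(D_xs_r\) bound for the martingale part. Together these give exactly the stated constants, and your remark that the symmetry of \(D_xs_t\) is what lets \((D_xs_t)^\top v_t\) be written as \(D_xs_t\,v_t\) makes explicit a step the paper uses without comment.
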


\begin{proof}
We control the combined score increment directly. This is different from decomposing it into score-time and score-space pieces.

Fix $k\in\{0,\ldots,N-1\}$ and $t\in[t_k,t_{k+1}]$. Applying It\^{o}'s formula to the process $s_r(Z_r)$ over $r\in[t_k,t]$ gives
\begin{align*}
s_t(Z_t)-s_{t_k}(Z_{t_k})
&=
\int_{t_k}^{t}
\left(
\partial_r s_r(Z_r)
+
D_xs_r(Z_r)b_r(Z_r)
+
\frac{\tau^2}{2}\Delta_xs_r(Z_r)
\right)dr
\\
&\quad+
\tau
\int_{t_k}^{t}
D_xs_r(Z_r)dB_r.
\end{align*}
Since $b_r(x)=v_r(x)+\frac{\tau^2}{2}s_r(x)$, the drift term inside the integral is
\begin{align*}
&\partial_r s_r(Z_r)
+
D_xs_r(Z_r)b_r(Z_r)
+
\frac{\tau^2}{2}\Delta_xs_r(Z_r)
\\
&=
\partial_r s_r(Z_r)
+
D_xs_r(Z_r)v_r(Z_r)
+
\frac{\tau^2}{2}D_xs_r(Z_r)s_r(Z_r)
+
\frac{\tau^2}{2}\Delta_xs_r(Z_r).
\end{align*}

Now use the continuity equation for the Gaussian-smoothed interpolation. Since $q_r$ is the density of $X_r=S_r+\sigma G$, and $v_r(x)=\mathbb E[D\mid X_r=x]$, the curve $(q_r)_{r\in[0,T]}$ satisfies
\begin{align*}
\partial_r q_r(x)
+
\nabla_x\cdot
\left(
q_r(x)v_r(x)
\right)
&=
0.
\end{align*}
Dividing by $q_r(x)$ gives
\begin{align*}
\partial_r\log q_r(x)
&=
-\nabla_x\cdot v_r(x)
-
v_r(x)^\top s_r(x).
\end{align*}
Taking $\nabla_x$ on both sides gives
\begin{align*}
\partial_r \nabla_{x} \log q_{r}(x)
= \partial_r s_r(x)
&=
-\nabla_x
\left(
\nabla_x\cdot v_r(x)
\right)
-
(D_xv_r(x))^\top s_r(x)
-
D_xs_r(x)v_r(x).
\end{align*}
Substituting $x=Z_r$ gives
\begin{align*}
\partial_r s_r(Z_r)
&=
-\nabla_x
\left(
\nabla_x\cdot v_r
\right)(Z_r)
-
(D_xv_r(Z_r))^\top s_r(Z_r)
-
D_xs_r(Z_r)v_r(Z_r).
\end{align*}
Therefore, the term $D_xs_r(Z_r)v_r(Z_r)$ cancels and we have 
\begin{align*}
&\partial_r s_r(Z_r)
+
D_xs_r(Z_r)v_r(Z_r)
+
\frac{\tau^2}{2}D_xs_r(Z_r)s_r(Z_r)
+
\frac{\tau^2}{2}\Delta_xs_r(Z_r)
\\
=& 
-\nabla_x
\left(
\nabla_x\cdot v_r
\right)(Z_r)
-
(D_xv_r(Z_r))^\top s_r(Z_r) + \frac{\tau^2}{2}D_xs_r(Z_r)s_r(Z_r) + \frac{\tau^2}{2}\Delta_xs_r(Z_r).
\end{align*}

Hence
\begin{align*}
s_t(Z_t)-s_{t_k}(Z_{t_k})
&=
\int_{t_k}^{t}
\left[
-\nabla_x
\left(
\nabla_x\cdot v_r
\right)(Z_r)
-
(D_xv_r(Z_r))^\top s_r(Z_r)
\right.
\\
&\qquad\left.
+
\frac{\tau^2}{2}D_xs_r(Z_r)s_r(Z_r)
+
\frac{\tau^2}{2}\Delta_xs_r(Z_r)
\right]dr
\\
&\quad+
\tau
\int_{t_k}^{t}
D_xs_r(Z_r)dB_r.
\end{align*}

Using $\|u+w\|^2\leq 2\|u\|^2+2\|w\|^2$, Jensen's inequality, and It\^{o} isometry, we obtain
\begin{align*}
&\mathbb E
\left[
\left\|
s_t(Z_t)-s_{t_k}(Z_{t_k})
\right\|^2
\right]
\\
&\leq
2(t-t_k)
\int_{t_k}^{t}
\mathbb E
\left[
\left\|
-\nabla_x
\left(
\nabla_x\cdot v_r
\right)(Z_r)
-
(D_xv_r(Z_r))^\top s_r(Z_r)
\right.
\right.
\\
&\qquad\left.
\left.
+
\frac{\tau^2}{2}D_xs_r(Z_r)s_r(Z_r)
+
\frac{\tau^2}{2}\Delta_xs_r(Z_r)
\right\|^2
\right]dr
\\
&\quad+
2\tau^2
\int_{t_k}^{t}
\mathbb E
\left[
\left\|
D_xs_r(Z_r)
\right\|_{\mathrm F}^2
\right]dr.
\end{align*}
Using $\|u_1+u_2+u_3+u_4\|^2\leq 4\|u_1\|^2+4\|u_2\|^2+4\|u_3\|^2+4\|u_4\|^2$, we have
\begin{align*}
&\mathbb E
\left[
\left\|
-\nabla_x
\left(
\nabla_x\cdot v_r
\right)(Z_r)
-
(D_xv_r(Z_r))^\top s_r(Z_r)
+
\frac{\tau^2}{2}D_xs_r(Z_r)s_r(Z_r)
+
\frac{\tau^2}{2}\Delta_xs_r(Z_r)
\right\|^2
\right]
\\
&\leq
4
\mathbb E
\left[
\left\|
\nabla_x
\left(
\nabla_x\cdot v_r
\right)(Z_r)
\right\|^2
\right]
+
4
\mathbb E
\left[
\left\|
(D_xv_r(Z_r))^\top s_r(Z_r)
\right\|^2
\right]
\\
&\quad+
\tau^4
\mathbb E
\left[
\left\|
D_xs_r(Z_r)s_r(Z_r)
\right\|^2
\right]
+
\tau^4
\mathbb E
\left[
\left\|
\Delta_xs_r(Z_r)
\right\|^2
\right].
\end{align*}

We now bound each term. 

First, 
by the posterior-velocity Hessian bound in Lemma
\ref{Lemma_Integrated_Score_Bounds},
\begin{align*}
\mathbb E
\left[
\left\|
\nabla_x
\left(
\nabla_x\cdot v_r
\right)(Z_r)
\right\|^2
\right]
&\leq
\frac{
16d\sqrt{d(d+2)(d+4)(d+6)}
}{
\sigma^4
}
\left(
\mathbb E
\left[
\|Y_1-Y_0\|^4
\right]
\right)^{1/2}.
\end{align*}
Also, 
\begin{align*}
\mathbb E
\left[
\left\|
(D_xv_r(Z_r))^\top s_r(Z_r)
\right\|^2
\right]
&\leq
\frac{
\sqrt{
d(d+2)(d+4)(d+6)
}
}{
\sigma^4
}
\left(
\mathbb E
\left[
\|Y_1-Y_0\|^4
\right]
\right)^{1/2}.
\end{align*}
Third, 
\begin{align*}
\mathbb E
\left[
\left\|
D_xs_r(Z_r)s_r(Z_r)
\right\|^2
\right]
&\leq
\frac{
\sqrt{
\left(8d(d+2)(d+4)(d+6)+8d^2\right)d(d+2)
}
}{
\sigma^6
}.
\end{align*}

Fourth, we use the already proved Gaussian score-geometry bound
\begin{align*}
\mathbb E
\left[
\left\|
\Delta_xs_r(Z_r)
\right\|^2
\right]
&\leq
\frac{
64d(d+2)(d+4)
}{
\sigma^6
}.
\end{align*}
Finally, using the already proved Gaussian score-geometry bound
\begin{align*}
\mathbb E
\left[
\left\|
D_xs_r(Z_r)
\right\|_{\mathrm F}^2
\right]
&\leq
\frac{
2d(d+2)+2d
}{
\sigma^4
}.
\end{align*}

Therefore, for every $r\in[0,T]$,
\begin{align*}
&\mathbb E
\left[
\left\|
-\nabla_x
\left(
\nabla_x\cdot v_r
\right)(Z_r)
-
(D_xv_r(Z_r))^\top s_r(Z_r)
+
\frac{\tau^2}{2}D_xs_r(Z_r)s_r(Z_r)
+
\frac{\tau^2}{2}\Delta_xs_r(Z_r)
\right\|^2
\right]
\\
&\leq
\frac{
64d\sqrt{d(d+2)(d+4)(d+6)}
}{
\sigma^4
}
\left(
\mathbb E
\left[
\|Y_1-Y_0\|^4
\right]
\right)^{1/2}
\\
&\quad+
\frac{4\sqrt{
d(d+2)(d+4)(d+6)
}
}{
\sigma^4
}
\left(
\mathbb E
\left[
\|Y_1-Y_0\|^4
\right]
\right)^{1/2}
\\
&\quad+
\frac{
\tau^4
\sqrt{
\left(8d(d+2)(d+4)(d+6)+8d^2\right)d(d+2)
}
}{
\sigma^6
}
\\
&\quad+
\frac{
64\tau^4d(d+2)(d+4)
}{
\sigma^6
}.
\end{align*}

Plugging this and the martingale bound into the It\^{o} estimate gives, for every $t\in[t_k,t_{k+1}]$,
\begin{align*}
&\mathbb E
\left[
\left\|
s_t(Z_t)-s_{t_k}(Z_{t_k})
\right\|^2
\right]
\\
&\leq
2(t-t_k)^2
\left[
\frac{
64d\sqrt{d(d+2)(d+4)(d+6)}
}{
\sigma^4
}
\left(
\mathbb E
\left[
\|Y_1-Y_0\|^4
\right]
\right)^{1/2}
\right.
\\
&\qquad\left.
+
\frac{4\sqrt{
d(d+2)(d+4)(d+6)
}
}{
\sigma^4
}
\left(
\mathbb E
\left[
\|Y_1-Y_0\|^4
\right]
\right)^{1/2}
\right.
\\
&\qquad\left.
+
\frac{
\tau^4
\sqrt{
\left(8d(d+2)(d+4)(d+6)+8d^2\right)d(d+2)
}
}{
\sigma^6
}
\right.
\\
&\qquad\left.
+
\frac{
64\tau^4d(d+2)(d+4)
}{
\sigma^6
}
\right]
\\
&\quad+
2\tau^2(t-t_k)
\frac{
2d(d+2)+2d
}{
\sigma^4
}.
\end{align*}

Integrating over $t\in[t_k,t_{k+1}]$ gives
\begin{align*}
&\int_{t_k}^{t_{k+1}}
\mathbb E
\left[
\left\|
s_t(Z_t)-s_{t_k}(Z_{t_k})
\right\|^2
\right]dt
\\
&\leq
\frac{2h^3}{3}
\left[
\frac{
64d\sqrt{d(d+2)(d+4)(d+6)}
}{
\sigma^4
}
\left(
\mathbb E
\left[
\|Y_1-Y_0\|^4
\right]
\right)^{1/2}
\right.
\\
&\qquad\left.
+
\frac{4
\sqrt{
d(d+2)(d+4)(d+6)
}
}{
\sigma^4
}
\left(
\mathbb E
\left[
\|Y_1-Y_0\|^4
\right]
\right)^{1/2}
\right.
\\
&\qquad\left.
+
\frac{
\tau^4
\sqrt{
\left(8d(d+2)(d+4)(d+6)+8d^2\right)d(d+2)
}
}{
\sigma^6
}
\right.
\\
&\qquad\left.
+
\frac{
64\tau^4d(d+2)(d+4)
}{
\sigma^6
}
\right]
\\
&\quad+
\tau^2h^2
\frac{
2d(d+2)+2d
}{
\sigma^4
}.
\end{align*}

Summing over $k=0,\ldots,N-1$ and using $Nh=T$, we get
\begin{align*}
&\sum_{k=0}^{N-1}
\int_{t_k}^{t_{k+1}}
\mathbb E
\left[
\left\|
s_t(Z_t)-s_{t_k}(Z_{t_k})
\right\|^2
\right]dt
\\
&\leq
\frac{2T h^2}{3}
\left[
\frac{
64d\sqrt{d(d+2)(d+4)(d+6)}
}{
\sigma^4
}
\left(
\mathbb E
\left[
\|Y_1-Y_0\|^4
\right]
\right)^{1/2}
\right.
\\
&\qquad\left.
+
\frac{4\sqrt{
d(d+2)(d+4)(d+6)
}
}{
\sigma^4
}
\left(
\mathbb E
\left[
\|Y_1-Y_0\|^4
\right]
\right)^{1/2}
\right.
\\
&\qquad\left.
+
\frac{
\tau^4
\sqrt{
\left(8d(d+2)(d+4)(d+6)+8d^2\right)d(d+2)
}
}{
\sigma^6
}
\right.
\\
&\qquad\left.
+
\frac{
64\tau^4d(d+2)(d+4)
}{
\sigma^6
}
\right]
\\
&\quad+
\frac{
\tau^2T h\left(2d(d+2)+2d\right)
}{
\sigma^4
}.
\end{align*}
This proves the lemma.

\end{proof}

\subsection{Completion of the KL Sampling Bound}

\begin{proof}[Proof of Theorem~\ref{Theorem_Sampling_Error}]
Set \(T=t_N=1\), and let \(Z\) be the exact weak solution with \(\mathcal L(Z_t)=q_t\) supplied by Lemma~\ref{lem:exact_sampling_conditions}. Let \(\widehat Z\) denote the continuous-time Euler interpolation, so that \(\widehat Z_{t_k}=\widehat Z_k\).

Write
\begin{align*}
V_h
&=
\sum_{k=0}^{N-1}\int_{t_k}^{t_{k+1}}
\mathbb E\|v_t(Z_t)-v_{t_k}(Z_{t_k})\|^2\,dt,\\
S_h
&=
\sum_{k=0}^{N-1}\int_{t_k}^{t_{k+1}}
\mathbb E\|s_t(Z_t)-s_{t_k}(Z_{t_k})\|^2\,dt.
\end{align*}
We prove the theorem in four steps.

\emph{Step 1: reduction to \(V_h\), \(S_h\), and the estimation errors.}
For \(t\in[t_k,t_{k+1})\), decompose the drift mismatch as
\begin{align*}
b_t(Z_t)-\widehat b_{t_k}(Z_{t_k})
&=
\{v_t(Z_t)-v_{t_k}(Z_{t_k})\}
+\{v_{t_k}(Z_{t_k})-\widehat v_{t_k}(Z_{t_k})\}\\
&\quad+
\frac{\tau^2}{2}\{s_t(Z_t)-s_{t_k}(Z_{t_k})\}
+\frac{\tau^2}{2}
\{s_{t_k}(Z_{t_k})-\widehat s_{t_k}(Z_{t_k})\}.
\end{align*}
Using the inequality
\(\|a_1+a_2+a_3+a_4\|^2\leq4\sum_{j=1}^4\|a_j\|^2\),
and Assumption~\ref{Assumption_Integrated_Estimation_Error} we obtain
\begin{align*}
\int_0^1
\mathbb E\|b_t(Z_t)-\widehat b_{\eta_h(t)}(Z_{\eta_h(t)})\|^2\,dt
\leq
4V_h+4\varepsilon_{\mathrm{vel}}^2
+\tau^4S_h+\tau^4\varepsilon_{\mathrm{score}}^2.
\end{align*}
Steps 2 and 3 below show that \(V_h,S_h<\infty\); hence the condition $\int_0^T \mathbb E_{\mathbb P}\left[\|\Delta_t(Z)\|^2\,\right] dt<\infty$ in Lemma~\ref{Lemma_Girsanov_Euler_KL} holds. The corresponding condition for $\widehat{Z}$ is justified by Proposition \ref{Prop_Pathwise_Drift_Finiteness}. Applying that lemma and dividing by
\(2\tau^2\) then gives
\begin{align}
\operatorname{KL}\!\left(
\mathcal L(Z_{t_N})\,\middle\|\,\mathcal L(\widehat Z_N)
\right)
\leq
\frac{2}{\tau^2}V_h
+\frac{2}{\tau^2}\varepsilon_{\mathrm{vel}}^2
+\frac{\tau^2}{2}S_h
+\frac{\tau^2}{2}\varepsilon_{\mathrm{score}}^2.
\label{eq:KL_reduction_Vh_Sh}
\end{align}

\emph{Step 2: explicit velocity-increment bound.}
Lyapunov's inequality yields
\[
\mathbb E\|Y_1-Y_0\|^4\leq D_5^4,
\qquad
\left(\mathbb E\|Y_1-Y_0\|^4\right)^{1/2}\leq D_5^2.
\]
For \(d\geq1\),
\[
\sqrt{d(d+2)}\leq d+1,\qquad
\{d(d+2)(d+4)(d+6)(d+8)\}^{1/5}\leq d+8,
\]
and
\[
\sqrt{d(d+2)(d+4)(d+6)}\leq(d+6)^2.
\]
Applying these inequalities to
Lemma~\ref{Lemma_Vel_regularity_sum} with \(T=1\) gives
\begin{align}
V_h
&\leq
\frac{2(9d+51)}{3}\frac{h^2D_5^4}{\sigma^2}
+\frac{28}{3}\frac{h^2\tau^4(d+6)^2D_5^2}{\sigma^4}
+\frac{h\tau^2(d+1)D_5^2}{\sigma^2}.
\label{eq:explicit_Vh}
\end{align}

\emph{Step 3: explicit score-increment bound.}
The two displacement-dependent terms in
Lemma~\ref{Lemma_Score_regularity_sum} satisfy
\[
(64d+4)\sqrt{d(d+2)(d+4)(d+6)}
\leq68(d+6)^3.
\]
For the two pure-score drift terms, letting
\(g_8=d(d+2)(d+4)(d+6)\), we have \(g_8\geq d^2\) and hence
\begin{align*}
\sqrt{(8g_8+8d^2)d(d+2)}
&\leq4(d+6)^3,\\
64d(d+2)(d+4)&\leq64(d+6)^3.
\end{align*}
Finally, \(2d(d+2)+2d=2d(d+3)\).  Therefore
\begin{align}
S_h
&\leq
\frac{136}{3}\frac{h^2(d+6)^3D_5^2}{\sigma^4}
+\frac{2h\tau^2d(d+3)}{\sigma^4}
+\frac{136}{3}\frac{h^2\tau^4(d+6)^3}{\sigma^6}.
\label{eq:explicit_Sh}
\end{align}

\emph{Step 4: collect constants and balance the leading terms.}
Substituting \eqref{eq:explicit_Vh} and \eqref{eq:explicit_Sh} into
\eqref{eq:KL_reduction_Vh_Sh} gives
\begin{align*}
\operatorname{KL}\!\left(
\mathcal L(Z_{t_N})\,\middle\|\,\mathcal L(\widehat Z_N)
\right)
&\leq
\frac{1}{\tau^2}
\left\{
\frac{4(9d+51)}{3}\frac{h^2D_5^4}{\sigma^2}
+2\varepsilon_{\mathrm{vel}}^2
\right\}\\
&\quad+
\tau^2
\left\{
\frac{56}{3}\frac{h^2(d+6)^2D_5^2}{\sigma^4}
+\frac{68}{3}\frac{h^2(d+6)^3D_5^2}{\sigma^4}
+\frac12\varepsilon_{\mathrm{score}}^2
\right\}\\
&\quad+
\frac{h\tau^4d(d+3)}{\sigma^4}
+\frac{68}{3}\frac{h^2\tau^6(d+6)^3}{\sigma^6}
+\frac{2h(d+1)D_5^2}{\sigma^2}.
\end{align*}
Because \((d+6)^2\leq(d+6)^3\), the coefficient in braces on the
second line is at most \(B_h\).  The first line is \(A_h/\tau^2\), and
the remaining three terms are exactly the last three terms on the
right-hand side of \eqref{eq:explicit_KL_sampling_bound}.  This proves
that bound.

Since \(A_h,B_h>0\), the choice \(\tau=(A_h/B_h)^{1/4}\) satisfies
\[
\frac{A_h}{\tau^2}+B_h\tau^2=2\sqrt{A_hB_h}.
\]
Thus this choice balances the first two terms, as claimed.
\end{proof}

\subsection[Wasserstein Conversion to the Unsmoothed Target]
{Wasserstein Conversion to the Unsmoothed Target}

\rev{The following two corollaries convert the terminal KL guarantee in
Theorem~\ref{Theorem_Sampling_Error} into \(W_1\) bounds relative to the
unsmoothed target. Corollary~\ref{cor:counterfactual_W1_from_KL} applies to the
unprojected Euler output, whereas
Corollary~\ref{cor:terminal_projection_W1} gives a sharper conversion for the
terminally projected output when the target is supported on \([-1,1]^d\).}

Throughout this subsection, set \(T=1\) and condition on the auxiliary data, the EOT target reservoir, and the resulting coupling kernels.

By Lemma \ref{lem:existence_population_valid_Q}, the population-valid comparison coupling satisfies
\begin{align*}
\int Q(\cdot\mid y_1)\int \mathbb P_{Y\mid C=c,A=a}\,d\mathbb P_C(c) (y_1)
=
\widehat{\mathbb P}_{Y\mid A=a}^{\mathrm{aux}}.
\end{align*}
Consequently, the exact interpolation associated with \(Q\) has \(Y_0\sim\widehat{\mathbb P}_{Y\mid A=a}^{\mathrm{aux}}\). Algorithm~\ref{alg:score_corrected_sde_sampler} is initialized from the same empirical source distribution. Therefore,
\begin{align*}
\mathcal L(Z_0)
&=
\widehat{\mathbb P}_{Y\mid A=a}^{\mathrm{aux}}*N(0,\sigma^2I_d)
=
\mathcal L(\widehat Z_0).
\end{align*}
Thus $\operatorname{KL}\left(\mathcal L(Z_0)\,\middle\|\,\mathcal L(\widehat Z_0)\right)=0$.

At terminal time, \(\mathcal L(Z_{t_N})=\mathcal L(Z_1)=\mathcal L(Y_1+\sigma G)=\int \mathbb P_{Y\mid C=c,A=a}\,d\mathbb P_C(c)*N(0,\sigma^2I_d)\). We treat the discrepancy between \(\mathcal L(Y_1+\sigma G)\) and \(\int \mathbb P_{Y\mid C=c,A=a}\,d\mathbb P_C(c)\) solely as Gaussian-smoothing error. No stopping time \(T<1\) is introduced.

\begin{corollary}[Wasserstein Error Relative to the Unsmoothed Target]
\label{cor:counterfactual_W1_from_KL}
Assume the conditions of Theorem \ref{Theorem_Sampling_Error} and
Assumption \ref{ass:clipping_bounded_support}.  Write the learned score as
\(\widehat s_t(x)=\{\widehat u_t(x)-x\}/\sigma^2\), and assume the network
class bounds
\[
\sup_{x,k}\|\widehat v_{t_k}(x)\|\leq2\sqrt d,
\qquad
\sup_{x,k}\|\widehat u_{t_k}(x)\|\leq\sqrt d.
\]
Assume also \(h\tau^2/(2\sigma^2)\leq1\).
Then
\begin{align*}
&W_1\left(
\int \mathbb P_{Y\mid C=c,A=a}\,d\mathbb P_C(c),
\mathcal L(\widehat Z_N)
\right) \\
&\leq
\sigma\sqrt d +
2^{1/4}\sqrt d
\left[
1+\sigma^2+
\left(
\sqrt{1+\sigma^2}
+3
+\min\{\tau,\sqrt2\sigma\}
\right)^2
\right]^{1/2} \cdot
\operatorname{KL}\left(
\mathcal L(Y_1+\sigma G)
\,\middle\|\,
\mathcal L(\widehat Z_N)
\right)^{1/4}.
\end{align*}
\end{corollary}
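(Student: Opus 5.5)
The plan is to split the error by the triangle inequality,
\[
W_1\Bigl(\mu_1,\mathcal L(\widehat Z_N)\Bigr)\leq W_1\bigl(\mu_1,\mathcal L(Y_1+\sigma G)\bigr)+W_1\bigl(\mathcal L(Y_1+\sigma G),\mathcal L(\widehat Z_N)\bigr),
\]
with \(\mu_1=\int\mathbb P_{Y\mid C=c,A=a}\,d\mathbb P_C(c)\), so that \(\mathcal L(Y_1)=\mu_1\).

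\emph{Smoothing term.} Coupling \(Y_1\) with \(Y_1+\sigma G\) gives
\[
W_1\bigl(\mu_1,\mathcal L(Y_1+\sigma G)\bigr)\leq\sigma\mathbb E\|G\|\leq\sigma\sqrt d .
\]

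\emph{KL-to-\(W_1\) conversion.} Let \(P=\mathcal L(Y_1+\sigma G)\) and \(\widehat P=\mathcal L(\widehat Z_N)\), and take a maximal coupling \((X,\widehat X)\) of \(P\) and \(\widehat P\), so that \(\mathbb P(X\neq\widehat X)=\operatorname{TV}(P,\widehat P)\). Cauchy--Schwarz gives
\[
W_1(P,\widehat P)\leq\mathbb E\bigl[\|X-\widehat X\|\mathbf 1\{X\neq\widehat X\}\bigr]\leq\bigl(2\mathbb E\|X\|^2+2\mathbb E\|\widehat X\|^2\bigr)^{1/2}\operatorname{TV}^{1/2}.
\]
Pinsker's inequality \(\operatorname{TV}\leq(\operatorname{KL}/2)^{1/2}\) then yields the factor \(2^{-1/4}\operatorname{KL}^{1/4}\). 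Since \(\sqrt2\cdot2^{-1/4}=2^{1/4}\), it suffices to show
\[
\mathbb E\|X\|^2\leq d(1+\sigma^2)
\quad\text{and}\quad
\mathbb E\|\widehat Z_N\|^2\leq d\bigl(\sqrt{1+\sigma^2}+3+\min\{\tau,\sqrt2\sigma\}\bigr)^2 .
\]
The first bound is immediate: Assumption~\ref{ass:clipping_bounded_support} gives \(\|Y_1\|^2\leq d\), and \(G\) is independent with \(\mathbb E\|G\|^2=d\).

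\emph{Second moment of the Euler output (main step).} I will unroll the recursion in closed linear form. Using \(\widehat s=(\widehat u-x)/\sigma^2\) and setting \(c=h\tau^2/(2\sigma^2)\in[0,1]\),
\[
\widehat Z_{k+1}=(1-c)\widehat Z_k+h\widehat v_{t_k}(\widehat Z_k)+c\,\widehat u_{t_k}(\widehat Z_k)+\tau\sqrt h\,G_k .
\]
Iterating,
\[
\widehat Z_N=(1-c)^N\widehat Z_0+\sum_{k=0}^{N-1}(1-c)^{N-1-k}\bigl(h\widehat v_{t_k}+c\,\widehat u_{t_k}\bigr)+\sum_{k=0}^{N-1}(1-c)^{N-1-k}\tau\sqrt h\,G_k .
\]
I will bound the \(L^2\) norm of each of the three pieces separately and combine them with Minkowski's inequality.

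\begin{itemize}
\item \emph{Initialization.} Since \(|1-c|\leq1\), the first piece is at most \(\|\widehat Z_0\|_{L^2}\leq\sqrt{d(1+\sigma^2)}\). This uses that \(\widehat Z_0\) is a source point in \([-1,1]^d\) plus \(\sigma G\).
\item \emph{Drift.} The sup-norm bounds \(\|\widehat v\|\leq2\sqrt d\) and \(\|\widehat u\|\leq\sqrt d\) give at most \(\sum_k 2h\sqrt d+\sqrt d\,c\sum_j(1-c)^j\). This is \(\leq 2\sqrt d+\sqrt d=3\sqrt d\), using \(Nh=1\) and the geometric sum.
\item \emph{Noise.} This piece is a sum of independent centered Gaussians, so its squared \(L^2\) norm is exactly \(\tau^2hd\sum_j(1-c)^{2j}\). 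That is at most \(\tau^2hdN=\tau^2d\), and also at most \(\tau^2hd/c=2\sigma^2d\) (using \(1-(1-c)^2\geq c\)). This yields \(\sqrt d\min\{\tau,\sqrt2\sigma\}\).
\end{itemize}

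The point needing care is that the drift terms are evaluated at the random \(\widehat Z_k\). I only need their uniform bounds, so Minkowski applies without any independence. Only the noise piece uses orthogonality, which holds because each \(G_k\) is independent of everything that precedes it. Summing the three pieces gives the stated bracket, and substituting into the conversion step completes the proof.
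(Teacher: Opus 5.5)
Your proposal is correct and follows the paper's proof essentially step for step. Both arguments use the triangle inequality through $\mathcal L(Y_1+\sigma G)$, then a maximal coupling with Cauchy--Schwarz and Pinsker, then a Minkowski bound in $L^2$ on the unrolled linear Euler recursion $\widehat Z_{k+1}=(1-c)\widehat Z_k+h\widehat v_{t_k}+c\,\widehat u_{t_k}+\tau\sqrt h\,G_k$. The only differences are cosmetic bookkeeping in the geometric sums: you split the drift as $2\sqrt d+\sqrt d\,(1-(1-c)^N)$ and use $1-(1-c)^2\geq c$ for the noise, whereas the paper bounds $(2+\tau^2/(2\sigma^2))\min\{1,2\sigma^2/\tau^2\}\leq 3$ and $\sum_k a_h^{2k}\leq\sum_k a_h^k$. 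Both routes give the same constants.
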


\begin{proof}
Let \(Y_1\sim\int \mathbb P_{Y\mid C=c,A=a}\,d\mathbb P_C(c)\) and \(G\sim {\mathcal N}(0,I_d)\) be independent. Coupling \(Y_1\) with \(Y_1+\sigma G\) gives
\begin{align*}
W_1(\int \mathbb P_{Y\mid C=c,A=a}\,d\mathbb P_C(c),\mathcal L(Y_1+\sigma G))
&\leq
\sigma\mathbb E\|G\|
\leq
\sigma\sqrt d.
\end{align*}

Let \((X,\widetilde X)\) be a maximal coupling of
\(\mathcal L(Y_1+\sigma G)\) and \(\mathcal L(\widehat Z_N)\).  Then
\[
\mathbb P(X\neq\widetilde X)
=
\operatorname{TV}\!\left(
\mathcal L(Y_1+\sigma G),\mathcal L(\widehat Z_N)
\right).
\]
Therefore
\begin{align*}
W_1\left(
\mathcal L(Y_1+\sigma G),
\mathcal L(\widehat Z_N)
\right)
&\leq
\mathbb E\|X-\widetilde X\|
=
\mathbb E\left[\|X-\widetilde X\|\mathbf 1\{X\neq\widetilde X\}\right]\\
&\leq
\left(
\mathbb E\|X-\widetilde X\|^2
\right)^{1/2}
\operatorname{TV}\left(
\mathcal L(Y_1+\sigma G),
\mathcal L(\widehat Z_N)
\right)^{1/2}\\
&\leq
\left(
2\mathbb E\|X\|^2
+
2\mathbb E\|\widetilde X\|^2
\right)^{1/2}
\left[
\frac{1}{2}
\operatorname{KL}\left(
\mathcal L(Y_1+\sigma G)
\,\middle\|\,
\mathcal L(\widehat Z_N)
\right)
\right]^{1/4}\\
&=
2^{1/4}
\left(
\mathbb E\|X\|^2
+
\mathbb E\|\widetilde X\|^2
\right)^{1/2}
\operatorname{KL}\left(
\mathcal L(Y_1+\sigma G)
\,\middle\|\,
\mathcal L(\widehat Z_N)
\right)^{1/4}.
\end{align*}
Since \(Y_1\in[-1,1]^d\),
\begin{align*}
\mathbb E\|X\|^2=\mathbb E\|Y_1+\sigma G\|^2=\mathbb E\|Y_1\|^2+\sigma^2\mathbb E\|G\|^2 \leq d(1+\sigma^2).
\end{align*}

It remains to bound the second moment of \(\widehat Z_N\). By the definition of the learned score class,
\begin{align*}
\widehat s_t(x)
=
\frac{\widehat u_t(x)-x}{\sigma^2},
\end{align*}
the Euler recursion can be written as
\[
a_h
:=
1-\frac{h\tau^2}{2\sigma^2}\in[0,1],
\qquad
\xi_k
:=
\widehat v_{t_k}(\widehat Z_k)
+\frac{\tau^2}{2\sigma^2}\widehat u_{t_k}(\widehat Z_k).
\]
\begin{align*}
\widehat Z_{k+1}
&=
a_h\widehat Z_k+h\xi_k+\tau\sqrt h\,G_k.
\end{align*}
Iterating this recursion gives
\begin{align*}
\widehat Z_N
&=
a_h^N\widehat Z_0+
h\sum_{k=0}^{N-1}
a_h^{N-1-k}\xi_k
\\
&\quad+
\tau\sqrt h
\sum_{k=0}^{N-1}
a_h^{N-1-k}G_k.
\end{align*}

Applying Minkowski's inequality in \(L^2\), we obtain
\begin{align*}
\left(\mathbb E[\|\widehat Z_N\|^2]\right)^{1/2}
\leq\;&
a_h^N\left(\mathbb E[\|\widehat Z_0\|^2]\right)^{1/2}+
h\sum_{k=0}^{N-1}
a_h^{N-1-k}
\left(\mathbb E\|\xi_k\|^2\right)^{1/2}+
\tau\sqrt h
\left(
\mathbb E\left[
\left\|
\sum_{k=0}^{N-1}
a_h^{N-1-k}G_k
\right\|^2
\right]
\right)^{1/2}.
\end{align*}
Since we have
\begin{align*}
\left\|
\widehat v_{t_k}(\widehat Z_k)
+\frac{\tau^2}{2\sigma^2}\widehat u_{t_k}(\widehat Z_k)
\right\|
\leq
\left(2+\frac{\tau^2}{2\sigma^2}\right)\sqrt d, \quad \mathbb E\left[
\left\|
\sum_{k=0}^{N-1}
\left(1-\frac{h\tau^2}{2\sigma^2}\right)^{N-1-k}G_k
\right\|^2
\right]
=
d\sum_{k=0}^{N-1}
\left(1-\frac{h\tau^2}{2\sigma^2}\right)^{2k}
\end{align*}
where \(G_k\) are independent, \(\mathbb E[\|G_k\|^2]=d\), and all cross terms vanish.
Therefore,
\begin{align*}
\left(\mathbb E[\|\widehat Z_N\|^2]\right)^{1/2}
\leq &
\left(\mathbb E[\|\widehat Z_0\|^2]\right)^{1/2}
+h\left(2+\frac{\tau^2}{2\sigma^2}\right)\sqrt d
\sum_{k=0}^{N-1}
\left(1-\frac{h\tau^2}{2\sigma^2}\right)^k +\tau
\left[
hd\sum_{k=0}^{N-1}
\left(1-\frac{h\tau^2}{2\sigma^2}\right)^{2k}
\right]^{1/2}.
\end{align*}

Since \(Nh=1\), $h\sum_{k=0}^{N-1}
\left(1-\frac{h\tau^2}{2\sigma^2}\right)^k \le h N = 1$, so we have
\begin{align*}
h\sum_{k=0}^{N-1}
\left(1-\frac{h\tau^2}{2\sigma^2}\right)^k
&=
h\frac{2\sigma^2}{h\tau^2}
\left[
1-\left(1-\frac{h\tau^2}{2\sigma^2}\right)^N
\right]
\leq
\min\left\{1,\frac{2\sigma^2}{\tau^2}\right\},
\end{align*}
and
\begin{align*}
h\sum_{k=0}^{N-1}
\left(1-\frac{h\tau^2}{2\sigma^2}\right)^{2k}
\leq
h\sum_{k=0}^{N-1}
\left(1-\frac{h\tau^2}{2\sigma^2}\right)^k
\leq
\min\left\{1,\frac{2\sigma^2}{\tau^2}\right\}.
\end{align*}
Furthermore, observe that $ \left(2+\frac{\tau^2}{2\sigma^2}\right) \min\left\{1,\frac{2\sigma^2}{\tau^2}\right\} \leq 3$ and $\tau \left( \min\left\{1,\frac{2\sigma^2}{\tau^2}\right\} \right)^{1/2}= \min\{\tau,\sqrt{2}\sigma\}$.
It follows that
\begin{align*}
\left(\mathbb E[\|\widehat Z_N\|^2]\right)^{1/2}
\leq
\left(\mathbb E[\|\widehat Z_0\|^2]\right)^{1/2}
+3\sqrt d+\sqrt d\min\{\tau,\sqrt{2}\sigma\}.
\end{align*}
Since $\mathbb E[\|\widehat Z_0\|^2] = \mathbb E[\|\widehat Y_0\|^2]+\sigma^2d \leq d(1+\sigma^2)$,
we obtain
\begin{align*}
\mathbb E[\|\widehat Z_N\|^2]
\leq
d\left(
\sqrt{1+\sigma^2}+3+\min\{\tau,\sqrt{2}\sigma\}
\right)^2.
\end{align*}

It follows that
\begin{align*}
\mathbb E\|\widehat Z_N\|^2
&\leq
d\left(
\sqrt{1+\sigma^2}
+3
+\min\{\tau,\sqrt2\sigma\}
\right)^2.
\end{align*}
Combining the preceding moment bounds yields
\begin{align*}
W_1\left(
\mathcal L(Y_1+\sigma G),
\mathcal L(\widehat Z_N)
\right)
&\leq
2^{1/4}\sqrt d
\left[
1+\sigma^2+
\left(
\sqrt{1+\sigma^2}
+3
+\min\{\tau,\sqrt2\sigma\}
\right)^2
\right]^{1/2}\\
&\qquad\cdot
\operatorname{KL}\left(
\mathcal L(Y_1+\sigma G)
\,\middle\|\,
\mathcal L(\widehat Z_N)
\right)^{1/4}.
\end{align*}
The triangle inequality through \(\mathcal L(Y_1+\sigma G)\) proves the result.
\end{proof}

\begin{revision}
\begin{corollary}[Wasserstein Error After Output Clipping]
\label{cor:terminal_projection_W1}
Let \(Y_1\sim\mu_1\) be supported on \(K=[-1,1]^d\), let
\(G\sim\mathcal N(0,I_d)\) be independent of \(Y_1\), and let
\(\widehat Z_N\) be any \(\mathbb R^d\)-valued random vector. Define the
Euclidean projection
\[
\Pi=\operatorname{clip}_1:\mathbb R^d\to K,
\qquad
\widetilde Z_N=\Pi(\widehat Z_N).
\]
Then
\begin{align}
\label{eq:terminal_projection_W1}
W_1\!\left(\mu_1,\mathcal L(\widetilde Z_N)\right)
\leq \sigma\sqrt d+
\sqrt{
2d\,\operatorname{KL}\!\left(
\mathcal L(Y_1+\sigma G)
\middle\|
\mathcal L(\widehat Z_N)
\right)
}.
\end{align}
Under the conditions of Theorem~\ref{Theorem_Sampling_Error}, its KL bound
may be substituted into the right-hand side of
\eqref{eq:terminal_projection_W1}. Clipping is applied only after the final Euler step.
\end{corollary}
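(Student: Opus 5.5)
The plan is a triangle inequality through \(\mathcal L(\Pi(Y_1+\sigma G))\), followed by a TV-based bound on the bounded set \(K\). First, since \(\Pi\) is the Euclidean projection onto the convex set \(K\) and \(Y_1\in K\), we have \(\Pi(Y_1)=Y_1\). Projection onto a closed convex set is \(1\)-Lipschitz, so
\[
W_1\bigl(\mu_1,\mathcal L(\Pi(Y_1+\sigma G))\bigr)\leq \mathbb E\|\Pi(Y_1+\sigma G)-\Pi(Y_1)\|\leq\sigma\mathbb E\|G\|\leq\sigma\sqrt d .
\]
This gives the first term of \eqref{eq:terminal_projection_W1}. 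It is the same smoothing-error step as in Corollary~\ref{cor:counterfactual_W1_from_KL}, now passed through \(\Pi\).

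Second, I will compare \(\mathcal L(\Pi(Y_1+\sigma G))\) with \(\mathcal L(\widetilde Z_N)=\mathcal L(\Pi(\widehat Z_N))\). Both laws live on \(K\), whose diameter is \(\operatorname{diam}(K)=2\sqrt d\). For any two probability measures on a set of diameter \(\Delta\), take a maximal coupling; the displacement is at most \(\Delta\) on the event where the two coordinates differ. This gives \(W_1\leq \Delta\cdot\operatorname{TV}\). By the data-processing inequality, pushing both laws forward through the measurable map \(\Pi\) does not increase TV. Pinsker's inequality then gives
\[
\operatorname{TV}\bigl(\mathcal L(\Pi(Y_1+\sigma G)),\mathcal L(\widetilde Z_N)\bigr)\leq\operatorname{TV}\bigl(\mathcal L(Y_1+\sigma G),\mathcal L(\widehat Z_N)\bigr)\leq\sqrt{\tfrac12\operatorname{KL}\bigl(\mathcal L(Y_1+\sigma G)\,\|\,\mathcal L(\widehat Z_N)\bigr)} .
\]
Combining the two displays yields \(2\sqrt d\cdot\sqrt{\mathrm{KL}/2}=\sqrt{2d\,\mathrm{KL}}\), exactly the second term. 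Adding the two pieces by the triangle inequality for \(W_1\) proves \eqref{eq:terminal_projection_W1}.

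For the final sentence of the statement, I will use the fact that under the conditions of Theorem~\ref{Theorem_Sampling_Error} the exact process satisfies \(\mathcal L(Z_{t_N})=\mathcal L(Y_1+\sigma G)\). This identification was noted just before Corollary~\ref{cor:counterfactual_W1_from_KL}, and it holds because the initial laws coincide. Hence the bound \eqref{eq:explicit_KL_sampling_bound} can be inserted directly.

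There is no real obstacle here. The only points needing care are the choice of diameter and the order of operations. The diameter of \(K\) is \(2\sqrt d\) in Euclidean norm, which produces the constant. Data processing must be applied before Pinsker, so that the KL on the right-hand side is the unprojected one. Unlike Corollary~\ref{cor:counterfactual_W1_from_KL}, no moment bound on the Euler iterates is needed, because clipping makes both laws bounded. This is why the rate improves from \(\mathrm{KL}^{1/4}\) to \(\mathrm{KL}^{1/2}\).
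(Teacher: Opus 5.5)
Your proof is correct and follows the paper's argument: nonexpansiveness of \(\Pi\) gives the \(\sigma\sqrt d\) term, then a maximal coupling on \(K\) (diameter \(2\sqrt d\)), Pinsker, data processing and the triangle inequality give the rest. Two small corrections: the order is not forced, since the paper applies Pinsker to the projected laws first and then data processing for KL; and the identity \(\mathcal L(Z_{t_N})=\mathcal L(Y_1+\sigma G)\) comes from marginal preservation \(\mathcal L(Z_t)=q_t\) in Lemma~\ref{lem:exact_sampling_conditions} (with \(X_1=Y_1+\sigma G\) and the coupling's target marginal equal to \(\mu_1\)), not from the coincidence of initial laws, which is what the Girsanov comparison needs.
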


\begin{proof}
Write \(P_\sigma=\mathcal L(Y_1+\sigma G)\) and
\(\widehat P=\mathcal L(\widehat Z_N)\). Since
\(\Pi(Y_1)=Y_1\) and Euclidean projection onto \(K\) is nonexpansive,
\begin{align*}
W_1(\mu_1,\Pi_\#P_\sigma)
&\leq
\mathbb E\|\Pi(Y_1)-\Pi(Y_1+\sigma G)\|
\leq \sigma\mathbb E\|G\|
\leq \sigma\sqrt d.
\end{align*}
Both \(\Pi_\#P_\sigma\) and \(\Pi_\#\widehat P\) are supported on \(K\),
whose diameter is \(2\sqrt d\). A maximal coupling, Pinsker's inequality,
and data processing for KL divergence therefore give
\begin{align*}
W_1(\Pi_\#P_\sigma,\Pi_\#\widehat P)
&\leq
2\sqrt d\,\operatorname{TV}(\Pi_\#P_\sigma,\Pi_\#\widehat P)\\
&\leq
\sqrt{
2d\,\operatorname{KL}(\Pi_\#P_\sigma\|\Pi_\#\widehat P)
}\\
&\leq
\sqrt{
2d\,\operatorname{KL}(P_\sigma\|\widehat P)
}.
\end{align*}
The triangle inequality proves \eqref{eq:terminal_projection_W1}.
\end{proof}

This is a guarantee for the clipped output
\(\widetilde Z_N\); the KL bound in
Theorem~\ref{Theorem_Sampling_Error} continues to concern the Euler output \(\widehat Z_N\) without clipping.
\end{revision}

%% file: sect_Q.tex
\section[Coupling Construction and Replacement]
{Coupling Construction and Replacement}
\label{Sect_EOT}

We first introduce some notations.
For any distribution \(\zeta\) and any conditional distribution \(K(dy_0\mid y_1)\), write
\begin{align*}
\Gamma_{\zeta,K}(dy_0,dy_1)=K(dy_0\mid y_1)\zeta(dy_1).
\end{align*}
This is the joint law obtained by drawing \(Y_1\sim\zeta\), and then drawing \(Y_0\) from \(K(\cdot\mid Y_1)\).
On \(\mathbb R^d\times\mathbb R^d\), define
\[
\rho\bigl((y_0,y_1),(\widetilde y_0,\widetilde y_1)\bigr)
=
\|y_0-\widetilde y_0\|+\|y_1-\widetilde y_1\|,
\]
and let \(W_{1,\rho}\) denote the \(1\)-Wasserstein distance induced by
\(\rho\).

We first construct \(\widehat Q_n\) using finitely many samples from the prediction oracle. 
Define the empirical auxiliary covariate distribution by
\(\widehat{\mathbb P}_C := n^{-1}\sum_{i=n+1}^{2n}\delta_{C_i}\).
Conditional on \(\mathcal D_{\mathrm{aux}}\), for each \(i=n+1,\ldots,2n\), draw
\begin{align*}
\widehat Y_{i,1}^{Q},\ldots,\widehat Y_{i,M_Q}^{Q}\overset{\mathrm{ind.}}{\sim}\predorac\ \text{evaluated at }c=C_i.
\end{align*}
We then define
\begin{align*}
\widehat{\mathbb P}_{Y_1}^{Q}=\frac{1}{nM_Q}\sum_{i=n+1}^{2n}\sum_{r=1}^{M_Q}\delta_{\widehat Y_{i,r}^{Q}},
\end{align*}
which approximates \(\int\predorac\,d\widehat{\mathbb P}_C(c)\).
To keep the coupling arguments readable, write
\begin{align*}
\mu_1
&:=
\int \mathbb P_{Y\mid C=c,A=a}\,d\mathbb P_C(c),
&
\overline\mu_Q
&:=
\int\predorac\,d\widehat{\mathbb P}_C(c),
&
\mu_Q^{\mathrm{MC}}
&:=
\widehat{\mathbb P}_{Y_1}^{Q}.
\end{align*}
Thus \(\mu_1\) is the population counterfactual target,
\(\overline\mu_Q\) is the fitted target averaged over auxiliary covariates,
and \(\mu_Q^{\mathrm{MC}}\) is its finite-\(M_Q\) empirical approximation.

In this section, we focus on the entropic optimal transport kernel between \(\widehat{\mathbb P}_{Y\mid A=a}^{\mathrm{aux}}\) and \(\mu_Q^{\mathrm{MC}}\):
\begin{align*}
\eotsol\in\operatorname*{arg\,min}_{\Gamma\in\Pi\left(\widehat{\mathbb P}_{Y\mid A=a}^{\mathrm{aux}},\mu_Q^{\mathrm{MC}}\right)}
\bigg\{\int\|y_1-y_0\|^2\,\Gamma(dy_0,dy_1)
+\varepsilon_{\mathrm{OT}}\operatorname{KL}\left(\Gamma\,\middle\|\,\widehat{\mathbb P}_{Y\mid A=a}^{\mathrm{aux}}\otimes\mu_Q^{\mathrm{MC}}\right)\bigg\}.
\end{align*}
When \(\varepsilon_{\mathrm{OT}}\) is close to zero, the EOT objective is close to the unregularized optimal transport problem between \(\widehat{\mathbb P}_{Y\mid A=a}^{\mathrm{aux}}\) and \(\mu_Q^{\mathrm{MC}}\).

For a generic kernel, one would need to assume regularity of \(y_1\mapsto \widehat Q_n(\cdot\mid y_1)\). This regularity is needed because we need to control the coupling replacement error. The proof evaluates \(\widehat Q_n\) not only at samples from \(\mu_Q^{\mathrm{MC}}\), where the EOT marginal identity \(\int \widehat Q_n(\cdot\mid y_1)\mu_Q^{\mathrm{MC}}(dy_1)=\widehat{\mathbb P}_{Y\mid A=a}^{\mathrm{aux}}\) holds, but also at \(y_1\)'s from the population target. 
For the EOT kernel, this needed regularity can be proved directly: Lemma \ref{Lemma_EOT_Stability} shows that \(y_1\mapsto \widehat Q_n(\cdot\mid y_1)\) is Lipschitz in \(W_1\), with constant proportional to \(1/\varepsilon_{\mathrm{OT}}\). We then establish a coupling replacement error bound, see Lemma \ref{lem:existence_population_valid_Q} and Corollary \ref{Cor_Coupling_Replacement}. 

\subsection[The Learned EOT Kernel]
{The Learned EOT Kernel}

The entropic optimal transport problem defining \(\eotsol\) has a unique optimizer. By \cite[Proposition~2.1]{genevay2016stochastic}, there exist \(\widehat\varphi_n\) and \(\widehat\psi_n\) such that
\begin{align*}
d\eotsol(y_0,y_1)
=\exp\left(\frac{\widehat\varphi_n(y_0)+\widehat\psi_n(y_1)-\|y_0-y_1\|^2}{\varepsilon_{\mathrm{OT}}}\right) d\widehat{\mathbb P}_{Y\mid A=a}^{\mathrm{aux}}(y_0) d\mu_Q^{\mathrm{MC}}(y_1)
\end{align*}
for almost every \((y_0,y_1)\). 
Consequently, for \(\mu_Q^{\mathrm{MC}}\)-almost every
\(y\), the conditional distribution of \(Y_0\) given \(Y_1=y\) under
\(\eotsol\) is
\begin{align*}
\widehat Q_n(dy_0\mid y)
=
\frac{\exp\left(\frac{\widehat\varphi_n(y_0)-\|y_0-y\|^2}{\varepsilon_{\mathrm{OT}}}\right)}
{\int\exp\left(\frac{\widehat\varphi_n(z)-\|z-y\|^2}{\varepsilon_{\mathrm{OT}}}\right)\widehat{\mathbb P}_{Y\mid A=a}^{\mathrm{aux}}(dz)}
\widehat{\mathbb P}_{Y\mid A=a}^{\mathrm{aux}}(dy_0).
\end{align*}
Motivated by the out-of-sample entropic extension discussed before
Equation~(13) of \cite{pooladian2021entropic}, we use the displayed Gibbs
formula to define \(\widehat Q_n(\cdot\mid y)\) for every
\(y\in\mathbb R^d\). 
We remark that this is an extension of the conditional kernel.
The denominator is finite and strictly positive, so the right-hand side is a
probability distribution for every \(y\).
It holds that 
\begin{align*}
\eotsol(dy_0,dy_1)=\widehat Q_n(dy_0\mid y_1)\mu_Q^{\mathrm{MC}}(dy_1).
\end{align*}
Moreover, under Assumption \ref{ass:clipping_bounded_support}, we have $\widehat Q_n([-1,1]^d\mid y)=1,\forall y\in\mathbb R^d$.
The same kernel can be evaluated at target queries drawn from the training-fold fitted distribution or from the population target distribution. The resulting joint distribution remains well defined, but its \(Y_0\)-marginal need not equal \(\widehat{\mathbb P}_{Y\mid A=a}^{\mathrm{aux}}\), because the exact source-marginal identity holds only when \(\widehat Q_n\) is integrated against \(\mu_Q^{\mathrm{MC}}\).

By construction, we have the marginal identity
\begin{align*}
\int\widehat Q_n(\cdot\mid y_1)\mu_Q^{\mathrm{MC}}(dy_1)=\widehat{\mathbb P}_{Y\mid A=a}^{\mathrm{aux}}.
\end{align*}
This identity is not an identity under \(\int\predorac\,d\widehat{\mathbb P}_C(c)\), nor under the training-fold fitted target distribution \(\frac{1}{n}\sum_{i=1}^n\widehat{\mathbb P}_{Y\mid C=C_i,A=a}\).

\begin{lemma}[Stability of the Finite-Sample Entropic Coupling]
\label{Lemma_EOT_Stability}
Assume Assumption \ref{ass:clipping_bounded_support}. Then, for every \(y_1,y_1'\in\mathbb R^d\),
\begin{align*}
W_1\left(\widehat Q_n(\cdot\mid y_1),\widehat Q_n(\cdot\mid y_1')\right)\leq\frac{2d}{\varepsilon_{\mathrm{OT}}}\|y_1-y_1'\|.
\end{align*}
\end{lemma}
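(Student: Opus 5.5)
The kernel $\widehat Q_n(\cdot\mid y)$ is a Gibbs reweighting of the fixed discrete measure $\nu:=\widehat{\mathbb P}_{Y\mid A=a}^{\mathrm{aux}}$, supported in $[-1,1]^d$ by Assumption~\ref{ass:clipping_bounded_support}. So the plan is to bound the derivative of $y\mapsto\widehat Q_n(\cdot\mid y)$ directly. Expand $-\|y_0-y\|^2=-\|y_0\|^2+2y_0^\top y-\|y\|^2$. The term $-\|y\|^2$ cancels between numerator and denominator, so
\[
\widehat Q_n(dy_0\mid y)\propto \exp\!\Big(\tfrac{\widehat\varphi_n(y_0)-\|y_0\|^2+2y_0^\top y}{\varepsilon_{\mathrm{OT}}}\Big)\nu(dy_0),
\]
which is an exponential family in $y$ with sufficient statistic $2y_0/\varepsilon_{\mathrm{OT}}$.

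For a fixed $1$-Lipschitz test function $f$, set $F(y)=\mathbb E_{\widehat Q_n(\cdot\mid y)}[f(Y_0)]$. Because $\nu$ is a finite atomic measure, differentiation under the sum is trivial, and the standard exponential-family identity gives
\[
\nabla_y F(y)=\tfrac{2}{\varepsilon_{\mathrm{OT}}}\operatorname{Cov}_{y}\big(f(Y_0),Y_0\big),
\]
with the covariance taken under $\widehat Q_n(\cdot\mid y)$. The directional derivative along a unit vector $u$ is then bounded by Cauchy--Schwarz:
\[
\big|\tfrac{2}{\varepsilon_{\mathrm{OT}}}\operatorname{Cov}_y(f(Y_0),u^\top Y_0)\big|\le \tfrac{2}{\varepsilon_{\mathrm{OT}}}\sqrt{\operatorname{Var}_y f(Y_0)}\,\sqrt{\operatorname{Var}_y u^\top Y_0}.
\]
Both variances are controlled by the support. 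First, $\operatorname{Var}_y f(Y_0)\le \mathbb E_y\|Y_0-\mathbb E_yY_0\|^2\le \mathbb E_y\|Y_0\|^2\le d$, since $f$ is $1$-Lipschitz. Second, $\operatorname{Var}_y u^\top Y_0\le \mathbb E_y|u^\top Y_0|^2\le d$ as well, or $\le 1$ with a sharper coordinatewise argument. This gives $|\partial_u F|\le 2d/\varepsilon_{\mathrm{OT}}$.

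Integrating along the segment from $y_1'$ to $y_1$ then gives $|F(y_1)-F(y_1')|\le \frac{2d}{\varepsilon_{\mathrm{OT}}}\|y_1-y_1'\|$. Taking the supremum over $1$-Lipschitz $f$ and invoking Kantorovich--Rubinstein duality yields the claim. Since the kernels are supported in a compact set, it suffices to consider bounded Lipschitz $f$, for example normalized with $f(0)=0$.

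I expect the main obstacle to be bookkeeping rather than any deep step. One point is confirming that the out-of-sample Gibbs extension has a $y$-independent potential $\widehat\varphi_n$; it does, since $\widehat\varphi_n$ is fixed by the finite-sample problem. The other is tracking constants so that the product of the two standard deviations is at most $d$. The crude bound $\sqrt d\cdot\sqrt d$ already matches the stated constant $2d/\varepsilon_{\mathrm{OT}}$, so no sharper variance estimate is needed.
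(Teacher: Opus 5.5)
Your proposal is correct and is essentially the paper's argument: differentiate the Gibbs kernel along the segment between the two query points to get $\frac{2}{\varepsilon_{\mathrm{OT}}}\operatorname{Cov}\bigl(f(Y_0),\langle Y_0,y_1'-y_1\rangle\bigr)$, bound it by Cauchy--Schwarz with both variances at most $d$ (up to the factor $\|y_1'-y_1\|^2$ in the second), integrate, and apply Kantorovich--Rubinstein duality. The only difference in the variance step is that the paper uses Popoviciu's inequality where you use $\operatorname{Var} f(Y_0)\le\mathbb E\|Y_0-\mathbb E Y_0\|^2$. One aside is wrong but unused: $\operatorname{Var}(u^\top Y_0)\le 1$ fails in general (mass $\tfrac12$ on $\pm(1,\ldots,1)$ with $u=(1,\ldots,1)/\sqrt d$ gives variance $d$), so the crude bound $d$ is actually the right one here.
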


\begin{proof}
Recall that 
\begin{align*}
\widehat Q_n(dy_0\mid y)
=
\frac{\exp\left(\frac{\widehat\varphi_n(y_0)-\|y_0-y\|^2}{\varepsilon_{\mathrm{OT}}}\right)}
{\int\exp\left(\frac{\widehat\varphi_n(z)-\|z-y\|^2}{\varepsilon_{\mathrm{OT}}}\right)\widehat{\mathbb P}_{Y\mid A=a}^{\mathrm{aux}}(dz)}
\widehat{\mathbb P}_{Y\mid A=a}^{\mathrm{aux}}(dy_0).
\end{align*}
Fix \(y_1,y_1'\in\mathbb R^d\) and a \(1\)-Lipschitz function \(f:\mathbb R^d\to\mathbb R\). For \(t\in[0,1]\), substituting \(y=(1-t)y_1+ty_1'\) into the preceding display gives
\begin{align*}
\int f(y_0)\widehat Q_n\left(dy_0\mid(1-t)y_1+ty_1'\right) =
\frac{\int f(y_0)\exp\left(\frac{\widehat\varphi_n(y_0)-\|y_0-((1-t)y_1+ty_1')\|^2}{\varepsilon_{\mathrm{OT}}}\right)\widehat{\mathbb P}_{Y\mid A=a}^{\mathrm{aux}}(dy_0)}
{\int\exp\left(\frac{\widehat\varphi_n(z)-\|z-((1-t)y_1+ty_1')\|^2}{\varepsilon_{\mathrm{OT}}}\right)\widehat{\mathbb P}_{Y\mid A=a}^{\mathrm{aux}}(dz)}.
\end{align*}
For every \(y_0\in[-1,1]^d\),
\begin{align*}
\frac{d}{dt}\left[-\frac{\|y_0-((1-t)y_1+ty_1')\|^2}{\varepsilon_{\mathrm{OT}}}\right]
=
\frac{2}{\varepsilon_{\mathrm{OT}}}
\left\langle y_0-((1-t)y_1+ty_1'),y_1'-y_1\right\rangle.
\end{align*}
Applying the quotient rule to the preceding ratio yields
\begin{align*}
&\frac{d}{dt}\int f(y_0)\widehat Q_n\left(dy_0\mid(1-t)y_1+ty_1'\right)\\
&=
\frac{2}{\varepsilon_{\mathrm{OT}}}
\int f(y_0)
\left\langle y_0-((1-t)y_1+ty_1'),y_1'-y_1\right\rangle
\widehat Q_n\left(dy_0\mid(1-t)y_1+ty_1'\right)\\
&\quad-
\frac{2}{\varepsilon_{\mathrm{OT}}}
\int f(y_0)\widehat Q_n\left(dy_0\mid(1-t)y_1+ty_1'\right)
\int
\left\langle z-((1-t)y_1+ty_1'),y_1'-y_1\right\rangle
\widehat Q_n\left(dz\mid(1-t)y_1+ty_1'\right)\\
&=
\frac{2}{\varepsilon_{\mathrm{OT}}}
\operatorname{Cov}_{Y_0\sim\widehat Q_n(\cdot\mid(1-t)y_1+ty_1')}
\left(f(Y_0),\left\langle Y_0,y_1'-y_1\right\rangle\right).
\end{align*}
Assumption \ref{ass:clipping_bounded_support} implies that \(\widehat Q_n(\cdot\mid(1-t)y_1+ty_1')\) is supported on \([-1,1]^d\). Since \(f\) is \(1\)-Lipschitz,
\begin{align*}
\sup_{y_0,z\in[-1,1]^d}|f(y_0)-f(z)|\leq2\sqrt d, \quad \text{and} \quad \sup_{y_0,z\in[-1,1]^d}\left|\left\langle y_0-z,y_1'-y_1\right\rangle\right|\leq2\sqrt d\,\|y_1'-y_1\|.
\end{align*}
Therefore, Popoviciu's variance inequality gives
\begin{align*}
\operatorname{Var}_{Y_0\sim\widehat Q_n(\cdot\mid(1-t)y_1+ty_1')}\left(f(Y_0)\right)&\leq d,\\
\operatorname{Var}_{Y_0\sim\widehat Q_n(\cdot\mid(1-t)y_1+ty_1')}\left(\left\langle Y_0,y_1'-y_1\right\rangle\right)&\leq d\|y_1'-y_1\|^2.
\end{align*}
By Cauchy--Schwarz,
\begin{align*}
\left|\operatorname{Cov}_{Y_0\sim\widehat Q_n(\cdot\mid(1-t)y_1+ty_1')}
\left(f(Y_0),\left\langle Y_0,y_1'-y_1\right\rangle\right)\right|
\leq d\|y_1'-y_1\|.
\end{align*}
Hence, for every \(t\in[0,1]\),
\begin{align*}
\left|\frac{d}{dt}\int f(y_0)\widehat Q_n\left(dy_0\mid(1-t)y_1+ty_1'\right)\right|
\leq\frac{2d}{\varepsilon_{\mathrm{OT}}}\|y_1'-y_1\|.
\end{align*}
Integrating over \(t\in[0,1]\) yields
\begin{align*}
\left|\int f(y_0)\widehat Q_n(dy_0\mid y_1')-\int f(y_0)\widehat Q_n(dy_0\mid y_1)\right|
\leq\frac{2d}{\varepsilon_{\mathrm{OT}}}\|y_1'-y_1\|.
\end{align*}
Taking the supremum over all \(1\)-Lipschitz \(f\) and applying Kantorovich--Rubinstein duality gives
\begin{align*}
W_1\left(\widehat Q_n(\cdot\mid y_1),\widehat Q_n(\cdot\mid y_1')\right)
\leq\frac{2d}{\varepsilon_{\mathrm{OT}}}\|y_1-y_1'\|.
\end{align*}
\end{proof}

\begin{proposition}[Target-Query Discrepancy Under Empirical \(C\)]
\label{prop:target_query_discrepancy_empirical_C}
Assume Assumptions \ref{ass:clipping_bounded_support} and \ref{ass:psi_empirical_training_W1_accuracy}. Then
\begin{align*}
&W_1\left(\int \mathbb P_{Y\mid C=c,A=a}\,d\mathbb P_C(c),\frac{1}{n}\sum_{i=n+1}^{2n}\widehat{\mathbb P}_{Y\mid C=C_i,A=a}\right)\\
&\leq W_1\left(\int \mathbb P_{Y\mid C=c,A=a}\,d\mathbb P_C(c),\frac{1}{n}\sum_{i=n+1}^{2n}\mathbb P_{Y\mid C=C_i,A=a}\right)+\epspsiemp.
\end{align*}
Consequently, on the event of Lemma \ref{lem:empirical_covariate_standardization_error},
\begin{align*}
W_1\left(\int \mathbb P_{Y\mid C=c,A=a}\,d\mathbb P_C(c),\frac{1}{n}\sum_{i=n+1}^{2n}\widehat{\mathbb P}_{Y\mid C=C_i,A=a}\right)\lesssim n^{-1/(d+2)}+\epspsiemp.
\end{align*}
\end{proposition}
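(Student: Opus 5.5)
The first inequality follows from the triangle inequality for \(W_1\), routed through the intermediate mixture \(\frac1n\sum_{i=n+1}^{2n}\mathbb P_{Y\mid C=C_i,A=a}\). Write \(\nu_n:=\frac1n\sum_{i=n+1}^{2n}\mathbb P_{Y\mid C=C_i,A=a}\) and \(\widehat\nu_n:=\frac1n\sum_{i=n+1}^{2n}\widehat{\mathbb P}_{Y\mid C=C_i,A=a}\). Then
\[
W_1(\mu_1,\widehat\nu_n)\le W_1(\mu_1,\nu_n)+W_1(\nu_n,\widehat\nu_n).
\]
For the second term, I would use joint convexity of \(W_1\) under mixtures with common weights. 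Concretely, choose an optimal coupling \(\pi_i\) of \(\mathbb P_{Y\mid C=C_i,A=a}\) and \(\widehat{\mathbb P}_{Y\mid C=C_i,A=a}\) for each \(i\); existence follows from the bounded support in Assumption~\ref{ass:clipping_bounded_support}, and a measurable selection is not needed because there are only finitely many \(i\). The average \(\frac1n\sum_i\pi_i\) is a coupling of \(\nu_n\) and \(\widehat\nu_n\). Equivalently, by Kantorovich--Rubinstein duality, taking the supremum of an average of \(1\)-Lipschitz integrals is at most the average of the suprema. This gives
\[
W_1(\nu_n,\widehat\nu_n)\le\frac1n\sum_{i=n+1}^{2n}W_1\bigl(\mathbb P_{Y\mid C=C_i,A=a},\widehat{\mathbb P}_{Y\mid C=C_i,A=a}\bigr).
\]
Jensen's inequality (\(\frac1n\sum a_i\le(\frac1n\sum a_i^2)^{1/2}\)) together with Assumption~\ref{ass:psi_empirical_training_W1_accuracy} then bounds the right-hand side by \(\epspsiemp\), which proves the first display.

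For the ``consequently'' part, I would simply invoke Lemma~\ref{lem:empirical_covariate_standardization_error}. On its event, that lemma controls \(W_1(\mu_1,\nu_n)\lesssim n^{-1/(d+2)}\). This is the standardization error from replacing \(\mathbb P_C\) by the empirical auxiliary covariate law inside the mixture, and with outcomes in \([-1,1]^d\) it is an empirical-measure \(W_1\) rate. Substituting it into the first inequality gives the stated bound.

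The main point requiring care is the convexity step: the mixture weights must match (both are uniform \(1/n\) over the same auxiliary covariates) for the mixture coupling to be valid. The only other potential obstacle is the dependence on Lemma~\ref{lem:empirical_covariate_standardization_error}. If one had to prove that lemma instead, I would bound \(\sup_f|\mathbb E_C g_f(C)-\frac1n\sum g_f(C_i)|\) with \(g_f(c)=\int f\,d\mathbb P_{Y\mid C=c,A=a}\). The natural route is to pass to the joint empirical measure of auxiliary \((C_i,Y_i)\) pairs and use a Fournier--Guillin-type rate on \([-1,1]^d\). That route is where the \(d+2\) exponent and the high-probability event would come from.
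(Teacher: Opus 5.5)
Your argument matches the paper's proof: triangle inequality through $\frac1n\sum_{i=n+1}^{2n}\mathbb P_{Y\mid C=C_i,A=a}$, a uniform mixture of the per-$i$ optimal couplings, then $\frac1n\sum a_i\le(\frac1n\sum a_i^2)^{1/2}$ with Assumption~\ref{ass:psi_empirical_training_W1_accuracy}, and finally substitution of Lemma~\ref{lem:empirical_covariate_standardization_error}. Your aside on proving that lemma is not needed, and it is slightly off: the paper partitions $[-1,1]^d$ into cubes of side $2/K_n$ with $K_n=\lceil n^{1/(d+2)}\rceil$ and applies McDiarmid to the cell masses $q_j(C_i)=\mathbb P(Y\in A_j\mid C=C_i,A=a)$, so the exponent $d+2$ comes from balancing $1/K_n$ against $\sqrt{K_n^d/n}$ rather than from a Fournier--Guillin rate (which would give a faster rate), and the observed auxiliary $Y_i$ are draws from $\mathbb P_{Y\mid C=C_i,A=a}$ only when $A_i=a$.
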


\begin{proof}
The triangle inequality gives
\begin{align*}
&W_1\left(\int \mathbb P_{Y\mid C=c,A=a}\,d\mathbb P_C(c),\frac{1}{n}\sum_{i=n+1}^{2n}\widehat{\mathbb P}_{Y\mid C=C_i,A=a}\right)\\
&\leq W_1\left(\int \mathbb P_{Y\mid C=c,A=a}\,d\mathbb P_C(c),\frac{1}{n}\sum_{i=n+1}^{2n}\mathbb P_{Y\mid C=C_i,A=a}\right)\\
&\quad+W_1\left(\frac{1}{n}\sum_{i=n+1}^{2n}\mathbb P_{Y\mid C=C_i,A=a},\frac{1}{n}\sum_{i=n+1}^{2n}\widehat{\mathbb P}_{Y\mid C=C_i,A=a}\right).
\end{align*}
For each \(i=n+1,\ldots,2n\), take an optimal coupling of \(\mathbb P_{Y\mid C=C_i,A=a}\) and \(\widehat{\mathbb P}_{Y\mid C=C_i,A=a}\), and mix these couplings uniformly over \(i\). This gives
\begin{align*}
&W_1\left(\frac{1}{n}\sum_{i=n+1}^{2n}\mathbb P_{Y\mid C=C_i,A=a},\frac{1}{n}\sum_{i=n+1}^{2n}\widehat{\mathbb P}_{Y\mid C=C_i,A=a}\right)\\
&\leq\frac{1}{n}\sum_{i=n+1}^{2n}W_1\left(\mathbb P_{Y\mid C=C_i,A=a},\widehat{\mathbb P}_{Y\mid C=C_i,A=a}\right)\\
&\leq\left[\frac{1}{n}\sum_{i=n+1}^{2n}W_1\left(\mathbb P_{Y\mid C=C_i,A=a},\widehat{\mathbb P}_{Y\mid C=C_i,A=a}\right)^2\right]^{1/2}\leq\epspsiemp.
\end{align*}
Combining this bound with Lemma \ref{lem:empirical_covariate_standardization_error} proves the claim.
\end{proof}

Conditional on \(\mathcal D_{\mathrm{aux}}\), \(\mathcal D_{\mathrm{tr}}\), and \(\{\widehat Y_{i,r}^{Q}\}\), the offline final-flow Monte Carlo array is drawn independently of the coupling reservoir:
\begin{align}\label{eq_offline_sample_1}
\widehat Y_{i,1},\ldots,\widehat Y_{i,M}\overset{\mathrm{ind.}}{\sim}\predorac\ \text{evaluated at }c=C_i,\qquad i=1,\ldots,n.
\end{align}
For each \(i\) and \(k\), draw independently
\begin{align}\label{eq_offline_sample_2}
\widehat Y_{0,i,k}\sim\widehat Q_n(\cdot\mid\widehat Y_{i,k}),\qquad K_{i,k}\sim\operatorname{Unif}\{0,\ldots,N-1\},\qquad G_{i,k}\sim {\mathcal N}(0,I_d),
\end{align}
and set \(t_{i,k}=t_{K_{i,k}}\). For every \(i\) with \(A_i=a\), draw independently
\begin{align}\label{eq_offline_sample_3}
\widehat Y_{0,i}^{\mathrm{obs}}\sim\widehat Q_n(\cdot\mid Y_i),\qquad K_i^{\mathrm{obs}}\sim\operatorname{Unif}\{0,\ldots,N-1\},\qquad G_i^{\mathrm{obs}}\sim {\mathcal N}(0,I_d),
\end{align}
and set \(t_i^{\mathrm{obs}}=t_{K_i^{\mathrm{obs}}}\). These variables are drawn once before minimizing the final empirical objective and are held fixed during optimization. The \(M\)-sample empirical-process error for this fixed objective is separate from the \(M_Q\)-sample coupling-reservoir error above.

\begin{assumption}[Nonempty Treatment-\(a\) Auxiliary Sample]
\label{ass:nonempty_treatment_a_auxiliary_sample}
The auxiliary split contains at least one observation with treatment \(A=a\):
\begin{align*}
\sum_{i=n+1}^{2n}\mathbf 1\{A_i=a\}\geq1.
\end{align*}
\end{assumption}

Under the overlap condition
\(\mathbb P(A=a\mid C=c)\geq\pi_{\min}\), the event in this assumption has
probability at least
\[
1-(1-\pi_{\min})^n
\geq
1-e^{-n\pi_{\min}}.
\]
Thus the assumption may equivalently be replaced by this additional
exponentially small failure probability.

\begin{lemma}[Existence of an Ideal Coupling \(Q\) With the Correct Population Marginals]
\label{lem:existence_population_valid_Q}
Under Assumptions \ref{ass:clipping_bounded_support} and
\ref{ass:nonempty_treatment_a_auxiliary_sample}, let \(\lambda^\star\) be an
optimal coupling of \(\mu_1\) and \(\mu_Q^{\mathrm{MC}}\), and write
\(\lambda^\star(d\widehat y_1\mid y_1)\) for the conditional law of its
second coordinate given the first. Define
\[
Q(dy_0\mid y_1)
:=
\int
\widehat Q_n(dy_0\mid\widehat y_1)\,
\lambda^\star(d\widehat y_1\mid y_1).
\]
Then \(Q\) satisfies the marginal identity

\begin{align*}
\int Q(\cdot\mid y_1)\,\mu_1(dy_1)
&=
\widehat{\mathbb P}_{Y\mid A=a}^{\mathrm{aux}}
\end{align*}
and the coupling-replacement bound
\begin{align*}
W_{1,\rho}\!\left(
\Gamma_{\mu_1,Q},
\Gamma_{\mu_1,\widehat Q_n}
\right)
&\leq
\frac{2d}{\varepsilon_{\mathrm{OT}}}
W_1\!\left(\mu_1,\mu_Q^{\mathrm{MC}}\right).
\end{align*}
\end{lemma}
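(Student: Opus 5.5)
The plan has two parts: the marginal identity follows from disintegrating $\lambda^\star$ and using the EOT marginal identity, and the replacement bound follows from an explicit synchronous coupling of $\Gamma_{\mu_1,Q}$ and $\Gamma_{\mu_1,\widehat Q_n}$ that shares the $Y_1$ coordinate.

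\emph{Marginal identity.} Under Assumptions~\ref{ass:clipping_bounded_support} and \ref{ass:nonempty_treatment_a_auxiliary_sample}, both $\mu_1$ and $\mu_Q^{\mathrm{MC}}$ are probability measures on $[-1,1]^d$. Hence an optimal $W_1$ coupling $\lambda^\star$ exists, and it disintegrates as $\lambda^\star(dy_1,d\widehat y_1)=\lambda^\star(d\widehat y_1\mid y_1)\mu_1(dy_1)$, since $\mathbb R^d$ is Polish. The kernel $Q$ is therefore a well-defined probability kernel, because $\widehat Q_n(\cdot\mid\widehat y_1)$ is defined for every $\widehat y_1$ by the Gibbs extension. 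By Fubini,
\[
\int Q(\cdot\mid y_1)\mu_1(dy_1)=\int\widehat Q_n(\cdot\mid\widehat y_1)\,\lambda^\star(dy_1,d\widehat y_1)=\int\widehat Q_n(\cdot\mid\widehat y_1)\,\mu_Q^{\mathrm{MC}}(d\widehat y_1).
\]
The last expression equals $\widehat{\mathbb P}^{\mathrm{aux}}_{Y\mid A=a}$ by the EOT marginal identity stated before Lemma~\ref{Lemma_EOT_Stability}.

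\emph{Replacement bound.} Draw $Y_1\sim\mu_1$ and $\widehat Y_1\sim\lambda^\star(\cdot\mid Y_1)$. Then, conditionally on $(Y_1,\widehat Y_1)$, draw the pair $(Y_0,Y_0')$ from a $W_1$-optimal coupling of $\widehat Q_n(\cdot\mid\widehat Y_1)$ and $\widehat Q_n(\cdot\mid Y_1)$. The pair $(Y_0,Y_1)$ has law $\Gamma_{\mu_1,Q}$, and $(Y_0',Y_1)$ has law $\Gamma_{\mu_1,\widehat Q_n}$. The $\rho$-cost of this coupling is $\mathbb E\|Y_0-Y_0'\|$, since the second coordinates coincide. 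By Lemma~\ref{Lemma_EOT_Stability},
\[
\mathbb E\|Y_0-Y_0'\|=\mathbb E\,W_1\bigl(\widehat Q_n(\cdot\mid\widehat Y_1),\widehat Q_n(\cdot\mid Y_1)\bigr)\le\frac{2d}{\varepsilon_{\mathrm{OT}}}\mathbb E\|\widehat Y_1-Y_1\|=\frac{2d}{\varepsilon_{\mathrm{OT}}}W_1(\mu_1,\mu_Q^{\mathrm{MC}}),
\]
where the last equality uses the optimality of $\lambda^\star$.

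\emph{Main obstacle.} The delicate step is the measurable selection of the conditional optimal couplings of the two $\widehat Q_n$ laws, so that the pair $(Y_0,Y_0')$ is jointly measurable in $(Y_1,\widehat Y_1)$. This can be handled by the standard measurable selection of optimal plans, which applies because the map $y\mapsto\widehat Q_n(\cdot\mid y)$ is continuous (indeed Lipschitz). Alternatively, it can be bypassed entirely through Kantorovich--Rubinstein duality. For 1-Lipschitz $F$ on $(\mathbb R^{2d},\rho)$, the function $y_0\mapsto F(y_0,y_1)$ is 1-Lipschitz for each fixed $y_1$. The difference of the two integrals of $F$ is then bounded pointwise in $(y_1,\widehat y_1)$ by $W_1\bigl(\widehat Q_n(\cdot\mid\widehat y_1),\widehat Q_n(\cdot\mid y_1)\bigr)$, and integrating this against $\lambda^\star$ gives the same bound. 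I would present the duality version to avoid the selection issue.
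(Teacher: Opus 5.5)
Your proposal is correct and follows the paper's proof: the marginal identity comes from drawing $(Y_1,\widehat Y_1)\sim\lambda^\star$ and applying the EOT source-marginal identity, and the replacement bound uses the same synchronous coupling that shares $Y_1$, combined with Lemma~\ref{Lemma_EOT_Stability} and the optimality of $\lambda^\star$. Your Kantorovich--Rubinstein variant is a clean way to avoid measurably selecting the conditional optimal plans, a point the paper passes over without comment.
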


\begin{proof}
Under the construction in the statement, draw
\((Y_1,\widehat Y_1)\sim\lambda^\star\), followed by
\[
Y_0\sim\widehat Q_n(\cdot\mid\widehat Y_1).
\]
The EOT marginal identity implies
\[
Y_0\sim
\int\widehat Q_n(\cdot\mid y)\mu_Q^{\mathrm{MC}}(dy)
=\widehat{\mathbb P}_{Y\mid A=a}^{\mathrm{aux}}.
\]
Thus \(Q\) has the displayed marginal identity.

Conditional on \((Y_1,\widehat Y_1)\), couple
\(\widetilde Y_0\sim\widehat Q_n(\cdot\mid Y_1)\) and
\(Y_0\sim\widehat Q_n(\cdot\mid\widehat Y_1)\) optimally.  Then
\(\bigl((Y_0,Y_1),(\widetilde Y_0,Y_1)\bigr)\) couples
\(\Gamma_{\mu_1,Q}\) and \(\Gamma_{\mu_1,\widehat Q_n}\), so
\begin{align*}
W_{1,\rho}(\Gamma_{\mu_1,Q},\Gamma_{\mu_1,\widehat Q_n})
&\leq
\mathbb E\|Y_0-\widetilde Y_0\|\\
&=
\mathbb E\,W_1\!\left(
\widehat Q_n(\cdot\mid\widehat Y_1),
\widehat Q_n(\cdot\mid Y_1)
\right)\\
&\leq
\frac{2d}{\varepsilon_{\mathrm{OT}}}
\mathbb E\|\widehat Y_1-Y_1\|\\
&=
\frac{2d}{\varepsilon_{\mathrm{OT}}}
W_1\!\left(\mu_1,\mu_Q^{\mathrm{MC}}\right),
\end{align*}
where the penultimate line uses Lemma~\ref{Lemma_EOT_Stability}.
\end{proof}

\begin{remark}\label{remark_Q}
Under Assumption~\ref{ass:clipping_bounded_support}, the coupling constructed
in Lemma~\ref{lem:existence_population_valid_Q} is supported on
\([-1,1]^d\times[-1,1]^d\). Hence its conditional kernel \(Q(dy_0\mid y_1)\)
may be chosen so that
\[
Q([-1,1]^d\mid y_1)=1,
\qquad y_1\in[-1,1]^d.
\]
\end{remark}

\begin{corollary}[Coupling-Replacement Error]\label{Cor_Coupling_Replacement}
Under Assumptions \ref{ass:clipping_bounded_support}, \ref{ass:psi_empirical_training_W1_accuracy}, and \ref{ass:nonempty_treatment_a_auxiliary_sample}, with probability at least \(1-\frac{2}{n}\),
\begin{align*}
W_{1,\rho}\!\left(
\Gamma_{\mu_1,Q},
\Gamma_{\mu_1,\widehat Q_n}
\right)
&\lesssim
\frac{1}{\varepsilon_{\mathrm{OT}}}
\left(
n^{-1/(d+2)}
+\epspsiemp
+(nM_Q)^{-1/(d+2)}
\right),
\end{align*}
where \(d\) is treated as fixed.
\end{corollary}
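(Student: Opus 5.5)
The plan is to combine Lemma~\ref{lem:existence_population_valid_Q} with a high-probability bound on $W_1(\mu_1,\mu_Q^{\mathrm{MC}})$. That lemma already gives, under Assumptions~\ref{ass:clipping_bounded_support} and~\ref{ass:nonempty_treatment_a_auxiliary_sample},
\[
W_{1,\rho}\bigl(\Gamma_{\mu_1,Q},\Gamma_{\mu_1,\widehat Q_n}\bigr)\le \frac{2d}{\varepsilon_{\mathrm{OT}}}W_1\bigl(\mu_1,\mu_Q^{\mathrm{MC}}\bigr),
\]
and $2d$ is absorbed into $\lesssim$ since $d$ is fixed. It therefore suffices to show that, with probability at least $1-2/n$,
\[
W_1(\mu_1,\mu_Q^{\mathrm{MC}})\lesssim n^{-1/(d+2)}+\epspsiemp+(nM_Q)^{-1/(d+2)}.
\]

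I would split by the triangle inequality through $\overline\mu_Q=\int\predorac\,d\widehat{\mathbb P}_C(c)$:
\[
W_1(\mu_1,\mu_Q^{\mathrm{MC}})\le W_1(\mu_1,\overline\mu_Q)+W_1(\overline\mu_Q,\mu_Q^{\mathrm{MC}}).
\]
The first term is exactly what Proposition~\ref{prop:target_query_discrepancy_empirical_C} controls, since $\overline\mu_Q=\frac1n\sum_{i=n+1}^{2n}\widehat{\mathbb P}_{Y\mid C=C_i,A=a}$. On the event of Lemma~\ref{lem:empirical_covariate_standardization_error} it is $\lesssim n^{-1/(d+2)}+\epspsiemp$. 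I assume that lemma's event has probability at least $1-1/n$, which accounts for one of the two $1/n$'s.

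For the second term I condition on $\mathcal D_{\mathrm{aux}}$ and the fitted oracle. Then $\mu_Q^{\mathrm{MC}}$ is the empirical measure of $nM_Q$ independent draws $\widehat Y^Q_{i,r}\sim\predorac|_{c=C_i}$. These are independent but not identically distributed, and their average law is $\overline\mu_Q$, supported on $[-1,1]^d$ by Assumption~\ref{ass:clipping_bounded_support}. I would bound this term in two stages.
\begin{itemize}
\item \emph{Expectation.} Use the dyadic-partition argument of Fournier--Guillin / Weed--Bach type on the cube. It only requires independence and the mixture mean, because each cell count is a sum of independent Bernoullis whose variance is at most the mean. This gives $\mathbb E\,W_1\lesssim (nM_Q)^{-1/d}$ for $d\ge3$, with $\log$ factors for $d\le 2$, and hence in all cases a bound $\lesssim (nM_Q)^{-1/(d+2)}$ up to logs.
\item \emph{Concentration.} $W_1$ of the empirical measure is a function of the $nM_Q$ independent points with bounded differences $2\sqrt d/(nM_Q)$. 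McDiarmid's inequality then gives a deviation of order $\sqrt{\log n/(nM_Q)}\lesssim (nM_Q)^{-1/(d+2)}$ with conditional probability at least $1-1/n$.
\end{itemize}
Integrating out the conditioning and taking a union bound with the event for the first term yields probability at least $1-2/n$. Combining with the Lipschitz factor $2d/\varepsilon_{\mathrm{OT}}$ from Lemma~\ref{Lemma_EOT_Stability} via Lemma~\ref{lem:existence_population_valid_Q} concludes.

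The main obstacle is the second term. The draws are only independent, not identically distributed, so the standard empirical $W_1$ rate must be re-verified for a non-i.i.d. array; I would do this through the partition argument, which only uses cellwise means and variances. A further point is making sure the rate is stated in the weaker exponent $1/(d+2)$ so that it covers low dimensions and the McDiarmid deviation uniformly, with logarithmic factors hidden by $\lesssim$.
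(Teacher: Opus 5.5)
Your proposal is correct and follows the paper's proof: Lemma~\ref{lem:existence_population_valid_Q}, the triangle inequality through $\overline\mu_Q$, Proposition~\ref{prop:target_query_discrepancy_empirical_C} for the first term, and a union bound over two events of probability $1/n$. The one difference is the Monte Carlo term. The paper's Lemma~\ref{lem:finite_MQ_approximation} uses a single-scale partition with $K=\lceil (nM_Q)^{1/(d+2)}\rceil$ cells per side and applies McDiarmid to the $\ell_1$ cell-mass discrepancy, whereas your multiscale dyadic bound with McDiarmid applied to $W_1$ directly gives the sharper $(nM_Q)^{-1/d}$ rate; this is more than the statement needs, and the logarithms for $d\le 2$ are absorbed in any case.
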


\begin{proof}
Apply triangle inequality and Lemma \ref{lem:existence_population_valid_Q},
\begin{align*}
W_{1,\rho}\!\left(
\Gamma_{\mu_1,Q},
\Gamma_{\mu_1,\widehat Q_n}
\right)
&\leq
\frac{2d}{\varepsilon_{\mathrm{OT}}}
W_1\!\left(\mu_1,\mu_Q^{\mathrm{MC}}\right)
\\
&\leq
\frac{2d}{\varepsilon_{\mathrm{OT}}}
\left\{
W_1\!\left(\mu_1,\overline\mu_Q\right)
+
W_1\!\left(\overline\mu_Q,\mu_Q^{\mathrm{MC}}\right)
\right\}.
\end{align*}
By Proposition \ref{prop:target_query_discrepancy_empirical_C}, with probability at least \(1-\frac{1}{n}\),
\begin{align*}
W_1\!\left(\mu_1,\overline\mu_Q\right)
&\lesssim
n^{-1/(d+2)}+\epspsiemp.
\end{align*}
Taking \(\delta_Q=1/n\) in Lemma \ref{lem:finite_MQ_approximation} gives, with probability at least \(1-\frac{1}{n}\),
\begin{align*}
W_1\!\left(\overline\mu_Q,\mu_Q^{\mathrm{MC}}\right)
&\lesssim(nM_Q)^{-1/(d+2)},
\end{align*}
For fixed \(d\), substituting the preceding two bounds gives
\begin{align*}
W_{1,\rho}\!\left(
\Gamma_{\mu_1,Q},
\Gamma_{\mu_1,\widehat Q_n}
\right)
&\lesssim
\frac{1}{\varepsilon_{\mathrm{OT}}}
\left(
n^{-1/(d+2)}
+\epspsiemp
+(nM_Q)^{-1/(d+2)}
\right).
\end{align*}
A union bound over the two events gives probability at least \(1-\frac{2}{n}\).
\end{proof}

\subsection[Finite-Sample Target Approximation]
{Finite-Sample Target Approximation}

In this subsection, we establish two high-probability \(W_1\) approximation
bounds used in the coupling analysis. Bounds of this type are standard in
empirical Wasserstein theory; see, for example, \cite{weed2019sharp}. We provide a proof for completeness.

\begin{lemma}[Finite-\(M_Q\) Approximation of the Fitted Target Distribution]
\label{lem:finite_MQ_approximation}
Assume Assumption \ref{ass:clipping_bounded_support}. Conditional on \(\mathcal D_{\mathrm{aux}}\), for every \(\delta_Q\in(0,1)\), with probability at least \(1-\delta_Q\),
\begin{align*}
W_1\!\left(\overline\mu_Q,\mu_Q^{\mathrm{MC}}\right)
\leq\sqrt d\left[\left(2+2^{d/2}\right)(nM_Q)^{-1/(d+2)}+\sqrt{\frac{2\log(1/\delta_Q)}{nM_Q}}\right].
\end{align*}
In particular, for fixed \(d\) and \(\delta_Q=1/n\), the right-hand
side is \(O_d((nM_Q)^{-1/(d+2)})\).
\end{lemma}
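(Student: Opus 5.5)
We split $W_1(\overline\mu_Q,\mu_Q^{\mathrm{MC}})$ into its expectation and a fluctuation, and handle each separately. Conditional on $\mathcal D_{\mathrm{aux}}$, the $nM_Q$ draws $\widehat Y_{i,r}^{Q}$ are independent, but they are not identically distributed: the draw $\widehat Y_{i,r}^{Q}$ has law $\nu_i:=\widehat{\mathbb P}_{Y\mid C=C_i,A=a}$. Their average law is $\frac1n\sum_i\nu_i=\overline\mu_Q$, and every $\nu_i$ is supported on $K=[-1,1]^d$ by Assumption~\ref{ass:clipping_bounded_support}. So $\mu_Q^{\mathrm{MC}}$ is the empirical measure of $m:=nM_Q$ independent, non-identically distributed points whose average law is $\overline\mu_Q$. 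We first bound $\mathbb E W_1(\overline\mu_Q,\mu_Q^{\mathrm{MC}})$ by a dyadic-partition argument, and then add a McDiarmid concentration term.

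\textbf{Expectation.} We rescale $K$ to $[0,1]^d$; this costs a factor $2$ in distance. For each level $\ell\ge0$, let $\mathcal P_\ell$ be the partition of the cube into $2^{\ell d}$ subcubes of side $2^{-\ell}$. The standard multiscale bound reads
\[
W_1(\mu,\nu)\le \sqrt d\,2^{-L}+\sqrt d\sum_{\ell=1}^{L}2^{-\ell+1}\sum_{B\in\mathcal P_\ell}|\mu(B)-\nu(B)|,
\]
valid up to the factor from the rescaling. Fix a cell $B$. Then $\mu_Q^{\mathrm{MC}}(B)$ is an average of $m$ independent Bernoulli variables with means $p_{i}(B)=\nu_i(B)$, and the average of these means is $\overline\mu_Q(B)$. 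Hence
\[
\mathbb E|\mu_Q^{\mathrm{MC}}(B)-\overline\mu_Q(B)|\le\sqrt{\overline\mu_Q(B)/m}.
\]
Here the variance of a sum of independent Bernoulli variables is at most the sum of their means; this is the only point where non-identical distributions matter, and it causes no loss. Summing over the $2^{\ell d}$ cells with Cauchy--Schwarz gives at most $\sqrt{2^{\ell d}/m}$. We also use the trivial bound $2$ for the sum over $B$. Combining the levels gives a bound of the form
\[
\sqrt d\Bigl[2^{-L}+\sum_{\ell\le L}2^{-\ell}\min\{2,2^{\ell d/2}m^{-1/2}\}\Bigr].
\]
Choosing $2^{L}\approx m^{1/(d+2)}$ then yields the claimed $\sqrt d\,(2+2^{d/2})m^{-1/(d+2)}$. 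This exponent $1/(d+2)$ is weaker than the sharp $1/d$, and it is chosen for simplicity of the geometric sum. I expect the main obstacle to be bookkeeping the constants so that exactly $2+2^{d/2}$ emerges. A cruder variant, which I would try first, splits the sum at a single level: it bounds all levels up to $L$ by $2^{Ld/2}m^{-1/2}$ times $\sum2^{-\ell}\le1$, and adds $2^{-L}$ for the remainder.

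\textbf{Concentration.} Consider $F=W_1(\overline\mu_Q,\mu_Q^{\mathrm{MC}})$ as a function of the $m$ independent points. Replacing one point moves $\mu_Q^{\mathrm{MC}}$ by mass $1/m$ between two locations in $K$, so $F$ changes by at most $\mathrm{diam}(K)/m=2\sqrt d/m$. McDiarmid's inequality then gives
\[
F\le\mathbb EF+2\sqrt d\sqrt{\log(1/\delta_Q)/(2m)}=\mathbb EF+\sqrt d\sqrt{2\log(1/\delta_Q)/m}
\]
with conditional probability at least $1-\delta_Q$. This matches the second term of the lemma. Taking $\delta_Q=1/n$, the term $\sqrt{\log n/(nM_Q)}$ is dominated by $(nM_Q)^{-1/(d+2)}$ up to logarithmic factors, which gives the stated $O_d((nM_Q)^{-1/(d+2)})$.
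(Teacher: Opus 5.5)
Your two-part architecture is sound. The cellwise bound $\operatorname{Var}\bigl(\mu_Q^{\mathrm{MC}}(B)\bigr)\le\overline\mu_Q(B)/m$ handles the non-identically distributed draws correctly. Your concentration step also works: applying McDiarmid directly to $W_1(\overline\mu_Q,\mu_Q^{\mathrm{MC}})$ with bounded differences $2\sqrt d/m$ is a valid, slightly cleaner alternative to the paper. The paper instead applies McDiarmid to the histogram discrepancy $\sum_j|p_j-\widehat p_j|$, with differences $2/m$, and then multiplies by $\sqrt d$. Both routes give the same term $\sqrt d\sqrt{2\log(1/\delta_Q)/m}$.

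The gap is in the expectation step. The lemma asserts the explicit constant $2+2^{d/2}$, and the multiscale inequality you wrote cannot produce it. After the factor $2$ from rescaling, your level-$\ell$ weight is $2^{-\ell+2}$. Note that $\sum_{\ell\ge1}2^{-\ell+1}=2$, not $1$ as your crude variant assumes. So that variant, with $m^{1/(d+2)}\le 2^L<2m^{1/(d+2)}$, gives $\sqrt d\,(2+4\cdot 2^{d/2})m^{-1/(d+2)}$.

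Keeping the full geometric sum does not help. For $d\ge3$ it carries a factor $(1-2^{1-d/2})^{-1}$, and for $d=2$ a factor $L$; these are absorbed only for large $m$. Concretely, take $d=2$ and $m=100$. Your bound on $\mathbb E\,W_1/\sqrt d$ is $2\cdot 2^{-L}+\sum_{\ell=1}^{L}2^{-\ell+2}\min\{2,2^{\ell}/10\}$, whose minimum over $L$ is $1.3$ (at $L=2$). This exceeds the required $4\cdot 100^{-1/4}\approx1.265$, and the diameter bound $2$ is also too large. So no choice of $L$ closes the argument there.

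There are two fixes.

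\emph{Sharpen the chaining.} Let $a_\ell=\tfrac12\sum_{B\in\mathcal P_\ell}|\mu(B)-\nu(B)|$. The mass first matched inside level-$(\ell-1)$ cells is $a_\ell-a_{\ell-1}$, and it moves at cost at most $\sqrt d\,2^{-(\ell-1)}$. Summation by parts then gives, on $[0,1]^d$,
\[
W_1(\mu,\nu)\le\sqrt d\Bigl[2^{-L}+\tfrac12\sum_{\ell=1}^{L}2^{-\ell}\sum_{B\in\mathcal P_\ell}|\mu(B)-\nu(B)|\Bigr].
\]
This is a factor $4$ better on the fine-scale terms, and your crude variant then yields exactly $\sqrt d\,(2+2^{d/2})m^{-1/(d+2)}$.

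\emph{Single-scale quantization, as the paper does.} Take $K=\lceil m^{1/(d+2)}\rceil$ cells per side; no power-of-two constraint is needed. Moving each measure's mass to the cell centers costs at most the half-diagonal $\sqrt d/K$. The two quantized measures are within $\operatorname{diam}([-1,1]^d)\cdot\mathrm{TV}=\sqrt d\sum_j|p_j-\widehat p_j|$. Your Bernoulli-variance and Cauchy--Schwarz step gives $\mathbb E\sum_j|p_j-\widehat p_j|\le\sqrt{K^d/m}$, and the bounds $1/K\le m^{-1/(d+2)}$ and $K\le 2m^{1/(d+2)}$ finish the proof. Since your crude variant collapses everything to the finest level anyway, the multiscale machinery buys nothing here.

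Finally, the last sentence of the lemma needs no logarithmic slack. Since $m=nM_Q\ge n$, we have $\sqrt{\log n/m}\le\sqrt{\log m/m}\le C_d\,m^{-1/(d+2)}$, because $\tfrac12-\tfrac1{d+2}\ge\tfrac16$.
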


\begin{proof}
Set \(m=nM_Q\) and \(K=\lceil m^{1/(d+2)}\rceil\).  Partition
\([-1,1]^d\) into \(K^d\) cubes \(A_1,\ldots,A_{K^d}\) of side length
\(2/K\), and let \(a_j\) be the center of \(A_j\).  If
\(\zeta^\sharp\) denotes the law obtained by moving the mass in each \(A_j\)
to \(a_j\), then
\[
W_1(\zeta,\zeta^\sharp)\leq\frac{\sqrt d}{K}
\]
for every \(\zeta\) supported on the cube.

Define the conditional cell probabilities and their empirical counterparts by
\begin{align*}
p_j
&:=
\overline\mu_Q(A_j)
=
\frac1n\sum_{i=n+1}^{2n}
\widehat{\mathbb P}_{Y\mid C=C^i,A=a}(A_j),
\\
\widehat p_j
&:=
\mu_Q^{\mathrm{MC}}(A_j)
=
\frac1m
\sum_{i=n+1}^{2n}\sum_{r=1}^{M_Q}
\mathbf 1\{\widehat Y_{i,r}^Q\in A_j\}.
\end{align*}
The discretized measures have masses \(p_j\) and \(\widehat p_j\) at
\(a_j\).  Since the cube has diameter \(2\sqrt d\),
\[
W_1(\overline\mu_Q^\sharp,{\mu_Q^{\mathrm{MC}}}^\sharp)
\leq
2\sqrt d\,\operatorname{TV}
(\overline\mu_Q^\sharp,{\mu_Q^{\mathrm{MC}}}^\sharp)
=
\sqrt d\sum_{j=1}^{K^d}|p_j-\widehat p_j|.
\]
Consequently, with \(F:=\sum_{j=1}^{K^d}|p_j-\widehat p_j|\),
\begin{equation}
\label{eq:finite_MQ_quantization_reduction}
W_1(\overline\mu_Q,\mu_Q^{\mathrm{MC}})
\leq
\frac{2\sqrt d}{K}+\sqrt d\,F.
\end{equation}

We next control \(F\) conditionally on \(\mathcal D_{\mathrm{aux}}\).
Replacing one of the \(m\) independent draws moves empirical mass \(1/m\)
from at most one cell to at most one other cell.  Hence the empirical
histogram changes by at most \(2/m\) in \(\ell_1\), and therefore \(F\)
changes by at most \(2/m\).  McDiarmid's inequality gives
\[
\mathbb P\!\left(
F-\mathbb E[F\mid\mathcal D_{\mathrm{aux}}]\geq t
\;\middle|\;
\mathcal D_{\mathrm{aux}}
\right)
\leq
\exp\!\left(-\frac{mt^2}{2}\right).
\]
Thus, with conditional probability at least \(1-\delta_Q\),
\begin{equation}
\label{eq:finite_MQ_mcd}
F
\leq
\mathbb E[F\mid\mathcal D_{\mathrm{aux}}]
+
\sqrt{\frac{2\log(1/\delta_Q)}{m}}.
\end{equation}

It remains to bound the conditional mean.  Cauchy--Schwarz followed by
Jensen gives
\begin{align*}
\mathbb E[F\mid\mathcal D_{\mathrm{aux}}]
&\leq
\sqrt{K^d}
\left[
\sum_{j=1}^{K^d}
\mathbb E\!\left[
(\widehat p_j-p_j)^2
\mid\mathcal D_{\mathrm{aux}}
\right]
\right]^{1/2}.
\end{align*}
For
\(p_{i,j}:=\widehat{\mathbb P}_{Y\mid C=C^i,A=a}(A_j)\),
conditional independence and unbiasedness imply
\begin{align*}
\sum_{j=1}^{K^d}
\mathbb E\!\left[
(\widehat p_j-p_j)^2
\mid\mathcal D_{\mathrm{aux}}
\right]
&=
\frac1{m^2}
\sum_{i=n+1}^{2n}\sum_{r=1}^{M_Q}
\sum_{j=1}^{K^d}p_{i,j}(1-p_{i,j})
\\
&\leq
\frac1{m^2}
\sum_{i=n+1}^{2n}\sum_{r=1}^{M_Q}
\sum_{j=1}^{K^d}p_{i,j}
=
\frac1m.
\end{align*}
Therefore
\[
\mathbb E[F\mid\mathcal D_{\mathrm{aux}}]
\leq
\sqrt{\frac{K^d}{m}}.
\]
Combining this inequality with
\eqref{eq:finite_MQ_quantization_reduction} and
\eqref{eq:finite_MQ_mcd} yields
\[
W_1(\overline\mu_Q,\mu_Q^{\mathrm{MC}})
\leq
\sqrt d\left[
\frac2K+\sqrt{\frac{K^d}{m}}
+\sqrt{\frac{2\log(1/\delta_Q)}{m}}
\right].
\]
Finally, \(m\geq1\) implies
\[
\frac1K\leq m^{-1/(d+2)}
\quad\text{and}\quad
K\leq2m^{1/(d+2)},
\]
so
\(\sqrt{K^d/m}\leq2^{d/2}m^{-1/(d+2)}\).  Substituting
\(m=nM_Q\) proves the stated bound.
\end{proof}

\begin{lemma}[Empirical Covariate Standardization Error]
\label{lem:empirical_covariate_standardization_error}
Assume Assumption \ref{ass:clipping_bounded_support} and \(n\geq2\).  Define
\[
\mu_{1,n}
:=
\frac1n\sum_{i=n+1}^{2n}
\mathbb P_{Y\mid C=C^i,A=a},
\qquad
K_n:=\left\lceil n^{1/(d+2)}\right\rceil.
\]
Then, with probability at least \(1-1/n\) over
\(C^{n+1},\ldots,C^{2n}\),
\begin{align*}
W_1(\mu_1,\mu_{1,n})
&\leq
\frac{4\sqrt d}{K_n}
+2\sqrt d\left(\frac{K_n^d}{n}\right)^{1/2}
+2\sqrt d\left(\frac{2\log n}{n}\right)^{1/2}.
\end{align*}
Consequently, for fixed \(d\), this error is
\(O_d(n^{-1/(d+2)})\).
\end{lemma}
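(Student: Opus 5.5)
The plan is to run the same quantization-plus-concentration argument as in the proof of Lemma~\ref{lem:finite_MQ_approximation}, with the randomness now coming from the i.i.d.\ covariates \(C^{n+1},\ldots,C^{2n}\sim\mathbb P_C\). The Monte Carlo oracle draws are not involved here. Set \(K=K_n\) and partition \([-1,1]^d\) into \(K^d\) cubes \(A_j\) of side \(2/K\) with centers \(a_j\). By Assumption~\ref{ass:clipping_bounded_support}, both \(\mu_1\) and \(\mu_{1,n}\) are supported on the cube, so moving the mass of each cell to its center costs at most \(\sqrt d/K\) per measure.

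For the discretized measures, bound \(W_1\) by the diameter \(2\sqrt d\) times total variation. This gives
\[
W_1(\mu_1,\mu_{1,n})\leq \frac{2\sqrt d}{K}+\sqrt d\,F,
\qquad F:=\sum_{j}|p_j-\widehat p_j|,
\]
where \(p_j=\mathbb E_C[\mathbb P_{Y\mid C,A=a}(A_j)]\) and \(\widehat p_j=n^{-1}\sum_i \mathbb P_{Y\mid C=C^i,A=a}(A_j)\). The key observation is that \(\widehat p_j\) is an average of i.i.d.\ bounded terms \(f_j(C^i):=\mathbb P_{Y\mid C=C^i,A=a}(A_j)\). For each \(i\), the vector \((f_j(C^i))_j\) is a probability vector with unbiased mean \((p_j)_j\).

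Next I would control \(F\) in two steps.
\begin{enumerate}
\item \emph{Concentration.} Replacing one covariate changes the vector \((\widehat p_j)_j\) by at most \(2/n\) in \(\ell_1\), since both the old and new rows are probability vectors. McDiarmid's inequality then gives, with probability at least \(1-1/n\),
\[
F\leq \mathbb E F+\sqrt{2\log n/n}.
\]
\item \emph{Mean.} Cauchy--Schwarz over the \(K^d\) cells and independence give
\[
\mathbb E F\leq \sqrt{K^d}\Big(\sum_j \operatorname{Var}(\widehat p_j)\Big)^{1/2}.
\]
Here \(\sum_j\operatorname{Var}(\widehat p_j)\leq n^{-1}\sum_j\mathbb E f_j(C)^2\leq n^{-1}\sum_j \mathbb E f_j(C)=1/n\).
\end{enumerate}
Together these yield \(\mathbb E F\leq (K^d/n)^{1/2}\).

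Combining the pieces gives a bound of the stated form. The stated constants \(4\sqrt d/K_n\) and \(2\sqrt d(\cdot)\) are at least double what this argument produces, so there is slack; I would just verify the inequalities term by term. Finally, \(K_n\geq n^{1/(d+2)}\) and \(K_n\leq 2n^{1/(d+2)}\), the latter valid for \(n\geq1\), imply \(K_n^d/n\leq 2^d n^{-2/(d+2)}\). The \(\sqrt{\log n/n}\) term is of lower order, which yields the \(O_d(n^{-1/(d+2)})\) conclusion. The only mildly delicate step is checking the bounded-differences constant \(2/n\), which requires the rows to be probability vectors summing to one over the partition. This holds because each conditional law is supported on \([-1,1]^d\), provided the cells are taken half-open so that they form a genuine partition.
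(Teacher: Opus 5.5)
Your proposal is correct and matches the paper's proof essentially step for step: $K_n^d$ cells, quantization to cell centers, a diameter-times-total-variation bound for the discretized measures, McDiarmid with bounded differences $2/n$, and the Cauchy--Schwarz/variance bound $\mathbb E F\le (K_n^d/n)^{1/2}$. The only difference is that you use the sharper half-diagonal $\sqrt d/K_n$ per measure and the factor $\sqrt d$ in front of $F$, whereas the paper uses the looser $2\sqrt d/K_n$ and $2\sqrt d\,F$; that accounts for the slack you noticed.
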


\begin{proof}
Partition \([-1,1]^d\) into \(K_n^d\) cubes
\(A_1,\ldots,A_{K_n^d}\) of side length \(2/K_n\).  Moving every point to
the center of its cell changes any law supported on the cube by at most
\(2\sqrt d/K_n\) in \(W_1\).

For each cell, define
\begin{align*}
q_j(c)
&:=
\mathbb P(Y\in A_j\mid C=c,A=a),
&
p_j
&:=
\mathbb E_C[q_j(C)],
&
\widehat p_j
&:=
\frac1n\sum_{i=n+1}^{2n}q_j(C^i).
\end{align*}
The discretizations of \(\mu_1\) and \(\mu_{1,n}\) put masses \(p_j\) and
\(\widehat p_j\) on the same cell centers.  Therefore, with
\[
F:=\sum_{j=1}^{K_n^d}|p_j-\widehat p_j|,
\]
the triangle inequality and the diameter bound give
\begin{equation}
\label{eq:empirical_covariate_quantization_reduction}
W_1(\mu_1,\mu_{1,n})
\leq
\frac{4\sqrt d}{K_n}+2\sqrt d\,F.
\end{equation}

Changing one covariate \(C^i\) changes the probability vector
\((q_1(C^i),\ldots,q_{K_n^d}(C^i))\) to another probability vector.
Its \(\ell_1\)-change is at most \(2\), so \(F\) changes by at most \(2/n\).
McDiarmid's inequality and \(\delta=1/n\) yield, with probability at least
\(1-1/n\),
\begin{equation}
\label{eq:empirical_covariate_F_mcd}
F
\leq
\mathbb E F+\sqrt{\frac{2\log n}{n}}.
\end{equation}

For the expectation, Cauchy--Schwarz and Jensen imply
\[
\mathbb E F
\leq
\sqrt{K_n^d}
\left[
\sum_{j=1}^{K_n^d}
\operatorname{Var}(\widehat p_j)
\right]^{1/2}.
\]
Because the covariates are i.i.d.,
\[
\sum_{j=1}^{K_n^d}\operatorname{Var}(\widehat p_j)
=
\frac1n\sum_{j=1}^{K_n^d}\operatorname{Var}(q_j(C))
\leq
\frac1n\,
\mathbb E_C\!\left[\sum_{j=1}^{K_n^d}q_j(C)^2\right]
\leq
\frac1n,
\]
where the last inequality uses \(0\leq q_j(C)\leq1\) and
\(\sum_jq_j(C)=1\).  Hence
\begin{equation}
\label{eq:empirical_covariate_F_expectation}
\mathbb E F
\leq
\sqrt{\frac{K_n^d}{n}}.
\end{equation}
Combining
\eqref{eq:empirical_covariate_quantization_reduction},
\eqref{eq:empirical_covariate_F_mcd}, and
\eqref{eq:empirical_covariate_F_expectation} proves the explicit bound.

Finally, \(K_n\leq2n^{1/(d+2)}\), and for fixed \(d\geq1\),
\(\sqrt{\log n/n}=O_d(n^{-1/(d+2)})\).  This proves the stated order.
\end{proof}

%% file: Field_Error_Common.tex
\section{Proof of Proposition \ref{thm:final_velocity_L2_bound_DR}}
\label{Appendix_Field_Error_Proof}


In this section, we work on $\mathcal E_a$, conditioning on the auxiliary split, coupling reservoir, and fitted nuisance objects. We use the kernels $Q,\widehat Q_n$ and the offline sampling scheme from \eqref{eq_offline_sample_1}, \eqref{eq_offline_sample_2} and \eqref{eq_offline_sample_3} in Appendix~\ref{Sect_EOT}. Population expectations below are taken over fresh observations or interpolation variables, with the fitted objects fixed.

To align the training analysis with the discrete-time sampling analysis, we
evaluate the loss on the same time grid. Let
\(0=t_0<t_1<\cdots<t_N=1\), with \(h=1/N\), and let
\(K\sim\operatorname{Unif}\{0,\ldots,N-1\}\).

Let $\Theta$ be a parameter class with metric $\mathrm d_\Theta$ and finite proper covering numbers $\mathcal N(r,\Theta,\mathrm d_\Theta)$; every cover center belongs to $\Theta$. The class and loss family are fixed under the preceding conditioning. For $\theta\in\Theta$, let $\Phi_\theta(y_0,y_1,k,g)$ be a measurable nonnegative loss on $[-1,1]^d\times[-1,1]^d\times\{0,\ldots,N-1\}\times\mathbb R^d$. With $K\sim\operatorname{Unif}\{0,\ldots,N-1\}$ and $G\sim\mathcal N(0,I_d)$ mutually independent and independent of the endpoints, define
\begin{align*}
H_\theta(y_0,y_1)&=\mathbb E_{K,G}[\Phi_\theta(y_0,y_1,K,G)],\\
\ell_{\widehat Q_n,\theta}(y_1)&=\int H_\theta(y_0,y_1)\,\widehat Q_n(dy_0\mid y_1),\\
L_{a,Q}^\Phi(\theta)&=\int H_\theta(y_0,y_1)\,\Gamma_{\mu_1,Q}(dy_0,dy_1).
\end{align*}
Define $L_{a,\widehat Q_n}^\Phi$ by replacing $Q$ with $\widehat Q_n$. For $O=(C,A,Y)$ and $O_i=(C_i,A_i,Y_i)$, set
\begin{align*}
m_\theta(c)&=\mathbb E_{Y\sim\mathbb P_{Y\mid C=c,A=a}}[\ell_{\widehat Q_n,\theta}(Y)],\\
\widehat m_\theta(c)&=\mathbb E_{\widehat Y\sim\predorac}[\ell_{\widehat Q_n,\theta}(\widehat Y)],\\
\widehat\ell_{\mathrm{DR},\theta}(O)
&=\mathbf 1\{A=a\}\widehat\alpha(C)\{\ell_{\widehat Q_n,\theta}(Y)-\widehat m_\theta(C)\}+\widehat m_\theta(C).
\end{align*}
For the offline draws, write
\begin{align*}
\Phi_{i,\theta}^{\mathrm{obs}}
&=\Phi_\theta(\widehat Y_{0,i}^{\mathrm{obs}},Y_i,K_i^{\mathrm{obs}},G_i^{\mathrm{obs}}),\\
\overline\Phi_{i,\theta}^{\mathrm{aux}}
&=\frac1M\sum_{m=1}^M\Phi_\theta(\widehat Y_{0,i,m},\widehat Y_{i,m},K_{i,m},G_{i,m}),
\end{align*}
with $\Phi_{i,\theta}^{\mathrm{obs}}=0$ for $A_i\neq a$. The implemented empirical loss is
\begin{align}
\widehat L_{\mathrm{actual}}^\Phi(\theta)
&=\frac1n\sum_{i=1}^n\left[\mathbf 1\{A_i=a\}\widehat\alpha(C_i)\{\Phi_{i,\theta}^{\mathrm{obs}}-\overline\Phi_{i,\theta}^{\mathrm{aux}}\}+\overline\Phi_{i,\theta}^{\mathrm{aux}}\right].
\label{eq:generic_actual_dr_loss}
\end{align}

The following lemma shows concretely how a covering number enters the concentration argument: we first approximate every function in the class by one of \(J\) cover elements, apply Hoeffding's inequality and a union bound to these \(J\) representatives, and then incur at most an additional \(2r\) error when passing back to the full class.

\begin{lemma}[A finite-cover empirical-process bound]
\label{lem:finite_cover_empirical_process}
Conditional on a sigma-field $\mathcal F$, let $W_1,\ldots,W_m$ be independent. Let $\mathcal Z$ be a separable class of measurable functions taking values in a common interval of length $B_{\mathrm{env}}>0$, with a proper uniform $r$-cover $\{z_1,\ldots,z_J\}$. 
Conditional on \(\mathcal F\), the function class \(\mathcal Z\) and the
chosen cover \(\{z_1,\ldots,z_J\}\) are treated as fixed. For $0<\delta<1$, with conditional probability at least $1-\delta$,
\begin{align*}
\sup_{z\in\mathcal Z}\left|\frac1m\sum_{i=1}^m\{z(W_i)-\mathbb E[z(W_i)\mid\mathcal F]\}\right|
\leq 2r+B_{\mathrm{env}}\sqrt{\frac{\log(2J/\delta)}{2m}}.
\end{align*}
\end{lemma}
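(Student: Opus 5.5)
The plan is the standard discretize-then-union-bound argument that the paragraph before the lemma describes. Throughout we work conditionally on \(\mathcal F\), so that \(\mathcal Z\), the cover \(\{z_1,\ldots,z_J\}\), and the conditional expectations \(\mathbb E[\,\cdot\mid\mathcal F]\) are all fixed. For \(z\in\mathcal Z\) we write
\[
\Delta_m(z)=\frac1m\sum_{i=1}^m\{z(W_i)-\mathbb E[z(W_i)\mid\mathcal F]\}.
\]

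\emph{Step 1 (finitely many representatives).} Fix one cover element \(z_j\). The summands \(z_j(W_i)\) are conditionally independent and take values in an interval of length \(B_{\mathrm{env}}\). Hoeffding's inequality therefore gives
\[
\mathbb P\bigl(|\Delta_m(z_j)|\geq u\mid\mathcal F\bigr)\leq 2\exp\bigl(-2mu^2/B_{\mathrm{env}}^2\bigr).
\]
A union bound over \(j=1,\ldots,J\) with \(u=B_{\mathrm{env}}\sqrt{\log(2J/\delta)/(2m)}\) shows that, with conditional probability at least \(1-\delta\), every \(j\) satisfies \(|\Delta_m(z_j)|\le u\).

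\emph{Step 2 (passage back to the full class).} Take any \(z\in\mathcal Z\) and pick \(j\) with \(\sup_w|z(w)-z_j(w)|\le r\). Since the cover is uniform and proper, this gives
\[
\Bigl|\frac1m\sum_i\{z(W_i)-z_j(W_i)\}\Bigr|\le r,
\]
and, by monotonicity of conditional expectation,
\[
\Bigl|\frac1m\sum_i\mathbb E[z(W_i)-z_j(W_i)\mid\mathcal F]\Bigr|\le r.
\]
The triangle inequality then yields \(|\Delta_m(z)|\le 2r+|\Delta_m(z_j)|\). This holds on the event of Step 1, which does not depend on \(z\), so we can take the supremum over \(z\) directly.

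\emph{Step 3 (measurability).} Separability of \(\mathcal Z\) means the supremum may be taken over a countable dense subclass. This makes \(\sup_z|\Delta_m(z)|\) measurable, so the probability statement is well defined. The inequality itself needs no chaining, since Step 2 holds pointwise for each \(z\).

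The only real care needed is the conditioning. We must check that Hoeffding applies conditionally on \(\mathcal F\), using that the \(W_i\) are conditionally independent and the cover is \(\mathcal F\)-measurable, hence fixed. We must also check that properness of the cover, meaning \(z_j\in\mathcal Z\), is what guarantees the common range interval for the \(z_j\). Beyond that the proof is routine.
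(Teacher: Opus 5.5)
Your proposal is correct and follows essentially the same route as the paper's proof: conditional Hoeffding for each of the $J$ cover centers, a union bound at $u=B_{\mathrm{env}}\sqrt{\log(2J/\delta)/(2m)}$, and then an extra $2r$ from replacing $z$ by its center in both the empirical average and the conditional-mean average. Your additional remarks are sound refinements of points the paper leaves implicit: properness is what places the $z_j$ in the common range interval needed for Hoeffding, and separability is what makes the supremum measurable.
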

\begin{proof}
For each $z_j$, conditional Hoeffding's inequality gives
\begin{align*}
\mathbb P\left(\left|\frac1m\sum_{i=1}^m\{z_j(W_i)-\mathbb E[z_j(W_i)\mid\mathcal F]\}\right|>u\mid\mathcal F\right)
\leq 2\exp\left(-\frac{2mu^2}{B_{\mathrm{env}}^2}\right).
\end{align*}
Take $u=B_{\mathrm{env}}\sqrt{\log(2J/\delta)/(2m)}$ and apply a union bound over the cover. Replacing a function by its cover center changes its empirical average and its conditional expected average by at most $r$ each, giving the additional $2r$.
\end{proof}

We now turn minimization of the empirical doubly robust loss into a high-probability bound on the population excess risk, in the following proposition. The error separates into three sources: replacing the ideal coupling \(Q\) by the fitted coupling \(\widehat Q_n\), estimation error in the nuisance functions, and finite-sample error from the observed data and offline Monte Carlo draws. The last part is controlled through the covering-number concentration argument above.

\begin{proposition}[Doubly Robust ERM Bound]
\label{prop:shared_DR_field_bound}
Under Assumptions~\ref{ass:clipping_bounded_support}, \ref{ass:alpha_l2_accuracy}, and \ref{ass:prediction_oracle_accuracy}, let the loss satisfy the following bounds for constants $B_\Phi,L_Y,L_\Theta>0$ and all admissible arguments:
\begin{align}
0\leq\Phi_\theta(y_0,y_1,k,g)&\leq B_\Phi,
\label{eq:generic_loss_envelope}\\
|H_\theta(y_0,y_1)-H_\theta(y_0',y_1')|
&\leq L_Y(\|y_0-y_0'\|+\|y_1-y_1'\|),
\label{eq:generic_endpoint_lipschitz}\\
|\Phi_\theta(y_0,y_1,k,g)-\Phi_{\theta'}(y_0,y_1,k,g)|
&\leq L_\Theta\,\mathrm d_\Theta(\theta,\theta').
\label{eq:generic_parameter_lipschitz}
\end{align}
Define
\begin{align*}
N_{\mathrm{DR}}^\Phi
=\mathcal N\left(\frac{B_\Phi(1+2\pi_{\min}^{-1})}{L_\Theta(1+4\pi_{\min}^{-1})\sqrt n},\Theta,\mathrm d_\Theta\right),
N_{\mathrm{obs}}^\Phi
=\mathcal N\left(\frac{B_\Phi}{L_\Theta\sqrt n},\Theta,\mathrm d_\Theta\right),
N_{\mathrm{aux}}^\Phi
=\mathcal N\left(\frac{B_\Phi}{L_\Theta\sqrt{nM}},\Theta,\mathrm d_\Theta\right).
\end{align*}
For $0<\delta<1$, set
\begin{align}
\mathfrak R_\Phi(\delta)
&=4B_\Phi(1+2\pi_{\min}^{-1})\left[\frac1{\sqrt n}+\sqrt{\frac{2\log(6N_{\mathrm{DR}}^\Phi/\delta)}{n}}\right]\nonumber\\
&\quad+8B_\Phi\pi_{\min}^{-1}\left[\frac1{\sqrt n}+\sqrt{\frac{2\log(6N_{\mathrm{obs}}^\Phi/\delta)}{n}}\right]\nonumber\\
&\quad+4B_\Phi(1+2\pi_{\min}^{-1})\left[\frac1{\sqrt{nM}}+\sqrt{\frac{2\log(6N_{\mathrm{aux}}^\Phi/\delta)}{nM}}\right].
\label{eq:generic_statistical_remainder}
\end{align}
Write $\Delta_Q=W_{1,\rho}(\Gamma_{\mu_1,Q},\Gamma_{\mu_1,\widehat Q_n})$. Let $\widehat\theta_M$ be an exact minimizer of $\widehat L_{\mathrm{actual}}^\Phi$ over $\Theta$. With conditional probability at least $1-\delta$ over the training observations and offline Monte Carlo draws,
\begin{align}
L_{a,Q}^\Phi(\widehat\theta_M)-\inf_{\theta\in\Theta}L_{a,Q}^\Phi(\theta)
&\leq 2L_Y\Delta_Q+2L_Y\left(1+\frac{2d}{\varepsilon_{\mathrm{OT}}}\right)\varepsilon_\alpha\varepsilon_{\mathrm{pred}}
+\mathfrak R_\Phi(\delta).
\label{eq:shared_DR_field_bound}
\end{align}
\end{proposition}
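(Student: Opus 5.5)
The argument is the standard ERM oracle decomposition, with the population objective $L_{a,Q}^\Phi$ compared to the implemented loss through two intermediate objectives. Fix $\theta^\star\in\Theta$ with $L_{a,Q}^\Phi(\theta^\star)\le\inf_\Theta L_{a,Q}^\Phi+\eta$, where $\eta\downarrow0$ at the end. Introduce the population DR risk $\bar L_{\mathrm{DR}}(\theta):=\mathbb E_O[\widehat\ell_{\mathrm{DR},\theta}(O)]$, with the fitted objects held fixed. Then write
\[
L_{a,Q}^\Phi(\widehat\theta_M)-L_{a,Q}^\Phi(\theta^\star)\le 2\sup_\theta|L_{a,Q}^\Phi-L_{a,\widehat Q_n}^\Phi|+2\sup_\theta|L_{a,\widehat Q_n}^\Phi-\bar L_{\mathrm{DR}}|+2\sup_\theta|\bar L_{\mathrm{DR}}-\widehat L^\Phi_{\mathrm{actual}}|,
\]
using $\widehat L^\Phi_{\mathrm{actual}}(\widehat\theta_M)\le\widehat L^\Phi_{\mathrm{actual}}(\theta^\star)$. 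Each of the three suprema matches one term of \eqref{eq:shared_DR_field_bound}.

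\emph{Coupling replacement.} $L_{a,Q}^\Phi(\theta)-L_{a,\widehat Q_n}^\Phi(\theta)$ is the difference of the integrals of $H_\theta$ against $\Gamma_{\mu_1,Q}$ and $\Gamma_{\mu_1,\widehat Q_n}$. By \eqref{eq:generic_endpoint_lipschitz}, $H_\theta/L_Y$ is $1$-Lipschitz for $\rho$. Kantorovich--Rubinstein then bounds this supremum by $L_Y\Delta_Q$.

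\emph{DR bias.} Under overlap, $\mathbb E[\mathbf 1\{A=a\}\mid C]=\alpha_a(C)^{-1}$. Hence
\[
\bar L_{\mathrm{DR}}(\theta)=\mathbb E_C[m_\theta]+\mathbb E_C[(\widehat\alpha/\alpha_a-1)(m_\theta-\widehat m_\theta)],
\]
and $\mathbb E_C[m_\theta]=\int\ell_{\widehat Q_n,\theta}\,d\mu_1=L^\Phi_{a,\widehat Q_n}(\theta)$. By Cauchy--Schwarz and Assumption~\ref{ass:alpha_l2_accuracy}, the bias is at most $\varepsilon_\alpha\|m_\theta-\widehat m_\theta\|_{L^2(\mathbb P_C)}$.

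It remains to show that $y_1\mapsto\ell_{\widehat Q_n,\theta}(y_1)$ is $L_Y(1+2d/\varepsilon_{\mathrm{OT}})$-Lipschitz. Split $\ell(y_1)-\ell(y_1')$ into two pieces. The first is $\int[H_\theta(y_0,y_1)-H_\theta(y_0,y_1')]\widehat Q_n(dy_0\mid y_1)$, which is at most $L_Y\|y_1-y_1'\|$. The second integrates $H_\theta(\cdot,y_1')$ against $\widehat Q_n(\cdot\mid y_1)-\widehat Q_n(\cdot\mid y_1')$; it is at most $L_Y$ times their $W_1$ distance, which is at most $L_Y(2d/\varepsilon_{\mathrm{OT}})\|y_1-y_1'\|$ by Lemma~\ref{Lemma_EOT_Stability}. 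Duality then gives $|m_\theta(c)-\widehat m_\theta(c)|\le L_Y(1+2d/\varepsilon_{\mathrm{OT}})W_1(\mathbb P_{Y\mid C=c,A=a},\predorac)$, and Assumption~\ref{ass:prediction_oracle_accuracy} yields the product term.

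\emph{Statistical and Monte Carlo error.} This is the main bookkeeping obstacle. Conditional on the training data, the Monte Carlo draws have conditional means $\mathbb E[\Phi_{i,\theta}^{\mathrm{obs}}\mid\cdot]=\ell_{\widehat Q_n,\theta}(Y_i)$ and $\mathbb E[\overline\Phi^{\mathrm{aux}}_{i,\theta}\mid\cdot]=\widehat m_\theta(C_i)$. Decompose $\widehat L^\Phi_{\mathrm{actual}}-\bar L_{\mathrm{DR}}$ into three centered sums, each handled by Lemma~\ref{lem:finite_cover_empirical_process} with confidence $\delta/3$:
\begin{enumerate}[leftmargin=2.2em]
\item The i.i.d.\ sum of $\widehat\ell_{\mathrm{DR},\theta}(O_i)$ minus its mean, over $n$ observations. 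The envelope is $B_\Phi(1+4\pi_{\min}^{-1})$ and the Lipschitz constant in $\theta$ is $L_\Theta(1+4\pi_{\min}^{-1})$, using $\widehat\alpha\le2\pi_{\min}^{-1}$. The cover radius is chosen so that $2r$ is of order $B_\Phi(1+2\pi_{\min}^{-1})/\sqrt n$, giving $N_{\mathrm{DR}}^\Phi$.
\item The observed Monte Carlo term $\mathbf 1\{A_i=a\}\widehat\alpha(C_i)\{\Phi^{\mathrm{obs}}_{i,\theta}-\ell_{\widehat Q_n,\theta}(Y_i)\}$. Its envelope is of order $B_\Phi\pi_{\min}^{-1}$, giving $N^\Phi_{\mathrm{obs}}$.
\item The auxiliary term $(1-\mathbf 1\{A_i=a\}\widehat\alpha(C_i))\{\overline\Phi^{\mathrm{aux}}_{i,\theta}-\widehat m_\theta(C_i)\}$. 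View it as an average over the $nM$ independent draws $(i,m)$, conditional on the training data. The weight is bounded by $1+2\pi_{\min}^{-1}$, which gives the $1/\sqrt{nM}$ scale and $N^\Phi_{\mathrm{aux}}$.
\end{enumerate}
In each case the proper cover is obtained from a $\mathrm d_\Theta$-cover via \eqref{eq:generic_parameter_lipschitz}. Separability follows from the finite covers.

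Summing the three deviation bounds, doubling, and union-bounding gives at most $\mathfrak R_\Phi(\delta)$; the constants $4$ and $8$ absorb the factor $2$ and the envelope ratios. Finally, letting $\eta\to0$ proves \eqref{eq:shared_DR_field_bound}.
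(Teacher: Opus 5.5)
Your proposal is correct and follows essentially the same argument as the paper. The shared steps are the empirical-optimality reduction, the Kantorovich--Rubinstein coupling-replacement bound, the DR bias identity with the $L_Y(1+2d/\varepsilon_{\mathrm{OT}})$-Lipschitz bound on $\ell_{\widehat Q_n,\theta}$, and the same three applications of Lemma~\ref{lem:finite_cover_empirical_process} at level $\delta/3$; the paper merely lists the generalization and Monte Carlo pieces as separate terms of its master decomposition. One small slip: the DR loss has envelope $B_\Phi(1+2\pi_{\min}^{-1})$, and $1+4\pi_{\min}^{-1}$ is its $\theta$-Lipschitz factor. Even with your looser envelope, the doubled deviation $2B_\Phi(1+4\pi_{\min}^{-1})\sqrt{2\log(6N_{\mathrm{DR}}^\Phi/\delta)/n}$ is still dominated by the constant $4B_\Phi(1+2\pi_{\min}^{-1})$ in $\mathfrak R_\Phi(\delta)$.
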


\begin{proof}
Empirical optimality gives
\begin{align*}
L_{a,Q}^\Phi(\widehat\theta_M)-\inf_{\theta\in\Theta}L_{a,Q}^\Phi(\theta)
\leq 2\sup_{\theta\in\Theta}|L_{a,Q}^\Phi(\theta)-\widehat L_{\mathrm{actual}}^\Phi(\theta)|.
\end{align*}
Inserting the fitted-kernel population loss, the population DR loss, and its empirical average gives
\begin{align}
L_{a,Q}^\Phi(\widehat\theta_M)-\inf_{\theta\in\Theta}L_{a,Q}^\Phi(\theta)
&\leq 2\sup_{\theta\in\Theta}|L_{a,Q}^\Phi(\theta)-L_{a,\widehat Q_n}^\Phi(\theta)|\nonumber\\
&\quad+2\sup_{\theta\in\Theta}|L_{a,\widehat Q_n}^\Phi(\theta)-\mathbb E[\widehat\ell_{\mathrm{DR},\theta}(O)]|\nonumber\\
&\quad+2\sup_{\theta\in\Theta}\left|\mathbb E[\widehat\ell_{\mathrm{DR},\theta}(O)]-\frac1n\sum_{i=1}^n\widehat\ell_{\mathrm{DR},\theta}(O_i)\right|\nonumber\\
&\quad+2\sup_{\theta\in\Theta}\left|\frac1n\sum_{i=1}^n\widehat\ell_{\mathrm{DR},\theta}(O_i)-\widehat L_{\mathrm{actual}}^\Phi(\theta)\right|.
\label{eq:generic_master_decomposition}
\end{align}

\emph{Coupling replacement and nuisance error.}
By \eqref{eq:generic_endpoint_lipschitz} and Kantorovich--Rubinstein duality,
\begin{align*}
\sup_{\theta\in\Theta}|L_{a,Q}^\Phi(\theta)-L_{a,\widehat Q_n}^\Phi(\theta)|\leq L_Y\Delta_Q.
\end{align*}
Conditioning on $C$ and using $\mathbb P(A=a\mid C)=1/\alpha_a(C)$ gives
\begin{align*}
\mathbb E[\widehat\ell_{\mathrm{DR},\theta}(O)\mid C]
=\frac{\widehat\alpha(C)}{\alpha_a(C)}\{m_\theta(C)-\widehat m_\theta(C)\}+\widehat m_\theta(C).
\end{align*}
Since $L_{a,\widehat Q_n}^\Phi(\theta)=\mathbb E_C[m_\theta(C)]$,
\begin{align}
L_{a,\widehat Q_n}^\Phi(\theta)-\mathbb E[\widehat\ell_{\mathrm{DR},\theta}(O)]
=\mathbb E_C\left[\left(1-\frac{\widehat\alpha(C)}{\alpha_a(C)}\right)\{m_\theta(C)-\widehat m_\theta(C)\}\right].
\label{eq:generic_DR_identity}
\end{align}
Coupling $\widehat Q_n(\cdot\mid y)$ and $\widehat Q_n(\cdot\mid y')$ optimally, \eqref{eq:generic_endpoint_lipschitz} and Lemma~\ref{Lemma_EOT_Stability} yield
\begin{align*}
|\ell_{\widehat Q_n,\theta}(y)-\ell_{\widehat Q_n,\theta}(y')|
&\leq L_Y\{W_1(\widehat Q_n(\cdot\mid y),\widehat Q_n(\cdot\mid y'))+\|y-y'\|\}\\
&\leq L_Y\left(1+\frac{2d}{\varepsilon_{\mathrm{OT}}}\right)\|y-y'\|.
\end{align*}
Kantorovich--Rubinstein duality and Assumption~\ref{ass:prediction_oracle_accuracy} therefore give
\begin{align*}
\sup_{\theta\in\Theta}\|m_\theta-\widehat m_\theta\|_{L^2(\mathbb P_C)}
\leq L_Y\left(1+\frac{2d}{\varepsilon_{\mathrm{OT}}}\right)\varepsilon_{\mathrm{pred}}.
\end{align*}
Cauchy--Schwarz in \eqref{eq:generic_DR_identity}, followed by Assumption~\ref{ass:alpha_l2_accuracy}, bounds the second term of \eqref{eq:generic_master_decomposition} by $2L_Y(1+2d/\varepsilon_{\mathrm{OT}})\varepsilon_\alpha\varepsilon_{\mathrm{pred}}$.

\emph{Generalization.}
The training observations remain independent under the initial conditioning. Integration of \eqref{eq:generic_loss_envelope} and \eqref{eq:generic_parameter_lipschitz} gives
\begin{align*}
|\widehat\ell_{\mathrm{DR},\theta}(O)|&\leq B_\Phi(1+2\pi_{\min}^{-1}),\\
|\widehat\ell_{\mathrm{DR},\theta}(O)-\widehat\ell_{\mathrm{DR},\theta'}(O)|
&\leq L_\Theta(1+4\pi_{\min}^{-1})\mathrm d_\Theta(\theta,\theta').
\end{align*}
Thus the parameter cover defining $N_{\mathrm{DR}}^\Phi$ induces a proper uniform cover of the DR loss class with radius $B_\Phi(1+2\pi_{\min}^{-1})/\sqrt n$. Lemma~\ref{lem:finite_cover_empirical_process}, with $m=n$, interval length $2B_\Phi(1+2\pi_{\min}^{-1})$, and failure probability $\delta/3$, bounds the third term of \eqref{eq:generic_master_decomposition} by the first term of \eqref{eq:generic_statistical_remainder}.

\emph{Monte Carlo approximation.}
Condition additionally on $O_1,\ldots,O_n$. All training observations and fitted objects are now fixed. The observed tuples are independent across $i$, and the auxiliary tuples are independent across $(i,m)$, as specified in Appendix~\ref{Sect_EOT}. Dependence within each tuple, including the conditional draw from $\widehat Q_n$, is retained. Under this additional conditioning,
\begin{align*}
\mathbb E[\mathbf 1\{A_i=a\}\widehat\alpha(C_i)\Phi_{i,\theta}^{\mathrm{obs}}\mid O_1,\ldots,O_n]
&=\mathbf 1\{A_i=a\}\widehat\alpha(C_i)\ell_{\widehat Q_n,\theta}(Y_i),\\
\mathbb E[\Phi_\theta(\widehat Y_{0,i,m},\widehat Y_{i,m},K_{i,m},G_{i,m})\mid O_1,\ldots,O_n]
&=\widehat m_\theta(C_i).
\end{align*}
Expanding \eqref{eq:generic_actual_dr_loss} gives
\begin{align*}
\frac1n\sum_{i=1}^n\widehat\ell_{\mathrm{DR},\theta}(O_i)-\widehat L_{\mathrm{actual}}^\Phi(\theta)
&=\frac1n\sum_{i=1}^n\mathbf 1\{A_i=a\}\widehat\alpha(C_i)\{\ell_{\widehat Q_n,\theta}(Y_i)-\Phi_{i,\theta}^{\mathrm{obs}}\}\\
&\quad+\frac1n\sum_{i=1}^n[1-\mathbf 1\{A_i=a\}\widehat\alpha(C_i)]\{\widehat m_\theta(C_i)-\overline\Phi_{i,\theta}^{\mathrm{aux}}\}.
\end{align*}
Apply Lemma~\ref{lem:finite_cover_empirical_process} to these two averages, including $i$ as a deterministic argument of each integrand. For the observed term, the weighted losses lie in $[0,2B_\Phi\pi_{\min}^{-1}]$ and are $2\pi_{\min}^{-1}L_\Theta$-Lipschitz in $\theta$. The cover defining $N_{\mathrm{obs}}^\Phi$ therefore has induced radius $2B_\Phi\pi_{\min}^{-1}/\sqrt n$. With $n$ draws and failure probability $\delta/3$, twice this discrepancy is bounded by the second term of \eqref{eq:generic_statistical_remainder}.

For the auxiliary term, apply the lemma to the $nM$ individual weighted losses, not to their $n$ within-observation averages. Their absolute values are at most $B_\Phi(1+2\pi_{\min}^{-1})$, and their parameter Lipschitz constant is $(1+2\pi_{\min}^{-1})L_\Theta$. The cover defining $N_{\mathrm{aux}}^\Phi$ has induced radius $B_\Phi(1+2\pi_{\min}^{-1})/\sqrt{nM}$. With interval length $2B_\Phi(1+2\pi_{\min}^{-1})$ and failure probability $\delta/3$, twice this discrepancy is bounded by the third term of \eqref{eq:generic_statistical_remainder}. The loss classes used above are separable because they are Lipschitz images of the totally bounded class $\Theta$.

Each Monte Carlo failure probability is at most $\delta/3$ conditional on the training observations and the initially fixed objects. Taking conditional expectations removes the additional conditioning. A union bound with the generalization event therefore gives total conditional failure probability at most $\delta$. Substituting all four bounds into \eqref{eq:generic_master_decomposition} proves \eqref{eq:shared_DR_field_bound}.
\end{proof}

We remark that Proposition~\ref{prop:shared_DR_field_bound} applies to a generic loss class. Its conditions will be verified separately for the velocity and score fields below.

%% file: Vel_new_Q.tex
\subsection{Velocity-Field Bound}
\label{Appendix_Vel_Error}

We now specialize the generic doubly robust framework above to velocity-field
estimation. Define the squared velocity loss
\[
\Phi_v^{\mathrm{vel}}(y_0,y_1,k,g)
=
\left\|
v\!\left(y_0+t_k(y_1-y_0)+\sigma g,t_k\right)
-(y_1-y_0)
\right\|^2.
\]
Each parameter \(\theta\) induces a velocity field \(v_\theta\). To simplify
notation in this specialization, we write \(v=v_\theta\), suppress the
abstract index \(\theta\), and use the velocity field itself to index the loss,
writing \(\Phi_v^{\mathrm{vel}}\) in place of the generic \(\Phi_\theta\).

The generic population risks \(L_{a,Q}^\Phi\) and
\(L_{a,\widehat Q_n}^\Phi\) then become the velocity losses below; we write
them as \(L_{a,Q}(v)\) and \(L_{a,\widehat Q_n}(v)\) for brevity. 
Then the previous abstract loss becomes
\begin{align*}
\ell_{Q}(v;y_1) &= \mathbb E_{K,Y_0\sim Q(\cdot\mid y_1),G}\!\left[\left\|v\!\left(Y_0+t_{K}(y_1-Y_0)+\sigma G,t_{K}\right)-(y_1-Y_0)\right\|^2\right], \\
L_{a,Q}(v) &= \mathbb E_{Y_1\sim\int \mathbb P_{Y\mid C=c,A=a}\,d\mathbb P_C(c)}\!\left[\ell_{Q}(v;Y_1)\right], \\
\ell_{\widehat Q_n}(v;y_1) &= \mathbb E_{K,Y_0\sim\widehat Q_n(\cdot\mid y_1),G}\!\left[\left\|v\!\left(Y_0+t_{K}(y_1-Y_0)+\sigma G,t_{K}\right)-(y_1-Y_0)\right\|^2\right], \\
L_{a,\widehat Q_n}(v) &= \mathbb E_{Y_1\sim\int \mathbb P_{Y\mid C=c,A=a}\,d\mathbb P_C(c)}\!\left[\ell_{\widehat Q_n}(v;Y_1)\right].
\end{align*}

Since
\(v_Q^\star(X_{t_K},t_K)=\mathbb E[D\mid X_{t_K},K]\), the residual
\(D-v_Q^\star(X_{t_K},t_K)\) is orthogonal in \(L^2\) to every measurable
function of \((X_{t_K},K)\).  Expanding the two squared losses gives
\begin{align}
L_{a,Q}(v)-L_{a,Q}(v_Q^\star)
&=
\mathbb E\!\left[
\|v(X_{t_K},t_K)-v_Q^\star(X_{t_K},t_K)\|^2
\right]\nonumber\\
&=
\frac1N\sum_{k=0}^{N-1}
\mathbb E\!\left[
\|v(X_{t_k},t_k)-v_Q^\star(X_{t_k},t_k)\|^2
\right].
\label{eq:velocity_exact_excess_risk_identity}
\end{align}
The right-hand side corresponds to the quantity used in the sampling analysis.

\subsubsection[Velocity Network Class and Approximation]
{Velocity Network Class and Approximation}

Throughout this subsection, \(Q\) denotes the ideal coupling kernel, and we write \(v_Q^{\star}\) for the population velocity field associated with this coupling.

We approximate the velocity field by the following scalar neural network.
For integers \(L\geq2\) and \(p,J\geq1\), and constants
\(K,\kappa,\gamma>0\), define the scalar sparse ReLU class
\begin{align*}
&\mathfrak N_{\mathrm{scal}}(q,L,p,J,K,\kappa,\gamma)
\\
&=
\Bigg\{
h:[0,1]^q\to\mathbb R:
\exists\,p_0,\ldots,p_L\in\mathbb N,\;
p_0=q,\;
p_L=1,\;
\max_{1\leq \ell\leq L-1}p_\ell\leq p,
\\
&\qquad
\exists\,W_\ell\in\mathbb R^{p_\ell\times p_{\ell-1}},
\;b_\ell\in\mathbb R^{p_\ell},
\;\ell=1,\ldots,L,
\text{ such that }h_0(z)=z,
\\
&\qquad
h_\ell(z)
=
\operatorname{ReLU}
\left(
W_\ell h_{\ell-1}(z)+b_\ell
\right),
\qquad
\ell=1,\ldots,L-1,
\\
&\qquad
h(z)
=
W_Lh_{L-1}(z)+b_L,
\\
&\qquad
\sum_{\ell=1}^{L}
\left(
\|W_\ell\|_0+\|b_\ell\|_0
\right)
\leq
J,
\max_{1\leq \ell\leq L}
\left(
\|W_\ell\|_{\max}
\vee
\|b_\ell\|_{\max}
\right)
\leq
\kappa, 
\sup_{z\in[0,1]^q}|h(z)|
\leq
K,
\\
&\qquad
|h(z)-h(z')|
\leq
\gamma\|z-z'\|_\infty
\text{ for every }z,z'\in[0,1]^q
\Bigg\}.
\end{align*}
Thus \(L\) counts all affine layers, including the output layer.
Here \(\|W_\ell\|_0\) and \(\|b_\ell\|_0\) denote the number of nonzero entries,
and \(\|W_\ell\|_{\max}\) and \(\|b_\ell\|_{\max}\) denote the largest absolute
value of their entries.

For \(u, \omega\in\mathbb R\), define
\begin{align*}
\operatorname{clip}_{\omega}(u)
&=
-\omega+\operatorname{ReLU}(u+\omega)-\operatorname{ReLU}(u-\omega).
\end{align*}
For \(x=(x_1,\ldots,x_d)\), define
\begin{align*}
\operatorname{clip}_{R}(x)
=
\left(
\operatorname{clip}_{R}(x_1),\ldots,\operatorname{clip}_{R}(x_d)
\right), \qquad
S_R(x,t) = \left(
\frac{\operatorname{clip}_{R}(x_1)+R}{2R},
\ldots,
\frac{\operatorname{clip}_{R}(x_d)+R}{2R}, t \right).
\end{align*}

We first define some constants that will be used to construct a network class. 
Let \(R>1\), and define $\Lambda=\frac{16d(2R+1)}{\sigma^2}$
and
\begin{align*}
T_R(z)=\left(
-R+2Rz_1,\ldots,-R+2Rz_d,z_{d+1}
\right),
\qquad z\in[0,1]^{d+1}.
\end{align*}
Lemma~\ref{lem:noisy_oracle_lipschitz_on_cube} shows that
\(v_Q^\star\circ T_R\) is \(\Lambda\)-Lipschitz jointly in the rescaled
spatial and time coordinates.

\begin{lemma}[Lipschitz Regularity of the True Vector Field]
\label{lem:noisy_oracle_lipschitz_on_cube}
Assume Assumption \ref{ass:clipping_bounded_support}.
Then \(v_{Q}^{\star}\) admits a version such that, for every
\(i=1,\ldots,d\), the function
\begin{align*}
g_i(z)
&=
v_{Q,i}^{\star}(T_R(z)),
\qquad z\in[0,1]^{d+1},
\end{align*}
satisfies
\begin{align*}
|g_i(z)-g_i(z')|
&\leq
\Lambda
\|z-z'\|_\infty,
\qquad z,z'\in[0,1]^{d+1}.
\end{align*}
\end{lemma}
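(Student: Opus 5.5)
We will bound the partial derivatives of \(v_Q^\star\) uniformly on \([-R,R]^d\times[0,1]\) and then apply the mean value theorem along segments of the convex set \([0,1]^{d+1}\).

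\emph{Step 1: choice of version and regularity.} We take the version of \(v_Q^\star\) given by the Gaussian-mixture ratio
\[
v_Q^\star(x,t)=\frac{\mathbb E[D\,\varphi_\sigma(x-S_t)]}{\mathbb E[\varphi_\sigma(x-S_t)]},
\]
with \((Y_0,Y_1)\sim\Gamma_{\mu_1,Q}\). By Remark~\ref{remark_Q} and Assumption~\ref{ass:clipping_bounded_support}, \(Y_0,Y_1\in[-1,1]^d\) almost surely. Hence \(D\in[-2,2]^d\) and \(S_t\in[-1,1]^d\) for every \(t\). Because everything is bounded, differentiation under the integral in both \(x\) and \(t\) is justified by dominated convergence. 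The denominator is strictly positive, so this version is \(C^1\) on \(\mathbb R^d\times[0,1]\). This justification is the only delicate point: it requires that the conditional kernel \(Q\) be chosen supported on the cube, which is exactly what the Remark provides.

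\emph{Step 2: spatial derivatives.} By Lemma~\ref{Lemma_Derivative_Identity},
\[
\partial_{x_j}v_{Q,i}^\star=-\sigma^{-1}\operatorname{Cov}(D_i,G_j\mid X_t=x).
\]
Under the posterior we have \(\sigma G_j=x_j-S_{t,j}\), and \(x\) is fixed. Therefore
\[
\partial_{x_j}v_{Q,i}^\star=\sigma^{-2}\operatorname{Cov}(D_i,S_{t,j}\mid X_t=x).
\]
The variable \(D_i\) ranges in an interval of length \(4\) and \(S_{t,j}\) in one of length \(2\). Popoviciu's inequality combined with Cauchy--Schwarz then gives \(|\operatorname{Cov}|\le \tfrac12\cdot4\cdot\tfrac12\cdot2=2\). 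Hence \(|\partial_{x_j}v_{Q,i}^\star|\le 2/\sigma^2\) for every \(x\). This bound is uniform, with no dependence on \(R\).

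\emph{Step 3: time derivative.} Lemma~\ref{Lemma_Derivative_Identity} gives
\[
\partial_t v_{Q,i}^\star=\sigma^{-2}\operatorname{Cov}\bigl(D_i,D^\top(x-S_t)\mid X_t=x\bigr).
\]
For \(x\in[-R,R]^d\) we have \(|D^\top(x-S_t)|\le \|D\|_\infty\|x-S_t\|_1\le 2d(R+1)\). So this variable ranges in an interval of length at most \(4d(R+1)\). Popoviciu then yields \(|\partial_t v_{Q,i}^\star|\le \tfrac12\cdot4\cdot\tfrac12\cdot4d(R+1)/\sigma^2=4d(R+1)/\sigma^2\).

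\emph{Step 4: chain rule and mean value theorem.} Since \(T_R\) maps \([0,1]^{d+1}\) onto \([-R,R]^d\times[0,1]\), the chain rule gives \(\partial_{z_j}g_i=2R\,\partial_{x_j}v_{Q,i}^\star\) for \(j\le d\) and \(\partial_{z_{d+1}}g_i=\partial_tv_{Q,i}^\star\). For \(z,z'\) in the convex cube, the mean value theorem gives
\[
|g_i(z)-g_i(z')|\le\|\nabla g_i\|_{1,\infty}\,\|z-z'\|_\infty .
\]
Summing the bounds from Steps 2 and 3,
\[
\sum_{\ell=1}^{d+1}|\partial_{z_\ell}g_i|\le \frac{4dR}{\sigma^2}+\frac{4d(R+1)}{\sigma^2}=\frac{4d(2R+1)}{\sigma^2}\le\Lambda .
\]
This proves the lemma with room to spare in the constant.
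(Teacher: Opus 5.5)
Your proof is correct and follows essentially the same route as the paper: you rewrite the posterior-covariance formulas for $\partial_{x_j}v_{Q,i}^\star$ and $\partial_t v_{Q,i}^\star$ from Lemma~\ref{Lemma_Derivative_Identity} via $\sigma G=x-S_t$, bound them using $Y_0,Y_1\in[-1,1]^d$, and pass through $T_R$ by the chain rule. The differences are minor. You are more explicit about selecting the continuous Gaussian-mixture version, and Popoviciu's inequality gives you covariance bounds $2$ and $4d(R+1)$ where the paper uses the cruder $8$ and $16d(R+1)$, so you land at $\Lambda/4$ rather than exactly $\Lambda$.
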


\begin{proof}[Proof of
Lemma~\ref{lem:noisy_oracle_lipschitz_on_cube}]
All expectations and conditional expectations below are taken under the joint law induced by \(Q\). By Lemma \ref{Lemma_Derivative_Identity},  for every \(i,j\in\{1,\ldots,d\}\),
\begin{align*}
\partial_{x_j}v_{Q,i}^{\star}(x,t)
&=
-\frac{1}{\sigma}
\operatorname{Cov}
\left(
D_i,G_j\mid X_t=x
\right),
\end{align*}
and
\begin{align*}
\partial_t v_{Q,i}^{\star}(x,t)
&=
\frac{1}{\sigma}
\operatorname{Cov}
\left(
D_i,D^\top G\mid X_t=x
\right).
\end{align*}
Under the conditional law given \(X_t=x\), we have
\begin{align*}
G
&=
\frac{x-S_t}{\sigma}.
\end{align*}
Since \(x\) is fixed under this conditional law,
\begin{align*}
\partial_{x_j}v_{Q,i}^{\star}(x,t)
&=
\frac{1}{\sigma^2}
\operatorname{Cov}
\left(
D_i,S_{t,j}\mid X_t=x
\right),
\end{align*}
and
\begin{align*}
\partial_t v_{Q,i}^{\star}(x,t)
&=
\frac{1}{\sigma^2}
\operatorname{Cov}
\left(
D_i,D^\top(x-S_t)\mid X_t=x
\right).
\end{align*}

Since \(Y_0,Y_1\in[-1,1]^d\), we have
\begin{align*}
D_i&\in[-2,2],
&
S_{t,j}&\in[-1,1].
\end{align*}
Hence, conditional on \(X_t=x\),
\begin{align*}
\left|
D_i-\mathbb E[D_i\mid X_t=x]
\right|
&\leq
4,
\\
\left|
S_{t,j}-\mathbb E[S_{t,j}\mid X_t=x]
\right|
&\leq
2.
\end{align*}
Therefore,
\begin{align*}
\left|
\partial_{x_j}v_{Q,i}^{\star}(x,t)
\right|
&\leq
\frac{8}{\sigma^2}.
\end{align*}

For the time derivative, if \(x\in[-R,R]^d\), then $\|D\|\leq 2\sqrt d$ and $\|x-S_t\| \leq (R+1)\sqrt d$.
Thus
\begin{align*}
\left|
D^\top(x-S_t)
\right| \leq
2d(R+1),
\end{align*}
and so, conditional on \(X_t=x\),
\begin{align*}
\left|
D^\top(x-S_t)
-
\mathbb E[D^\top(x-S_t)\mid X_t=x]
\right|
&\leq
4d(R+1).
\end{align*}
Together with $\left| D_i-\mathbb E[D_i\mid X_t=x] \right| \leq 4$, 
this gives
\begin{align*}
\left|
\partial_t v_{Q,i}^{\star}(x,t)
\right|
&\leq
\frac{16d(R+1)}{\sigma^2}.
\end{align*}

For \(g_i(z)=v_{Q,i}^{\star}(T_R(z))\), the preceding derivative bounds imply
\begin{align*}
\left|
\partial_{z_j}g_i(z)
\right|
&\leq
\frac{16R}{\sigma^2},
\qquad j=1,\ldots,d,
\\
\left|
\partial_{z_{d+1}}g_i(z)
\right|
&\leq
\frac{16d(R+1)}{\sigma^2}.
\end{align*}
Therefore, for every \(z,z'\in[0,1]^{d+1}\),
\begin{align*}
|g_i(z)-g_i(z')|
&\leq
\left[
\frac{16Rd}{\sigma^2}
+
\frac{16d(R+1)}{\sigma^2}
\right]
\|z-z'\|_\infty =
\Lambda
\|z-z'\|_\infty.
\end{align*}
\end{proof}

Based on the scalar neural network defined above, we use it to construct a neural network class that we use to approximate our vector field. 
Let $\varepsilon > 0$, recall $\Lambda=\frac{16d(2R+1)}{\sigma^2}$. Define $\eta=\frac{\varepsilon}{\sqrt d\,\Lambda}$ as an auxiliary variable that control the approximation accuracy of each scalar network. 
\begin{proposition}[Approximation Error Under the Noisy Interpolation Law]
\label{prop:relu_l2_approximation_noisy_oracle}
Assume Assumption \ref{ass:clipping_bounded_support}. 
Let \(R>1\), \(q=d+1\) and
\(\varepsilon>0\) satisfy \(0<\eta=\varepsilon/(\sqrt d\,\Lambda)<1\) and
\begin{align*}
32d^2
\exp\left(
-\frac{(R-1)^2}{2\sigma^2}
\right)
&\leq
\varepsilon^2.
\end{align*}
Choose the width and parameter bounds so that
\(p_{\mathrm{Lip}}(q,\eta)\geq q\) and
\(\kappa_{\mathrm{Lip}}(q,\eta)\geq2\). The scalar-network architecture can
then be chosen with
\begin{align*}
K_{\mathrm{Lip}}(q,\eta)&=
\sqrt q, \quad
L_{\mathrm{Lip}}(q,\eta)=
O\left(
\log\frac{1}{\eta}+q
\right), \quad
p_{\mathrm{Lip}}(q,\eta)
=
O\left(
q+\eta^{-q}
\right),
\\
J_{\mathrm{Lip}}(q,\eta)
&=
O\left(
\eta^{-q}
\left[
\log\frac{1}{\eta}+q
\right]
\right), \quad
\kappa_{\mathrm{Lip}}(q,\eta)=
O(1).
\end{align*}
These are the scalar approximation parameters from
\cite[Lemma 10]{chen2020distribution}, with \(L_{\mathrm{Lip}}\) denoting the
total number of affine layers. Enlarging the width and parameter bounds to
enforce \(p_{\mathrm{Lip}}\geq q\) and \(\kappa_{\mathrm{Lip}}\geq2\) does not
alter these orders for fixed \(q\).
We construct a vector valued neural network coordinate-wisely by
\begin{align*}
v_i^{\mathrm{app}}(x,t)
&=
\operatorname{clip}_{2}
\left(
\Lambda\widehat h_i(S_R(x,t))+g_i(0)
\right),
\qquad i=1,\ldots,d.
\end{align*}
where $\widehat h_i\in
\mathfrak N_{\mathrm{scal}}
\left(q,
L_{\mathrm{Lip}}(q,\eta),
p_{\mathrm{Lip}}(q,\eta),
J_{\mathrm{Lip}}(q,\eta),
K_{\mathrm{Lip}}(q,\eta),
\kappa_{\mathrm{Lip}}(q,\eta),
10q
\right)$ and $g_{i}(0)$ are constants that will be specified in the proof. Then we have 
\begin{align*}
\frac{1}{N}\sum_{k=0}^{N-1}
\mathbb E
\left[
\left\|
v^{\mathrm{app}}(X_{t_{k}},t_{k})
-
v_Q^\star(X_{t_k},t_k)
\right\|^2
\right]
&\leq
2\varepsilon^2.
\end{align*}

\end{proposition}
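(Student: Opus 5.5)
The plan is to split the integrated error into a bounded cube $[-R,R]^d$, where $v_Q^\star$ is Lipschitz and can be approximated uniformly, and a Gaussian tail, where only boundedness is used. This gives
\[
\|v^{\mathrm{app}}-v_Q^\star\|^2\leq d\varepsilon^2/(d)\cdot\tfrac{1}{?}
\]
pointwise on the cube and at most $16d$ off it, for every grid time. Averaging over $k$ then gives the claim.

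\emph{Step 1 (cube approximation).} By Lemma~\ref{lem:noisy_oracle_lipschitz_on_cube}, each $g_i$ is $\Lambda$-Lipschitz in $\|\cdot\|_\infty$ on $[0,1]^{d+1}$. Hence $f_i:=(g_i-g_i(0))/\Lambda$ is $1$-Lipschitz, vanishes at $0$, and satisfies $|f_i|\leq 1$ on the cube. I would invoke \cite[Lemma 10]{chen2020distribution} to obtain $\widehat h_i$ in the stated scalar class with $\sup_{z\in[0,1]^q}|\widehat h_i(z)-f_i(z)|\leq\eta$. The class's bounds $K_{\mathrm{Lip}}=\sqrt q$ and Lipschitz constant $10q$ are taken from that lemma, possibly after an outer clipping that preserves the approximation. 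For $x\in[-R,R]^d$ we have $S_R(x,t)=T_R^{-1}(x,t)$, so $|\Lambda\widehat h_i(S_R(x,t))+g_i(0)-v^\star_{Q,i}(x,t)|\leq\Lambda\eta=\varepsilon/\sqrt d$. Since $|v^\star_{Q,i}|\leq 2$ (because $D_i\in[-2,2]$) and $\operatorname{clip}_2$ is a nonexpansive projection onto $[-2,2]$, the clipping does not increase this error. Summing over coordinates gives $\|v^{\mathrm{app}}-v_Q^\star\|^2\leq\varepsilon^2$ on the cube.

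\emph{Step 2 (tail).} Both fields take values in $[-2,2]^d$, so the pointwise squared error is at most $16d$. Under $Q$ (Remark~\ref{remark_Q}) we have $S_t\in[-1,1]^d$. So $X_t\notin[-R,R]^d$ forces $|\sigma G_j|>R-1$ for some $j$. A union bound with the Gaussian tail $\mathbb P(|G_j|>u)\leq 2e^{-u^2/2}$ gives
\[
\mathbb P(X_t\notin[-R,R]^d)\leq 2d\exp\!\left(-\frac{(R-1)^2}{2\sigma^2}\right).
\]
The tail contribution is therefore at most $32d^2\exp(-(R-1)^2/(2\sigma^2))\leq\varepsilon^2$ by hypothesis.

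\emph{Step 3.} Adding the two contributions bounds each $\mathbb E\|v^{\mathrm{app}}(X_{t_k},t_k)-v_Q^\star(X_{t_k},t_k)\|^2$ by $2\varepsilon^2$, and averaging over $k$ preserves this bound.

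The main obstacle is Step 1's bookkeeping. One must check that the rescaled target actually fits the hypotheses of the cited approximation lemma. One must also confirm that the resulting $\widehat h_i$ meets the class constraints (sup bound $\sqrt q$, Lipschitz constant $10q$, $\kappa\geq 2$) after any enlargement or clipping. Finally, the version of $v_Q^\star$ used should be the continuous one from Lemma~\ref{lem:exact_sampling_conditions}, so that the pointwise identities are legitimate.
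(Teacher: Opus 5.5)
Your proposal is correct and follows the paper's proof essentially step for step. You apply \cite[Lemma 10]{chen2020distribution} to $f_i=(g_i-g_i(0))/\Lambda$, using Lemma~\ref{lem:noisy_oracle_lipschitz_on_cube} and $\|\cdot\|_\infty\le\|\cdot\|_2$ to meet that lemma's Euclidean Lipschitz hypothesis. You note that $\operatorname{clip}_2$ cannot increase the error since $v^\star_{Q,i}\in[-2,2]$, bound the off-cube contribution by $16d\cdot 2d\exp(-(R-1)^2/(2\sigma^2))\le\varepsilon^2$, and average over the grid. The one thing to fix is the placeholder display in your overview: it should simply read $\|v^{\mathrm{app}}-v_Q^\star\|^2\le\varepsilon^2$ on $[-R,R]^d\times[0,1]$, exactly as your Step~1 derives.
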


\begin{proof}
By Lemma \ref{lem:noisy_oracle_lipschitz_on_cube}, for each \(i=1,\ldots,d\),
the function $g_i(z)=v_{Q,i}^\star(T_R(z)), z\in[0,1]^{d+1}$ is \(\Lambda\)-Lipschitz with respect to \(\|\cdot\|_\infty\). Define
\begin{align*}
f_i(z)=
\frac{g_i(z)-g_i(0)}{\Lambda}.
\end{align*}
Then \(f_i(0)=0\) and \(f_i\) is \(1\)-Lipschitz with respect to
\(\|\cdot\|_\infty\). Since
\(\|z-z'\|_\infty\leq\|z-z'\|_2\), it is also \(1\)-Lipschitz with respect to
the Euclidean norm required by \cite[Lemma 10]{chen2020distribution}.
Consequently, that lemma provides
\begin{align*}
\widehat h_i
&\in
\mathfrak N_{\mathrm{scal}}
\left(
q,
L_{\mathrm{Lip}}(q,\eta),
p_{\mathrm{Lip}}(q,\eta),
J_{\mathrm{Lip}}(q,\eta),
K_{\mathrm{Lip}}(q,\eta),
\kappa_{\mathrm{Lip}}(q,\eta),
10q
\right)
\end{align*}
such that
\begin{align*}
\sup_{z\in[0,1]^{d+1}}
\left|
\widehat h_i(z)-f_i(z)
\right|
&\leq
\eta.
\end{align*}
Now recall we constructed
\begin{align*}
v_i^{\mathrm{app}}(x,t)
&=
\operatorname{clip}_{2}
\left(
\Lambda\widehat h_i(S_R(x,t))+g_i(0)
\right),
\qquad i=1,\ldots,d.
\end{align*}
If \(x\in[-R,R]^d\), then \(T_R(S_R(x,t))=(x,t)\). Since
\(\eta=\varepsilon/(\sqrt d\,\Lambda)\),
\begin{align*}
\left|
\Lambda\widehat h_i(S_R(x,t))+g_i(0)
-
v_{Q,i}^\star(x,t)
\right|
&\leq
\frac{\varepsilon}{\sqrt d}.
\end{align*}
Because \(v_{Q,i}^\star(x,t)\in[-2,2]\) and
\(\operatorname{clip}_{2}\) is the projection onto \([-2,2]\), clipping does
not increase the error. Therefore our constructed vector field approximation satisfies
\begin{align*}
\sup_{x\in[-R,R]^d,\;t\in[0,1]}
\left\|
v^{\mathrm{app}}(x,t)-v_Q^\star(x,t)
\right\|^2
&\leq
\varepsilon^2.
\end{align*}

It remains to control the event \(X_t\notin[-R,R]^d\). Since
\(S_t\in[-1,1]^d\), for every \(t\in[0,1]\),
\begin{align*}
\mathbb P
\left(
X_t\notin[-R,R]^d
\right)
&\leq
2d
\exp\left(
-\frac{(R-1)^2}{2\sigma^2}
\right).
\end{align*}
On this event, $\left\| v^{\mathrm{app}}(X_t,t)-v_Q^\star(X_t,t) \right\|^2 \leq 16d$.
Thus, for every \(t\in[0,1]\),
\begin{align*}
\mathbb E
\left[
\left\|
v^{\mathrm{app}}(X_t,t)-v_Q^\star(X_t,t)
\right\|^2
\right]
&\leq
\varepsilon^2
+
32d^2
\exp\left(
-\frac{(R-1)^2}{2\sigma^2}
\right)\leq
2\varepsilon^2.
\end{align*}
 The bound holds at every grid time \(t_k\); averaging over
\(k=0,\ldots,N-1\) proves the claim.
\end{proof}

The preceding proposition motivates the following coordinatewise and
vector-valued network classes:
\begin{align*}
\mathcal F_{\mathrm{coord}}
\left(
R,\Lambda,L,p,J,K,\kappa,\gamma
\right)&=
\Bigg\{
f:\mathbb R^d\times[0,1]\to\mathbb R:
f(x,t)
=
\operatorname{clip}_{2}
\left(
\Lambda h(S_R(x,t))+b
\right),
\\
&\qquad
h\in
\mathfrak N_{\mathrm{scal}}
\left(
d+1,L,p,J,K,\kappa,\gamma
\right),
\qquad
|b|\leq 2
\Bigg\}
\end{align*}
and
\begin{align*}
\mathcal V
\left(
R,\Lambda,L,p,J,K,\kappa,\gamma
\right)
&=
\Bigg\{
v:\mathbb R^d\times[0,1]\to\mathbb R^d:
v(x,t)
=
\left(
v_1(x,t),\ldots,v_d(x,t)
\right),
\\
&\qquad
v_i
\in
\mathcal F_{\mathrm{coord}}
\left(
R,\Lambda,L,p,J,K,\kappa,\gamma
\right),
\qquad
i=1,\ldots,d
\Bigg\}.
\end{align*}
Concretely, we instantiate the class $\mathcal{V}$ as
\begin{align*}
\mathcal V_n
&=
\Bigg\{
v:\mathbb R^d\times[0,1]\to\mathbb R^d:
v(x,t)=\left(v_1(x,t),\ldots,v_d(x,t)\right),
\\
&\qquad
v_i(x,t)
=
\operatorname{clip}_{2}
\left(
\Lambda
h_i(S_R(x,t))
+
b_i
\right),
\qquad i=1,\ldots,d,
\\
&\qquad
h_i
\in
\mathfrak N_{\mathrm{scal}}
\left(
q,
L_{\mathrm{Lip}}(q,\eta),
p_{\mathrm{Lip}}(q,\eta),
J_{\mathrm{Lip}}(q,\eta),
K_{\mathrm{Lip}}(q,\eta),
\kappa_{\mathrm{Lip}}(q,\eta),
10q
\right),
\qquad i=1,\ldots,d,
\\
&\qquad
|b_i|\leq 2,
\qquad i=1,\ldots,d
\Bigg\}.
\end{align*}
Then Proposition \ref{prop:relu_l2_approximation_noisy_oracle} implies
\begin{align*}
\inf_{v\in\mathcal V_{n}}
\frac{1}{N}\sum_{k=0}^{N-1}
\mathbb E
\left[
\left\|
v(X_{t_k},t_k)
-
v_Q^\star(X_{t_k},t_k)
\right\|^2
\right]
&\leq
2\varepsilon^2.
\end{align*}

We transfer a cover of the scalar subnetworks directly to the clipped vector class. 

\begin{lemma}[Direct Covering-Number Transfer for Clipped Vector Networks]
\label{lem:covering_number_from_para}
Let
\(\mathcal H=\mathfrak N_{\mathrm{scal}}(q,L,p,J,K,\kappa,\gamma)\), where
\(L\) is the total number of affine layers, \(p\geq q\), and \(\kappa>1\).
Let
\[
\mathcal V=
\left\{
v:\
v_i(x,t)=\operatorname{clip}_{\omega}
\bigl(\Lambda h_i(S_R(x,t))+b_i\bigr),\
h_i\in\mathcal H,\ |b_i|\leq\omega,\
i=1,\ldots,d
\right\}.
\]
Then, for every \(\delta_{\mathrm{cov}}>0\),
\begin{align}
\log\mathcal N\!\left(
\delta_{\mathrm{cov}},\mathcal V,
\|\cdot\|_{\infty,[-R,R]^d\times[0,1]}
\right)
&\leq
dJ\log\!\left(
1+
\frac{
8L^2(p+2)\kappa^Lp^{L+1}\Lambda\sqrt d
}{
\delta_{\mathrm{cov}}
}
\right)\nonumber\\
&\quad+
d\log\!\left(
1+\frac{4\omega\sqrt d}{\delta_{\mathrm{cov}}}
\right).
\label{eq:direct_vector_cover_transfer}
\end{align}
\end{lemma}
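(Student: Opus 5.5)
The plan is to reduce the covering of $\mathcal V$ to a covering of the finitely many network parameters. The standard argument of \cite{chen2020distribution} does this. Fix a sup-norm accuracy $\epsilon_h$ for the scalar subnetworks $h_i$ on $[0,1]^q$ and an accuracy $\epsilon_b$ for the biases $b_i$. The map $(x,t)\mapsto S_R(x,t)$ sends $[-R,R]^d\times[0,1]$ into $[0,1]^q$, and $\operatorname{clip}_\omega$ is $1$-Lipschitz. Hence for $v,v'\in\mathcal V$ built from $(h_i,b_i)$ and $(h_i',b_i')$,
\[
\|v-v'\|_{\infty,[-R,R]^d\times[0,1]}
\leq \sqrt d\,\max_i\bigl(\Lambda\|h_i-h_i'\|_{\infty,[0,1]^q}+|b_i-b_i'|\bigr),
\]
where the norm on vector fields is taken as the sup over points of the Euclidean norm; that is where the $\sqrt d$ comes from. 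Choosing $\epsilon_h=\delta_{\mathrm{cov}}/(2\Lambda\sqrt d)$ and $\epsilon_b=\delta_{\mathrm{cov}}/(2\sqrt d)$ makes the product of $d$ scalar covers and $d$ interval covers a $\delta_{\mathrm{cov}}$-cover. Its log-cardinality is $d\log\mathcal N(\epsilon_h,\mathcal H,\|\cdot\|_\infty)+d\log\mathcal N(\epsilon_b,[-\omega,\omega],|\cdot|)$. The interval term is bounded by $\log(1+2\omega/\epsilon_b)=\log(1+4\omega\sqrt d/\delta_{\mathrm{cov}})$, which is the second summand of the claim.

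For the scalar class, I will bound the parameter-perturbation sensitivity. Take two networks with the same sparsity pattern and entries within $\varsigma$ of each other, all entries bounded by $\kappa$, widths at most $p$, and input in $[0,1]^q$ with $\|z\|_\infty\le1$. A layer-by-layer induction using the $1$-Lipschitz ReLU gives $\|h_\ell\|_\infty\le (p\kappa)^\ell\cdot$const. Writing the difference telescopically as a sum over the layer where parameters are swapped gives
\[
\|h-h'\|_\infty\le \varsigma\, L (p+2)\kappa^{L-1}p^{L}\cdot L,
\]
up to the exact bookkeeping of constants (this is the usual $L^2(p+2)\kappa^Lp^{L+1}$-type bound, e.g.\ \cite[Lemma 6]{chen2020distribution}). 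Thus covering the at most $J$ nonzero parameters in $[-\kappa,\kappa]$ at grid spacing $\varsigma$ suffices.

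The counting step is then as follows. For a fixed sparsity pattern, the grid has $(1+2\kappa/\varsigma)^J$ points. Setting $\varsigma=\epsilon_h/(L^2(p+2)\kappa^{L-1}p^{L+1})$ gives $J\log(1+8L^2(p+2)\kappa^Lp^{L+1}\Lambda\sqrt d/\delta_{\mathrm{cov}})$, after absorbing the factors of $2$ into the $8$.

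The main obstacle is the count of sparsity patterns. The pattern count, roughly $\binom{\#\text{params}}{J}\le (L(p+1)^2)^J$, must be absorbed into the same logarithm. I will do this by noting that $\log(L(p+1)^2)\le\log(L^2(p+2)p^{L+1})$. The resulting term $J\log(ab)\le J\log(1+ab\cdot c/\delta)$ holds only when $c/\delta\ge1$. So I may need to use slack in the constant $8$, or to replace the pattern count by the pigeonhole argument of \cite{chen2020distribution} (which writes everything inside one logarithm directly). I will check that the enlarged constant still fits the displayed form. I also need properness, meaning that cover centers lie in the class. This is handled by the standard step of passing from a $\delta/2$ external cover to a $\delta$ internal one, costing a factor $2$ already budgeted in the constants. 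The remaining constraints of $\mathfrak N_{\mathrm{scal}}$, namely sup-bound $K$ and Lipschitz constant $\gamma$, only shrink the class, so they do not increase the covering number.
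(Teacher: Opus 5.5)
Your argument is the paper's: a product of $d$ proper scalar-network covers at radius $\delta_{\mathrm{cov}}/(2\Lambda\sqrt d)$ and $d$ interval covers at radius $\delta_{\mathrm{cov}}/(2\sqrt d)$, combined through the $1$-Lipschitz clip and the Euclidean $\sqrt d$ factor, with properness obtained from a half-radius external cover.

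The only difference is that you try to re-derive the scalar bound. The paper instead zero-pads $\mathcal H$ to fixed width $p$ and applies \cite[Lemma~5.3]{chen2022nonparametric} as a black box, with input bound $1$, at radius $\delta_{\mathrm{cov}}/(4\Lambda\sqrt d)$. That lemma's bound $(2L^2(p+2)\kappa^Lp^{L+1}/r)^J$ already contains the sparsity-pattern count. The constant $8=2\cdot2\cdot2$ is then exactly its own factor $2$, times the $h$/$b$ split, times properness. So there is no spare slack in the $8$ for a separately counted pattern factor, and your second option, citing the lemma directly, is the one that closes the step.
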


\begin{proof}
Set
\[
\eta_h=\frac{\delta_{\mathrm{cov}}}{2\Lambda\sqrt d},
\qquad
\eta_b=\frac{\delta_{\mathrm{cov}}}{2\sqrt d}.
\]
After zero-padding hidden layers when necessary, \(\mathcal H\) is a subclass
of the sparse-ReLU class in \cite[Lemma~5.3]{chen2022nonparametric} with total
depth \(L\), maximum layer width \(p\), input bound \(B=1\), sparsity \(J\),
and parameter bound \(\kappa>1\). The cited result therefore supplies an
\(\eta_h/2\)-cover, not necessarily with centers in \(\mathcal H\), whose
log-cardinality is at most
\[
J\log\!\left(
1+
\frac{8L^2(p+2)\kappa^Lp^{L+1}\Lambda\sqrt d}
{\delta_{\mathrm{cov}}}
\right).
\]
Discard balls that do not intersect \(\mathcal H\), and select one element of
\(\mathcal H\) from each remaining ball. By the triangle inequality, the
selected elements form a proper \(\eta_h\)-cover with no larger cardinality.
The interval \([-\omega,\omega]\) has an \(\eta_b\)-cover of cardinality at
most \(1+4\omega\sqrt d/\delta_{\mathrm{cov}}\) (this is by definition of cover).

For a coordinate \(v_i\), choose \(h_i'\) and \(b_i'\) from these covers.
Because \(\operatorname{clip}_\omega\) is \(1\)-Lipschitz,
\[
\sup_{x,t}|v_i(x,t)-v_i'(x,t)|
\leq
\Lambda\|h_i-h_i'\|_{\infty,[0,1]^q}+|b_i-b_i'|
\leq\frac{\delta_{\mathrm{cov}}}{\sqrt d}.
\]
Taking the product cover over the \(d\) coordinates gives
\(\sup_{x,t}\|v(x,t)-v'(x,t)\|\leq\delta_{\mathrm{cov}}\).
Multiplying the scalar and bias cover cardinalities and then taking
logarithms proves \eqref{eq:direct_vector_cover_transfer}.
\end{proof}

For \(\mathcal V_n\), take \(\omega=2\), \(\Lambda\) as above, and substitute
the scalar approximation parameters
\(L_{\mathrm{Lip}}(q,\eta)\), \(p_{\mathrm{Lip}}(q,\eta)\),
\(J_{\mathrm{Lip}}(q,\eta)\), and \(\kappa_{\mathrm{Lip}}(q,\eta)\) into
\eqref{eq:direct_vector_cover_transfer}.  To keep the later bounds readable,
define, for \(r>0\),
\begin{align}
\mathfrak C_v(r)
&:=
dJ_{\mathrm{Lip}}(q,\eta)
\log\!\left(
1+
\frac{
8L_{\mathrm{Lip}}(q,\eta)^2
(p_{\mathrm{Lip}}(q,\eta)+2)
\kappa_{\mathrm{Lip}}(q,\eta)^{L_{\mathrm{Lip}}(q,\eta)}
p_{\mathrm{Lip}}(q,\eta)^{L_{\mathrm{Lip}}(q,\eta)+1}
\Lambda\sqrt d
}{
r
}
\right)\nonumber\\
&\quad+
d\log\!\left(1+\frac{8\sqrt d}{r}\right).
\label{eq:velocity_cover_complexity}
\end{align}
Thus, for every \(\delta_{\mathrm{cov}}>0\),
\begin{align}
\log\mathcal N\!\left(
\delta_{\mathrm{cov}},\mathcal V_n,
\|\cdot\|_{\infty,[-R,R]^d\times[0,1]}
\right)
&\leq
\mathfrak C_v(\delta_{\mathrm{cov}}).
\label{eq:velocity_network_covering_number}
\end{align}
The function \(r\mapsto\mathfrak C_v(r)\) is nonincreasing, a fact used below
when a loss-class cover is transferred to a cover of \(\mathcal V_n\).

\begin{lemma}[Uniform Bounds for the Vector Network Class]
\label{lem:vector_network_class_uniform_bounds}
Every \(v\in\mathcal V_{n}\) satisfies
\begin{align}
\label{eq:network_class_output_bound}
\sup_{x\in\mathbb R^d,\;t\in[0,1]}
\|v(x,t)\|
&\leq
2\sqrt d,
\\
\label{eq:network_class_x_lipschitz_bound}
\|v(x,t)-v(x',t)\|
&\leq
\gamma_{x,R,\varepsilon}\|x-x'\|_\infty,
\qquad x,x'\in\mathbb R^d,\quad t\in[0,1],
\\
\label{eq:network_class_t_lipschitz_bound}
\|v(x,t)-v(x,s)\|
&\leq
\gamma_{t,R,\varepsilon}|t-s|,
\qquad x\in\mathbb R^d,\quad t,s\in[0,1],
\end{align}
where
\begin{align*}
\gamma_{x,R,\varepsilon}
&=
\frac{5(d+1)\sqrt d\,\Lambda}{R},
&
\gamma_{t,R,\varepsilon}
&=
10(d+1)\sqrt d\,\Lambda.
\end{align*}
\end{lemma}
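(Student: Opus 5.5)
The plan is to treat each coordinate $v_i(x,t)=\operatorname{clip}_2(\Lambda h_i(S_R(x,t))+b_i)$ separately and then assemble the vector bounds through the Euclidean norm over $d$ coordinates. The output bound \eqref{eq:network_class_output_bound} is immediate. Since $\operatorname{clip}_2$ maps into $[-2,2]$, every coordinate satisfies $|v_i|\leq 2$, and therefore $\|v(x,t)\|\leq 2\sqrt d$ uniformly in $(x,t)$, with no dependence on $h_i$ or $b_i$.

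For the Lipschitz bounds we compose three Lipschitz maps. First, $\operatorname{clip}_2$ is $1$-Lipschitz, so
\[
|v_i(x,t)-v_i(x',t')|\leq\Lambda\,|h_i(S_R(x,t))-h_i(S_R(x',t'))|.
\]
Second, membership in $\mathfrak N_{\mathrm{scal}}(q,\dots,10q)$ gives $|h_i(z)-h_i(z')|\leq 10q\|z-z'\|_\infty$ with $q=d+1$. Third, the input map $S_R$ is Lipschitz in each argument. In the spatial variable, $\operatorname{clip}_R$ is $1$-Lipschitz coordinatewise, and rescaling by $1/(2R)$ gives
\[
\|S_R(x,t)-S_R(x',t)\|_\infty\leq \|x-x'\|_\infty/(2R).
\]
In time, $S_R(x,t)-S_R(x,s)$ has only its last coordinate nonzero, equal to $t-s$.

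Chaining these gives, per coordinate,
\[
|v_i(x,t)-v_i(x',t)|\leq \Lambda\cdot 10(d+1)\cdot\frac{\|x-x'\|_\infty}{2R}=\frac{5(d+1)\Lambda}{R}\|x-x'\|_\infty,
\]
and similarly $|v_i(x,t)-v_i(x,s)|\leq 10(d+1)\Lambda|t-s|$. Summing squares over the $d$ coordinates multiplies each bound by $\sqrt d$. This yields exactly $\gamma_{x,R,\varepsilon}$ and $\gamma_{t,R,\varepsilon}$, which establishes \eqref{eq:network_class_x_lipschitz_bound} and \eqref{eq:network_class_t_lipschitz_bound}.

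The only point needing care is that the Lipschitz constraint in $\mathfrak N_{\mathrm{scal}}$ holds only on $[0,1]^q$. We therefore check that $S_R$ maps all of $\mathbb R^d\times[0,1]$ into $[0,1]^{d+1}$. This holds because $\operatorname{clip}_R(x_j)\in[-R,R]$, so $(\operatorname{clip}_R(x_j)+R)/(2R)\in[0,1]$, and $t\in[0,1]$. Once this is verified, the bounds hold for all $x,x'\in\mathbb R^d$, not just on the cube, and the argument is otherwise routine.
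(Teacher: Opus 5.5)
Your proposal is correct and follows the paper's proof essentially step for step. You use coordinatewise clipping for the output bound, then compose the $1$-Lipschitz $\operatorname{clip}_2$, the $10q$-Lipschitz scalar network, and the $1/(2R)$ spatial and $1$ temporal Lipschitz constants of $S_R$, with the $\sqrt d$ factor coming from assembling the Euclidean norm. Your explicit check that $S_R$ maps $\mathbb R^d\times[0,1]$ into $[0,1]^{d+1}$, so that the Lipschitz constraint on $h_i$ applies, is a detail the paper leaves implicit.
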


\begin{proof}
From the coordinatewise clipping, we have $|v_i(x,t)|\leq 2, \forall i=1,\ldots,d$.
Hence $\|v(x,t)\|\leq 2\sqrt d$.

Next, fix \(x,x'\in\mathbb R^d\) and \(t\in[0,1]\). Since
\(\operatorname{clip}_{2}\) is \(1\)-Lipschitz and each \(h_i\) is
\(10q\)-Lipschitz with respect to \(\|\cdot\|_\infty\),
\begin{align*}
|v_i(x,t)-v_i(x',t)|
&\leq
\Lambda
\left|
h_i(S_R(x,t))-h_i(S_R(x',t))
\right|\leq
10q\Lambda
\|S_R(x,t)-S_R(x',t)\|_\infty.
\end{align*}
By the definition of \(S_R\),
$\|S_R(x,t)-S_R(x',t)\|_\infty \leq \frac{1}{2R} \|x-x'\|_\infty$.
Therefore, $|v_i(x,t)-v_i(x',t)|\leq \frac{5(d+1)\Lambda}{R}\|x-x'\|_\infty$.
Summing over coordinates gives
\begin{align*}
\|v(x,t)-v(x',t)\|
&\leq
\frac{5(d+1)\sqrt d\,\Lambda}{R}
\|x-x'\|_\infty.
\end{align*}

For variation in the time coordinate, \(t,s\in[0,1]\),
\begin{align*}
|v_i(x,t)-v_i(x,s)|
&\leq
\Lambda
\left|
h_i(S_R(x,t))-h_i(S_R(x,s))
\right|\leq
10q\Lambda
\|S_R(x,t)-S_R(x,s)\|_\infty=
10(d+1)\Lambda |t-s|.
\end{align*}
Summing over coordinates gives $\|v(x,t)-v(x,s)\|\leq 10(d+1)\sqrt d\,\Lambda |t-s|$.
\end{proof}

\subsubsection{Completion of the Velocity Bound}

Recall that for \(v\in\mathcal V_n\), we defined the pointwise velocity loss
\begin{align}
\Phi_v^{\mathrm{vel}}(y_0,y_1,k,g)
&=
\left\|
v\left(
y_0+t_k(y_1-y_0)+\sigma g,t_k
\right)
-(y_1-y_0)
\right\|^2.
\label{eq:velocity_pointwise_loss}
\end{align}
Equip \(\mathcal V_n\) with
\[
\mathrm d_v(v,\widetilde v)
=
\sup_{x\in[-R,R]^d,\;t\in[0,1]}
\|v(x,t)-\widetilde v(x,t)\|.
\]

\begin{lemma}[Constants for the velocity loss]
\label{lem:velocity_loss_constants}
The loss in \eqref{eq:velocity_pointwise_loss} satisfies
\eqref{eq:generic_loss_envelope}--\eqref{eq:generic_parameter_lipschitz}
with
\begin{align}
B_{\mathrm{vel}}
&=16d,\nonumber\\
L_{Y,\mathrm{vel}}
&=
8\sqrt d
\left(
1+\gamma_{x,R,\varepsilon}
\right),\nonumber\\
L_{\Theta,\mathrm{vel}}
&=8\sqrt d.
\label{eq:velocity_loss_constants}
\end{align}
Moreover,
\begin{align}
\log\mathcal N(r,\mathcal V_n,\mathrm d_v)
&\leq\mathfrak C_v(r).
\label{eq:velocity_metric_entropy}
\end{align}
\end{lemma}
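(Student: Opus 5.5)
The plan is to verify the three generic conditions of Proposition~\ref{prop:shared_DR_field_bound} directly from the pointwise loss \eqref{eq:velocity_pointwise_loss}, using only the bounds of Lemma~\ref{lem:vector_network_class_uniform_bounds} and the support restriction $y_0,y_1\in[-1,1]^d$ of Assumption~\ref{ass:clipping_bounded_support}. Throughout, write $x=y_0+t_k(y_1-y_0)+\sigma g$ and $D=y_1-y_0$, so that $\|D\|\leq 2\sqrt d$ and, by \eqref{eq:network_class_output_bound}, $\|v(x,t_k)\|\leq 2\sqrt d$.

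\emph{Envelope.} The triangle inequality gives $\|v(x,t_k)-D\|\leq 4\sqrt d$, hence $0\leq\Phi_v^{\mathrm{vel}}\leq 16d=B_{\mathrm{vel}}$.

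\emph{Parameter Lipschitz bound.} Use $\bigl|\|a\|^2-\|b\|^2\bigr|\leq\|a-b\|\,(\|a\|+\|b\|)$ with $a=v(x,t_k)-D$ and $b=\widetilde v(x,t_k)-D$. Then $\|a\|+\|b\|\leq 8\sqrt d$ and $\|a-b\|=\|v(x,t_k)-\widetilde v(x,t_k)\|$. This is the step I expect to be the main obstacle. The metric $\mathrm d_v$ only controls the difference on $[-R,R]^d$, while $x$ ranges over all of $\mathbb R^d$ because $g$ is Gaussian. The fix is to observe that every $v\in\mathcal V_n$ depends on $x$ only through $\operatorname{clip}_R(x)$, via $S_R$. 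Therefore $v(x,t)=v(\operatorname{clip}_R(x),t)$ for all $x$, and the supremum over $\mathbb R^d$ equals the supremum over $[-R,R]^d$. This gives $L_{\Theta,\mathrm{vel}}=8\sqrt d$ uniformly in $(y_0,y_1,k,g)$.

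\emph{Endpoint Lipschitz bound.} Prove the bound pointwise in $(k,g)$ and then average to pass to $H_\theta$. Perturbing $(y_0,y_1)$ to $(y_0',y_1')$ moves $x$ by at most $(1-t_k)\|y_0-y_0'\|+t_k\|y_1-y_1'\|\leq\|y_0-y_0'\|+\|y_1-y_1'\|$, and moves $D$ by at most the same amount. Since $\|\cdot\|_\infty\leq\|\cdot\|$, \eqref{eq:network_class_x_lipschitz_bound} shows that the residual $v(x,t_k)-D$ changes by at most $(1+\gamma_{x,R,\varepsilon})(\|y_0-y_0'\|+\|y_1-y_1'\|)$. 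Multiplying by the factor $8\sqrt d$ from the same squared-norm inequality yields $L_{Y,\mathrm{vel}}=8\sqrt d(1+\gamma_{x,R,\varepsilon})$. Taking the expectation over $K$ and $G$ preserves this constant for $H_\theta$.

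\emph{Metric entropy.} The entropy bound \eqref{eq:velocity_metric_entropy} is simply \eqref{eq:velocity_network_covering_number}, since $\mathrm d_v$ is exactly the $\|\cdot\|_{\infty,[-R,R]^d\times[0,1]}$ metric. Here I would note that Lemma~\ref{lem:covering_number_from_para} produces a proper cover, as Proposition~\ref{prop:shared_DR_field_bound} requires.
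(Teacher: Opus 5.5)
Your proposal is correct and follows the paper's proof step for step. The steps match as follows: the envelope comes from $\|v\|\le 2\sqrt d$ and $\|D\|\le 2\sqrt d$; the endpoint and parameter bounds come from $\bigl|\|a\|^2-\|b\|^2\bigr|\le(\|a\|+\|b\|)\|a-b\|$ with residuals of norm at most $4\sqrt d$; the fact that every $v\in\mathcal V_n$ depends on $x$ only through $\operatorname{clip}_R(x)$ lets $\mathrm d_v$ control the difference at Gaussian-perturbed points; and the entropy bound is read off from \eqref{eq:velocity_network_covering_number}. Your remark that the cover must be proper is the point the paper handles inside Lemma~\ref{lem:covering_number_from_para}.
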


\begin{proof}
Write
\[
S_t=(1-t)y_0+ty_1,\qquad D=y_1-y_0,
\]
and define \(S_t'\) and \(D'\) from \(y_0',y_1'\).  Bounded support and
\eqref{eq:network_class_output_bound} give
\[
\|v(S_t+\sigma g,t)-D\|
\leq
\|v(S_t+\sigma g,t)\|+\|D\|
\leq4\sqrt d,
\]
so the squared loss is at most \(16d\).

For two endpoint pairs,
\[
\|S_t-S_t'\|_\infty
\leq
\|y_0-y_0'\|+\|y_1-y_1'\|,
\qquad
\|D-D'\|
\leq
\|y_0-y_0'\|+\|y_1-y_1'\|.
\]
Using
\(\left|\|a\|^2-\|b\|^2\right|
\leq(\|a\|+\|b\|)\|a-b\|\) and
\eqref{eq:network_class_x_lipschitz_bound}, we obtain
\begin{align*}
&\left|
\Phi_v^{\mathrm{vel}}(y_0,y_1,k,g)
-\Phi_v^{\mathrm{vel}}(y_0',y_1',k,g)
\right|\\
&\leq
8\sqrt d
\left[
\gamma_{x,R,\varepsilon}\|S_{t_k}-S_{t_k}'\|_\infty
+\|D-D'\|
\right]\\
&\leq
L_{Y,\mathrm{vel}}
\left(
\|y_0-y_0'\|+\|y_1-y_1'\|
\right).
\end{align*}
Averaging over \(K,G\) proves the required endpoint bound for
\(H_v^{\mathrm{vel}}\).

For \(v,\widetilde v\in\mathcal V_n\), both residuals have norm at most
\(4\sqrt d\), and hence
\[
\left|
\Phi_v^{\mathrm{vel}}(y_0,y_1,k,g)
-\Phi_{\widetilde v}^{\mathrm{vel}}(y_0,y_1,k,g)
\right|
\leq
8\sqrt d\,
\|v(S_{t_k}+\sigma g,t_k)-\widetilde v(S_{t_k}+\sigma g,t_k)\|.
\]
Every network in \(\mathcal V_n\) depends on \(x\) through
\(\operatorname{clip}_R(x)\), so the last norm is bounded by
\(\mathrm d_v(v,\widetilde v)\).  This proves the parameter bound.
Finally, \eqref{eq:velocity_metric_entropy} is exactly
\eqref{eq:velocity_network_covering_number} under the metric
\(\mathrm d_v\).
\end{proof}

\begin{proof}[Proof of the velocity bound in Proposition~\ref{thm:final_velocity_L2_bound_DR}]
Choose
\[
\varepsilon=n^{-1/(d+5)},
\qquad
R=
1+\sigma
\left[
2\log\left(\frac{32d^2}{\varepsilon^2}\right)
\right]^{1/2}.
\]
This choice satisfies the tail condition in
Proposition~\ref{prop:relu_l2_approximation_noisy_oracle}.  By
\eqref{eq:velocity_exact_excess_risk_identity} and that proposition,
\begin{align}
\inf_{v\in\mathcal V_n}
\left\{
L_{a,Q}(v)-L_{a,Q}(v_Q^\star)
\right\}
&\leq2\varepsilon^2.
\label{eq:velocity_approximation_for_shared_bound}
\end{align}

For \(\Phi_v^{\mathrm{vel}}\),
\(L_{a,Q}^{\Phi}(v)=L_{a,Q}(v)\), and
\(\widehat L_{\mathrm{actual}}^\Phi(v)\) is the velocity training objective.
Lemma~\ref{lem:velocity_loss_constants} verifies every loss hypothesis of
Proposition~\ref{prop:shared_DR_field_bound}, with
\[
\Theta=\mathcal V_n,\quad
\mathrm d_\Theta=\mathrm d_v,\quad
B_\Phi=B_{\mathrm{vel}},\quad
L_Y=L_{Y,\mathrm{vel}},\quad
L_\Theta=L_{\Theta,\mathrm{vel}}.
\]

We now evaluate the quantities in that proposition.  Since
\(\Lambda=16d(2R+1)/\sigma^2\),
\begin{align}
\eta^{-1}
&=
\frac{\sqrt d\,\Lambda}{\varepsilon}
=
\frac{
16d^{3/2}
\left[
3+2\sigma
\sqrt{2\log(32d^2/\varepsilon^2)}
\right]
}{
\sigma^2\varepsilon
}.
\label{eq:velocity_inverse_tolerance_streamlined}
\end{align}
The architecture orders in
Proposition~\ref{prop:relu_l2_approximation_noisy_oracle}, substituted into
\(\mathfrak C_v\), imply for the three radii in
Proposition~\ref{prop:shared_DR_field_bound}
\begin{align}
\log N_{\mathrm{DR}}^{\mathrm{vel}}
+\log N_{\mathrm{obs}}^{\mathrm{vel}}
+\log N_{\mathrm{aux}}^{\mathrm{vel}}
&=
\widetilde O\left(\eta^{-(d+1)}\right).
\label{eq:velocity_shared_entropy_order}
\end{align}
Here \(\widetilde O\) hides logarithms in
\(n,M,\sigma^{-1},\varepsilon_{\mathrm{OT}}^{-1}\), and
\(\pi_{\min}^{-1}\), and constants depending on fixed \(d\).
Combining \eqref{eq:generic_statistical_remainder},
\eqref{eq:velocity_inverse_tolerance_streamlined}, and
\eqref{eq:velocity_shared_entropy_order}, with \(\delta=n^{-1}\), give
\begin{align}
\mathfrak R_{\mathrm{vel}}(n^{-1})
&\lesssim
\frac{1+2\pi_{\min}^{-1}}{\sigma^{d+1}}
n^{-2/(d+5)}.
\label{eq:velocity_shared_statistical_rate}
\end{align}
Indeed, before substituting \(\varepsilon\), the leading term is
\[
\widetilde O\left(
(1+2\pi_{\min}^{-1})
\sigma^{-(d+1)}
\varepsilon^{-(d+1)/2}n^{-1/2}
\right),
\]
and
\(\varepsilon^{-(d+1)/2}n^{-1/2}=n^{-2/(d+5)}\).

The endpoint constant has the explicit order
\begin{align}
L_{Y,\mathrm{vel}}
&=
8\sqrt d
\left[
1+
\frac{5(d+1)\sqrt d\,\Lambda}{R}
\right]
=O(\sigma^{-2}),
\label{eq:velocity_endpoint_constant_order}
\end{align}
because \(R\geq1\) and
\(\Lambda/R=16d(2+R^{-1})/\sigma^2\).
On the event in Corollary~\ref{Cor_Coupling_Replacement},
\[
\Delta_Q
\lesssim
\varepsilon_{\mathrm{OT}}^{-1}
\left[
n^{-1/(d+2)}
+\epspsiemp
+(nM_Q)^{-1/(d+2)}
\right].
\]
Substitute this display,
\eqref{eq:velocity_approximation_for_shared_bound},
\eqref{eq:velocity_shared_statistical_rate}, and
\eqref{eq:velocity_endpoint_constant_order} into
\eqref{eq:shared_DR_field_bound}.  Since \(M_Q\geq1\),
\((nM_Q)^{-1/(d+2)}\leq n^{-1/(d+2)}\), and since
\(0<\sigma\leq1\), the approximation term is absorbed by
\eqref{eq:velocity_shared_statistical_rate}.  Thus
\begin{align*}
L_{a,Q}(\widehat v_M)-L_{a,Q}(v_Q^\star)
&\lesssim
\frac{\varepsilon_{\mathrm{OT}}^{-1}}{\sigma^2}
\left[
n^{-1/(d+2)}+\epspsiemp
\right]\\
&\quad+
\frac{1+\varepsilon_{\mathrm{OT}}^{-1}}{\sigma^2}
\varepsilon_\alpha\varepsilon_{\mathrm{pred}}
+
\frac{1+2\pi_{\min}^{-1}}{\sigma^{d+1}}
n^{-2/(d+5)}.
\end{align*}
Applying \eqref{eq:velocity_exact_excess_risk_identity} with
\(v=\widehat v_M\) proves
\eqref{eq:final_velocity_L2_bound_DR_theorem}.  The coupling event has
probability at least \(1-2/n\), and the conditional ERM bound with
\(\delta=1/n\) has probability at least \(1-1/n\).  Finally,
\(\mathbb P(\mathcal E_a^c)\leq e^{-n\pi_{\min}}\), which gives the stated
failure probability.
\end{proof}

%% file: Score_Error.tex
\subsection{Score-Field Bound}
\label{Appendix_Score_Error}

Recall that \(S_t=Y_0+t(Y_1-Y_0)\) and \(X_t=S_t+\sigma G\). We define the
 oracle by
\(u_Q^\star(x,t)=\mathbb E[S_t\mid X_t=x]\). By the Gaussian score identity,
\begin{align*}
s_Q^\star(x,t) &= \frac{u_Q^\star(x,t)-x}{\sigma^2}.
\end{align*}
Instead of approximating \(s_Q^\star\) directly, we approximate
\(u_Q^\star\). For every
\(u:\mathbb R^d\times[0,1]\to\mathbb R^d\), define the induced score field by
\begin{align*}
s_u(x,t) &= \frac{u(x,t)-x}{\sigma^2}.
\end{align*}

\subsubsection[Score Loss and Exact Risk Identity]
{Score Loss and Exact Risk Identity}

We now specialize the generic doubly robust bound to the score estimator.
Here the abstract parameter \(\theta\) indexes a conditional-mean field
\(u_\theta\in\mathcal U_n\), where $\mathcal U_{n}$ is defined later in \eqref{eq_u_n}.
This in turn induces the score field
\[
s_{u_\theta}(x,t)=\frac{u_\theta(x,t)-x}{\sigma^2}.
\]
For brevity, we suppress the abstract index \(\theta\) and write \(u\) for
\(u_\theta\) and \(s_u\) for the induced score field.
For \(u\in\mathcal U_n\), define
\[
\Phi_u^{\mathrm{score}}(y_0,y_1,k,g)
=
\left\|
s_u\!\left(y_0+t_k(y_1-y_0)+\sigma g,t_k\right)
+\frac{g}{\sigma}
\right\|^2.
\]

Recall population minimizer of the denoising score-matching loss satisfies
\begin{align*}
s_Q^\star(x,t) = \nabla_x\log q_{t,Q}(x)
=
\mathbb E
\left[
-\frac{G}{\sigma}
\mid
Y_0+t(Y_1-Y_0)+\sigma G=x
\right],
\end{align*}
where \(Y_1\sim\int \mathbb P_{Y \mid C=c,A=a}d\mathbb P_C(c)\) and
\(Y_0\sim Q(\cdot\mid Y_1)\).

For a score field \(s:\mathbb R^d\times[0,1]\to\mathbb R^d\), let
\(K\sim\operatorname{Unif}\{0,\ldots,N-1\}\), independently of all endpoint
and Gaussian variables, we have 
\begin{align*}
r_Q(s;y_1)
&=
\mathbb E_{K,
Y_0\sim Q(\cdot\mid y_1),\,
G\sim {\mathcal N}(0,I_d)
}
\left[
\left\|
s\left(
Y_0+t_K(y_1-Y_0)+\sigma G,t_K
\right)
+
\frac{G}{\sigma}
\right\|^2
\right], \\
L^{\mathrm{score}}_{a,Q}(s)
&=
\mathbb E_{Y_1\sim\int \mathbb P_{Y \mid C=c, A=a} d \mathbb P_{C}}
\left[
r_Q(s;Y_1)
\right], \\
r_{\widehat Q_n}(s;y_1)
&=
\mathbb E_{K,
Y_0\sim \widehat Q_n(\cdot\mid y_1),\,
G\sim {\mathcal N}(0,I_d)
}
\left[
\left\|
s\left(
Y_0+t_K(y_1-Y_0)+\sigma G,t_K
\right)
+
\frac{G}{\sigma}
\right\|^2
\right],
\\
L^{\mathrm{score}}_{a,\widehat Q_n}(s)
&=
\mathbb E_{Y_1\sim\int \mathbb P_{Y \mid C=c, A=a} d \mathbb P_{C}}
\left[
r_{\widehat Q_n}(s;Y_1)
\right].
\end{align*}

\paragraph{Exact Excess-Risk Identity.}
Because
\(s_Q^\star(X_{t_K},t_K)
=\mathbb E[-G/\sigma\mid X_{t_K},K]\), conditional projection gives, for
every score field \(s\) with finite population loss,
\begin{align}
\label{eq:score_exact_excess_risk_identity}
L^{\mathrm{score}}_{a,Q}(s)-L^{\mathrm{score}}_{a,Q}(s_Q^\star)
&=
\mathbb E\!\left[
\left\|s(X_{t_K},t_K)-s_Q^\star(X_{t_K},t_K)\right\|^2
\right]\nonumber\\
&=
\frac1N\sum_{k=0}^{N-1}
\mathbb E\!\left[
\left\|s(X_{t_k},t_k)-s_Q^\star(X_{t_k},t_k)\right\|^2
\right].
\end{align}
Thus it remains to control the excess risk of the empirical minimizer.

\subsubsection[Score Network Class and Approximation]
{Score Network Class and Approximation}

The denoising score-matching loss is evaluated under the joint sampling
procedure in which the same Gaussian draw \(G\) is used both to construct
\(X_t=S_t+\sigma G\) and to form the denoising target \(-G/\sigma\).
Consequently,
\begin{align}
\label{eq:denoiser_score_loss_identity}
\left\|s_u(S_t+\sigma G,t)+\frac{G}{\sigma}\right\|^2
= \left\|\frac{u(S_t+\sigma G,t)-S_t-\sigma G}{\sigma^2} +\frac{G}{\sigma}\right\|^2 = \frac{1}{\sigma^4}\left\|u(S_t+\sigma G,t)-S_t\right\|^2.
\end{align}
This identity is not a statement about arbitrary independently chosen
\(x\in\mathbb R^d\) and \(g\in\mathbb R^d\). It applies to the jointly
constructed pair \(x=S_t+\sigma g\), with the same \(g\) appearing in the
denoising target. Consequently, the population score loss can be written as
\begin{align*}
L^{\mathrm{score}}_{a,Q}(s_u)
&= \frac{1}{\sigma^4} \mathbb E_{K}
\mathbb E\left[\left\|u(X_{t_K},t_K)-S_{t_K}\right\|^2\right].
\end{align*}
Since \(u_Q^\star(X_t,t)=\mathbb E[S_t\mid X_t]\), the conditional projection
identity gives
\begin{align}
\label{eq:denoiser_excess_risk_identity}
L^{\mathrm{score}}_{a,Q}(s_u)-L^{\mathrm{score}}_{a,Q}(s_Q^\star)
&= \frac{1}{\sigma^4} \mathbb E_{K}
\mathbb E\left[\left\|u(X_{t_K},t_K)-u_Q^\star(X_{t_K},t_K)\right\|^2\right].
\end{align}
Moreover,
\(s_u(X_t,t)-s_Q^\star(X_t,t)
=\{u(X_t,t)-u_Q^\star(X_t,t)\}/\sigma^2\), so
\begin{align}
\label{eq:denoiser_excess_risk_score_error}
L^{\mathrm{score}}_{a,Q}(s_u)-L^{\mathrm{score}}_{a,Q}(s_Q^\star)
&= \mathbb E_{K}
\mathbb E\left[\left\|s_u(X_{t_K},t_K)-s_Q^\star(X_{t_K},t_K)\right\|^2\right].
\end{align}

Under Assumption \ref{ass:clipping_bounded_support}, \(S_t\in[-1,1]^d\)
almost surely. Therefore,
\begin{align}
\label{eq:denoiser_oracle_output_bound}
u_Q^\star(x,t) &= \mathbb E[S_t\mid X_t=x]\in[-1,1]^d
\end{align}
for every \(x\in\mathbb R^d\) and \(t\in[0,1]\). Thus \(u_Q^\star\) is
globally bounded, although the induced score
\(s_Q^\star(x,t)=\{u_Q^\star(x,t)-x\}/\sigma^2\) is not globally bounded.

Recall that for \(R>1\), we defined
\begin{align*}
T_R(z) &= \left(-R+2Rz_1,\ldots,-R+2Rz_d,z_{d+1}\right),
\qquad z\in[0,1]^{d+1}.
\end{align*}

\begin{lemma}[Lipschitz Regularity of the Conditional Mean Oracle]
\label{lem:noisy_conditional_mean_lipschitz_on_cube}
Assume Assumption \ref{ass:clipping_bounded_support}. For \(i=1,\ldots,d\), define
\(g_i(z)=u_{Q,i}^\star(T_R(z))\). Then
\begin{align*}
|g_i(z)-g_i(z')| &\leq \Lambda_u\|z-z'\|_\infty,\qquad z,z'\in[0,1]^{d+1},
\end{align*}
where
\begin{align}
\label{eq:conditional_mean_rescaled_lipschitz_constant}
\Lambda_u &= 2+\frac{8d(2R+1)}{\sigma^2}.
\end{align}
\end{lemma}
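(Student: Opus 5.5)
The plan mirrors the proof of Lemma~\ref{lem:noisy_oracle_lipschitz_on_cube}. We bound the spatial and time derivatives of \(u_Q^\star\) on \([-R,R]^d\times[0,1]\) using posterior covariance identities, and then pass to the rescaled coordinates \(z\). All expectations are taken under the joint law induced by \(Q\). By Remark~\ref{remark_Q}, this law is supported on \([-1,1]^d\times[-1,1]^d\).

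\emph{Spatial derivatives.} We apply the posterior derivative identity from the proof of Lemma~\ref{Lemma_Derivative_Identity}. The cleanest route uses \(u_Q^\star(x,t)=x+\sigma^2 s_Q^\star(x,t)\) together with the formula for \(D_xs_t\). This gives
\[
\partial_{x_j}u^\star_{Q,i}(x,t)=\mathbf 1\{i=j\}+\bigl(\operatorname{Cov}(G_i,G_j\mid X_t=x)-\mathbf 1\{i=j\}\bigr)=\operatorname{Cov}(G_i,G_j\mid X_t=x).
\]
Conditionally on \(X_t=x\) we have \(G=(x-S_t)/\sigma\), so this equals \(\sigma^{-2}\operatorname{Cov}(S_{t,i},S_{t,j}\mid X_t=x)\). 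Since \(S_t\in[-1,1]^d\), each centered coordinate is bounded by \(2\), and the entry is at most \(4/\sigma^2\) in absolute value. Passing to \(z_j\) multiplies by the chain-rule factor \(2R\), which gives \(|\partial_{z_j}g_i|\le 8R/\sigma^2\).

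\emph{Time derivative.} We differentiate \(u^\star_{Q,i}(x,t)=\mathbb E[S_{t,i}\mid X_t=x]\) in \(t\) as in the \(\partial_tv_t\) computation. There are two contributions. The explicit dependence \(\partial_tS_{t,i}=D_i\) contributes \(\mathbb E[D_i\mid X_t=x]\), which is bounded by \(2\). The Gaussian weight contributes \(\sigma^{-2}\operatorname{Cov}(S_{t,i},D^\top(x-S_t)\mid X_t=x)\). For \(x\in[-R,R]^d\) we have \(|D^\top(x-S_t)|\le 2d(R+1)\), so its centered version is at most \(4d(R+1)\). The centered \(S_{t,i}\) is at most \(2\). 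The covariance term is therefore at most \(8d(R+1)/\sigma^2\), and hence \(|\partial_{z_{d+1}}g_i|\le 2+8d(R+1)/\sigma^2\).

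\emph{Combining.} The \(\ell_\infty\)-Lipschitz constant is at most the sum of the absolute partial derivatives. Summing the \(d\) spatial bounds \(8R/\sigma^2\) is too crude, since it yields \(8dR/\sigma^2\) rather than the claimed constant. Instead we bound the spatial part of the gradient row directly:
\[
\sum_j|\operatorname{Cov}(S_{t,i},S_{t,j}\mid X_t)|\le \mathbb E\Bigl[|S_{t,i}-\bar S_{t,i}|\textstyle\sum_j|S_{t,j}-\bar S_{t,j}|\Bigm| X_t\Bigr]\le 4d.
\]
This contributes \(2R\cdot 4d/\sigma^2=8dR/\sigma^2\). Adding the time bound gives
\[
2+\frac{8d(R+1)}{\sigma^2}+\frac{8dR}{\sigma^2}=2+\frac{8d(2R+1)}{\sigma^2}=\Lambda_u .
\]
This matches \eqref{eq:conditional_mean_rescaled_lipschitz_constant}. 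Finally, \(g_i\) is \(C^1\) on the convex set \([0,1]^{d+1}\), since differentiation under the bounded Gaussian-mixture integral is justified by bounded support. The mean value theorem then turns the gradient bound into the Lipschitz bound.

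The main obstacle is the bookkeeping of the time derivative. We must correctly separate the explicit \(\partial_tS_t=D\) term, which is the source of the additive \(2\), from the likelihood term, and we must check that the constant bookkeeping lands on \(8d(2R+1)/\sigma^2\) rather than a looser multiple.
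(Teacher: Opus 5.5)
Your proposal is correct and follows the paper's proof essentially step for step. It uses the same identity $D_xu_Q^\star=\operatorname{Cov}(G,G\mid X_t=x)=\sigma^{-2}\operatorname{Cov}(S_t,S_t\mid X_t=x)$, the same posterior time-differentiation bound $2+8d(R+1)/\sigma^2$ on $[-R,R]^d$, and the same summation of partial-derivative bounds. One small correction: summing the $d$ coordinate bounds $8R/\sigma^2$ is not too crude. It gives exactly $8dR/\sigma^2$, the same value as your row-sum refinement, and that plain sum is precisely what the paper uses.
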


\begin{proof}
By the score identity, $u_Q^\star(x,t) = x+\sigma^2s_Q^\star(x,t)$. By Lemma \ref{Lemma_Derivative_Identity},
\begin{align*}
D_xs_Q^\star(x,t) &= \frac{1}{\sigma^2}\left(\operatorname{Cov}(G,G\mid X_t=x)-I_d\right).
\end{align*}
Therefore,
\begin{align*}
D_xu_Q^\star(x,t) &= I_d+\sigma^2D_xs_Q^\star(x,t)=\operatorname{Cov}(G,G\mid X_t=x).
\end{align*}
Under the conditioning \(X_t=x\), we have \(G=(x-S_t)/\sigma\), and hence
\begin{align*}
G-\mathbb E[G\mid X_t=x] &= -\frac{S_t-\mathbb E[S_t\mid X_t=x]}{\sigma}.
\end{align*}
Consequently, for every \(i,j\in\{1,\ldots,d\}\),
\begin{align*}
\partial_{x_j}u_{Q,i}^\star(x,t)
&= \operatorname{Cov}(G_i,G_j\mid X_t=x) = \frac{1}{\sigma^2}\operatorname{Cov}(S_{t,i},S_{t,j}\mid X_t=x).
\end{align*}
Since \(S_t\in[-1,1]^d\),
\begin{align*}
\left|S_{t,i}-\mathbb E[S_{t,i}\mid X_t=x]\right| &\leq 2,\qquad
\left|S_{t,j}-\mathbb E[S_{t,j}\mid X_t=x]\right|\leq 2.
\end{align*}
It follows that
\begin{align}
\label{eq:conditional_mean_spatial_derivative_bound}
\left|\partial_{x_j}u_{Q,i}^\star(x,t)\right|
&\leq \frac{1}{\sigma^2}\mathbb E\left[
\left|S_{t,i}-\mathbb E[S_{t,i}\mid X_t=x]\right|
\left|S_{t,j}-\mathbb E[S_{t,j}\mid X_t=x]\right|
\mid X_t=x\right]
\leq \frac{4}{\sigma^2}.
\end{align}

We next control the time derivative.  Write \(D=Y_1-Y_0\) and, for fixed
\(x\), define the unnormalized posterior weight
\[
w_t(Y_0,Y_1;x)
=
\exp\!\left(-\frac{\|x-S_t\|^2}{2\sigma^2}\right).
\]
Bounded support permits differentiation under the endpoint expectation, and
\[
\partial_t w_t(Y_0,Y_1;x)
=
\frac{D^\top(x-S_t)}{\sigma^2}
w_t(Y_0,Y_1;x).
\]
Therefore, for any differentiable scalar \(F_t=F_t(Y_0,Y_1)\), the quotient
rule applied to
\[
\mathbb E[F_t\mid X_t=x]
=
\frac{\mathbb E[F_t w_t(Y_0,Y_1;x)]}
{\mathbb E[w_t(Y_0,Y_1;x)]}
\]
gives the posterior differentiation identity
\begin{align*}
\partial_t\mathbb E[F_t\mid X_t=x]
&=
\mathbb E[\partial_tF_t\mid X_t=x]
+
\frac{1}{\sigma^2}
\operatorname{Cov}\!\left(
F_t,D^\top(x-S_t)\mid X_t=x
\right).
\end{align*}
Taking \(F_t=S_{t,i}\), for which \(\partial_tF_t=D_i\), yields
\begin{align*}
\partial_tu_{Q,i}^\star(x,t)
&= \mathbb E[Y_{1,i}-Y_{0,i}\mid X_t=x] +\frac{1}{\sigma^2}\operatorname{Cov}\left(
S_{t,i},(Y_1-Y_0)^\top(x-S_t)\mid X_t=x
\right).
\end{align*}
The first term satisfies
\begin{align*}
\left|\mathbb E[Y_{1,i}-Y_{0,i}\mid X_t=x]\right| &\leq 2.
\end{align*}
If \(x\in[-R,R]^d\), then
\begin{align*}
\left|(Y_1-Y_0)^\top(x-S_t)\right| \leq \sum_{j=1}^d|Y_{1,j}-Y_{0,j}||x_j-S_{t,j}| \leq 2d(R+1).
\end{align*}
Therefore,
\begin{align*}
&\left|(Y_1-Y_0)^\top(x-S_t)
-\mathbb E\left[(Y_1-Y_0)^\top(x-S_t)\mid X_t=x\right]\right| \\
&\leq \left|(Y_1-Y_0)^\top(x-S_t)\right|
+\mathbb E\left[\left|(Y_1-Y_0)^\top(x-S_t)\right|\mid X_t=x\right] \\
&\leq 4d(R+1).
\end{align*}
Together with
\(\left|S_{t,i}-\mathbb E[S_{t,i}\mid X_t=x]\right|\leq2\), this gives
\begin{align}
\label{eq:conditional_mean_time_derivative_bound}
\left|\partial_tu_{Q,i}^\star(x,t)\right|
&\leq 2+\frac{1}{\sigma^2}\mathbb E\left[
\left|S_{t,i}-\mathbb E[S_{t,i}\mid X_t=x]\right|
\right. \nonumber\\
&\hspace{3.4cm}\left.
\cdot\left|(Y_1-Y_0)^\top(x-S_t)
-\mathbb E\left[(Y_1-Y_0)^\top(x-S_t)\mid X_t=x\right]\right|
\mid X_t=x\right] \nonumber\\
&\leq 2+\frac{8d(R+1)}{\sigma^2}.
\end{align}

For \(j=1,\ldots,d\), the chain rule and
\eqref{eq:conditional_mean_spatial_derivative_bound} give $|\partial_{z_j}g_i(z)| = 2R\left|\partial_{x_j}u_{Q,i}^\star(T_R(z))\right| \leq \frac{8R}{\sigma^2}$, so we get 
\begin{align*}
|g_i(z)-g_i(z')|
&\leq \left[\sum_{j=1}^{d+1}\sup_{\widetilde z\in[0,1]^{d+1}}
|\partial_{z_j}g_i(\widetilde z)|\right]\|z-z'\|_\infty \leq \left[\frac{8Rd}{\sigma^2}+2+\frac{8d(R+1)}{\sigma^2}\right]\|z-z'\|_\infty \\
&= \left[2+\frac{8d(2R+1)}{\sigma^2}\right]\|z-z'\|_\infty
=\Lambda_u\|z-z'\|_\infty.
\end{align*}
\end{proof}

\paragraph{Network Construction and Approximation Guarantee.}

Let \(\varepsilon>0\), and define
\begin{align}
\label{eq:denoiser_eta_definition}
\eta_u &= \frac{\sigma^2\varepsilon}{\sqrt d\,\Lambda_u}.
\end{align}

\begin{proposition}[Approximation Error for the Score Oracle Under the Conditional Mean Parametrization]
\label{prop:relu_l2_approximation_noisy_score_oracle}
Assume Assumption \ref{ass:clipping_bounded_support}. Let \(R>1\),
\(q=d+1\), and \(\varepsilon>0\) satisfy
\(0<\eta_u=\sigma^2\varepsilon/(\sqrt d\,\Lambda_u)<1\) and
\begin{align}
\label{eq:denoiser_approximation_tail_condition}
\frac{8d^2}{\sigma^4}
\exp\left(-\frac{(R-1)^2}{2\sigma^2}\right)
&\leq \varepsilon^2.
\end{align}
Use the same architecture bounds
\(K_{\mathrm{Lip}}(q,\eta_u)\), \(L_{\mathrm{Lip}}(q,\eta_u)\),
\(p_{\mathrm{Lip}}(q,\eta_u)\), \(J_{\mathrm{Lip}}(q,\eta_u)\), and
\(\kappa_{\mathrm{Lip}}(q,\eta_u)\) as in
Proposition~\ref{prop:relu_l2_approximation_noisy_oracle}, now evaluated at
\(\eta_u\). In particular, \(L_{\mathrm{Lip}}\) counts total affine layers,
\(p_{\mathrm{Lip}}\geq q\), and \(\kappa_{\mathrm{Lip}}\geq2\).
For every \(i=1,\ldots,d\), construct
\begin{align*}
u_i^{\mathrm{app}}(x,t)
&= \operatorname{clip}_1
\left(\Lambda_u\widehat h_i(S_R(x,t))+g_i(0)\right),
\end{align*}
where
\begin{align*}
\widehat h_i
&\in\mathfrak N_{\mathrm{scal}}
\left(
q,L_{\mathrm{Lip}}(q,\eta_u),
p_{\mathrm{Lip}}(q,\eta_u),
J_{\mathrm{Lip}}(q,\eta_u),
K_{\mathrm{Lip}}(q,\eta_u),
\kappa_{\mathrm{Lip}}(q,\eta_u),10q
\right),
\end{align*}
and \(g_i(z)=u_{Q,i}^\star(T_R(z))\). Define
\(u^{\mathrm{app}}=(u_1^{\mathrm{app}},\ldots,u_d^{\mathrm{app}})\) and
\(s^{\mathrm{app}}(x,t)
=\{u^{\mathrm{app}}(x,t)-x\}/\sigma^2\). Then
\begin{align*}
\frac{1}{N}\sum_{k=0}^{N-1}
\mathbb E\left[
\left\|s^{\mathrm{app}}(X_{t_k},t_k)-s_Q^\star(X_{t_k},t_k)\right\|^2
\right]
&\leq 2\varepsilon^2.
\end{align*}
\end{proposition}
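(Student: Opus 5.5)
The plan mirrors the proof of Proposition~\ref{prop:relu_l2_approximation_noisy_oracle}, applied to the bounded denoiser $u_Q^\star$ instead of $v_Q^\star$. The identity $s^{\mathrm{app}}-s_Q^\star=(u^{\mathrm{app}}-u_Q^\star)/\sigma^2$ reduces the claim to showing
\[
\mathbb E\|u^{\mathrm{app}}(X_{t},t)-u_Q^\star(X_{t},t)\|^2\leq 2\sigma^4\varepsilon^2
\]
at every grid time. Averaging over $k$ then gives the stated bound.

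\emph{Step 1: uniform approximation on the cube.} By Lemma~\ref{lem:noisy_conditional_mean_lipschitz_on_cube}, each $g_i=u^\star_{Q,i}\circ T_R$ is $\Lambda_u$-Lipschitz in $\|\cdot\|_\infty$ on $[0,1]^{d+1}$. Hence $f_i=(g_i-g_i(0))/\Lambda_u$ vanishes at $0$ and is $1$-Lipschitz, also in the Euclidean norm. Since $\eta_u<1$, \cite[Lemma 10]{chen2020distribution} supplies $\widehat h_i$ in the stated scalar class with $\sup_z|\widehat h_i-f_i|\le\eta_u$. For $x\in[-R,R]^d$ we have $T_R(S_R(x,t))=(x,t)$, so
\[
|\Lambda_u\widehat h_i(S_R(x,t))+g_i(0)-u^\star_{Q,i}(x,t)|\le\Lambda_u\eta_u=\sigma^2\varepsilon/\sqrt d .
\]
By \eqref{eq:denoiser_oracle_output_bound}, $u^\star_{Q,i}\in[-1,1]$. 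Since $\operatorname{clip}_1$ is the projection onto $[-1,1]$, clipping cannot increase this error. Summing over the $d$ coordinates gives $\|u^{\mathrm{app}}-u_Q^\star\|^2\le\sigma^4\varepsilon^2$ on $[-R,R]^d\times[0,1]$.

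\emph{Step 2: the Gaussian tail.} Because $S_t\in[-1,1]^d$, a coordinatewise Gaussian tail bound with a union bound gives
\[
\mathbb P(X_t\notin[-R,R]^d)\le 2d\exp\!\left(-\frac{(R-1)^2}{2\sigma^2}\right).
\]
On that event both $u^{\mathrm{app}}$ and $u_Q^\star$ lie in $[-1,1]^d$, so $\|u^{\mathrm{app}}-u_Q^\star\|^2\le 4d$. The tail contribution is therefore at most $8d^2e^{-(R-1)^2/(2\sigma^2)}$. Condition \eqref{eq:denoiser_approximation_tail_condition} bounds this by $\sigma^4\varepsilon^2$.

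\emph{Step 3: conclusion.} Adding the two regions gives $\mathbb E\|u^{\mathrm{app}}-u^\star_Q\|^2\le2\sigma^4\varepsilon^2$. Dividing by $\sigma^4$ and averaging over $k$ finishes the proof.

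The only delicate point is Step 2. The score $s_Q^\star$ itself is unbounded, so the tail event cannot be handled directly in score form. Bounding the tail in the denoiser parametrization, where both fields are bounded, avoids this; this is why the tail condition carries the factor $\sigma^{-4}$.
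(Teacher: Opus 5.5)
Your proposal is correct and follows the paper's proof essentially step for step. Both arguments approximate the $\Lambda_u$-Lipschitz rescaled denoiser uniformly on the cube via the scalar ReLU lemma, with clipping acting as a projection, and then bound the Gaussian tail in the bounded denoiser parametrization. The only difference is presentational: the paper states the cube bound directly for the score and converts the tail via $\|s^{\mathrm{app}}-s_Q^\star\|^2=\|u^{\mathrm{app}}-u_Q^\star\|^2/\sigma^4$, whereas you work entirely with $u$ and divide by $\sigma^4$ at the end.
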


\begin{proof}
The proof has two subclaims: uniform approximation on the cube
\([-R,R]^d\times[0,1]\), followed by control of the Gaussian tail outside the
cube.

For the first subclaim, define
\[
f_i(z)=\frac{g_i(z)-g_i(0)}{\Lambda_u},
\qquad z\in[0,1]^{d+1}.
\]
Lemma~\ref{lem:noisy_conditional_mean_lipschitz_on_cube} implies that
\(f_i(0)=0\) and that \(f_i\) is \(1\)-Lipschitz in the
\(\ell_\infty\) norm. Since
\(\|z-z'\|_\infty\leq\|z-z'\|_2\), it is also \(1\)-Lipschitz in the
Euclidean norm required by the scalar approximation result invoked in
Proposition~\ref{prop:relu_l2_approximation_noisy_oracle}. That result provides
\(\widehat h_i\) in the displayed network class such that
\[
\sup_{z\in[0,1]^{d+1}}|\widehat h_i(z)-f_i(z)|\leq\eta_u.
\]
If \(x\in[-R,R]^d\), then \(T_R(S_R(x,t))=(x,t)\), and hence
\[
\left|
\Lambda_u\widehat h_i(S_R(x,t))+g_i(0)-u_{Q,i}^\star(x,t)
\right|
\leq\Lambda_u\eta_u
=\frac{\sigma^2\varepsilon}{\sqrt d}.
\]
Both \(u_{Q,i}^\star(x,t)\) and the clipped approximation lie in
\([-1,1]\), so projection onto this interval cannot increase the error.
Consequently,
\[
\sup_{x\in[-R,R]^d,\ t\in[0,1]}
\|s^{\mathrm{app}}(x,t)-s_Q^\star(x,t)\|^2
\leq\varepsilon^2.
\]

For the second subclaim, \(S_t\in[-1,1]^d\) almost surely, and a coordinatewise
Gaussian tail bound followed by a union bound gives, uniformly in \(t\),
\[
\mathbb P\{X_t\notin[-R,R]^d\}
\leq
2d\exp\!\left(-\frac{(R-1)^2}{2\sigma^2}\right).
\]
On this event, the two conditional mean remain in \([-1,1]^d\), so
\[
\|s^{\mathrm{app}}(X_t,t)-s_Q^\star(X_t,t)\|^2
=
\frac{\|u^{\mathrm{app}}(X_t,t)-u_Q^\star(X_t,t)\|^2}{\sigma^4}
\leq\frac{4d}{\sigma^4}.
\]
The tail contribution is therefore at most
\[
\frac{8d^2}{\sigma^4}
\exp\!\left(-\frac{(R-1)^2}{2\sigma^2}\right)
\leq\varepsilon^2
\]
by \eqref{eq:denoiser_approximation_tail_condition}.  Adding the cube and
tail contributions proves the claimed \(2\varepsilon^2\) bound at every grid
time, and averaging over \(k=0,\ldots,N-1\) completes the proof.
\end{proof}

The preceding proposition motivates 
\begin{align}\label{eq_u_n}
\mathcal U_n
&= \Bigg\{
u:\mathbb R^d\times[0,1]\to\mathbb R^d:
u(x,t)=\left(u_1(x,t),\ldots,u_d(x,t)\right), \\
&\qquad
u_i(x,t)=\operatorname{clip}_1
\left(\Lambda_u h_i(S_R(x,t))+b_i\right),
\qquad i=1,\ldots,d, \nonumber \\
&\qquad
h_i\in\mathfrak N_{\mathrm{scal}}
\left(
q,L_{\mathrm{Lip}}(q,\eta_u),
p_{\mathrm{Lip}}(q,\eta_u),
J_{\mathrm{Lip}}(q,\eta_u),
K_{\mathrm{Lip}}(q,\eta_u),
\kappa_{\mathrm{Lip}}(q,\eta_u),10q
\right), \nonumber \\
&\qquad
|b_i|\leq 1,\qquad i=1,\ldots,d
\Bigg\}. \nonumber
\end{align}
The induced score class is
\begin{align}
\label{eq:induced_score_class}
\mathcal S_n
&= \left\{
s_u:\mathbb R^d\times[0,1]\to\mathbb R^d:
s_u(x,t)=\frac{u(x,t)-x}{\sigma^2},
\quad u\in\mathcal U_n
\right\}.
\end{align}
Proposition \ref{prop:relu_l2_approximation_noisy_score_oracle} implies
\begin{align*}
\inf_{s\in\mathcal S_n}
\frac{1}{N}\sum_{k=0}^{N-1}
\mathbb E\left[
\left\|s(X_{t_k},t_k)-s_Q^\star(X_{t_k},t_k)\right\|^2
\right]
\leq 2\varepsilon^2.
\end{align*}

\paragraph{Covering Number Evaluation.}

The conditional-mean class is an instance of
Lemma~\ref{lem:covering_number_from_para} with clipping level
\(\omega=1\), scale \(\Lambda_u\), and scalar-network accuracy
\(\eta_u=\sigma^2\varepsilon/(\sqrt d\,\Lambda_u)\).  Its scalar architecture
orders are
\begin{equation}
\label{eq:score_network_architecture_orders}
\begin{aligned}
L_{\mathrm{Lip}}(q,\eta_u)
&=
O\!\left(
\log\frac{\sqrt d\,\Lambda_u}{\sigma^2\varepsilon}+d
\right),\\
p_{\mathrm{Lip}}(q,\eta_u)
&=
O\!\left(
d+1+
\left(
\frac{\sqrt d\,\Lambda_u}{\sigma^2\varepsilon}
\right)^{d+1}
\right),\\
J_{\mathrm{Lip}}(q,\eta_u)
&=
O\!\left(
\left(
\frac{\sqrt d\,\Lambda_u}{\sigma^2\varepsilon}
\right)^{d+1}
\left[
\log\frac{\sqrt d\,\Lambda_u}{\sigma^2\varepsilon}+d
\right]
\right).
\end{aligned}
\end{equation}
For \(r>0\), define
\begin{align}
\mathfrak C_u(r)
&:=
dJ_{\mathrm{Lip}}(q,\eta_u)
\log\!\left(
1+
\frac{
8L_{\mathrm{Lip}}(q,\eta_u)^2
(p_{\mathrm{Lip}}(q,\eta_u)+2)
\kappa_{\mathrm{Lip}}(q,\eta_u)^{L_{\mathrm{Lip}}(q,\eta_u)}
p_{\mathrm{Lip}}(q,\eta_u)^{L_{\mathrm{Lip}}(q,\eta_u)+1}
\Lambda_u\sqrt d
}{
r
}
\right)\nonumber\\
&\quad+
d\log\!\left(1+\frac{4\sqrt d}{r}\right).
\label{eq:score_cover_complexity}
\end{align}
Therefore, for every \(\delta_{\mathrm{cov}}>0\),
\begin{align}
\label{eq:denoiser_network_covering_number}
\log\mathcal N\left(
\delta_{\mathrm{cov}},\mathcal U_n,
\|\cdot\|_{\infty,[-R,R]^d\times[0,1]}
\right) \leq
\mathfrak C_u(\delta_{\mathrm{cov}}).
\end{align}

For \(u,\widetilde u\in\mathcal U_n\),
\begin{align*}
s_u(x,t)-s_{\widetilde u}(x,t)
&= \frac{u(x,t)-\widetilde u(x,t)}{\sigma^2}.
\end{align*}
Therefore,
\begin{align*}
&\sup_{x\in[-R,R]^d,\;t\in[0,1]}
\|s_u(x,t)-s_{\widetilde u}(x,t)\| = \frac{1}{\sigma^2}
\sup_{x\in[-R,R]^d,\;t\in[0,1]}
\|u(x,t)-\widetilde u(x,t)\|.
\end{align*}
It follows that
\begin{align*}
&\mathcal N\left(
\delta_{\mathrm{cov}},\mathcal S_n,
\|\cdot\|_{\infty,[-R,R]^d\times[0,1]}
\right) \leq
\mathcal N\left(
\sigma^2\delta_{\mathrm{cov}},\mathcal U_n,
\|\cdot\|_{\infty,[-R,R]^d\times[0,1]}
\right).
\end{align*}
Hence
\begin{align}
\log\mathcal N\left(
\delta_{\mathrm{cov}},\mathcal S_n,
\|\cdot\|_{\infty,[-R,R]^d\times[0,1]}
\right)\leq
\mathfrak C_u(\sigma^2\delta_{\mathrm{cov}}).
\label{eq:induced_score_network_covering_number}
\end{align}
The fixed affine term \(-x/\sigma^2\) does not increase the covering number,
because it is identical for every element of \(\mathcal S_n\) and cancels
when two induced score fields are subtracted.

\paragraph{Uniform Bounds Used in the Error Analysis.}

\begin{lemma}[Uniform Bounds for the Conditional-Mean Class and Induced Score Field]
\label{lem:score_network_class_uniform_bounds}
Every \(u\in\mathcal U_n\) satisfies
\begin{align}
\label{eq:denoiser_network_class_output_bound}
\sup_{x\in\mathbb R^d,\;t\in[0,1]}\|u(x,t)\|
&\leq \sqrt d, \\
\label{eq:denoiser_network_class_x_lipschitz_bound}
\|u(x,t)-u(x',t)\|
&\leq \gamma_{x,R,\varepsilon}^{u}\|x-x'\|_\infty, \\
\label{eq:denoiser_network_class_t_lipschitz_bound}
\|u(x,t)-u(x,r)\|
&\leq \gamma_{t,R,\varepsilon}^{u}|t-r|,
\end{align}
where
\begin{align*}
\gamma_{x,R,\varepsilon}^{u}
&= \frac{5(d+1)\sqrt d\,\Lambda_u}{R},\qquad
\gamma_{t,R,\varepsilon}^{u}
=10(d+1)\sqrt d\,\Lambda_u.
\end{align*}
The induced score field satisfies
\begin{align}
\label{eq:induced_score_class_x_lipschitz_bound}
\|s_u(x,t)-s_u(x',t)\|
&\leq
\frac{\gamma_{x,R,\varepsilon}^{u}+\sqrt d}{\sigma^2}
\|x-x'\|_\infty, \\
\label{eq:induced_score_class_t_lipschitz_bound}
\|s_u(x,t)-s_u(x,r)\|
&\leq
\frac{\gamma_{t,R,\varepsilon}^{u}}{\sigma^2}|t-r|.
\end{align}
No finite uniform bound on
\(\sup_{x\in\mathbb R^d,\;t\in[0,1]}\|s_u(x,t)\|\) is asserted.
\end{lemma}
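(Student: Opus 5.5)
The statement to prove is Lemma~\ref{lem:score_network_class_uniform_bounds}. I would prove it by repeating the argument of Lemma~\ref{lem:vector_network_class_uniform_bounds}, now with clipping level $1$ and scale $\Lambda_u$, and then passing to $s_u$ by the affine relation $s_u=(u-x)/\sigma^2$.

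\emph{Output bound.} Each coordinate is $u_i=\operatorname{clip}_1(\cdot)\in[-1,1]$. Hence $\|u(x,t)\|\leq\sqrt d$ for every $(x,t)$, which is \eqref{eq:denoiser_network_class_output_bound}.

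\emph{Lipschitz bounds for $u$.} Fix $t$. Since $\operatorname{clip}_1$ is $1$-Lipschitz and each $h_i$ is $10q$-Lipschitz in $\|\cdot\|_\infty$ with $q=d+1$, we get
\[
|u_i(x,t)-u_i(x',t)|\leq\Lambda_u\,10(d+1)\,\|S_R(x,t)-S_R(x',t)\|_\infty .
\]
Because $\operatorname{clip}_R$ is $1$-Lipschitz coordinatewise and is then divided by $2R$, $\|S_R(x,t)-S_R(x',t)\|_\infty\leq\|x-x'\|_\infty/(2R)$. This gives $|u_i(x,t)-u_i(x',t)|\leq 5(d+1)\Lambda_u\|x-x'\|_\infty/R$. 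Passing from coordinates to the Euclidean norm costs a factor $\sqrt d$ (via $\|w\|\leq\sqrt d\,\|w\|_\infty$), which yields \eqref{eq:denoiser_network_class_x_lipschitz_bound}. For time, the last coordinate of $S_R$ is $t$ itself, so $\|S_R(x,t)-S_R(x,r)\|_\infty=|t-r|$. The same chain of inequalities then gives $10(d+1)\sqrt d\,\Lambda_u|t-r|$, which is \eqref{eq:denoiser_network_class_t_lipschitz_bound}.

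\emph{Induced score.} Write
\[
s_u(x,t)-s_u(x',t)=\frac{u(x,t)-u(x',t)-(x-x')}{\sigma^2}.
\]
By the triangle inequality and $\|x-x'\|\leq\sqrt d\,\|x-x'\|_\infty$, this gives \eqref{eq:induced_score_class_x_lipschitz_bound}. In time the affine term $-x/\sigma^2$ does not depend on $t$ and cancels, so \eqref{eq:induced_score_class_t_lipschitz_bound} follows directly from the $u$-bound.

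\emph{Why no sup bound for $s_u$.} Because $u$ is bounded, $\|s_u(x,t)\|\geq(\|x\|-\sqrt d)/\sigma^2\to\infty$ as $\|x\|\to\infty$. So no finite uniform bound can hold.

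No step is a real obstacle. The only point needing care is the norm conversion in the spatial bound: I use $\|x-x'\|\leq\sqrt d\,\|x-x'\|_\infty$ so that the affine term contributes exactly $\sqrt d$.
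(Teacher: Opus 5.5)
Your proof is correct and matches the paper's argument step for step. You use coordinatewise clipping for the output bound, then compose the $10(d+1)$-Lipschitz scalar networks with the $1/(2R)$-Lipschitz spatial rescaling (and the $1$-Lipschitz time coordinate), pick up a factor $\sqrt d$ when passing to the Euclidean norm, and obtain the score bounds from the affine identity $s_u=(u-x)/\sigma^2$. Your explicit bound $\|s_u(x,t)\|\geq(\|x\|-\sqrt d)/\sigma^2$ is a small, valid strengthening of the paper's remark that the affine term rules out a uniform bound.
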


\begin{proof}
The proof has two parts.  First, clipping each coordinate to \([-1,1]\)
immediately gives \(\|u(x,t)\|\leq\sqrt d\).

For regularity, each scalar subnetwork \(h_i\) is
\(10q=10(d+1)\)-Lipschitz in the \(\ell_\infty\) norm.  The spatial part of
\(S_R\) is \(1/(2R)\)-Lipschitz, while its time coordinate is
\(1\)-Lipschitz.  Since the outer clipping map is \(1\)-Lipschitz,
\begin{align*}
|u_i(x,t)-u_i(x',t)|
&\leq
\frac{5(d+1)\Lambda_u}{R}\|x-x'\|_\infty,\\
|u_i(x,t)-u_i(x,r)|
&\leq
10(d+1)\Lambda_u|t-r|.
\end{align*}
Summing the squared coordinate bounds proves
\eqref{eq:denoiser_network_class_x_lipschitz_bound} and
\eqref{eq:denoiser_network_class_t_lipschitz_bound}.

Finally,
\[
s_u(x,t)-s_u(x',t)
=
\frac{u(x,t)-u(x',t)-(x-x')}{\sigma^2}.
\]
Using \(\|x-x'\|\leq\sqrt d\,\|x-x'\|_\infty\) gives
\eqref{eq:induced_score_class_x_lipschitz_bound}; the time bound follows
because the affine term \(-x/\sigma^2\) is independent of time.  The score
itself is not uniformly bounded because of this affine term, which is why the
subsequent loss analysis uses its exact cancellation with the denoising
target.
\end{proof}

\begin{lemma}[Uniform Bounds and Lipschitz Properties of the Denoising Score Loss]
\label{lemma_uniform_bound_score_loss}
For every \(u\in\mathcal U_n\), \(y_t\in[-1,1]^d\), \(t\in[0,1]\), and \(g\in\mathbb R^d\),
\begin{align}
\label{eq:denoiser_pointwise_loss_envelope}
0\leq \left\|s_u(y_t+\sigma g,t)+\frac{g}{\sigma}\right\|^2 &\leq \frac{4d}{\sigma^4}.
\end{align}
For every \(y_t,y_t'\in[-1,1]^d\), using the same \(g\) in both evaluation points,
\begin{align}
\label{eq:denoiser_endpoint_pointwise_difference}
\left|\left\|s_u(y_t+\sigma g,t)+\frac{g}{\sigma}\right\|^2-\left\|s_u(y_t'+\sigma g,t)+\frac{g}{\sigma}\right\|^2\right|
&\leq \frac{4d}{\sigma^4}\left(1+\frac{5(d+1)\Lambda_u}{R}\right)\|y_t-y_t'\|_\infty.
\end{align}
Finally, for every \(u,\widetilde u\in\mathcal U_n\),
\begin{align}
\label{eq:denoiser_network_pointwise_difference}
\left|\left\|s_u(y_t+\sigma g,t)+\frac{g}{\sigma}\right\|^2-\left\|s_{\widetilde u}(y_t+\sigma g,t)+\frac{g}{\sigma}\right\|^2\right| &\leq \frac{4\sqrt d}{\sigma^4}\left\|u(y_t+\sigma g,t)-\widetilde u(y_t+\sigma g,t)\right\|.
\end{align}
\end{lemma}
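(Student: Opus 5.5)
The whole proof rests on the exact cancellation \eqref{eq:denoiser_score_loss_identity}. Because \(x=y_t+\sigma g\) is built from the same \(g\) that appears in the target, we have
\[
s_u(y_t+\sigma g,t)+\frac{g}{\sigma}=\frac{u(y_t+\sigma g,t)-y_t}{\sigma^2}.
\]
Each of the three claims then reduces to a statement about the bounded conditional-mean network \(u\), whose coordinates lie in \([-1,1]\), together with the bounded interpolation point \(y_t\in[-1,1]^d\). I first write the pointwise loss as \(\sigma^{-4}\|u(y_t+\sigma g,t)-y_t\|^2\) and work only with this form from then on.

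For the envelope \eqref{eq:denoiser_pointwise_loss_envelope}, both \(u(\cdot)\) and \(y_t\) lie in \([-1,1]^d\). Each coordinate of the difference is therefore at most \(2\) in absolute value, so \(\|u-y_t\|^2\le 4d\), and dividing by \(\sigma^4\) gives the claim. Nonnegativity is trivial.

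For the endpoint bound \eqref{eq:denoiser_endpoint_pointwise_difference}, set \(a=u(y_t+\sigma g,t)-y_t\) and \(b=u(y_t'+\sigma g,t)-y_t'\).
\begin{enumerate}
\item I use \(|\|a\|^2-\|b\|^2|\le(\|a\|+\|b\|)\|a-b\|\), with \(\|a\|,\|b\|\le 2\sqrt d\), so the factor \(\|a\|+\|b\|\) is at most \(4\sqrt d\).
\item The two evaluation points differ by exactly \(y_t-y_t'\). The spatial Lipschitz bound \eqref{eq:denoiser_network_class_x_lipschitz_bound} then gives \(\|a-b\|\le \gamma^u_{x,R,\varepsilon}\|y_t-y_t'\|_\infty+\|y_t-y_t'\|\).
\item I convert the Euclidean term via \(\|y_t-y_t'\|\le\sqrt d\,\|y_t-y_t'\|_\infty\).
\end{enumerate}

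This produces \(\sigma^{-4}\,4\sqrt d\,(\gamma^u_{x,R,\varepsilon}+\sqrt d)\|y_t-y_t'\|_\infty\). Since \(\gamma^u_{x,R,\varepsilon}=5(d+1)\sqrt d\,\Lambda_u/R\), factoring out \(\sqrt d\) gives exactly \(\frac{4d}{\sigma^4}\bigl(1+\frac{5(d+1)\Lambda_u}{R}\bigr)\). The only care needed is to keep the \(\ell_\infty\) versus Euclidean conversions consistent so that the constant matches. This bookkeeping is the main, though mild, obstacle. If one wants the sharper coordinatewise Lipschitz constant, one can instead bound each coordinate directly through \(\|\cdot\|\le\sqrt d\|\cdot\|_\infty\), but the stated constant already follows from the route above.

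For the parameter bound \eqref{eq:denoiser_network_pointwise_difference}, I use the same identity with \(a=u(x,t)-y_t\) and \(\tilde a=\widetilde u(x,t)-y_t\) at the common point \(x=y_t+\sigma g\). Then \(a-\tilde a=u(x,t)-\widetilde u(x,t)\), and \(\|a\|+\|\tilde a\|\le 4\sqrt d\). This gives the factor \(4\sqrt d/\sigma^4\) times \(\|u-\widetilde u\|\) at that point, with no dependence on \(g\) beyond the evaluation point. This is why the later parameter-Lipschitz constant can be taken uniform even though \(s_u\) itself is unbounded.
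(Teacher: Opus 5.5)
Your proposal is correct and follows the paper's proof essentially step for step. Both use the cancellation identity to rewrite the loss as \(\sigma^{-4}\|u(y_t+\sigma g,t)-y_t\|^2\), the estimate \(\bigl|\|a\|^2-\|b\|^2\bigr|\leq(\|a\|+\|b\|)\|a-b\|\) with the \(4\sqrt d\) factor, and the spatial Lipschitz bound together with \(\|y_t-y_t'\|\leq\sqrt d\,\|y_t-y_t'\|_\infty\) to obtain exactly the stated constant \(\frac{4d}{\sigma^4}\bigl(1+\frac{5(d+1)\Lambda_u}{R}\bigr)\).
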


\begin{proof}
By \(s_u(x,t)=\{u(x,t)-x\}/\sigma^2\), and because the same \(g\) appears in the evaluation point \(y_t+\sigma g\) and in the denoising target \(g/\sigma\),
\begin{align*}
s_u(y_t+\sigma g,t)+\frac{g}{\sigma} &= \frac{u(y_t+\sigma g,t)-(y_t+\sigma g)}{\sigma^2}+\frac{g}{\sigma}=\frac{u(y_t+\sigma g,t)-y_t}{\sigma^2}.
\end{align*}
Since \(u(y_t+\sigma g,t)\in[-1,1]^d\) and \(y_t\in[-1,1]^d\),
\begin{align*}
\left\|s_u(y_t+\sigma g,t)+\frac{g}{\sigma}\right\| &= \frac{\|u(y_t+\sigma g,t)-y_t\|}{\sigma^2}\leq \frac{2\sqrt d}{\sigma^2}.
\end{align*}
Squaring proves \eqref{eq:denoiser_pointwise_loss_envelope}.

For \(y_t,y_t'\in[-1,1]^d\), applying the cancellation identity at each
evaluation point gives
\begin{align*}
s_u(y_t+\sigma g,t)+\frac{g}{\sigma} &= \frac{u(y_t+\sigma g,t)-y_t}{\sigma^2},\qquad
s_u(y_t'+\sigma g,t)+\frac{g}{\sigma}=\frac{u(y_t'+\sigma g,t)-y_t'}{\sigma^2}.
\end{align*}
Using \(\left|\|a\|^2-\|b\|^2\right|\leq(\|a\|+\|b\|)\|a-b\|\),
\begin{align*}
&\left|\left\|s_u(y_t+\sigma g,t)+\frac{g}{\sigma}\right\|^2-\left\|s_u(y_t'+\sigma g,t)+\frac{g}{\sigma}\right\|^2\right| \\
&\leq \frac{4\sqrt d}{\sigma^4}\left[\left\|u(y_t+\sigma g,t)-u(y_t'+\sigma g,t)\right\|+\|y_t-y_t'\|\right] \\
&\leq \frac{4\sqrt d}{\sigma^4}\left(\gamma_{x,R,\varepsilon}^{u}+\sqrt d\right)\|y_t-y_t'\|_\infty \\
&= \frac{4d}{\sigma^4}\left(1+\frac{5(d+1)\Lambda_u}{R}\right)\|y_t-y_t'\|_\infty.
\end{align*}
where note that
\begin{align*}
\|(y_t+\sigma g)-(y_t'+\sigma g)\|_\infty &= \|y_t-y_t'\|_\infty.
\end{align*}
This proves \eqref{eq:denoiser_endpoint_pointwise_difference}.

Finally, for \(u,\widetilde u\in\mathcal U_n\),
\begin{align*}
s_u(y_t+\sigma g,t)+\frac{g}{\sigma} &= \frac{u(y_t+\sigma g,t)-y_t}{\sigma^2},\qquad
s_{\widetilde u}(y_t+\sigma g,t)+\frac{g}{\sigma}=\frac{\widetilde u(y_t+\sigma g,t)-y_t}{\sigma^2}.
\end{align*}
Therefore,
\begin{align*}
\left|\left\|s_u(y_t+\sigma g,t)+\frac{g}{\sigma}\right\|^2-\left\|s_{\widetilde u}(y_t+\sigma g,t)+\frac{g}{\sigma}\right\|^2\right|
&\leq \frac{4\sqrt d}{\sigma^4}\left\|u(y_t+\sigma g,t)-\widetilde u(y_t+\sigma g,t)\right\|,
\end{align*}
which proves \eqref{eq:denoiser_network_pointwise_difference}.
\end{proof}

Using the same \(g\) in the two evaluation points is valid in the later population comparison because both losses are integrated with respect to the same Gaussian measure. In particular,
\begin{align*}
&\mathbb E_G\left[\left\|s_u(y_t+\sigma G,t)+\frac{G}{\sigma}\right\|^2\right]-\mathbb E_G\left[\left\|s_u(y_t'+\sigma G,t)+\frac{G}{\sigma}\right\|^2\right] \\
&= \mathbb E_G\left[\left\|s_u(y_t+\sigma G,t)+\frac{G}{\sigma}\right\|^2-\left\|s_u(y_t'+\sigma G,t)+\frac{G}{\sigma}\right\|^2\right].
\end{align*}
This is only a common-variable representation of two expectations under the same Gaussian law. It does not impose any additional probabilistic relationship between \(y_t\) and \(y_t'\). For each term, the same Gaussian \(G\) is used in its noisy evaluation point and its denoising target.

\subsubsection{Completion of the Score Bound}

Recall that for \(u\in\mathcal U_n\), we defined
\begin{align}
\Phi_u^{\mathrm{score}}(y_0,y_1,k,g)
&=
\left\|
s_u\left(
y_0+t_k(y_1-y_0)+\sigma g,t_k
\right)
+\frac{g}{\sigma}
\right\|^2.
\label{eq:score_pointwise_loss}
\end{align}
We equip \(\mathcal U_n\) with
\[
\mathrm d_u(u,\widetilde u)
=
\sup_{x\in[-R,R]^d,\;t\in[0,1]}
\|u(x,t)-\widetilde u(x,t)\|.
\]

\begin{lemma}[Constants for the score loss]
\label{lem:score_loss_constants_streamlined}
The loss in \eqref{eq:score_pointwise_loss} satisfies
\eqref{eq:generic_loss_envelope}--\eqref{eq:generic_parameter_lipschitz}
with
\begin{align}
B_{\mathrm{score}}
&=
\frac{4d}{\sigma^4},\nonumber\\
L_{Y,\mathrm{score}}
&=
\frac{4d}{\sigma^4}
\left(
1+\frac{5(d+1)\Lambda_u}{R}
\right),\nonumber\\
L_{\Theta,\mathrm{score}}
&=
\frac{4\sqrt d}{\sigma^4}.
\label{eq:score_loss_constants_streamlined}
\end{align}
Moreover,
\begin{align}
\log\mathcal N(r,\mathcal U_n,\mathrm d_u)
&\leq\mathfrak C_u(r).
\label{eq:score_metric_entropy}
\end{align}
\end{lemma}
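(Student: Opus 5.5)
The plan is to verify the three loss hypotheses \eqref{eq:generic_loss_envelope}--\eqref{eq:generic_parameter_lipschitz} directly from Lemma~\ref{lemma_uniform_bound_score_loss}, exactly mirroring the proof of Lemma~\ref{lem:velocity_loss_constants}. Throughout, fix endpoints \(y_0,y_1\in[-1,1]^d\), a grid index \(k\) and \(g\in\mathbb R^d\). Set \(y_t=y_0+t_k(y_1-y_0)\); this is a convex combination, so \(y_t\in[-1,1]^d\), which is the only input Lemma~\ref{lemma_uniform_bound_score_loss} requires.

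\emph{Envelope.} Applying \eqref{eq:denoiser_pointwise_loss_envelope} at \(y_t\) gives \(0\le\Phi_u^{\mathrm{score}}\le 4d/\sigma^4=B_{\mathrm{score}}\). This uses the cancellation identity \eqref{eq:denoiser_score_loss_identity}, which holds because the same \(g\) appears in both the evaluation point and the target.

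\emph{Endpoint Lipschitz bound.} Take a second pair \((y_0',y_1')\) with interpolant \(y_t'\). Then
\[
\|y_t-y_t'\|_\infty\le(1-t_k)\|y_0-y_0'\|_\infty+t_k\|y_1-y_1'\|_\infty\le\|y_0-y_0'\|+\|y_1-y_1'\|.
\]
Using a common \(g\), \eqref{eq:denoiser_endpoint_pointwise_difference} bounds \(|\Phi_u^{\mathrm{score}}(y_0,y_1,k,g)-\Phi_u^{\mathrm{score}}(y_0',y_1',k,g)|\) by \(L_{Y,\mathrm{score}}(\|y_0-y_0'\|+\|y_1-y_1'\|)\). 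Taking the expectation over \((K,G)\) transfers the bound to \(H_u\). This step is legitimate: as the remark after Lemma~\ref{lemma_uniform_bound_score_loss} notes, the two losses are integrated against the same Gaussian law, so the common-\(G\) representation of their difference is exact.

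\emph{Parameter Lipschitz bound.} By \eqref{eq:denoiser_network_pointwise_difference}, the difference of losses for \(u,\widetilde u\) is at most \((4\sqrt d/\sigma^4)\|u(x,t_k)-\widetilde u(x,t_k)\|\) with \(x=y_t+\sigma g\). The remaining task is to bound this by \(\mathrm d_u(u,\widetilde u)\), even though \(x\) may lie outside \([-R,R]^d\). Every \(u\in\mathcal U_n\) depends on \(x\) only through \(S_R(x,t)\), hence only through \(\operatorname{clip}_R(x)\in[-R,R]^d\). Therefore
\[
\|u(x,t_k)-\widetilde u(x,t_k)\|=\|u(\operatorname{clip}_R(x),t_k)-\widetilde u(\operatorname{clip}_R(x),t_k)\|\le\mathrm d_u(u,\widetilde u),
\]
which gives \(L_{\Theta,\mathrm{score}}=4\sqrt d/\sigma^4\). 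This reduction of unbounded inputs to the cube is the one point that needs care. It would fail for the score class \(\mathcal S_n\) itself, because of the affine term \(-x/\sigma^2\); it works here only because we parametrize by \(u\) and the affine term cancels inside the loss.

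\emph{Entropy bound.} \eqref{eq:score_metric_entropy} is the covering bound \eqref{eq:denoiser_network_covering_number} restated in the metric \(\mathrm d_u\), since \(\mathrm d_u\) coincides with \(\|\cdot\|_{\infty,[-R,R]^d\times[0,1]}\). The covers from Lemma~\ref{lem:covering_number_from_para} are proper, as that lemma's proof guarantees.
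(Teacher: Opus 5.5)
Your proposal is correct and follows the paper's proof essentially step for step. You get the envelope and endpoint bounds from Lemma~\ref{lemma_uniform_bound_score_loss} at the interpolant and then average over \(K,G\); you get the parameter bound because every \(u\in\mathcal U_n\) depends on \(x\) only through \(\operatorname{clip}_R(x)\); and you read the entropy bound off \eqref{eq:denoiser_network_covering_number}. Your added remarks on the clip reduction and on properness of the cover only make explicit what the paper leaves implicit.
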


\begin{proof}
Let \(S_t=(1-t)y_0+ty_1\).  The cancellation
\[
s_u(S_t+\sigma g,t)+\frac g\sigma
=
\frac{u(S_t+\sigma g,t)-S_t}{\sigma^2}
\]
and the bounds \(u(S_t+\sigma g,t),S_t\in[-1,1]^d\) give
\(\Phi_u^{\mathrm{score}}\leq4d/\sigma^4\).

For a second endpoint pair, write
\(S_t'=(1-t)y_0'+ty_1'\).  The pointwise bound
\eqref{eq:denoiser_endpoint_pointwise_difference} gives
\begin{align*}
&\left|
\Phi_u^{\mathrm{score}}(y_0,y_1,k,g)
-\Phi_u^{\mathrm{score}}(y_0',y_1',k,g)
\right|\\
&\leq
\frac{4d}{\sigma^4}
\left(
1+\frac{5(d+1)\Lambda_u}{R}
\right)
\|S_{t_k}-S_{t_k}'\|_\infty\\
&\leq
L_{Y,\mathrm{score}}
\left(
\|y_0-y_0'\|+\|y_1-y_1'\|
\right).
\end{align*}
Averaging this inequality over \(K,G\) proves the endpoint condition.
The parameter perturbation bound
\eqref{eq:denoiser_network_pointwise_difference} gives
\[
\left|
\Phi_u^{\mathrm{score}}(y_0,y_1,k,g)
-\Phi_{\widetilde u}^{\mathrm{score}}(y_0,y_1,k,g)
\right|
\leq
\frac{4\sqrt d}{\sigma^4}
\|u(S_{t_k}+\sigma g,t_k)-\widetilde u(S_{t_k}+\sigma g,t_k)\|.
\]
Every \(u\in\mathcal U_n\) depends on its spatial argument through
\(\operatorname{clip}_R\), so the last norm is at most
\(\mathrm d_u(u,\widetilde u)\).  Finally,
\eqref{eq:score_metric_entropy} follows from
\eqref{eq:denoiser_network_covering_number}.
\end{proof}

\begin{proof}[Proof of the score bound in Proposition~\ref{thm:final_score_L2_bound_DR}]
Choose
\[
\varepsilon=n^{-1/(d+5)},
\qquad
R=
1+\sqrt2\,\sigma
\left[
\log\left(
\frac{8d^2}{\sigma^4\varepsilon^2}
\right)
\right]^{1/2}.
\]
The tail condition in
Proposition~\ref{prop:relu_l2_approximation_noisy_score_oracle} then holds.
Write \(\widehat s_M=s_{\widehat u_M}\) for the empirical minimizer, with
\(\widehat u_M\in\mathcal U_n\).  By
\eqref{eq:denoiser_excess_risk_score_error} and the approximation
proposition,
\begin{align}
\inf_{u\in\mathcal U_n}
\left\{
L_{a,Q}^{\mathrm{score}}(s_u)
-L_{a,Q}^{\mathrm{score}}(s_Q^\star)
\right\}
&\leq2\varepsilon^2.
\label{eq:score_approximation_for_shared_bound}
\end{align}

For \(\Phi_u^{\mathrm{score}}\),
\(L_{a,Q}^{\Phi}(u)=L_{a,Q}^{\mathrm{score}}(s_u)\), and
\(\widehat L_{\mathrm{actual}}^\Phi(u)\) is the implemented score objective.
Lemma~\ref{lem:score_loss_constants_streamlined} verifies the hypotheses of
Proposition~\ref{prop:shared_DR_field_bound} with
\[
\Theta=\mathcal U_n,\quad
\mathrm d_\Theta=\mathrm d_u,\quad
B_\Phi=B_{\mathrm{score}},\quad
L_Y=L_{Y,\mathrm{score}},\quad
L_\Theta=L_{\Theta,\mathrm{score}}.
\]

For the selected \(R\),
\begin{align}
\frac{\Lambda_u}{R}
&=
\frac2R+\frac{8d(2R+1)}{R\sigma^2}
=O(\sigma^{-2}),\nonumber\\
\Lambda_u
&=\widetilde O(\sigma^{-2}),\qquad
\eta_u^{-1}
=
\frac{\sqrt d\,\Lambda_u}{\sigma^2\varepsilon}
=
\widetilde O\left(
\frac1{\sigma^4\varepsilon}
\right).
\label{eq:score_scales_streamlined}
\end{align}
Substituting the architecture orders
\eqref{eq:score_network_architecture_orders} into \(\mathfrak C_u\), at the
three radii in Proposition~\ref{prop:shared_DR_field_bound}, yields
\begin{align}
\log N_{\mathrm{DR}}^{\mathrm{score}}
+\log N_{\mathrm{obs}}^{\mathrm{score}}
+\log N_{\mathrm{aux}}^{\mathrm{score}}
&=
\widetilde O\left[
\left(
\frac1{\sigma^4\varepsilon}
\right)^{d+1}
\right].
\label{eq:score_shared_entropy_order}
\end{align}
Since \(B_{\mathrm{score}}=4d/\sigma^4\), combining
\eqref{eq:generic_statistical_remainder} and
\eqref{eq:score_shared_entropy_order}, with \(\delta=n^{-1}\), give
\begin{align}
\mathfrak R_{\mathrm{score}}(n^{-1})
&\lesssim
\frac{1+2\pi_{\min}^{-1}}{\sigma^{2d+6}}
n^{-2/(d+5)}.
\label{eq:score_shared_statistical_rate}
\end{align}
Before the substitution \(\varepsilon=n^{-1/(d+5)}\), the leading term is
\[
\widetilde O\left(
(1+2\pi_{\min}^{-1})
\sigma^{-(2d+6)}
\varepsilon^{-(d+1)/2}n^{-1/2}
\right).
\]

The bounds \eqref{eq:score_loss_constants_streamlined} and
\eqref{eq:score_scales_streamlined} give
\begin{align}
L_{Y,\mathrm{score}}
&=O(\sigma^{-6}).
\label{eq:score_endpoint_constant_order}
\end{align}
On the event in Corollary~\ref{Cor_Coupling_Replacement},
\[
\Delta_Q
\lesssim
\varepsilon_{\mathrm{OT}}^{-1}
\left[
n^{-1/(d+2)}
+\epspsiemp
+(nM_Q)^{-1/(d+2)}
\right].
\]
Substitute this display,
\eqref{eq:score_approximation_for_shared_bound},
\eqref{eq:score_shared_statistical_rate}, and
\eqref{eq:score_endpoint_constant_order} into
\eqref{eq:shared_DR_field_bound}.  The inequalities
\((nM_Q)^{-1/(d+2)}\leq n^{-1/(d+2)}\) and \(0<\sigma\leq1\) give
\begin{align*}
L_{a,Q}^{\mathrm{score}}(\widehat s_M)
-L_{a,Q}^{\mathrm{score}}(s_Q^\star)
&\lesssim
\frac{\varepsilon_{\mathrm{OT}}^{-1}}{\sigma^6}
\left[
n^{-1/(d+2)}+\epspsiemp
\right]\\
&\quad+
\frac{1+\varepsilon_{\mathrm{OT}}^{-1}}{\sigma^6}
\varepsilon_\alpha\varepsilon_{\mathrm{pred}}
+
\frac{1+2\pi_{\min}^{-1}}{\sigma^{2d+6}}
n^{-2/(d+5)}.
\end{align*}
Finally, \eqref{eq:score_exact_excess_risk_identity} converts this excess
risk into the grid-averaged score error in
\eqref{eq:final_score_L2_bound_DR_theorem}.  The coupling and conditional
statistical events fail with probability at most \(2/n\) and \(1/n\),
respectively, while
\(\mathbb P(\mathcal E_a^c)\leq e^{-n\pi_{\min}}\).  A final union bound
with the velocity event proves that both conclusions of
Proposition~\ref{thm:final_velocity_L2_bound_DR} hold with probability
\(1-\mathcal O(n^{-1})-e^{-n\pi_{\min}}\).
\end{proof}

%% file: Proof_End_to_End.tex
\section[Proof of Theorem \ref{thm:end_to_end_sample_complexity}]
{Proof of Theorem~\ref{thm:end_to_end_sample_complexity}}

\begin{proof}

Work on the event where the population coupling \(Q\) is defined and the
two bounds in Proposition~\ref{thm:final_velocity_L2_bound_DR} hold.
Since \(D_5\leq\overline D_5<\infty\),
Lemma~\ref{lem:exact_sampling_conditions} supplies a weak solution \(Z\)
satisfying \(Z_t\sim q_t\) for every \(t\in[0,1]\).
Moreover, the grid-frozen learned drifts are continuous by continuity of the
fitted neural networks.
Thus the \hyperref[cond:standing_sampling]{standard sampling conditions} hold.

Since \(h=1/N\), each grid-averaged field bound in
Proposition~\ref{thm:final_velocity_L2_bound_DR} is exactly the corresponding integrated
estimation error in Assumption
\ref{Assumption_Integrated_Estimation_Error}.  Thus that proposition may
be substituted into Theorem~\ref{Theorem_Sampling_Error}.
In the stated regime, the velocity part of
Proposition~\ref{thm:final_velocity_L2_bound_DR} gives
\begin{align*}
\varepsilon_{\mathrm{vel}}^2
&=
\widetilde O\left(
\frac{2+\varepsilon_{\mathrm{OT}}^{-1}}{\sigma^2}
n^{-1/(d+2)}
+
\frac{1+2\pi_{\min}^{-1}}{\sigma^{d+1}}
n^{-2/(d+5)}
\right),
\end{align*}
while its score part gives
\begin{align*}
\varepsilon_{\mathrm{score}}^2
&=
\widetilde O\left(
\frac{1}{\sigma^6\varepsilon_{\mathrm{OT}}}
n^{-1/(d+2)}
+
\frac{1+2\pi_{\min}^{-1}}{\sigma^{2d+6}}
n^{-2/(d+5)}
\right).
\end{align*}
Since \(\tau^2=\sigma \overline{D_5}/d\),
\begin{align}\label{eq_eps_vel_target}
\frac{\varepsilon_{\mathrm{vel}}^2}{\tau^2}
&=
\widetilde O\left(
\frac{
d\left(2+\varepsilon_{\mathrm{OT}}^{-1}\right)
}{
\sigma^3\overline{D_5}
}
n^{-1/(d+2)}
+
\frac{
d\left(1+2\pi_{\min}^{-1}\right)
}{
\sigma^{d+2}\overline{D_5}
}
n^{-2/(d+5)}
\right),
\end{align}
and
\begin{align}\label{eq_eps_score_target}
\tau^2\varepsilon_{\mathrm{score}}^2
&=
\widetilde O\left(
\frac{
\overline{D_5}
}{
d\varepsilon_{\mathrm{OT}}\sigma^5
}
n^{-1/(d+2)}
+
\frac{
\left(1+2\pi_{\min}^{-1}\right)\overline{D_5}
}{
d\sigma^{2d+5}
}
n^{-2/(d+5)}
\right).
\end{align}

By Corollary \ref{cor:counterfactual_W1_from_KL}, under
\(\frac{h\tau^2}{2\sigma^2}\leq1\) and \(\sigma\leq1\),
\begin{align*}
W_1\left(
\int \mathbb P_{Y\mid C=c,A=a}\,d\mathbb P_C(c),
\mathcal L(\widehat Z_N)
\right)
&\lesssim
\sigma+
\operatorname{KL}\left(
\mathcal L(Y_1+\sigma G)
\middle\|
\mathcal L(\widehat Z_N)
\right)^{1/4}.
\end{align*}
Taking \(\sigma=\Theta(\varepsilon_{\mathrm{target}})\), it is sufficient to
require
\begin{align*}
\operatorname{KL}\left(
\mathcal L(Y_1+\sigma G)
\middle\|
\mathcal L(\widehat Z_N)
\right)
&\lesssim
\varepsilon_{\mathrm{target}}^4.
\end{align*}

To make each estimation contribution in
\eqref{eq_eps_vel_target} and \eqref{eq_eps_score_target} at most of order
\(\varepsilon_{\mathrm{target}}^4\), it is sufficient that
\begin{align*}
n
&\gtrsim
\left[
\frac{
\left(2+\varepsilon_{\mathrm{OT}}^{-1}\right)
}{
\overline{D_5}
}
\right]^{d+2}
\varepsilon_{\mathrm{target}}^{-7(d+2)}, \qquad
n
\gtrsim
\left[
\frac{
\left(1+2\pi_{\min}^{-1}\right)
}{
\overline{D_5}
}
\right]^{(d+5)/2}
\varepsilon_{\mathrm{target}}^{-(d+6)(d+5)/2},\\
n
&\gtrsim
\left[
\frac{
\overline{D_5}
}{
\varepsilon_{\mathrm{OT}}
}
\right]^{d+2}
\varepsilon_{\mathrm{target}}^{-9(d+2)}, \qquad
n
\gtrsim
\left[
\left(1+2\pi_{\min}^{-1}\right)\overline{D_5}
\right]^{(d+5)/2}
\varepsilon_{\mathrm{target}}^{-(2d+9)(d+5)/2},
\end{align*}
where \(\sigma=\Theta(\varepsilon_{\mathrm{target}})\) has been substituted.
Note that the last condition determines the asymptotic dependence on
\(\varepsilon_{\mathrm{target}}\).

It remains to control the discretization terms in
Theorem \ref{Theorem_Sampling_Error}. The choice
\begin{align*}
h \le c_{h}
\left(
\frac{d\overline{D_5}^2}{\varepsilon_{\mathrm{target}}^4\sigma^2}
+
\frac{d\overline{D_5}}{\sigma}
\right)^{-1}
\end{align*}
implies
\(\frac{hd\overline{D_5}^2}{\sigma^2}\lesssim\varepsilon_{\mathrm{target}}^4\)
and \(\frac{hd\overline{D_5}}{\sigma}\lesssim1\).
Consequently,
\(\operatorname{KL}\!\left(
\mathcal L(Z_{t_N})\middle\|\mathcal L(\widehat Z_N)
\right)\lesssim\varepsilon_{\mathrm{target}}^4\).
Moreover,
\begin{align*}
\frac{h\tau^2}{2\sigma^2}
&=
\frac{h\overline{D_5}}{2d\sigma}
\leq
\frac{1}{2d^2} \frac{hd\overline{D_5}}{\sigma}
= \frac{h\overline{D_5}}{2d\sigma} \le c_{h}
\frac{\frac{\overline{D_5}}{2d\sigma}}{\frac{d\overline{D_5}^2}{\varepsilon_{\mathrm{target}}^4\sigma^2}
+
\frac{d\overline{D_5}}{\sigma}} \le \frac{c_h }{2d^2}
\le
1,
\end{align*}
so the required stability condition holds.
\end{proof}